\definecolor{yuting}{RGB}{255,69,0}
\definecolor{gen}{RGB}{199,21,133} 
\definecolor{yxc}{RGB}{21,199,133} 
\definecolor{yc}{RGB}{21,0,255}
\DeclareMathOperator{\ind}{\mathds{1}}
\newcommand{\mymid}{\,|\,}
\newcommand{\overalpha}{\overline{\alpha}}
\newcommand{\Pdata}{p_{\mathsf{data}}}
\newcommand{\diff}{\mathrm{d}}
\newcommand{\score}{\mathsf{score}}
\newcommand{\Jacobi}{\mathsf{Jacobi}}
\newtheorem{assumption}{\textbf{Assumption}}
\newtheorem{definition}{\textbf{Definition}}
\newtheorem{claim}{\textbf{Claim}}
\newtheorem{remark}{\textbf{Remark}}
\theoremstyle{plain}
\newtheorem{theo}{Theorem}[section]
\newtheorem{lem}{Lemma}[section]
\newtheorem{prop}{Proposition}[section]
\newtheorem{cor}{Corollary}[section]
\theoremstyle{definition} 
\newtheorem{nota}{Notation}[section]
\newtheorem{de}{Definition}[section]
\newtheorem{exa}{Example}[section]
\newtheorem{as}{Assumption}[section]
\newtheorem{alg}{Algorithm}[section]
\newcommand{\btheo}{\begin{theo}}
\newcommand{\bde}{\begin{de}}
\newcommand{\ble}{\begin{lem}}
\newcommand{\bpr}{\begin{prop}}
\newcommand{\bno}{\begin{nota}}
\newcommand{\bex}{\begin{exa}}
\newcommand{\bcor}{\begin{cor}}
\newcommand{\spro}{\begin{proof}}
\newcommand{\bas}{\begin{as}}
\newcommand{\balg}{\begin{alg}}
\newcommand{\etheo}{\end{theo}}
\newcommand{\ede}{\end{de}}
\newcommand{\ele}{\end{lem}}
\newcommand{\epr}{\end{prop}}
\newcommand{\eno}{\end{nota}}
\newcommand{\eex}{\end{exa}}
\newcommand{\ecor}{\end{cor}}
\newcommand{\fpro}{\end{proof}}
\newcommand{\eas}{\end{as}}
\newcommand{\ealg}{\end{alg}}
\theoremstyle{plain}
\newtheorem{theos}{Theorem}
\newtheorem{props}{Proposition}
\newtheorem{lems}{Lemma}
\newtheorem{cors}{Corollary}
\theoremstyle{definition}
\newtheorem{exas}{Example}
\newtheorem{algs}{Algorithm}
\newtheorem{asss}{Assumption}
\newtheorem{defns}{Definition}
\newcommand{\btheos}{\begin{theos}}
\newcommand{\etheos}{\end{theos}}
\newcommand{\bprops}{\begin{props}}
\newcommand{\eprops}{\end{props}}
\newcommand{\bdes}{\begin{defns}}
\newcommand{\edes}{\end{defns}}
\newcommand{\blems}{\begin{lems}}
\newcommand{\elems}{\end{lems}}
\newcommand{\bcors}{\begin{cors}}
\newcommand{\ecors}{\end{cors}}
\newcommand{\bexs}{\begin{exas}}
\newcommand{\eexs}{\end{exas}}
\newcommand{\balgs}{\begin{algs}}
\newcommand{\ealgs}{\end{algs}}
\newcommand{\bass}{\begin{asss}}
\newcommand{\eass}{\end{asss}}
\newcommand{\ltwo}[1]{\|#1\|_2}
\newcommand{\real}{\ensuremath{\mathbb{R}}}
\newcommand{\defn}{\coloneqq}
\newcommand{\Exs}{\ensuremath{\mathbb{E}}}
\long\def\comment#1{}
\newcommand{\HACKPROOF}{\begin{proof}}
\newcommand{\HACKENDPROOF}{\end{proof}}
\newlength{\widebarargwidth}
\newlength{\widebarargheight}
\newlength{\widebarargdepth}
\long\def\@makecaption#1#2{
        \vskip 0.8ex
        \setbox\@tempboxa\hbox{\small {\bf #1:} #2}
        \parindent 1.5em  
        \dimen0=\hsize
        \advance\dimen0 by -3em
        \ifdim \wd\@tempboxa >\dimen0
                \hbox to \hsize{
                        \parindent 0em
                        \hfil 
                        \parbox{\dimen0}{\def\baselinestretch{0.96}\small
                                {\bf #1.} #2
                                } 
                        \hfil}
        \else \hbox to \hsize{\hfil \box\@tempboxa \hfil}
        \fi
        }
\begin{document}

\title{Towards Faster Non-Asymptotic Convergence for \\ Diffusion-Based Generative Models}

\author{Gen Li\thanks{Department of Statistics, The Chinese University
of Hong Kong, Hong Kong.} \and  Yuting Wei\thanks{Department of Statistics and Data Science, Wharton School, University
of Pennsylvania, Philadelphia, PA 19104, USA.} 
\and Yuxin Chen\footnotemark[2] \thanks{Department of Electrical and Systems Engineering, University
of Pennsylvania, Philadelphia, PA 19104, USA.} 
\and Yuejie Chi\thanks{Department of Electrical and Computer Engineering, Carnegie Mellon University, Pittsburgh, PA 15213, USA.} }

\date{June 2023;~ Revised: February 2024}

\maketitle

\medskip

\begin{abstract}

	Diffusion models, which convert noise into new data instances by learning to reverse a Markov diffusion process, have become a cornerstone in contemporary generative modeling. While their practical power has now been widely recognized, the theoretical underpinnings remain far from mature.  In this work, we develop a suite of non-asymptotic theory towards understanding the data generation process of diffusion models in discrete time, assuming access to $\ell_2$-accurate estimates of the (Stein) score functions. For a popular deterministic sampler (based on the probability flow ODE), we establish a convergence rate proportional to $1/T$ (with $T$ the total number of steps), improving upon past results; for another mainstream stochastic sampler (i.e., a type of the denoising diffusion probabilistic model), we derive a convergence rate proportional to $1/\sqrt{T}$, matching the state-of-the-art theory. Imposing only minimal assumptions on the target data distribution (e.g., no smoothness assumption is imposed), our results characterize how $\ell_2$ score estimation errors affect the quality of the data generation processes.  In contrast to prior works, our theory is developed based on an elementary yet versatile non-asymptotic approach without resorting to toolboxes for SDEs and ODEs. Further, we design two accelerated variants, improving the convergence to $1/T^2$ for the ODE-based sampler and $1/T$ for the DDPM-type sampler, which might be of independent theoretical and empirical interest.

 
\end{abstract}


\noindent \textbf{Keywords:} diffusion models, score-based generative modeling, non-asymptotic theory, probability flow ODE, reverse SDE, denoising diffusion probabilistic model 

\setcounter{tocdepth}{2}
\tableofcontents

\section{Introduction}

Diffusion models have emerged as a cornerstone in contemporary generative modeling, a task that learns to generate new data instances (e.g., images, text, audio) that look similar in distribution to the training data \citep{ho2020denoising,sohl2015deep,song2019generative,dhariwal2021diffusion,jolicoeur2021adversarial,chen2021wavegrad,kong2021diffwave,austin2021structured}.  
Originally proposed by \citet{sohl2015deep} and later popularized by \citet{song2019generative,ho2020denoising}, 
the mainstream diffusion generative models --- e.g., denoising diffusion probabilistic models (DDPMs) \citep{ho2020denoising} and  
denoising diffusion implicit models (DDIMs) \citep{song2020denoising} --- have underpinned major successes in content generators like DALL$\cdot$E 2 \citep{ramesh2022hierarchical}, Stable Diffusion \citep{rombach2022high} 
and Imagen \citep{saharia2022photorealistic}, 
claiming state-of-the-art performance in the now broad field of generative artificial intelligence (AI). 
See  \citet{yang2022diffusion,croitoru2023diffusion} for overviews of recent development.

In a nutshell, a diffusion generative model is based upon two stochastic processes in $\real^d$: 
\begin{itemize}
	\item[1)] a forward process 
		\begin{equation}
			X_0 \rightarrow X_1 \rightarrow \cdots \rightarrow X_T
			\label{eq:forward-process-informal}
		\end{equation}
		that starts from a sample drawn from the target data distribution (e.g., of natural images) 
		and gradually diffuses it into a noise-like distribution (e.g., standard Gaussians); 
	\item[2)] a reverse process 
		\begin{equation}
			Y_T \rightarrow Y_{T-1} \rightarrow \cdots \rightarrow Y_0
			\label{eq:reverse-process-informal}
		\end{equation}
		that starts from pure noise (e.g., standard Gaussians) and successively converts it into new samples sharing similar distributions as the target data distribution. 
\end{itemize}
\noindent 
Transforming data into noise in the forward process is straightforward, often hand-crafted by increasingly injecting more noise into the data at hand.   
What is challenging is the construction of the reverse process: 
how to generate the desired information out of pure noise? 
To do so, a diffusion model learns to build a reverse process \eqref{eq:reverse-process-informal}
 that imitates the dynamics of the forward process \eqref{eq:forward-process-informal} in a time-reverse fashion; 
 more precisely, the design goal is to ascertain  distributional proximity\footnote{Two random vectors $X$ and $Y$ are said to obey $X \overset{\mathrm{d}}{=} Y$ (resp.~$X \overset{\mathrm{d}}{\approx} Y$) if they are equivalent (resp.~close) in distribution.} 
\begin{equation}
	Y_t \,\overset{\mathrm{d}}{\approx}\, X_t, \qquad t = T,\cdots,1	
\end{equation}
through proper learning based on how the training data propagate in the forward process. 
Encouragingly, there often exist feasible strategies to achieve this goal  
as long as faithful estimates about the (Stein) score functions --- the gradients of the log marginal density of the forward process --- are available, 
an intriguing fact that can be illuminated by the existence and construction of reverse-time stochastic differential equations (SDEs) \citep{anderson1982reverse,haussmann1986time} (see Section~\ref{sec:det-stochastic-samplers} for more precise discussions). 
Viewed in this light, a diverse array of diffusion models are frequently referred to as {\em score-based generative modeling (SGM)}. 
The popularity of SGM was initially motivated by, and has since further inspired,  numerous recent studies on the problem of learning score functions, 
a subroutine that also goes by the name of score matching (e.g., \citet{hyvarinen2005estimation,hyvarinen2007some,vincent2011connection,song2020sliced,koehler2022statistical}).

Nonetheless, despite the mind-blowing empirical advances, a mathematical theory for diffusion generative models is still in its infancy. 
Given the complexity of developing a full-fledged end-to-end theory, a divide-and-conquer approach has been advertised, 
decoupling the score learning phase (i.e., how to estimate score functions reliably from training data) 
and the generative sampling phase (i.e., how to generate new data instances given the score estimates). 
In particular, 
the past two years have witnessed growing interest and remarkable progress from the theoretical community
towards understanding the generative sampling phase 
\citep{block2020generative,de2021diffusion,liu2022let,de2022convergence,lee2023convergence,pidstrigach2022score,chen2022sampling,chen2022improved,chen2023restoration,tang2023diffusion,tang2024score,li2024accelerating}. 
For instance, 
polynomial-time convergence guarantees have been established for stochastic samplers (e.g., \citet{chen2022sampling,chen2022improved,benton2023linear,tang2023diffusion}) and deterministic samplers (e.g., \citet{chen2023restoration,benton2023error}),  both of which accommodated a fairly general family of data distributions.


\paragraph{This paper.} 
The present paper contributes to this growing list of theoretical endeavors by 
developing a new suite of non-asymptotic theory for several score-based generative modeling algorithms.  
We concentrate on two types of samplers \citep{song2020score} in discrete time:   
(i) a deterministic sampler based on a sort of ordinary differential equations (ODEs) called probability flow ODEs (which is closely related to the DDIM sampler);  
and (ii) a DDPM-type stochastic sampler motivated by reverse-time SDEs. 
We impose only minimal assumptions on the target data distribution (e.g., no smoothness condition is needed), 
and would like to quantify the impact of $\ell_2$ score estimation errors.   
In comparisons to past works, our main contributions are three-fold.

\begin{itemize}
	\item {\em Non-asymptotic convergence guarantees.} 
		For a popular deterministic sampler, we demonstrate that the number of steps needed to yield $\varepsilon$-accuracy 
		--- meaning that the total variation (TV) distance between the distribution of $X_1$ and that of $Y_1$ is no larger than $\varepsilon$ --- is proportional to $1/\varepsilon$ (in addition to other polynomial dimension dependency). This improves upon prior convergence guarantees considerably \citep{chen2023restoration} 
		and does not exhibit exponential dependency on the smoothness or regularity conditions as in \citet{chen2023restoration,benton2023error} (e.g., the regularity parameter used in \citet{benton2023error} might even scale with the dimension $d$). 
		For another DDPM-type stochastic sampler, 
		we establish an iteration complexity proportional to $1/\varepsilon^2$ via a new non-asymptotic analysis framework, matching existing theory \citet{chen2022sampling,chen2022improved,benton2023linear} 
		in terms of the $\varepsilon$-dependency. 

	\item {\em $\ell_2$ score estimation errors for the determinstic sampler.} 
		Our theory for the deterministic sampler reveals that the TV distance between $X_1$ and $Y_1$ are shown to be proportional to the $\ell_2$ score estimation error as well as the associated mean Jacobian errors. 
		As far as we know, this is the first result for this deterministic sampler that accounts for score estimation errors in discrete time. 
		In comparison, existing theoretical results that accommodate score errors for the probability flow ODE approach 
		either study stochastic variations of this deterministic sampler \citep{chen2023probability} (so that the samplers are no longer the original deterministic sampler) or fall short of accommodating discretization errors \citep{benton2023error}.

	\item {\em An elementary non-asymptotic analysis framework.}  
		From the technical point of view, the analysis framework laid out in this paper is fully non-asymptotic in nature. 
		In contrast to prior theoretical analyses that take a detour to study the continuum limits and then control the discretization error, 
		our approach tackles the discrete-time processes directly using elementary analysis strategies.  
		No knowledge of SDEs or ODEs is required for establishing our theory, 
		thereby resulting in a more versatile framework and sometimes lowering the technical barrier towards understanding diffusion models. 
		
	\item {\em Accelerating data generation processes.} 
		In order to further speed up the sampling processes, 
		we develop an accelerated variant for each of the above two samplers, taking advantage of estimates of a small number of additional quantities. 
		As it turns out, these variants achieve more rapid convergence, 
		with the deterministic (resp.~stochastic) variant exhibiting a $1/\sqrt{\varepsilon}$ (resp.~$1/\varepsilon$) scaling in the accuracy level $\varepsilon$ (again measured in terms of the TV distance). 

\end{itemize}

%
%
%
%
%

\paragraph{Notation.}
Before proceeding, we introduce a couple of notation to be used throughout. 
For any two functions $f(d,T)$ and $g(d,T)$, 
we adopt the notation $f(d,T)\lesssim g(d,T)$ or $f(d,T)=O( g(d,T) )$ (resp.~$f(d,T)\gtrsim g(d,T)$)
to mean that there exists some universal constant $C_1>0$ such that $f(d,T)\leq C_1 g(d,T)$ (resp.~$f(d,T)\geq C_1 g(d,T)$) for all $d$ and $T$; moreover, the notation  $f(d,T)\asymp g(d,T)$ indicates that $f(d,T)\lesssim g(d,T)$ and $f(d,T)\gtrsim g(d,T)$ hold at once. 
The notation $\widetilde{O}(\cdot)$ is defined similar to $O(\cdot)$ except that it hides the logarithmic dependency. 
Additionally, the notation  $f(d,T)=o\big( g(d,T) \big)$ means that $f(d,T)/g(d,T) \rightarrow 0$ as $d,T$ tend to infinity. 
We shall often use capital letters to denote random variables/vectors/processes, and lowercase letters for deterministic variables. 
For any two probability measures $P$ and $Q$, the total variation (TV) distance between them is defined to be $\mathsf{TV}(P,Q)\coloneqq \frac{1}{2}\int |\mathrm{d}P - \mathrm{d}Q|$. 
Throughout the paper, $p_X(\cdot)$ (resp.~$p_{X\mymid Y}(\cdot \mymid \cdot)$) denotes the probability density function of $X$ (resp.~$X$ given $Y$). 
For any matrix $A$, we denote by $\|A\|$ (resp.~$\|A\|_{\mathrm{F}}$) the spectral norm (resp.~Frobenius norm) of $A$. 
Also, for any vector-valued function $f$, we let $J_f$ or $\frac{\partial f}{\partial x}$ represent the Jacobian matrix of $f$.





\section{Preliminaries} 
\label{sec:preliminaries}

In this section, we introduce the basics of diffusion generative models. 
The ultimate goal of a generative model can be concisely stated: 
given  data samples drawn from an unknown distribution of interest $\Pdata$ in $\real^d$, 
we wish to generate new samples whose distributions closely resemble $\Pdata$.



\subsection{Diffusion generative models}

Towards achieving the above goal, 
a diffusion generative model typically encompasses two Markov processes: a forward process and a reverse process, as described below.

\paragraph{The forward process.}
In the forward chain, one progressively injects noise into the data samples to diffuse and obscure the data. 
The distributions of the injected noise are often hand-picked, 
with the standard Gaussian distribution receiving widespread adoption. 
More specifically, the forward Markov process produces a sequence of $d$-dimensional random vectors $X_1\rightarrow X_2\rightarrow \cdots\rightarrow X_T $ as follows:   
\begin{subequations}
\label{eq:forward-process}
\begin{align}
	X_0 &\sim \Pdata,\\
	X_t &= \sqrt{1-\beta_t}X_{t-1} + \sqrt{\beta_t}\,W_{t}, \qquad 1\leq t\leq T,
\end{align}
\end{subequations}
where $\{W_t\}_{1\leq t\leq T}$ 
indicates a sequence of independent noise vectors drawn from $W_{t} \overset{\mathrm{i.i.d.}}{\sim} \mathcal{N}(0, I_d)$. 
The hyper-parameters $\{\beta_t \in (0,1)\}$ represent prescribed learning rate schedules 
that control the variance of the noise injected in each step. 
If we define 
\begin{align}
	\alpha_t\coloneqq 1 - \beta_t, 
	\qquad \overline{\alpha}_t \coloneqq \prod_{k = 1}^t \alpha_k ,\qquad 1\leq t\leq T,
\end{align}
then it can be straightforwardly verified that for every $1\leq t\leq T$, 
\begin{align}
\label{eqn:Xt-X0}
	X_t = \sqrt{\overalpha_t} X_{0} + \sqrt{1-\overalpha_t} \,\overline{W}_{t}
	\qquad \text{for some } \overline{W}_{t}\sim \mathcal{N}(0,I_d) .
\end{align}
%
Clearly, if the covariance of $X_0$ is also equal to $I_d$, then the covariance of $X_t$ is preserved throughout the forward process; 
for this reason, this forward process \eqref{eq:forward-process} is sometimes referred to as {\em variance-preserving} \citep{song2020score}. 
Throughout this paper, we employ the notation
\begin{equation}
	q_t \coloneqq \mathsf{law}\big( X_t \big)
	\label{eq:defn-qt}
\end{equation}
to denote the distribution of $X_t$. 
As long as  $\overalpha_{T}$ is vanishingly small, 
one has the following property for a general family of data distributions: 
\begin{equation}
	q_T \approx \mathcal{N}(0,I_d). 
\end{equation}
%






\paragraph{The reverse process.} 
The reverse chain $Y_T\rightarrow Y_{T-1}\rightarrow \ldots\rightarrow Y_1 $ is designed to (approximately) revert the forward process, 
allowing one to transform pure noise into new samples with matching distributions as the original data. 
To be more precise, by initializing it as
\begin{subequations}
	\label{eq:goal-reverse-process}
\begin{equation}
	Y_T \sim \mathcal{N}(0,I_d), 
\end{equation}
we seek to design a reverse-time Markov process with nearly identical marginals as the forward process,  namely, 
\begin{equation}
	(\text{goal})\qquad\qquad
	Y_t \,\overset{\mathrm{d}}{\approx}\, X_t, \qquad t = T, T-1,\cdots, 1.  
\end{equation}
\end{subequations}
Throughout the paper, we shall often employ the following notation to indicate the distribution of $Y_t$: 
\begin{equation}
	p_t \coloneqq \mathsf{law}\big( Y_t \big) .
	\label{eq:defn-pt}
\end{equation}

\subsection{Deterministic vs.~stochastic samplers: a continuous-time interpretation}
\label{sec:det-stochastic-samplers}

Evidently, the most crucial step of the diffusion model lies in effective design of the reverse process. 
Two mainstream approaches stand out: 
\begin{itemize}

	\item {\em Deterministic samplers.} Starting from $Y_T \sim \mathcal{N}(0,I_d)$, this approach selects a set of functions $\{\Phi_t(\cdot)\}_{1\leq t\leq T}$ and computes:
	\begin{equation}
		Y_{t-1}=\Phi_{t}\big(Y_{t}\big),\qquad t= T,\cdots,1.\label{eq:deterministic-sampler}
	\end{equation}
	Clearly, the sampling process is fully deterministic except for the initialization $Y_T$.  

	\item {\em Stochastic samplers.} 
		Initialized again at $Y_T \sim \mathcal{N}(0,I_d)$, this approach computes another collection of functions $\{\Psi_t(\cdot,\cdot)\}_{1\leq t\leq T}$ and performs the updates:
	\begin{equation}
		Y_{t-1}=\Psi_{t}\big(Y_{t},Z_{t}\big),\qquad t= T,\cdots,1,\label{eq:stochastic-sampler}
	\end{equation}
	where the $Z_t$'s are independent noise vectors obeying $Z_t \overset{\mathrm{i.i.d.}}{\sim} \mathcal{N}(0,I_d)$. 
	
\end{itemize}

In order to elucidate the feasibility of the above two approaches, 
we find it helpful to look at the continuum limit through the lens of SDEs and ODEs. It is worth emphasizing, however, that the development of our main theory does {\em not} rely on any knowledge of SDEs and ODEs.   
\begin{itemize}
	\item {\em The forward process.} 
		A continuous-time analog of the forward diffusion process can be modeled as 
		\begin{equation}
			\mathrm{d}X_{t}=f(X_{t},t)\mathrm{d}t+g(t)\mathrm{d}W_{t} \quad (0\leq t\leq T),
			\qquad X_{0}\sim\Pdata
			\label{eq:forward-SDE-general}
		\end{equation}	
		for some functions $f(\cdot,\cdot)$ and $g(\cdot)$ (denoting respectively the drift and diffusion coefficient), 
		where $W_t$ denotes a $d$-dimensional standard Brownian motion. 
		As a special example, the continuum limit of \eqref{eq:forward-process} takes the following form\footnote{To see its connection with \eqref{eq:forward-process}, it suffices to derive from \eqref{eq:forward-process} that 
$X_{t}-X_{t-\mathrm{d}t}=\sqrt{1-\beta_{t}}X_{t-\mathrm{d}t}-X_{t-\mathrm{d}t}+\sqrt{\beta_{t}}W_{t}\approx-\frac{1}{2}\beta_{t}X_{t-\mathrm{d}t}+\sqrt{\beta_{t}}W_{t}$.} \citep{song2020score}
		\begin{equation}
			\mathrm{d}X_{t}= - \frac{1}{2} \beta(t) X_t \mathrm{d}t+ \sqrt{\beta(t)}\,\mathrm{d}W_{t} \quad (0\leq t\leq T),
			\qquad X_{0}\sim\Pdata
			\label{eq:forward-SDE}
		\end{equation}	
		for some function $\beta(t)$.  
		As before, we denote by $q_t$ the distribution of $X_t$ in \eqref{eq:forward-SDE-general}.

	\item {\em The reverse process.} 
		As it turns out, the following two reverse processes are both capable of reconstructing the distribution of the forward process, 		
		motivating the design of two distinctive samplers. 
		Here and throughout, we use $\nabla \log q_{t}(X)$ to abbreviate $\nabla_X \log q_{t}(X)$ for notational simplicity. 
		\begin{itemize}

			\item One feasible approach is to resort to the so-called {\em probability flow ODE} \citep{song2020score}
				\begin{align}
					\mathrm{d}Y_{t}^{\mathsf{ode}} 
					& =\Big(-f\big(Y_{t}^{\mathsf{ode}},T-t\big)+\frac{1}{2}g(T-t)^{2}\nabla\log q_{T-t}\big(Y_{t}^{\mathsf{ode}}\big)\Big)
					\mathrm{d}t\quad(0\leq t\leq T),\qquad Y_{0}^{\mathsf{ode}}\sim q_{T},
					\label{eq:prob-flow-ODE}
				\end{align}
				which exhibits matching distributions as follows: 
				\begin{align*}
					Y_{T-t}^{\mathsf{ode}}  \, \overset{\mathrm{d}}{=} \, X_t, \qquad 0\leq t\leq T. 
				\end{align*}
				The deterministic nature of this approach often enables faster sampling. 
				It has been shown that this family of deterministic samplers is closely related to the DDIM sampler \citep{karraselucidating,song2020score}.

			\item In view of the classical results \citet{anderson1982reverse,haussmann1986time}, 
			      one can also construct a ``reverse-time'' SDE 
				\begin{align}
					\mathrm{d}Y_{t}^{\mathsf{sde}} 
					& =\Big(-f\big(Y_{t}^{\mathsf{sde}},T-t\big)+g(T-t)^{2}\nabla\log q_{T-t}\big(Y_{t}^{\mathsf{sde}}\big)\Big)\mathrm{d}t
					+g(T-t)\mathrm{d}Z_{t}^{\mathsf{sde}}\quad(0\leq t\leq T) 	
				\label{eq:reverse-SDE}
				\end{align}
				with $Y_{0}^{\mathsf{sde}}\sim q_{T}$ and $Z_t^{\mathsf{sde}}$ being a standard Brownian motion. 
			 	Strikingly, this process also satisfies
				\begin{align*}
					Y_{T-t}^{\mathsf{sde}}  \, \overset{\mathrm{d}}{=} \, X_t, \qquad 0\leq t\leq T. 
				\end{align*}
				The popular DDPM sampler \citep{ho2020denoising,nichol2021improved} falls under this category.

		\end{itemize}

\end{itemize}

\noindent 
Interestingly, in addition to the functions $f$ and $g$ that define the forward process, construction of both \eqref{eq:prob-flow-ODE} and \eqref{eq:reverse-SDE} relies only upon the knowledge of the gradient of the log density $\nabla\log q_{t}(\cdot)$ of the intermediate steps of the forward diffusion process --- often referred to as the (Stein) score function.  
Consequently, a key enabler of the above paradigms lies in reliable learning of the score function, 
and hence the name {\em score-based generative modeling}.

\section{Algorithms and main results}
\label{sec:main-results}

In this section, we analyze a couple of diffusion generative models, including both deterministic and stochastic samplers.  
While the proofs for our main theory are all postponed to the appendix, 
it is worth emphasizing upfront that our analysis framework directly tackles the discrete-time processes without resorting to any toolbox of SDEs and ODEs 
tailored to the continuous-time limits. 
This elementary approach might potentially be versatile for analyzing a broad class of variations of these samplers. 
For instance, prior ODE-based theory (e.g., \citet{chen2023probability,chen2023restoration}) encountered certain technical challenges when analyzing the deterministic sampler directly,  and our elementary approach is able to shed new light on the convergence of this important sampler.

\subsection{Assumptions and learning rates}

Before proceeding, we impose some assumptions on the score estimates and the target data distributions, and specify the hypter-parameters $\{\alpha_t\}$
 that shall be adopted throughout all cases.

\paragraph{Score estimates.}
Given that the score functions are an essential component in score-based generative modeling, 
we assume access to faithful estimates of the score functions $\nabla \log q_t(\cdot)$ across all intermediate steps $t$, 
thus disentangling the score learning phase and the data generation phase. 
Towards this end, let us first formally introduce the true score function as follows. 
\begin{definition}[Score function]
\label{defition:score}
The score function, denoted by $s_t^{\star}: \real^d\rightarrow \real^d$ ($1\leq t\leq T$), is defined as 
\begin{align}
\label{eqn:training-score}
	s_{t}^{\star}(X) \coloneqq 
	\nabla \log q_{t}(X) , 
	\qquad 1\leq t\leq T. 
\end{align}
\end{definition}
As has been pointed out by previous works concerning score matching (e.g., \citet{hyvarinen2005estimation,vincent2011connection,chen2022sampling}), 
the score function $s_{t}^{\star}$ admits an alternative form as follows (owing to properties of Gaussian distributions): 
\begin{equation}
s_{t}^{\star}\coloneqq\arg\min_{s:\real^{d}\rightarrow\real^{d}}\mathop{\mathbb{E}}_{W\sim\mathcal{N}(0,I_{d}),X_{0}\sim\Pdata}\Bigg[\bigg\| s\big(\sqrt{\overline{\alpha}_{t}}X_{0}+\sqrt{1-\overline{\alpha}_{t}}W\big)+\frac{1}{\sqrt{1-\overline{\alpha}_{t}}}W\bigg\|_{2}^{2}\Bigg],
	\label{eqn:training-score-equiv}
\end{equation}
which takes the form of the minimum mean square error estimator for $-\frac{1}{\sqrt{1-\overline{\alpha}_{t}}}W$ 
given $\sqrt{\overline{\alpha}_{t}}X_{0}+\sqrt{1-\overline{\alpha}_{t}}W$
and is often more amenable to training.

With Definition~\ref{defition:score} in place, 
we can readily introduce the following assumptions that capture the quality of the score estimate $\{s_t\}_{1\leq t\leq T}$ we have available. 
\begin{assumption}
\label{assumption:score-estimate}
Suppose that the score function estimate $\{s_t\}_{1\leq t\leq T}$ obeys
\begin{align}
\label{eqn:score-estimate}
	\frac{1}{T}\sum_{t = 1}^T \mathop{\mathbb{E}}_{X\sim q_t}\Big[ \big\| s_t(X) - s_t^{\star}(X) \big\|_2^2\Big] \le \varepsilon_{\mathsf{score}}^2.
\end{align}
\end{assumption}
\begin{assumption}
\label{assumption:score-estimate-Jacobi}
	For each $1\leq t\leq T$, assume that $s_t(\cdot)$ is continuously differentiable, and 
	denote by $J_{s_t^{\star}} = \frac{\partial s_t^{\star}}{\partial x}$ and $J_{s_t} = \frac{\partial s_t}{\partial x}$ 
	the Jacobian matrices of $s_t^{\star}(\cdot)$ and $s_t(\cdot)$, respectively. 
	Assume that the score function estimate $\{s_t\}_{1\leq t\leq T}$ obeys
\begin{align}
\label{eqn:score-estimate-Jacobi}
\qquad\qquad
	\frac{1}{T}\sum_{t = 1}^T \mathop{\mathbb{E}}_{X\sim q_t}\Big[\big\| J_{s_t}(X) - J_{s_t^{\star}} (X)  \big\|\Big] \le \varepsilon_{\mathsf{Jacobi}}.
\end{align}
\end{assumption}
In a nutshell, 
Assumption~\ref{assumption:score-estimate} reflects the $\ell_2$ score estimation error, 
whereas Assumption~\ref{assumption:score-estimate-Jacobi} is concerned with the estimation error in terms of the corresponding Jacobian matrix (so as to ensure certain continuity of the score estimator).  
Both assumptions consider the {\em average} estimation errors over all $T$ steps. 
As we shall see momentarily, our theory for the deterministic sampler relies on both Assumptions~\ref{assumption:score-estimate} and \ref{assumption:score-estimate-Jacobi}, 
while the theory for the stochastic sampler requires only Assumption~\ref{assumption:score-estimate}. 
We shall discuss in Section~\ref{sec:ODE-basic} the insufficiency of Assumption~\ref{assumption:score-estimate} alone for the deterministic sampler.

\paragraph{Target data distributions.}  
Our goal is to uncover the effectiveness of diffusion models in generating a broad family of data distributions. 
Throughout this paper, the only assumptions we need to impose on the target data distribution $\Pdata$ are the following: 
\begin{itemize}
	\item $X_0$ is an absolutely continuous random vector,
		and
\begin{equation}
	\mathbb{P}\big(\|X_0\|_2 \leq R = T^{c_R} \mid X_0\sim \Pdata \big)=1
	\label{eq:assumption-data-bounded}
\end{equation}
for some arbitrarily large constant $c_R>0$. 
\end{itemize}
This assumption allows the radius of the support of $\Pdata$ to be exceedingly large (given that the exponent $c_R$ can be arbitrarily large).


\paragraph{Learning rate schedule. }
Let us also take a moment to specify the learning rates to be used for our theory and analyses. 
For some large enough numerical constants $c_0,c_1 > 0$, we set
\begin{subequations}
\label{eqn:alpha-t}
\begin{align}
	\beta_{1} & =1-\alpha_{1} = \frac{1}{T^{c_0}};\\
\beta_{t} & =1-\alpha_{t} = \frac{c_{1}\log T}{T}\min\bigg\{\beta_{1}\Big(1+\frac{c_{1}\log T}{T}\Big)^{t},\,1\bigg\}.
\end{align}
\end{subequations}
\begin{remark}
	As will be seen in the analysis, 
	in general the discretization error depends crucially on the quantity $\frac{1-\alpha_t}{1-\overline{\alpha}_t}$, 
	whereas the initialization error relies on $\overline{\alpha}_1$ and $\overline{\alpha}_T$. 
	Our learning rates \eqref{eqn:alpha-t} are designed to make $\frac{1-\alpha_t}{1-\overline{\alpha}_t}$ as small as possible, while guaranteeing $\overline{\alpha}_1$ (resp.~$\overline{\alpha}_T$) is close to $1$ (resp.~$0$); 
	see the properties in \eqref{eqn:properties-alpha-proof}. 
	In addition, note that our theoretical framework can readily accommodate much broader choices of learning rates, 
	although the resultant convergence rates might vary. 

\end{remark}



%


\subsection{Deterministic samplers} 

We begin by analyzing a deterministic sampler: a discrete-time version of the probability flow ODE. 

\subsubsection{An ODE-based deterministic sampler} 
\label{sec:ODE-basic}
Armed with the score estimates $\{s_t\}_{1\leq t\leq T}$, 
a discrete-time version of the probability flow ODE approach (cf.~\eqref{eq:prob-flow-ODE}) adopts the following update rule: 
\begin{subequations}
\label{eqn:ode-sampling}
\begin{align}
\label{eqn:ode-sampling-Y}
	Y_T\sim \mathcal{N}(0,I_d),\qquad
	Y_{t-1} = \Phi_t\big(Y_{t}\big)
	\quad~ \text{for } t = T,\cdots, 1, 
\end{align}
where $\Phi_t(\cdot)$ is taken to be
\begin{align}
\label{eqn:phi-func}
	\Phi_t(x) \coloneqq 
	\frac{1}{\sqrt{\alpha_t}} \bigg( x + \frac{1-\alpha_{t}}{2}s_{t}(x) \bigg). 
\end{align}
\end{subequations}
This approach, based on the probability flow ODE \eqref{eq:prob-flow-ODE}, often achieves faster sampling compared to the stochastic counterpart \citep{song2020score}.  
Despite the empirical advances, however, 
the theoretical understanding of this type of deterministic samplers remained far from mature.


We first derive non-asymptotic convergence guarantees --- measured by the total variation distance between the forward and the reverse processes --- for the above deterministic sampler \eqref{eqn:ode-sampling}. 
The proof of this result is postponed to Section~\ref{sec:pf-theorem-ode}.


\begin{theos}
\label{thm:main-ODE}
Suppose that \eqref{eq:assumption-data-bounded} holds true. 
Assume that the score estimates $s_t(\cdot)$ $(1 \le t \le T)$ satisfy Assumptions~\ref{assumption:score-estimate} and \ref{assumption:score-estimate-Jacobi}.
Then the sampling process \eqref{eqn:ode-sampling} with the learning rate schedule \eqref{eqn:alpha-t} satisfies
\begin{align}
\mathsf{TV}\big(q_1, p_1\big) \leq C_1 \frac{d^2\log^4 T}{T} + C_1\frac{d^{6}\log^{6} T}{T^2} 
	+C_1\sqrt{d\log^{3}T}\varepsilon_{\score}+ C_1d(\log T)\varepsilon_{\Jacobi}
\label{eq:ratio-ODE}
\end{align}
for some universal constants $C_1>0$, where we recall that $p_1$ (resp.~$q_1$) represents the distribution of $Y_1$ (resp.~$X_1$). 
\end{theos}
\begin{remark}
Note that our theory is concerned with convergence to $q_1$ (the first step of the forward process). 
Given that $X_1 \sim q_1$ and $X_0 \sim q_0$ are exceedingly close due to the choice of $\alpha_1$, 
focusing on the convergence w.r.t.~$q_1$ instead of $q_0$ remains practically relevant.
\end{remark}
Let us remark on the main implications of Theorem~\ref{thm:main-ODE}, as well as several points worth discussing. 
Before proceeding, we shall note that our theory is concerned with convergence to $q_1$. 
Given that $X_1 \sim q_1$ and $X_0 \sim q_0$ are very close due to the choice of $\alpha_1$, 
focusing on the convergence w.r.t.~$q_1$ instead of $q_0$ remains practically relevant.

\paragraph{Iteration complexity.} Consider first the scenario that has access to perfect score estimates (i.e., $\varepsilon_{\score}=0$). 
		In order to achieve $\varepsilon$-accuracy (in the sense that $\mathsf{TV}(q_1, p_1)\leq \varepsilon$), 
the number of steps $T$ only needs to exceed\footnote{ 
As a technical note, the suboptimal $d$-dependency in our theory for the deterministic sampler comes mainly from Lemma~\ref{lem:main-ODE}; 
the main difficulty to improve Lemma~\ref{lem:main-ODE} lies in obtaining tighter control of some quantities regarding the conditional distribution of $x_0$ given $x_t$ (e.g., the Jacobian matrix of $s_t$). 
}
\begin{equation}
	\widetilde{O}\bigg( \frac{d^2}{\varepsilon} + \frac{d^3}{\sqrt{\varepsilon}}\bigg). 
	\label{eq:iteration-complexity-ODE}
\end{equation}

\paragraph{Stability.} Turning to the more general case with imperfect score estimates (i.e., $\varepsilon_{\score}>0$), 
		the deterministic sampler \eqref{eqn:ode-sampling} yields a distribution 
		whose distance to the target distribution (measured again by the TV distance) scales proportionally with $\varepsilon_{\score}$ and $\varepsilon_{\Jacobi}$. 
		It is noteworthy that in addition to the $\ell_2$ score estimation errors, 
		we are in need of an assumption on the stability of the associated Jacobian matrices, 
		which plays a pivotal in ensuring that the reverse-time deterministic process does not deviate considerably from the desired process. 
		
\paragraph{Insufficiency of the score estimation error assumption alone.}
The careful reader might wonder why we are in need of additional assumptions beyond the $\ell_2$ score error stated in 
Assumption~\ref{assumption:score-estimate}. 
To answer this question, we find it helpful to look at a simple example below. 
\begin{itemize}
	\item {\bf Example.}   
		Consider the case where $X_{0}\sim\mathcal{N}(0,1)$, and hence $X_{1}\sim\mathcal{N}(0,1)$. Suppose that the reverse process for time $t=2$ can lead to the desired distribution if exact score function is employed, namely, 
\[
Y_{1}^{\star}\coloneqq\frac{1}{\sqrt{\alpha_{2}}}\left(Y_{2}-\frac{1-\alpha_{2}}{2}s_{2}^{\star}(Y_{2})\right)\sim\mathcal{N}(0,1). 
\]
Now, suppose that the score estimate $s_{2}(\cdot)$ we have available obeys
\[
	s_{2}(y_{2})=s_{2}^{\star}(y_{2})+\frac{2\sqrt{\alpha_{2}}}{1-\alpha_{2}}\left\{ y_{1}^{\star}-L\left\lfloor \frac{y_{1}^{\star}}{L}\right\rfloor \right\} 
		\qquad \text{with } y_{1}^{\star}\coloneqq\frac{1}{\sqrt{\alpha_{2}}}\left(y_{2}-\frac{1-\alpha_{2}}{2}s_{2}^{\star}(y_{2})\right)
\]
for some $L>0$, where $\lfloor z \rfloor$ is the greatest  integer not exceeding $z$. It follows that
\[
	Y_{1}=Y_{1}^{\star}+\frac{1-\alpha_{2}}{2\sqrt{\alpha_{2}}}\big[ s_{2}^{\star}(Y_{2})-s_{2}(Y_{2})\big]  =L\left\lfloor \frac{Y_{1}^{\star}}{L}\right\rfloor .
\]
Clearly, the score estimation error $\mathbb{E}_{X_2\sim \mathcal{N}(0,1)}\big[|s_{2}(X_{2})-s_{2}^{\star}(X_{2})|^2\big]$
can be made arbitrarily small by taking $L$ to be sufficiently small. 
However, the discrete nature of $Y_{1}$ forces the TV distance to be
\[
	\mathsf{TV}(Y_{1},X_{1}) = 1. 
\]
%
\end{itemize}
The above example demonstrates that, for the deterministic sampler, the TV distance between $Y_1$ and $X_1$ might not improve as the score error decreases. 
As we shall see in Section~\ref{sec:DDPM-theory}, this is in stark contrast to the stochastic sampler. 
If we wish to eliminate the need of imposing Assumption~\ref{assumption:score-estimate-Jacobi}, 
one potential way is to resort to other metrics (e.g., the Wasserstein distance) instead of the TV distance between $Y_1$ and $X_1$.

\paragraph{Relaxing the boundedness assumption on $X_0$.}
As it turns out, the assumption~\eqref{eq:assumption-data-bounded} can also be relaxed. 
Supposing that $\mathbb{P}\big(\|X_0\|_2 \leq  B \mid X_0\sim \Pdata \big)=1$ for some quantity $B>0$ (which is allowed to grow faster than a polynomial in $T$), we can readily extend our analysis to obtain
{\small
\begin{align*}
\mathsf{TV}\big(q_1, p_1\big) \leq C_1 \frac{d^2\log^4 T\log^2 B}{T} + C_1\frac{d^{6}\log^{6} T\log^3 B}{T^2} 
	+C_1\sqrt{d\log^{3}T\log B}\varepsilon_{\score}+ C_1d(\log T)\varepsilon_{\Jacobi}.
\end{align*}
}
Importantly, the convergence rate depends only logarithmically in $B$.


\paragraph{Comparisons to previous works.}
Next, let us compare our results with past works. 
To the best of our knowledge, the only non-asymptotic analysis for the discretized probability flow ODE approach in
prior literature was derived by a very recent work \citet{chen2023restoration}, which established the first non-asymptotic convergence guarantees that exhibit polynomial dependency in both $d$ and $1/\varepsilon$ (see, e.g.,  \citet[Theorem~4.1]{chen2023restoration}). However, it fell short of providing concrete polynomial dependency in $d$ and $1/\varepsilon$,  suffered from exponential dependency in the Lipschitz constant of the score function, and relied on exact score estimates.  
In contrast, our result in Theorem~\ref{thm:main-ODE} uncovers a concrete $d^2/\varepsilon$ scaling (ignoring lower-order and logarithmic terms) without imposing any smoothness assumption on the target data distribution, 
and makes explicit the effect of $\ell_2$ score estimation errors, both of which were previously unavailable for such discrete-time deterministic samplers. 
Another recent work \citet{benton2023error} studied the convergence of the probability flow ODE approach  
without accounting for the discretization error; the result therein also exhibited exponential dependency on a certain Lipschitz constant w.r.t.~the forward flow and a regularity parameter (denoted by $\lambda$ therein, which might scale with the dimension $d$).  
Finally, while we were wrapping up the current paper, we became aware of the independent work \citet{chen2023probability} establishing improved polynomial dependency for two variants of the probability flow ODE.  By inserting an additional stochastic corrector step --- based on overdamped (resp.~underdamped) Langevin diffusion --- in each iteration of the probability flow ODE (so strictly speaking, these variations are no longer deterministic samplers), 
\citet{chen2023probability} showed that $\widetilde{O}(L^3d/\varepsilon^2)$ (resp.~$\widetilde{O}(L^2\sqrt{d}/\varepsilon)$) steps are sufficient, where $L$ denotes the Lipschitz constant of the score function. 
In comparison, our result demonstrates for the first time that the plain probability flow ODE already achieves the $1/\varepsilon$ scaling without requiring either a corrector step;  one limitation of our result, however, is the sub-optimal $d$-dependency compared to the variants studied in \citet{chen2023probability}. 


\subsubsection{An accelerated deterministic sampler}

Thus far, we have demonstrated that
the iteration complexity of the deterministic sampler \eqref{eqn:ode-sampling} 
is proportional to $1/\varepsilon$ (for small enough $\varepsilon$). 
A natural question is whether this convergence rate can be further improved.

As it turns out, if we have access to perfect estimates of two additional quantities in addition to exact score estimates, 
then a modified version of the sampler \eqref{eqn:ode-sampling} is able to achieve much improved convergence guarantees.  
These estimates are made precise in the following assumption. 
\begin{assumption}
\label{assumption:ODE-estimate}
Suppose that we have access to the estimates $w_{t} : \real^d \rightarrow \real^d$ ($1\leq t\leq T$) defined as follows: 
\begin{align}\label{eqn:training-ODE}
w_{t} &\coloneqq \arg\min_{w:\real^d\rightarrow\real^{d}}\mathop{\mathbb{E}}\Big[\Big\| \|W\|_{2}^{2}\Big(\frac{1}{\sqrt{1-\overline{\alpha}_{t}}}W + s_t^{\star}\big(X_t\big)\Big) + WW^{\top}s_t^{\star}\big(X_t\big) - w\big(X_t\big) \Big\|_{2}^{2}\Big],
\end{align}
	where  $X_t= \sqrt{\overline{\alpha}_{t}}X_{0}+\sqrt{1-\overline{\alpha}_{t}}W$.
Here, the expectation is with respect to $W \sim \mathcal{N}(0, I_d)$ and $X_0 \sim \Pdata$.
\end{assumption}

Armed with the score estimate in Assumption~\ref{assumption:score-estimate} and the additional estimates in Assumption~\ref{assumption:ODE-estimate}, 
we are ready to introduce an accelerated variant of \eqref{eqn:ode-sampling} as follows:
\begin{subequations}
\label{eqn:ode-sampling-R}
\begin{align}
\label{eqn:ode-sampling-R-Y}
	Y_T\sim \mathcal{N}(0,I_d),\qquad
	Y_{t-1} = \Phi_t\big(Y_{t}\big)
	\quad \text{for } t = T,\cdots, 1, 
\end{align}
where the mapping $\Phi_t(\cdot)$ is chosen to be
\begin{align}
\Phi_{t}(x)=\frac{1}{\sqrt{\alpha_{t}}}\bigg(x+\Big(\frac{1-\alpha_{t}}{2} + \frac{(1-\alpha_{t})^2}{8(1-\overline{\alpha}_{t})} - \frac{(1-\alpha_{t})^2}{8}\big\|s_{t}^{\star}(x)\big\|_2^2\Big)s_{t}^{\star}(x) + \frac{(1-\alpha_{t})^2}{8(1-\overline{\alpha}_{t})}w_{t}(x)\bigg).
\label{eqn:ode-sampling-R-Phi}
\end{align}
\end{subequations}
Notably, this new variant \eqref{eqn:ode-sampling-R} 
is closely related to the original sampler \eqref{eqn:ode-sampling}; 
in fact, they both move along the direction specified by the score estimate $s_t$, 
except that the accelerated variant includes a proper correction term chosen based on higher-order expansion.

Encouragingly, our non-asymptotic analysis framework can be extended to derive enhanced convergence guarantees for the sampler \eqref{eqn:ode-sampling-R}, 
assuming access to exact score functions.  
The proof of our convergence result below is postponed to Section~\ref{sec:pf-theorem-ode-fast}.
\begin{theos}
\label{thm:main-ODE-fast}
	Suppose that \eqref{eq:assumption-data-bounded} holds true and that the score estimates are perfect (i.e., $s_t=s^{\star}_t$).  
Equipped with the estimates in Assumptions~\ref{assumption:ODE-estimate} and the learning rate schedule \eqref{eqn:alpha-t}, 
the sampling process \eqref{eqn:ode-sampling-R} obeys
\begin{align} \label{eq:ratio-ODE-R}
\mathsf{TV}\big(q_1, p_1\big) \le C_1\frac{d^{6}\log^{6} T}{T^2}
\end{align}
for some universal constants $C_1>0$, where  $p_1$ (resp.~$q_1$) is the distribution of $Y_1$ (resp.~$X_1$).
\end{theos}
Theorem~\ref{thm:main-ODE-fast} reveals that: in order to achieve $\mathsf{TV}(q_1, p_1)\leq \varepsilon$, 
the accelerated deterministic sampler \eqref{eqn:ode-sampling-R} only requires the number of steps $T$ to be on the order of
\begin{equation}
	\widetilde{O}\bigg( \frac{d^3}{\sqrt{\varepsilon}}\bigg),  
	\label{eq:iteration-complexity-ODE-fast}
\end{equation}
thus improving the dependency on $\varepsilon$ from $\widetilde{O}(1/\varepsilon)$ (cf.~\eqref{eq:iteration-complexity-ODE}) to $\widetilde{O}(1/\sqrt{\varepsilon})$ 
for small enough $\varepsilon$. 
Consequently, the improved convergence result underscores the crucial role of bias correction when selecting the search direction. 


\subsection{Stochastic samplers}

\subsubsection{A DDPM-type stochastic sampler} 
\label{sec:DDPM-theory}
 
Armed with the score estimates $\{s_t\}$, 
we can readily introduce the following stochastic sampler that operates in discrete time, motivated by the reverse-time SDE \eqref{eq:reverse-SDE}:
\begin{subequations}
\label{eqn:sde-sampling}
\begin{align} 
	Y_T \sim \mathcal{N}(0, I_d),
	\qquad Y_{t-1} &= \Psi_t(Y_t,Z_t) \quad~ \text{for } t= T,\cdots,1
\end{align}
where $Z_t \overset{\mathrm{i.i.d.}}{\sim} \mathcal{N}(0,I_d)$, and
\begin{align}
	\Psi_t(y,z) = \frac{1}{\sqrt{\alpha_t}}\Big(y + (1-\alpha_t)s_{t}(y)\Big) + \sigma_t z
	\qquad 
	\text{with }\sigma_t^{2} = \frac{1}{\alpha_t} - 1.
\end{align} 
\end{subequations}
The key difference between this sampler and the deterministic sampler \eqref{eqn:ode-sampling} is that: (i) there exists an additional pre-factor of $1/2$ on $s_t$ in the deterministic sampler; and (ii) the stochastic sampler injects additional noise $Z_t$ in each step.

In contrast to deterministic samplers, 
the stochastic samplers have received more theoretical attention, 
with the state-of-the-art results established by \citet{chen2022sampling,chen2022improved} as well as a very recent paper \citet{benton2023linear}. 
The elementary approach developed in the current paper is also applicable towards understanding this type of samplers, 
leading to the following non-asymptotic theory.

%
\begin{theos}
\label{thm:main-SDE}
Suppose that \eqref{eq:assumption-data-bounded} holds true. 
Equipped with the estimates in Assumption~\ref{assumption:score-estimate} and the learning rate schedule \eqref{eqn:alpha-t}, 
the stochastic sampler \eqref{eqn:sde-sampling} achieves
\begin{align} \label{eq:ratio-SDE}
\mathsf{TV}\big(q_1, p_1\big) \le \sqrt{\frac{1}{2}\mathsf{KL}\big(q_1 \parallel p_1)} \le C_1\frac{d^2\log^3 T}{\sqrt{T}} + C_1\sqrt{d}\varepsilon_{\score}\log^2 T
\end{align}
for some universal constants $C_1>0$, provided that $T\geq C_2 d^4 \log^6 T$ for some large enough constant $C_2>0$.
\end{theos}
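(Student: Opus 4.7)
The plan is to first reduce the TV bound to a KL bound via Pinsker's inequality, so that it suffices to show $\mathsf{KL}(q_1 \parallel p_1) \lesssim d^4 \log^6 T / T$. By the data-processing inequality applied to the marginalization $(x_1, \ldots, x_T) \mapsto x_1$, together with the chain rule for KL divergence between the joint laws of the forward and reverse Markov chains, we obtain
\[
\mathsf{KL}(q_1 \parallel p_1) \;\le\; \mathsf{KL}(q_T \parallel p_T) \;+\; \sum_{t=2}^T \mathbb{E}_{Y \sim q_t}\big[\mathsf{KL}\big(q_{X_{t-1}\mymid X_t}(\cdot \mymid Y) \,\big\|\, p_{Y_{t-1}\mymid Y_t}(\cdot \mymid Y)\big)\big].
\]
The initialization term $\mathsf{KL}(q_T \parallel \mathcal{N}(0, I_d))$ is negligible because the schedule in \eqref{eqn:alpha-t} drives $\overline{\alpha}_T$ to be super-polynomially small, so the forward chain is essentially equilibrated to $\mathcal{N}(0,I_d)$.

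The bulk of the work is the per-step bound. Note that $p_{Y_{t-1}\mymid Y_t}(\cdot \mymid y) = \mathcal{N}\big(\tfrac{1}{\sqrt{\alpha_t}}(y + (1-\alpha_t)s_t(y)),\, \sigma_t^2 I_d\big)$ is Gaussian, while the true posterior admits a Gaussian-mixture representation
\[
q_{X_{t-1}\mymid X_t}(\cdot \mymid y) = \int \mathcal{N}\big(\tilde{\mu}_t(y, x_0),\, \tilde{\sigma}_t^2 I_d\big) \, q_{X_0 \mymid X_t}(x_0 \mymid y)\,\mathrm{d}x_0
\]
with $\tilde{\mu}_t$ linear in $(y, x_0)$ and $\tilde{\sigma}_t^2 \asymp \beta_t$. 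Because Assumption~\ref{assumption:score-estimate} picks out the unconstrained $L^2$ minimizer, we have $s_t = \nabla \log q_t$ exactly, so Tweedie's identity $\mathbb{E}[X_0 \mymid X_t = y] = \overline{\alpha}_t^{-1/2}(y + (1-\overline{\alpha}_t)\nabla \log q_t(y))$ shows that the means of $q_{X_{t-1}\mymid X_t}(\cdot \mymid y)$ and $p_{Y_{t-1}\mymid Y_t}(\cdot \mymid y)$ coincide. Hence the per-step KL is driven only by the non-Gaussian dispersion of the mixture (controlled by $\mathrm{Cov}(X_0 \mymid X_t = y)$) and the small variance mismatch $\tilde{\sigma}_t^2 - \sigma_t^2$; a careful second-order expansion should deliver
\[
\mathbb{E}_{Y \sim q_t}\big[\mathsf{KL}\big(q_{X_{t-1}\mymid X_t} \parallel p_{Y_{t-1}\mymid Y_t}\big)\big] \;\lesssim\; \beta_t^2 \,\mathrm{poly}(d, \log T).
\]
Since $\beta_t \lesssim \log T / T$ and $\sum_t \beta_t \lesssim \log T$ under the schedule \eqref{eqn:alpha-t}, summing gives $\sum_t \beta_t^2 \lesssim \log^2 T / T$ and therefore the desired $\widetilde{O}(d^4/T)$ KL bound, from which Pinsker's inequality yields \eqref{eq:ratio-SDE}.

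The main technical obstacle I anticipate is establishing the $\beta_t^2 \,\mathrm{poly}(d, \log T)$ per-step bound without any smoothness assumption on $\Pdata$. The relevant higher-order contributions involve the conditional covariance of $X_0$ given $X_t = y$, which a priori can be as large as $O(1)$; controlling it requires exploiting the bounded support assumption \eqref{eq:assumption-data-bounded} together with a truncation-and-concentration argument that restricts attention to a typical set of $y$ on which $X_0 \mymid X_t = y$ concentrates around $\mathbb{E}[X_0 \mymid X_t = y]$ at the right scale. The atypical events must then be shown to contribute only $\widetilde{O}(1/T^2)$ per step, so that summing over $t$ preserves the $1/T$ scaling in KL; this is the delicate part of the analysis and is where the assumption $T \ge C_2 d^4 \log^6 T$ in the theorem enters, ensuring that the relevant concentration radii are compatible with the size of the typical set.
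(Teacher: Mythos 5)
Your global decomposition --- Pinsker's inequality, the KL chain rule over the forward/reverse Markov chains, and a separate bound on the initialization mismatch $\mathsf{KL}(q_T\|\mathcal{N}(0,I_d))$ --- coincides exactly with the paper's Steps 1--2. Where you diverge is the per-step analysis, and there your route is genuinely different and arguably cleaner. The paper's workhorse Lemma~\ref{lem:sde} Taylor-expands $\log p_{X_{t-1}\mymid X_t}$ directly along the segment from $\widehat{x}_t$ to $x_{t-1}$ and shows the true reverse conditional is Gaussian with the DDPM mean $\mu_t(x_t)$ and variance $\tfrac{1-\alpha_t}{\alpha_t}$ up to a residual $\zeta_t$ with $|\zeta_t|\lesssim d^2\tfrac{1-\alpha_t}{\alpha_t-\overline{\alpha}_t}\log^2T$, so the per-step KL is $O(\zeta_t^2)$. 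You instead write $q_{X_{t-1}\mymid X_t}(\cdot\mymid y)$ as a location mixture of Gaussians over $q_{X_0\mymid X_t}(\cdot\mymid y)$ (valid because the component variance $\tilde\sigma_t^2$ is independent of $x_0$), and use Tweedie's identity together with $s_t=\nabla\log q_t$ to establish exact mean-matching with the sampler's Gaussian --- a cancellation the paper realizes implicitly but never states. This isolates the KL into a Gaussian covariance-mismatch term plus the negentropy $\mathsf{KL}(Q_y\|G_{Q_y})$ of the mixture, which is a clean framing.

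Two caveats, however. First, your per-step claim $\mathbb{E}[\mathsf{KL}]\lesssim\beta_t^2\,\mathrm{poly}(d,\log T)$ is parametrically off for small $t$: the relative variance mismatch is $\tfrac{\tilde\sigma_t^2}{\sigma_t^2}-1 = -\tfrac{\beta_t}{1-\overline{\alpha}_t}$, which exceeds $\beta_t$ by the factor $(1-\overline{\alpha}_t)^{-1}$, and that factor is enormous when $1-\overline{\alpha}_t\approx t\beta_1=t/T^{c_0}$. The correct uniform small parameter is $\tfrac{1-\alpha_t}{\alpha_t-\overline{\alpha}_t}\lesssim\tfrac{\log T}{T}$ (the paper's property~\eqref{eqn:properties-alpha-proof-1}); the summed bound $\sum_t(\tfrac{1-\alpha_t}{\alpha_t-\overline{\alpha}_t})^2\lesssim\tfrac{\log^2T}{T}$ still lands where you want, so this is a bookkeeping slip rather than a fatal flaw, but it means the per-step statement as written is false. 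Second, and more substantively, the ``careful second-order expansion'' that carries the whole theorem is not actually executed, and there is a subtlety your phrasing glosses over: $\mathsf{KL}(Q_y\|G_{Q_y})$ is \emph{not} controlled by $\mathrm{Cov}(X_0\mymid X_t=y)$ alone. With no smoothness on $\Pdata$, the conditional law of $X_0$ can be atomic, and the negentropy of the mixture depends on all of its higher moments. What saves the argument is that the kernel width $\tilde\sigma_t\asymp\sqrt{\beta_t}$ dominates the spread of the mixture means on the typical set supplied by Lemma~\ref{lem:x0} (by a factor $\asymp T/(d\log^2T)$), so the Gaussian kernel washes out whatever structure the mixing law has; making that precise is exactly what the paper's Lemma~\ref{lem:sde} does, and is where most of the work of the proof lives. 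Until that is supplied, the proposal is a sound roadmap rather than a proof.
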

Theorem~\ref{thm:main-SDE} establishes non-asymptotic convergence guarantees for the stochastic sampler \eqref{eqn:sde-sampling}. 
As asserted by the theorem, 
if we have access to perfect score estimates, 
then the number of steps needed to attain $\varepsilon$-accuracy (measured by the TV distance between $p_1$ and $q_1$) 
is proportional to $1/\varepsilon^2$, matching the state-of-the-art $\varepsilon$-dependency derived in \citet{chen2022improved}, albeit exhibiting a worse dimensional dependency.  
In addition, in the presence of score estimation error, 
the sampler achieves a TV distance proportional to $\varepsilon_{\score}$, again consistent with prior results.  
Our analysis follows a completely different path compared with the SDE-based approach in \citet{chen2022improved}, 
thus offering complementary interpretations for this important sampler. 
In order to further illustrate the versatility of our analysis approach, 
we shall demonstrate how it can be applied to study an accelerated version in the next subsection.

\subsubsection{An accelerated stochastic sampler}

In this subsection, we come up with a potential strategy to speed up the stochastic sampler \eqref{eqn:sde-sampling}, 
assuming access to reliable estimates of additional objects as described below. 
\begin{assumption}
\label{assumption:variance-estimate}
Suppose that we have access to the estimates $v_t: \real^d\times \real^d \rightarrow \real^d$ ($1\leq t\leq T$) as follows: 
\begin{align}
\label{eqn:training-variance}
	v_{t}\coloneqq
	\arg\min_{v:\real^d\times \real^d\rightarrow\real^{d}}\mathop{\mathbb{E}}\Big[\Big\| WW^{\top}Z - v\big(\sqrt{\overline{\alpha}_t} X + \sqrt{1-\overline{\alpha}_t}W, Z\big) \Big\|_{2}^{2}\Big],
	\qquad 1\leq t\leq T,
\end{align}
where $X,W,Z$ are independently generated obeying $X \sim \Pdata$, $W \sim \mathcal{N}(0, I_d)$, and $Z \sim \mathcal{N}(0, I_d)$. 
\end{assumption}

With perfect score estimates as well as the additional estimates in Assumption~\ref{assumption:score-estimate} and Assumption~\ref{assumption:variance-estimate} in place,  
we are positioned to introduce the proposed accelerated sampler as follows: 
\begin{subequations}
\label{eqn:sde-sampling-R}
\begin{align} \label{eqn:sde-sampling-R-Y}
	Y_T \sim \mathcal{N}(0, I_d), \qquad Y_{t-1} &= \Psi_t(Y_t,Z_t)\quad~ \text{for }t = T,\cdots, 1,
\end{align}
where we choose the mapping $\Psi_t(\cdot,\cdot)$ as follows
\begin{align}
\Psi_{t}(y,z) 
&=\frac{1}{\sqrt{\alpha_{t}}}\Big(y + (1-\alpha_t)s_{t}(y)\Big) 
+ \sigma_{t}\left\{ z-\frac{1-\alpha_{t}}{2(1-\overline{\alpha}_{t})}\left[z+(1-\overline{\alpha}_{t})s_{t}(y)s_{t}(y)^{\top}z-v_{t}(y, z)\right]\right\} 
	\label{eqn:sde-sampling-R-Psi}
\end{align}
with 
\begin{align}
\sigma_t^{2} = \frac{1}{\alpha_t} - 1.
\end{align} 
\end{subequations}
Clearly, the modified update mapping \eqref{eqn:sde-sampling-R-Psi} 
is still mainly a linear combination of the score estimate $s_{t-1}$ and the additive noise $Z_t$, 
except that a correction term $v_t$ (learned by solving \eqref{eqn:training-variance}) needs to be included for acceleration purposes.

We now apply our analysis strategy to establish performance guarantees for the above stochastic sampler. 
\begin{theos}
\label{thm:main-SDE-R}
	Suppose that \eqref{eq:assumption-data-bounded} holds true and that the score estimates are exact (i.e., $s_t=s^{\star}_t$).  
Equipped with the estimates in Assumption~\ref{assumption:score-estimate},~\ref{assumption:variance-estimate} and the learning rate schedule \eqref{eqn:alpha-t}, 
the sampling process \eqref{eqn:sde-sampling-R} satisfies
\begin{align} \label{eq:ratio-SDE-R}
	\mathsf{TV}\big(q_1, p_1\big) \le \sqrt{\frac{1}{2}\mathsf{KL}\big(q_1 \parallel p_1)} \leq C_1\frac{d^3\log^{4.5} T}{T}
\end{align}
	for some universal constants $C_1>0$, provided that $T\geq C_2 d^3 \log^{4.5} T$ for some large enough constant $C_2>0$.
\end{theos}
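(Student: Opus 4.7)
The plan is to bound $\mathsf{KL}(q_1\parallel p_1)$ and invoke Pinsker's inequality for the TV bound. Using the standard chain-rule decomposition along the discrete-time Markov structure,
\begin{equation*}
\mathsf{KL}(q_1\parallel p_1)\;\leq\; \mathsf{KL}\bigl(q_T\parallel \mathcal{N}(0,I_d)\bigr) + \sum_{t=2}^{T}\mathbb{E}_{x\sim q_t}\Bigl[\mathsf{KL}\bigl(q_{X_{t-1}|X_t}(\cdot|x)\,\big\|\, p_{Y_{t-1}|Y_t}(\cdot|x)\bigr)\Bigr].
\end{equation*}
Because the schedule \eqref{eqn:alpha-t} drives $\overline{\alpha}_T$ to be exponentially small in $T$, the initial term is negligible, and the work reduces to bounding the sum of per-step KL errors. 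Throughout the argument I will localize to a ``typical'' event on which $\|X_t\|_2\lesssim \sqrt{d\log T}$, using the support bound \eqref{eq:assumption-data-bounded} together with Gaussian concentration of $\overline{W}_t$ in \eqref{eqn:Xt-X0}; the complementary bad event contributes a polynomially small correction via a truncation argument.

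A key preliminary observation is that, in \eqref{eqn:training-variance}, the auxiliary noise $Z$ is independent of $(X,W)$, so the minimizer factorizes as $v_t(y,z)=M_t(y)\,z$ with $M_t(y)=\mathbb{E}[WW^\top\mid X_t=y]$. Consequently $z\mapsto \Psi_t(y,z)$ defined in \eqref{eqn:sde-sampling-R-Psi} is affine, so the sampler's transition $p_{Y_{t-1}|Y_t}(\cdot|y)$ is itself Gaussian with an explicit mean and covariance. Tweedie-type identities give $\mathbb{E}[W|X_t=y]=-\sqrt{1-\overline{\alpha}_t}\,s_t(y)$ and $M_t(y)=I_d+(1-\overline{\alpha}_t)\nabla^2\log q_t(y)+(1-\overline{\alpha}_t)s_t(y)s_t(y)^\top$; substituting into \eqref{eqn:sde-sampling-R-Psi}, the apparently intricate correction collapses to a noise mapping $\sigma_t[I_d+\tfrac{1-\alpha_t}{2}\nabla^2\log q_t(y)]z$, which is engineered precisely to introduce a Hessian contribution into the sampler's covariance and match the second-order moment of the true reverse conditional that the baseline DDPM of Theorem~\ref{thm:main-SDE} misses.

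The main step is then a fine-grained expansion of the per-step KL in powers of $\beta_t=\widetilde{O}(1/T)$. Conditioning on $X_0$ presents $q_{X_{t-1}|X_t=y}$ as a Gaussian mixture; expanding its mean and covariance through second-order Tweedie formulas and comparing with the explicit Gaussian $p_{Y_{t-1}|Y_t=y}$ identified above, the mean is matched at order $\beta_t$ (by $s_t$) and the covariance is matched at order $\beta_t^2$ (newly, by the $v_t$ correction). After these cancellations, the surviving per-step KL should be of order $\beta_t^3$, coming from residual non-Gaussian third cumulants of the mixture together with higher-order parameter mismatch. Summing across $T$ steps gives total KL of order $T\cdot \widetilde{O}(1/T^3)=\widetilde{O}(1/T^2)$, and Pinsker converts this into the stated $\widetilde{O}(1/T)$ TV rate; the $d^3\log^{4.5}T$ prefactor is obtained by tracking dimension through sub-Gaussian tail bounds on $\overline{W}_t$ and operator-norm bounds on $\nabla^2\log q_t$.

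The main obstacle I anticipate is carrying out this third-order expansion rigorously. Two sub-difficulties stand out: first, uniformly controlling $\|s_t(X_t)\|_2$ and $\|\nabla^2\log q_t(X_t)\|$ on the typical event requires a careful posterior-concentration argument combining \eqref{eq:assumption-data-bounded} with sub-Gaussian tails of $\overline{W}_t$; second, once the first two moments are matched, the residual Taylor remainder must be integrated into the KL without losing a power of $\beta_t$, which calls for a delicate pointwise expansion of $\log\bigl(q_{X_{t-1}|X_t=y}/p_{Y_{t-1}|Y_t=y}\bigr)$ as a low-degree polynomial in the Gaussian displacement plus a controlled remainder. A subsidiary issue is passing from pointwise to expected KL: a mild bad-event truncation should absorb any loss of uniform integrability on atypical configurations where the score or its Hessian is abnormally large.
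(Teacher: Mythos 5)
Your blueprint matches the paper's proof in both structure and key observations: the same chain-rule KL decomposition (cf.~\eqref{eqn:kl-decomp}), the same recognition that $v_t(y,z)=\mathbb{E}[\overline{W}_t\overline{W}_t^\top\mid X_t=y]\,z$ makes the accelerated sampler's one-step transition an explicit Gaussian whose covariance carries the Hessian $\nabla^2\log q_t$ (this is exactly \eqref{eq:vt-explicit-expression}, \eqref{eqn:sde-sampling-R-Psi-proof}, and \eqref{eq:p-Y-conditional-thm4}; note $J_t=-(1-\overline{\alpha}_t)\nabla^2\log q_t$, so your $I+\tfrac{1-\alpha_t}{2}\nabla^2\log q_t$ is precisely the paper's $I-\tfrac{1-\alpha_t}{2(1-\overline{\alpha}_t)}J_t$), and the same third-order-in-$\beta_t$ per-step KL claim that sums to $\widetilde{O}(T^{-2})$ and yields $\widetilde{O}(T^{-1})$ TV via Pinsker. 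The expansion you flag as the main obstacle is precisely Lemma~\ref{lem:sde-R}, which is where the paper does the real work. One correctable slip before executing: your proposed typical event $\{\|X_t\|_2\lesssim\sqrt{d\log T}\}$ is not high-probability under \eqref{eq:assumption-data-bounded}, since $\|X_0\|_2$ may be as large as $T^{c_R}$ for $c_R$ arbitrarily large and hence $X_t$ is not norm-bounded at early steps. The paper instead localizes to the density-level event $\{-\log p_{X_t}(x_t)\leq\tfrac12 c_6 d\log T,\ \|x_{t-1}-\widehat{x}_t\|_2\leq c_3\sqrt{d(1-\alpha_t)\log T}\}$ of \eqref{eqn:eset-acclerated}, which remains high-probability regardless of the data radius and is the right conditioning for both the posterior concentration in Lemma~\ref{lem:x0} and the crude tail control in Lemma~\ref{lem:sde-R-full}.
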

\noindent The proof of this result is provided in Section~\ref{sec:pf-thm-sde-r}.
In comparison to the stochastic sampler \eqref{eqn:sde-sampling}, Theorem~\ref{thm:main-SDE-R} asserts that the iteration complexity of the sampler \eqref{eqn:sde-sampling-R} is at most
\begin{equation}
	\widetilde{O}\bigg( \frac{d^3}{\varepsilon}\bigg),  
	\label{eq:iteration-complexity-SDE-fast}
\end{equation}
thus significantly reducing the scaling $\widetilde{O}(1/\varepsilon^2)$ for the original sampler \eqref{eqn:sde-sampling} to $\widetilde{O}(1/\varepsilon)$ regarding the $\varepsilon$-dependency.  All in all, our theory reveals that having information about a small number of additional objects might substantially speed up the data generation process.

\section{Other related works}
\label{sec:related-works}

\medskip

\paragraph{Convergence theory for diffusion models.}
Early theoretical efforts in understanding the convergence of score-based stochastic samplers suffered from being either not quantitative \citep{de2021diffusion,liu2022let,pidstrigach2022score}, or the curse of dimensionality (e.g., exponential dependencies in the convergence guarantees) \citep{block2020generative,de2022convergence}.
The recent work \citet{lee2022convergence} provided the first polynomial convergence guarantee in the presence of $L_2$-accurate score estimates, for any smooth distribution satisfying the log-Sobelev inequality.
\citet{chen2022sampling,lee2023convergence,chen2022improved} subsequently lifted such a stringent data distribution assumption. 
More concretely,  \citet{chen2022sampling} accommodated a broad family of data distributions under the premise that the score functions over the entire trajectory of the forward process are Lipschitz; \citet{lee2023convergence} only required certain smoothness assumptions but came with worse dependence on the problem parameters; and more recent results in \citet{chen2022improved} applied to literally any data distribution with bounded second-order moment. In addition, \citet{wibisono2022convergence} also established a convergence theory for score-based generative models, assuming that the error of the score estimator has a bounded moment generating function and that the data distribution satisfies the log-Sobelev inequality.
Turning attention to samplers based on the probability flow ODE, \citet{chen2023restoration} derived the first non-asymptotic bounds for this type of samplers.  
Improved convergence guarantees have recently been provided by a concurrent work \citet{chen2023probability}, with the assistance of additional corrector steps inerspersed in each iteration of the probability flow ODE. 
It is worth noting that the corrector steps proposed therein are based on Langevin-type diffusion and inject additive noise, and hence the resulting sampling processes are not deterministic. 
Additionally,  theoretical justifications for DDPM in the context of image in-painting have been developed by \citet{rout2023theoretical}.
Moreover, convergence results based on the Wasserstein distance have recently  been derived as well (e.g., \citet{tang2023diffusion,benton2023error}), although these results typically exhibit exponential dependency on the Lipschitz constants of the score functions. 
Theoretical guarantees have also recently been extended to accommodate other popular methods like consistency models \citep{song2023consistency,li2024towards} and diffusion guidance \citep{ho2022classifier,wu2024theoretical}.




\paragraph{Score matching.} \citet{hyvarinen2005estimation} showed that the score function can be estimated via integration by parts, 
a result that was further extended in \citet{hyvarinen2007some}. \citet{song2020sliced} proposed sliced score matching to tame the computational complexity in high dimension. The consistency of the score matching estimator was studied in \citet{hyvarinen2005estimation}, with asymptotic normality established in \citet{forbes2015linear}. Optimizing the score matching loss has been shown to be intimately connected to minimizing upper bounds on the Kullback-Leibler divergence \citep{song2021maximum} and Wasserstein distance \citep{kwon2022score} between the generated distribution and the target data distribution. Furthermore, the recent work \citet{koehler2022statistical} studied the statistical efficiency of score matching by connecting it with the isoperimetric properties of the target data distribution.



\paragraph{Other theory for diffusion models.} \citet{oko2023diffusion} studied the approximation and generalization capabilities of diffusion modeling for distribution estimation. 
Assuming that the data are supported on a low-dimensional linear subspace, 
 \citet{chen2023score} developed a sample complexity bound for diffusion models. Moreover, \citet{ghimire2023geometry} adopted a geometric perspective and showed that the forward and backward processes of diffusion models are essentially Wasserstein gradient flows operating in the space of probability measures.
Recently, the idea of stochastic localization, which is closely related to diffusion models, is adopted to sample from posterior distributions \citep{montanari2023posterior,el2022sampling}, which has been implemented using the approximate message passing algorithm (\cite{donoho2009message,li2022non}).

\section{Analysis}
\label{sec:analysis}

In this section, we describe our non-asymptotic proof strategies for two simpler samplers (i.e., \eqref{eqn:ode-sampling} and \eqref{eqn:sde-sampling}). 
The analyses for the two accelerated variants follow similar arguments as their non-accelerated counterparts, and are hence postponed to the appendices.

\subsection{Preliminary facts}
\label{sec:preliminary-facts}

Before proceeding, 
we gather a couple of facts that will be useful for  the proof, with most proofs postponed to Appendix~\ref{sec:proof-preliminary}.

\paragraph{Properties related to the score function.}
First of all, in view of the alternative expression \eqref{eqn:training-score-equiv} for the score function and the property of the minimum mean square error (MMSE) estimator (e.g., \citet[Section~3.3.1]{hajek2015random}), we know that the true score function $s_t^{\star}$ is given by the conditional expectation
\begin{align}
s_{t}^{\star}(x) & =\mathbb{E}\left[-\frac{1}{\sqrt{1-\overline{\alpha_{t}}}}W\,\bigg|\,\sqrt{\overline{\alpha_{t}}}X_{0}+\sqrt{1-\overline{\alpha}_{t}}W=x\right]=\frac{1}{1-\overline{\alpha}_{t}}\mathbb{E}\left[\sqrt{\overline{\alpha_{t}}}X_{0}-x\,\big|\,\sqrt{\overline{\alpha_{t}}}X_{0}+\sqrt{1-\overline{\alpha}_{t}}W=x\right] \notag\\
	& = - \frac{1}{1-\overline{\alpha}_{t}} \underset{\eqqcolon \, g_t(x) }{\underbrace{ {\displaystyle \int}_{x_{0}}\big(x-\sqrt{\overline{\alpha_{t}}}x_{0}\big)p_{X_{0}\mid X_{t}}(x_{0}\,|\, x)\mathrm{d}x_{0} }}.
	\label{eq:st-MMSE-expression}
\end{align}
%
%
%
Let us also introduce the Jacobian matrix associated with $g_t(\cdot)$ as follows:
\begin{equation}
J_{t}(x) \coloneqq 
	\frac{\partial g_t(x)}{\partial x}, \label{eq:Jacobian-Thm4}
\end{equation}
which can be equivalently rewritten as
%
\begin{align}
J_{t}(x) & =I_{d}+\frac{1}{1-\overline{\alpha}_{t}}\bigg\{\mathbb{E}\big[X_{t}-\sqrt{\overline{\alpha}_{t}}X_{0}\mid X_{t}=x\big]\Big(\mathbb{E}\big[X_{t}-\sqrt{\overline{\alpha}_{t}}X_{0}\mid X_{t}=x\big]\Big)^{\top}\notag\\
 & \qquad\qquad\qquad-\mathbb{E}\Big[\big(X_{t}-\sqrt{\overline{\alpha}_{t}}X_{0}\big)\big(X_{t}-\sqrt{\overline{\alpha}_{t}}X_{0}\big)^{\top}\mid X_{t}=x\Big]\bigg\}.
	\label{eq:Jt-x-expression-ij-23}
\end{align}
%

\paragraph{Properties about the learning rates.}
Next, we isolate a few useful properties about the learning rates as specified by $\{\alpha_t\}$ in \eqref{eqn:alpha-t}:  
\begin{subequations}
\label{eqn:properties-alpha-proof}
\begin{align}
	\alpha_t &\geq1-\frac{c_{1}\log T}{T}  \ge \frac{1}{2},\qquad\qquad\quad~~ 1\leq t\leq T \label{eqn:properties-alpha-proof-00}\\
	\frac{1}{2}\frac{1-\alpha_{t}}{1-\overline{\alpha}_{t}} \leq \frac{1}{2}\frac{1-\alpha_{t}}{\alpha_t-\overline{\alpha}_{t}}
	&\leq \frac{1-\alpha_{t}}{1-\overline{\alpha}_{t-1}}  \le \frac{4c_1\log T}{T},\qquad\quad~~ 2\leq t\leq T  \label{eqn:properties-alpha-proof-1}\\
	1&\leq\frac{1-\overline{\alpha}_{t}}{1-\overline{\alpha}_{t-1}} \leq1+\frac{4c_{1}\log T}{T} ,\qquad2\leq t\leq T  \label{eqn:properties-alpha-proof-3} \\
	\overline{\alpha}_{T} & \le \frac{1}{T^{c_2}}, \label{eqn:properties-alpha-proof-alphaT} 
\end{align}
provided that $T$ is large enough. 
Here, $c_1$ is defined in \eqref{eqn:alpha-t}, and $c_2\geq 1000$ is some large numerical constant. 
In addition, if $\frac{d(1-\alpha_{t})}{\alpha_{t}-\overline{\alpha}_{t}}\lesssim 1$, then one has
\begin{align}
\Big(\frac{1-\overline{\alpha}_{t}}{\alpha_{t}-\overline{\alpha}_{t}}\Big)^{d/2} 
	& =1+\frac{d(1-\alpha_{t})}{2(\alpha_{t}-\overline{\alpha}_{t})}+\frac{d(d-2)(1-\alpha_{t})^{2}}{8(\alpha_{t}-\overline{\alpha}_{t})^{2}}+O\bigg(d^{3}\Big(\frac{1-\alpha_{t}}{\alpha_{t}-\overline{\alpha}_{t}}\Big)^{3}\bigg).
	\label{eq:expansion-ratio-1-alpha}	
\end{align}
\end{subequations}
The proof of these properties is postponed to Appendix~\ref{sec:proof-properties-alpha}.

\paragraph{Properties of the forward process.}
Additionally,  recall that the forward process satisfies $X_t \overset{\mathrm{d}}{=} \sqrt{\overline{\alpha}_t} X_0 + \sqrt{1-\overline{\alpha}_t} W$ with $W\sim \mathcal{N}(0,I_d)$.  
We have the following tail bound concerning the random vector $X_0$ conditional on $X_t$, 
whose proof can be found in Appendix~\ref{sec:proof-lem:x0}.
\begin{lems} \label{lem:x0}
Suppose that there exists some numerical constant $c_R>0$ obeying 
\begin{equation}
	\mathbb{P}\big( \|X_0 \|_2 \leq R\big) = 1
	\qquad \text{and} \qquad 
	R = T^{c_R}.  
	\label{eq:cR-defn-lem}
\end{equation}
Consider any $y \in \real$, and let
\begin{align}
\label{eqn:choice-y-prelim}
	\theta(y) \coloneqq \max\bigg\{ \frac{-\log {p_{X_t}(y)}}{d\log T} , c_6 \bigg\}
\end{align}
for some large enough constant $c_6\geq 2c_R+c_0$. Then for any quantity $c_5 \ge 2$, 
conditioned on $X_t=y$ one has
\begin{align}
	\big\|\sqrt{\overline{\alpha}_{t}}X_0 - y \big\|_2 \leq 5c_5\sqrt{\theta(y) d(1-\overline{\alpha}_{t})\log T} 
	\label{eq:P-xt-X0-124}
\end{align} 
with probability at least $1 - \exp\big(-c_5^2\theta(y) d\log T \big)$.
In addition, it holds that
\begin{subequations}
\begin{align}
	\mathbb{E}\left[\big\| \sqrt{\overline{\alpha}_{t}}X_{0} - y \big\|_{2}\,\big|\,X_{t}=y\right] &\leq 12\sqrt{\theta(y) d(1-\overline{\alpha}_{t})\log T},\label{eq:E-xt-X0} \\
	\mathbb{E}\left[\big\| \sqrt{\overline{\alpha}_{t}}X_{0} - y \big\|^2_{2}\,\big|\,X_{t}=y\right] &\leq 120\theta(y) d(1-\overline{\alpha}_{t})\log T,\label{eq:E2-xt-X0} \\
	\mathbb{E}\left[\big\| \sqrt{\overline{\alpha}_{t}}X_{0} - y \big\|^3_{2}\,\big|\,X_{t}=y\right] &\leq 1040\big(\theta(y) d(1-\overline{\alpha}_{t})\log T\big)^{3/2},\label{eq:E3-xt-X0}\\
	\mathbb{E}\left[\big\| \sqrt{\overline{\alpha}_{t}}X_{0} - y \big\|^4_{2}\,\big|\,X_{t}=y\right] &\leq 10080\big(\theta(y) d(1-\overline{\alpha}_{t})\log T\big)^{2}.\label{eq:E4-xt-X0}
\end{align}
\end{subequations}
\end{lems}
\noindent
In order to interpret Lemma~\ref{lem:x0}, let us look at the case with $\theta(y)=c_6$, 
corresponding to the scenario where $p_{X_t}(y)\geq \exp(-c_6d\log T)$ (so that $p_{X_t}(y)$ is not exceedingly small). 
In this case, Lemma~\ref{lem:x0} implies that conditional on $X_t=y$ taking on a ``typical'' value,  
the vector $\sqrt{\overline{\alpha}_{t}}X_{0} - X_t  = \sqrt{1-\overline{\alpha}_t} \,\overline{W}_t$ (see \eqref{eqn:Xt-X0}) might still follow a sub-Gaussian tail, 
whose expected norm remains on the same order of that of an unconditional Gaussian vector $\mathcal{N}(0, (1-\overline{\alpha}_t)I_d)$.

The next lemma singles out another useful fact that controls the tail of $p_{X_t}$ of the forward process; the proof is postponed to Appendix~\ref{sec:proof-lem:river}. 
\begin{lems}
	\label{lem:river}
	Consider any two points $x_t, x_{t-1}\in \mathbb{R}^d$ obeying
\begin{align}
	-\log p_{X_t}(x_t) \leq \frac{1}{2} c_6 d\log T, 
	\quad\text{and}\quad \bigg\|x_{t-1} -   \frac{x_t}{ \sqrt{\alpha_t}} \bigg\|_2 \leq c_3 \sqrt{d(1 - \alpha_t)\log T}
	\label{eq:assumption-lem:river}
\end{align}
for some large constants $c_6, c_3>0$. 
If we define $x_t(\gamma) \coloneqq \gamma x_{t-1} + (1-\gamma) x_t / \sqrt{\alpha_t}$ for any $\gamma \in [0,1]$, then
\begin{align}
\label{eqn:river}
	-\log p_{X_{t-1}}\big(x_{t}(\gamma)\big) & \leq  c_6 d\log T, \qquad \forall \gamma \in [0,1].
\end{align}
\end{lems}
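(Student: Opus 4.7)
I would express both $p_{X_t}(x_t)$ and $p_{X_{t-1}}(x_t(\gamma))$ as Gaussian mixture integrals against $p_{X_0}$ via the forward identity $X_s \overset{\mathrm{d}}{=} \sqrt{\overline{\alpha}_s}X_0 + \sqrt{1-\overline{\alpha}_s}\,\overline{W}_s$ (see \eqref{eqn:Xt-X0}), and then compare the two integrands pointwise on a high-probability ``typical set'' of $x_0$'s. Concretely the two exponents are
\[
E_t(x_0) \coloneqq \frac{\|x_t - \sqrt{\overline{\alpha}_t}\,x_0\|_2^2}{2(1-\overline{\alpha}_t)}, \qquad E_{t-1}(x_0) \coloneqq \frac{\|x_t(\gamma) - \sqrt{\overline{\alpha}_{t-1}}\,x_0\|_2^2}{2(1-\overline{\alpha}_{t-1})},
\]
and, using $\overline{\alpha}_t = \alpha_t\overline{\alpha}_{t-1}$, the second can be rewritten as $E_{t-1}(x_0)=\|\sqrt{\alpha_t}\,x_t(\gamma)-\sqrt{\overline{\alpha}_t}x_0\|_2^2/(2(\alpha_t-\overline{\alpha}_t))$; the ratio of Gaussian normalizers equals $((1-\overline{\alpha}_t)/(1-\overline{\alpha}_{t-1}))^{d/2}\ge 1$ and can be dropped in a lower bound.

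Setting $\zeta \coloneqq \sqrt{\overline{\alpha}_t}\,x_0 - x_t$ and $\eta \coloneqq \sqrt{\alpha_t}\,x_t(\gamma) - x_t$, a direct expansion yields
\[
E_{t-1}(x_0) - E_t(x_0) = \frac{(1-\alpha_t)\|\zeta\|_2^2}{2(\alpha_t - \overline{\alpha}_t)(1-\overline{\alpha}_t)} + \frac{\|\eta\|_2^2 - 2\zeta^\top\eta}{2(\alpha_t - \overline{\alpha}_t)}.
\]
The hypothesis \eqref{eq:assumption-lem:river} gives $\|\eta\|_2 \le c_3\sqrt{d(1-\alpha_t)\log T}$ uniformly in $\gamma\in[0,1]$, while Lemma~\ref{lem:x0} (applicable since $-\log p_{X_t}(x_t)\le\tfrac12 c_6 d\log T\le c_6 d\log T$) supplies the typical set $S\coloneqq\{x_0:\|\zeta\|_2 \le 5\overline{c}_5\sqrt{d(1-\overline{\alpha}_t)\log T}\}$ with $\mathbb{P}(X_0\in S\mid X_t=x_t)\ge 1 - e^{-\overline{c}_5^2 d\log T}$. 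On $S$ I would bound each of the three terms above using the learning-rate estimates \eqref{eqn:properties-alpha-proof-1}--\eqref{eqn:properties-alpha-proof-3}: the relation $(1-\alpha_t)/(\alpha_t-\overline{\alpha}_t)\lesssim \log T/T$ gives $O(d\log^2 T/T)$ for the $\|\zeta\|_2^2$ and $\|\eta\|_2^2$ pieces, and Cauchy--Schwarz together with $\sqrt{(1-\alpha_t)(1-\overline{\alpha}_t)}/(\alpha_t-\overline{\alpha}_t)\lesssim\sqrt{\log T/T}$ bounds the cross term by $O(d\log^{3/2}T/\sqrt T)$. The upshot is
\[
\sup_{x_0\in S}\,\big(E_{t-1}(x_0)-E_t(x_0)\big) \,\le\, O\!\big(d\log^{3/2}T/\sqrt T\big) \,=\, o(d\log T).
\]

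Putting the pieces together, I would conclude
\[
p_{X_{t-1}}(x_t(\gamma)) \,\ge\, e^{-o(d\log T)}\int_S p_{X_0}(x_0)\big(2\pi(1-\overline{\alpha}_t)\big)^{-d/2}e^{-E_t(x_0)}\mathrm{d}x_0 \,=\, e^{-o(d\log T)}p_{X_t}(x_t)\mathbb{P}(X_0\in S\mid X_t=x_t),
\]
and taking $-\log$, combined with the tail bound on $\mathbb{P}(X_0\in S\mid X_t=x_t)$ and the hypothesis $-\log p_{X_t}(x_t)\le\tfrac12 c_6 d\log T$, would deliver $-\log p_{X_{t-1}}(x_t(\gamma))\le c_6 d\log T$ for $T$ large enough, uniformly in $\gamma\in[0,1]$. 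The main obstacle is the bookkeeping around the cross term $\zeta^\top\eta$: it is the slowest-decaying of the three pieces and drives the overall $T^{-1/2}$ (rather than $T^{-1}$) rate, so verifying that it is still $o(d\log T)$ requires the schedule inequalities \eqref{eqn:properties-alpha-proof-1}--\eqref{eqn:properties-alpha-proof-3} and the conditional norm bound of Lemma~\ref{lem:x0} to work in concert. Everything else reduces to routine Gaussian estimation.
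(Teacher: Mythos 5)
Your proof is correct and proceeds along essentially the same lines as the paper's: you compare the Gaussian‑mixture exponents of $p_{X_{t-1}}(x_t(\gamma))$ and $p_{X_t}(x_t)$, use Lemma~\ref{lem:x0} (valid under the hypothesis $-\log p_{X_t}(x_t)\le \tfrac12 c_6 d\log T$) to restrict to the typical set $S$, and invoke the schedule estimates \eqref{eqn:properties-alpha-proof-1}--\eqref{eqn:properties-alpha-proof-3} to bound the difference $E_{t-1}-E_t$ by $o(d\log T)$ uniformly in $\gamma$ (your cross‑term accounting is the slowest‑decaying piece, exactly as in the paper's computation). The only material difference is that you prove a one‑sided lower bound $p_{X_{t-1}}(x_t(\gamma))\gtrsim p_{X_t}(x_t)$, whereas the paper also analyzes the contribution off the typical set to obtain the two‑sided relation \eqref{eqn:prokofiev-2} — that refinement is not needed for the stated conclusion but is reused elsewhere (e.g., in Lemma~\ref{lem:sde} and Claim~\ref{lem:claim-pXt-1-pXt-middle}).
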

In other words, if $x_t$ falls within a typical range of $X_t$ and if the point $x_{t-1}$ is not too far away from $x_t/\sqrt{\alpha_t}$, 
then $x_{t-1}$ is also a typical value of the previous point $X_{t-1}$. 
As an immediate consequence, 
combining Lemma~\ref{lem:river} with Lemma~\ref{lem:x0} reveals that: 
if the assumption \eqref{eq:assumption-lem:river} holds, then conditional on $X_{t-1} = x_t(\gamma)$ for any $\gamma \in [0,1]$, one has
\begin{subequations}
\label{eqn:condi-g}
\begin{align}
	\big\| \sqrt{\overline{\alpha}_{t-1}}X_0 - x_t(\gamma) \big\|_2 &\leq 5c_5 \sqrt{c_6d(1-\overline{\alpha}_{t-1})\log T} \\
	\bigg\| \sqrt{\overline{\alpha}_{t-1}}X_0 - \frac{x_t}{ \sqrt{\alpha_t}} \bigg\|_2 
	&\leq \big\| \sqrt{\overline{\alpha}_{t-1}}X_0 - x_t(\gamma) \big\|_2 + \bigg\| x_t(\gamma) - \frac{x_t}{ \sqrt{\alpha_t}} \bigg\|_2 
	\leq (5c_5 + c_3) \sqrt{c_6d(1-\overline{\alpha}_{t-1})\log T}
\end{align}
\end{subequations}
with probability exceeding $1 - \exp\big(-c_5^2c_6 d\log T \big)$, 
where the last inequality invokes the property \eqref{eqn:properties-alpha-proof-1}.

\paragraph{Distance between $p_T$ and $q_T$.} 
We now record a simple result that demonstrates the proximity between $p_T$ and $q_T$, whose proof is provided in Appendix~\ref{sec:proof-lem-KL-T}. 
\begin{lems}
	\label{lem:KL-T}
	For any large enough $T$, it holds that
	\begin{align}
		\big( \mathsf{TV}(p_{X_{T}}\parallel p_{Y_{T}}) \big)^2 \leq \frac{1}{2}\mathsf{KL}(p_{X_{T}}\parallel p_{Y_{T}}) \lesssim \frac{1}{T^{200}}. 
	\end{align}
\end{lems}

\paragraph{Additional notation about score errors.}  
For any vector $x\in \mathbb{R}^d$ and any $1 < t\leq T$, let us define
%
\begin{align}
	\varepsilon_{\score, t}(x) \coloneqq \big\|s_t(x) - s_t^{\star}(x) \big\|_2
\qquad\text{and}\qquad
	\varepsilon_{\Jacobi, t}(x) \coloneqq \big\| J_{s_t}(x)  -  J_{s_t^{\star}} (x) \big\|,
	\label{eq:pointwise-epsilon-score-J}
\end{align} 
with $J_{s_t}$ and $J_{s_t^{\star}}$ the Jacobian matrices of $s_t(\cdot)$ and $s_t^{\star}(\cdot)$, respectively.  
Under Assumption~\ref{assumption:score-estimate}, 
we have
\begin{subequations}
	\label{eq:score-assumptions-equiv}
\begin{align}
\frac{1}{T}\sum_{t=1}^{T}\mathbb{E}_{X\sim q_{t}}\big[\varepsilon_{\score,t}(X)\big] & \leq\bigg(\frac{1}{T}\sum_{t=1}^{T}\mathbb{E}_{X\sim q_{t}}\left[\varepsilon_{\score,t}(X)^{2}\right]\bigg)^{1/2}\leq\varepsilon_{\score}.
\end{align}
Also, Assumption~\ref{assumption:score-estimate-Jacobi} says that
\begin{align}
	\frac{1}{T}\sum_{t=1}^{T}\mathbb{E}_{X\sim q_{t}}\big[\varepsilon_{\Jacobi,t}(X)\big] & \leq\varepsilon_{\Jacobi}.
\end{align}
\end{subequations}

\subsection{Analysis for the sampler based on probability flow ODE (Theorem~\ref{thm:main-ODE})}
\label{sec:pf-theorem-ode}



We now present the proof for our main result (i.e., Theorem~\ref{thm:main-ODE}) tailored to the deterministic sampler \eqref{eqn:ode-sampling} based on the probability flow ODE. 
Given that the total variation distance is always bounded above by 1, it suffices to assume 
\begin{subequations}
	\label{eq:assumption-T-score-Jacob}
\begin{align}
	T &\geq C_1 d^2 \log^4 T + \sqrt{C_1} d^3 \log^3 T \\
	\varepsilon_{\mathsf{score}} & \leq\frac{1}{C_{1}\sqrt{d}\log^{2}T} \label{eq:assumption-T-score-Jacob-score}\\
	\varepsilon_{\mathsf{Jacobi}} & \leq\frac{1}{C_{1}d\log^{2}T} \label{eq:assumption-T-score-Jacob-Jacobi}
\end{align}
\end{subequations}
throughout the proof; otherwise the claimed result \eqref{eq:ratio-ODE} becomes trivial. 

\paragraph{Preparation.}
Before proceeding, we find it convenient to introduce a function 
\begin{subequations}
	\label{defn:phit-x}
\begin{align}
	\phi_t^{\star}(x) &= x + \frac{1-\alpha_{t}}{2}s_t^{\star}(x) 
	= x - \frac{1-\alpha_{t}}{2(1-\overline{\alpha}_{t})} \displaystyle \int_{x_{0}}\big(x-\sqrt{\overline{\alpha_{t}}}x_{0}\big)p_{X_{0}\mid X_{t}}(x_{0}\,|\, x)\mathrm{d}x_{0}, \\
	\phi_t(x) &= x + \frac{1-\alpha_{t}}{2}s_t(x),
\end{align}
\end{subequations}
where the first line follows from \eqref{eq:st-MMSE-expression}. 
The update rule \eqref{eqn:ode-sampling} can then be expressed as follows: 
\begin{equation}
	Y_{t-1} = \Phi_t(Y_t) = \frac{1}{\sqrt{\alpha_t}} \phi_t(Y_t). 
	\label{eq:Yt-phi-ODE}
\end{equation}
%
%

Moreover, for any point $y_T\in \mathbb{R}^d$ (resp.~$y_T'\in \mathbb{R}^d$), let us define the corresponding deterministic sequence
		\begin{equation}
			y_{t-1} = \frac{1}{\sqrt{\alpha_t}} \phi_t(y_t),
			\qquad 
			y_{t-1}' = \frac{1}{\sqrt{\alpha_t}} \phi_t(y_t'),
			\qquad t=T, T-1,\cdots 
			\label{eq:defn-yt-sequence-proof}
		\end{equation}
In other words, $\{y_{T-1},\ldots,y_1\}$ (resp.~$\{y_{T-1}',\ldots,y_1'\}$) is the (reverse-time) sequence generated by the probability flow ODE (cf.~\eqref{eq:Yt-phi-ODE}) when initialized to $Y_T=y_T$ (resp.~$Y_T=y_T'$). 
We also define the following quantities for any point $y_T\in \mathcal{R}^d$ and its associated sequence   $\{y_{T-1},\ldots,y_1\}$: 
\begin{subequations}
	\label{eq:defn-xik-Stk-proof}
\begin{align}
	\xi_t(y_T) &\coloneqq \frac{\log T}{T}\big(d\varepsilon_{\Jacobi, t}(y_t) + \sqrt{d\log T}\varepsilon_{\score, t}(y_t)\big); \\ 
	S_{t}(y_T) &\coloneqq \sum_{1 < k \le t} \xi_k(y_k), \quad \text{for }t\geq 2,
	\qquad \text{ and } \qquad
	S_{1}(y_T) = 0. 
\end{align}
\end{subequations}
In words, for any given starting point $y_T$, 
$\xi_t(y_t)$ captures the (properly weighted) score error incurred in the $t$-th iteration, 
whereas $S_{t}(y_T)$ quantifies the aggregate weighted score error up to the $t$-th iteration.

With the above notation in place, we can readily proceed to our proof, which consists of several steps.

\paragraph{Step 1: bounding the density ratios of interest.} 
To begin with, we note that for any vectors $y_{t-1}$ and $y_t$, elementary properties about transformation of probability distributions give
\begin{align}
\frac{p_{Y_{t-1}}(y_{t-1})}{p_{X_{t-1}}(y_{t-1})} & =\frac{p_{\sqrt{\alpha_{t}}Y_{t-1}}(\sqrt{\alpha_{t}}y_{t-1})}{p_{\sqrt{\alpha_{t}}X_{t-1}}(\sqrt{\alpha_{t}}y_{t-1})}\notag\\
 & =\frac{p_{\sqrt{\alpha_{t}}Y_{t-1}}(\sqrt{\alpha_{t}}y_{t-1})}{p_{Y_{t}}(y_{t})}\cdot\bigg(\frac{p_{\sqrt{\alpha_{t}}X_{t-1}}(\sqrt{\alpha_{t}}y_{t-1})}{p_{X_{t}}(y_{t})}\bigg)^{-1}\cdot\frac{p_{Y_{t}}(y_{t})}{p_{X_{t}}(y_{t})},\label{eq:recursion}
\end{align}
thus converting the density ratio of interest into the product of three other density ratios. 
Noteworthily, this observation \eqref{eq:recursion} connects the target density ratio $\frac{p_{Y_{t-1}}}{p_{X_{t-1}}}$ at the $(t-1)$-th step with its counterpart $\frac{p_{Y_{t}}}{p_{X_{t}}}$ at the $t$-th step, 
motivating us to look at the density changes within adjacent steps in both the forward and the reverse processes (i.e., $p_{X_{t-1}}$ vs.~$p_{X_{t}}$ and $p_{Y_{t-1}}$ vs.~$p_{Y_{t}}$). 
In light of this expression, we develop a key lemma related to some of these density ratios,  
which plays a central role in establishing Theorem~\ref{thm:main-ODE}. 
The proof of this lemma is postponed to Appendix~\ref{sec:proof-lem:main-ODE}. 

\begin{lems} 
\label{lem:main-ODE}
For any $x \in \real^d$, let
\begin{align}
\label{eqn:choice-y}
	\theta_t(x) \coloneqq  \max\bigg\{ -\frac{\log {p_{X_t}(x)}}{d\log T} , c_6 \bigg\}
\end{align}
for some large enough constant $c_6 \geq 2c_R+c_0$, and suppose that 
$
	\frac{40c_{1}\varepsilon_{\score,t}(x)\log^{\frac{3}{2}}T}{T}\leq\sqrt{\theta_t(x)d}.
$
%
Then one has
\begin{align} 
	\frac{p_{\sqrt{\alpha_{t}}X_{t-1}}\big(\phi_{t}(x)\big)}{p_{X_{t}}(x)}&\leq2\exp\bigg(\Big(5\varepsilon_{\score,t}(x)\sqrt{\theta_t(x)d\log T}+60\theta_t(x)d\log T\Big)\frac{1-\alpha_{t}}{\alpha_{t}-\overline{\alpha}_{t}}\bigg).
	\label{eq:xt_up}
\end{align}
If, in addition, we have
$
C_{10}\frac{\theta_t(x)d\log^{2}T+\varepsilon_{\score,t}(x)\sqrt{\theta_t(x)d\log^{3}T}}{T}\leq1
$
for some large enough constant $C_{10}>0$, then it holds that 
\begin{subequations}
\label{eq:ODE}
\begin{align} 
&\frac{p_{\sqrt{\alpha_{t}}X_{t-1}}(\phi_{t}(x))}{p_{X_{t}}(x)} \notag \\
&=1+\frac{d(1-\alpha_{t})}{2(\alpha_{t}-\overline{\alpha}_{t})}
+\frac{(1-\alpha_{t})\Big(\big\|\int\big(x-\sqrt{\overline{\alpha}_{t}}x_{0}\big)p_{X_{0}\mymid X_{t}}(x_{0}\mymid x)\mathrm{d}x_{0}\big\|_{2}^{2}-\int\big\| x-\sqrt{\overline{\alpha}_{t}}x_{0}\big\|_{2}^{2}p_{X_{0}\mymid X_{t}}(x_{0}\mymid x)\mathrm{d}x_{0}\Big)}{2(\alpha_{t}-\overline{\alpha}_{t})(1-\overline{\alpha}_{t})}
 \notag\\
 & \quad
+O\bigg(\theta_t(x)^2d^{2}\Big(\frac{1-\alpha_{t}}{\alpha_{t}-\overline{\alpha}_{t}}\Big)^{2}\log^{2}T + \varepsilon_{\score, t}(x)\sqrt{\theta_t(x) d\log T}\Big(\frac{1-\alpha_{t}}{\alpha_{t}-\overline{\alpha}_{t}}\Big)\bigg).
	\label{eq:xt}
\end{align}

Moreover, for any random vector $Y$, one has 
\begin{align} 
 & \frac{p_{\phi_{t}(Y)}(\phi_{t}(x))}{p_{Y}(x)} \notag \\
 &=1 + \frac{d(1-\alpha_{t})}{2(\alpha_{t}-\overline{\alpha}_{t})}+\frac{(1-\alpha_{t})\Big(\big\|\int\big(x-\sqrt{\overline{\alpha}_{t}}x_{0}\big)p_{X_{0}\mymid X_{t}}(x_{0}\mymid x)\mathrm{d}x_{0}\big\|_{2}^{2}-\int\big\| x-\sqrt{\overline{\alpha}_{t}}x_{0}\big\|_{2}^{2}p_{X_{0}\mymid X_{t}}(x_{0}\mymid x)\mathrm{d}x_{0}\Big)}{2(\alpha_{t}-\overline{\alpha}_{t})(1-\overline{\alpha}_{t})} \notag \\
&\quad+ O\bigg(\theta_t(x)^2d^{2}\Big(\frac{1-\alpha_{t}}{\alpha_{t}-\overline{\alpha}_{t}}\Big)^{2}\log^{2}T+\theta_t(x)^3d^{6}\log^{3}T\Big(\frac{1-\alpha_{t}}{\alpha_{t}-\overline{\alpha}_{t}}\Big)^{3} + (1-\alpha_{t})d\varepsilon_{\Jacobi, t}(x)\bigg),
\label{eq:yt}
\end{align}
\end{subequations}
provided that 
$
	C_{11}\frac{d^{2}\log^{2}T+d\varepsilon_{\Jacobi,t}(x)\log T}{T}  \leq 1
$
 for some large enough constant $C_{11}>0$. 
\end{lems}

\begin{remark}
Combining Lemma~\ref{lem:main-ODE} with Lemma~\ref{lem:x0} and \eqref{eqn:properties-alpha-proof},  
gives: if $C_{10}\frac{\theta_t(x)d\log^{2}T+\varepsilon_{\score,t}(x)\sqrt{\theta_t(x)d\log^{3}T}}{T}\leq1$ 
and if $\theta_t(x)\lesssim 1$, then \eqref{eq:xt} tells us that
\begin{align}
	\log\frac{p_{\sqrt{\alpha_{t}}X_{t-1}}(\phi_{t}(x))}{p_{X_{t}}(x)}\leq\frac{4c_{1}d\log T}{T}
	+C_{10}\left\{ \frac{d^{2}\log^{4}T}{T^{2}}+\frac{d^{6}\log^{6}T}{T^{3}}+\frac{\varepsilon_{\score,t}(x)\sqrt{d\log^{3}T}}{T}\right\} 
	\label{eq:crude-ratio-qt-1-qt}
\end{align}
under our sample size assumption~\eqref{eq:assumption-T-score-Jacob}, 
where $C_{10}>0$ is some large enough constant. Here, we have made use of the fact that the second-to-last term in \eqref{eq:xt} is non-positive due to Jensen's inequality. 
\end{remark}

\paragraph{Step 2: decomposing the TV distance based on ``typical'' points.}  

To bound the TV distance of interest, it is helpful to isolate the following sets
%
\begin{align}
	\mathcal{E} &\coloneqq \Big\{y : q_{1}(y) > \max\big\{ p_{1}(y),\, \exp\big(- c_{6} d\log T \big) \big\} \Big\},
\end{align}
%
where $c_{6}>0$ is some large enough universal constant introduced in Lemma~\ref{lem:main-ODE}. 
In words, this set $\mathcal{E}$ contains all $y$ that can be viewed as ``typical'' values under the distribution~$q_1$ (meaning that
$q_1(y)$ is not exceedingly small), while at the same time obeying $q_1(y)>p_1(y)$.

In view of the basic properties about the TV distance, we can derive
\begin{align}
\mathsf{TV}\big(q_{1},p_{1}\big) &= \int_{y : q_{1}(y) > p_{1}(y)}\big(q_{1}(y) - p_{1}(y)\big)\mathrm{d} y \notag\\
	&= \int_{y \in \mathcal{E}}\big(q_{1}(y) - p_{1}(y)\big)\mathrm{d} y +  \int_{y:p_{1}(y)<q_{1}(y)\le\exp(-c_{6}d\log T)}\big(q_{1}(y)-p_{1}(y)\big)\mathrm{d}y . 
	\label{eqn:ode-tv-123}
\end{align}
In order to bound the second term on the right-hand side of \eqref{eqn:ode-tv-123},  
we make note of a basic fact: 
since $X_{t}\overset{\mathrm{(d)}}{=}\sqrt{\overline{\alpha}_{t}}X_0+\sqrt{1-\overline{\alpha}_{t}}W$
with $W\sim\mathcal{N}(0,I_{d})$ and $\mathbb{P}(\|X_{0}\|_{2}\leq T^{c_{R}})=1$,
it holds that
\begin{equation}
\mathbb{P}\left\{ \|X_{t}\|_{2}\geq T^{c_{R}+2}\right\} \leq\mathbb{P}\left\{ \|W\|_{2}\geq T^{2}\right\} < \exp\left(-c_{6}d\log T\right)
	\label{eq:Xt-2range-ODE}
\end{equation}
under our assumption \eqref{eq:assumption-T-score-Jacob} on $T$, thereby indicating that 
\begin{equation}
\int_{y:\|y\|_{2}\geq T^{c_{R}+2}}q_t(y)\mathrm{d}y < \exp\left(-c_{6}d\log T\right).
	\label{eq:y_norm-qy-UB}
\end{equation}
This basic fact in turn reveals that
\begin{align*}
\int_{y:p_{1}(y)<q_{1}(y)\le\exp(-c_{12}d\log T)}\big(q_{1}(y)-p_{1}(y)\big)\mathrm{d}y & \le
\int_{y:q_{1}(y)\le\exp(-c_{6}d\log T)}q_{1}(y)\mathrm{d}y \\
	& \leq\exp(-c_{6}d\log T)\int_{y:\|y\|_{2}\leq T^{c_{R}+2}}\mathrm{d}y+\exp\left(-c_{6}d\log T\right)\\
 & \leq\exp(-c_{6}d\log T)\big(2T^{c_{R}+2}\big)^{d}+\exp\left(-c_{6}d\log T\right)\\
 & \le\exp\big(-0.5c_{6}d\log T\big), 
\end{align*}
provided that $c_{6}\geq 4(c_R+2)$. Substitution into \eqref{eqn:ode-tv-123} then yields
\begin{align}
\mathsf{TV}\big(q_{1},p_{1}\big) 
	&\le \mathbb{E}_{Y_{1}\sim p_{1}}\bigg[\Big(\frac{q_{1}(Y_{1})}{p_{1}(Y_{1})}-1\Big)\ind\left\{ Y_{1}\in\mathcal{E}\right\} \bigg] + \exp\big(-c_{6}d\log T\big),
	\label{eqn:ode-tv-10}
\end{align}
with the proviso that $c_{6}\geq 4(c_R+2)$.

To proceed, let us isolate the following set   
%
\begin{align}
	\mathcal{I}_{1}\coloneqq\Big\{ y_T \mid S_{T}\big(y_{T}\big)\leq c_{14}\Big\}
	\label{eq:defn-I1-proof-ode}
\end{align}
for some small enough constant $c_{14}>0$. 
In words, $\mathcal{I}_{1}$ is composed of a set of points whose aggregate score error along the backward trajectory is well-controlled; 
in fact, these are points that exhibit ``typical'' behavior under the assumptions \eqref{eq:assumption-T-score-Jacob-score} and \eqref{eq:assumption-T-score-Jacob-Jacobi}.
As a result, we can decompose the first term of \eqref{eqn:ode-tv-10} into the influence of ``typical'' points and that of the remaining points as follows: 
\begin{align}
 & \mathop{\mathbb{E}}_{Y_{1}\sim p_{1}}\bigg[\Big(\frac{q_{1}(Y_{1})}{p_{1}(Y_{1})}-1\Big)\ind\left\{ Y_{1}\in\mathcal{E}\right\} \bigg]=\mathop{\mathbb{E}}_{Y_{T}\sim p_{T}}\bigg[\Big(\frac{q_{1}(Y_{1})}{p_{1}(Y_{1})}-1\Big)\ind\left\{ Y_{1}\in\mathcal{E}\right\} \bigg]\notag\\
 & \quad= \mathop{\mathbb{E}}_{Y_{T}\sim p_{T}}\bigg[\Big(\frac{q_{1}(Y_{1})}{p_{1}(Y_{1})}-1\Big)\ind\left\{ Y_{1}\in\mathcal{E}, Y_{T}\in \mathcal{I}_1\right\} \bigg]
	+ \mathop{\mathbb{E}}_{Y_{T}\sim p_{T}}\bigg[\frac{q_{1}(Y_{1})}{p_{1}(Y_{1})}\ind\left\{ Y_{1}\in\mathcal{E},Y_{T}\notin \mathcal{I}_1\right\} \bigg],
 	\label{eq:decompose-I1-I1c}
\end{align}
where the first identity holds since $Y_{1}$ is determined purely by $Y_{T}$ via deterministic update rules. 
%
The decomposition \eqref{eq:decompose-I1-I1c} leaves us with two terms to control, which we accomplish in the next two steps.

\paragraph{Step 3: controlling the first term on the right-hand side of \eqref{eq:decompose-I1-I1c}.}
This step analyzes the first term on the right-hand side of \eqref{eq:decompose-I1-I1c}. 
We would like to make the analysis in this step slightly more general than needed, given that it will be useful for the subsequent analysis as well. 

To begin with, let us introduce the following quantity: 
\begin{equation}
	\tau(y_T)
	\coloneqq
	\max\Big\{2\le t\le T+1:S_{t-1}\big(y_{T}\big)\leq c_{14}\Big\},\label{eq:defn-tao-i}
\end{equation}
meaning that the score errors exhibit ``typical'' behavior up to the $\big(\tau(y_T)-1\big)$-th iteration.   
As can be clearly seen from the definition \eqref{eq:defn-I1-proof-ode} of $\mathcal{I}_1$,  
\begin{equation}
	\tau(y_T) = T+1, \qquad \forall y_T \in \mathcal{I}_1.
	\label{eq:tau-T-I1}
\end{equation}
In the sequel, we first single out the following lemma, whose proof is deferred to Appendix~\ref{sec:proof-lem:q1-large-qk-large}. 
\begin{lems}
	\label{lem:q1-large-qk-large}
	Consider any $y_{T}$ and its associated sequence $\{y_{T-1},\cdots,y_1\}$ (see \eqref{eq:defn-yt-sequence-proof}).  
	If $-\log q_1(y_1)\leq c_{6}d\log T$,
	then one has
	\begin{align}
		-\log q_k(y_k)\leq 2c_{6}d\log T
		\label{eq:q_k_yk_UB}
	\end{align}
	for any $1\leq k<\tau(y_T)$ (cf.~\eqref{eq:defn-tao-i}), provided that $c_6\geq 3c_1$. 
\end{lems}

As a consequence of Lemma~\ref{lem:q1-large-qk-large}, we are able to control the density ratio $q_t/p_t$ up to the $\big(\tau(y_T)-1\big)$-th iteration, 
as stated in the following lemma. The proof can be found in Appendix~\ref{sec:proof-lem-density-ratio-tau}. 
\begin{lems}
	\label{lem:density-ratio-tau}
	Consider any $y_T$, along with the deterministic sequence $\{y_{T-1},\cdots,y_1\}$ (cf.~\eqref{eq:defn-yt-sequence-proof})), and set $\tau=\tau(y_T)$  (cf.~\eqref{eq:defn-tao-i}). Then one has 
\begin{subequations}
	\label{eq:pt-qt-equiv-ODE-St}
\begin{align}
	\frac{q_{1}(y_{1})}{p_{1}(y_{1})}  = &\left\{ 1+O\Bigg(\frac{d^{2}\log^{4}T}{T}+\frac{d^{6}\log^{6}T}{T^{2}}+S_{\tau-1}(y_{\tau-1})\Bigg)\right\} \frac{q_{\tau-1}(y_{\tau-1})}{p_{\tau-1}(y_{\tau-1})},	
	\label{eq:pt-qt-equiv-ODE-St-taui} \\
	&\text{and}
	\qquad \frac{q_{k}(y_{k})}{2p_{k}(y_{k})} \leq \frac{q_{1}(y_{1})}{p_{1}(y_{1})} \leq 2 \frac{q_{k}(y_{k})}{p_{k}(y_{k})}, \qquad \forall k < \tau. 
	\label{eq:pt-qt-equiv-ODE-St-k}
\end{align}
\end{subequations}
\end{lems}

Now let us look at the set $\mathcal{I}_1$. Taking $\tau(y_T)=T+1$ (cf.~\eqref{eq:tau-T-I1}) in Lemma~\ref{lem:density-ratio-tau} yields
\begin{align}
 &  \mathop{\mathbb{E}}_{Y_{T}\sim p_{T}}\bigg[\Big(\frac{q_{1}(Y_{1})}{p_{1}(Y_{1})}-1\Big)\ind\left\{ Y_{1}\in\mathcal{E},Y_{T}\in\mathcal{I}_1 \right\} \bigg]\nonumber\\
 & = \mathop{\mathbb{E}}_{Y_{T}\sim p_{T}}\left[\left(\left\{ 1+O\Bigg(\frac{d^{2}\log^{4}T}{T}+\frac{d^{6}\log^{6}T}{T^{2}}+S_{T}(y_{T})\Bigg)\right\} \frac{q_{T}(Y_{T})}{p_{T}(Y_{T})}-1\right)\ind\left\{ Y_{1}\in\mathcal{E},Y_{T}\in\mathcal{I}_1 \right\} \right]\nonumber\\
 & = {\displaystyle \int}\left\{ \left(1+O\Bigg(\frac{d^{2}\log^{4}T}{T}+\frac{d^{6}\log^{6}T}{T^{2}}+S_{T}(y_{T})\Bigg)\right)q_{T}(y_{T})-p_{T}(y_{T})\right\} \ind\left\{ y_{1}\in\mathcal{E},y_{T}\in\mathcal{I}_1 \right\} \mathrm{d}y_{T}\nonumber\\
 & \leq 
	{\displaystyle \int}\big|q_{T}(y_{T})-p_{T}(y_{T})\big|\mathrm{d}y_{T}
	+ O\Bigg(\frac{d^{2}\log^{4}T}{T}+\frac{d^{6}\log^{6}T}{T^{2}}\Bigg){\displaystyle \int}q_{T}(y_{T})\mathrm{d}y_{T}+O\left(\sqrt{d\log^{3}T}\varepsilon_{\score}+(d\log T)\varepsilon_{\Jacobi}\right)\nonumber\\
 & \lesssim\frac{d^{2}\log^{4}T}{T}+\frac{d^{6}\log^{6}T}{T^{2}}+\sqrt{d\log^{3}T}\varepsilon_{\score}+(d\log T)\varepsilon_{\Jacobi}. 
	\label{eq:I1-expectation-UB-ode}
\end{align}
Here, the last line holds since $\mathsf{TV}(p_T,q_T)\lesssim T^{-100}$ (according to Lemma~\ref{lem:main-ODE}), and the penultimate line follows from the observation below: 
\begin{align*}
 & {\displaystyle \int}S_{T}(y_{T})q_{T}(y_{T})\ind\left\{ y_{1}\in\mathcal{E},y_{T}\in\mathcal{I}_1\right\} \mathrm{d}y_{T}\\
 & \quad=\frac{\log T}{T}\sum_{t=1}^{T}{\displaystyle \int}\left(d\varepsilon_{\Jacobi,t}(y_{t})+\sqrt{d\log T}\varepsilon_{\score,t}(y_{t})\right)q_{T}(y_{T})\ind\left\{ y_{1}\in\mathcal{E},y_{T}\in\mathcal{I}_1\right\} \mathrm{d}y_{T}\\
 & \quad\leq\frac{4\log T}{T}\sum_{t=1}^{T}{\displaystyle \int}\left(d\varepsilon_{\Jacobi,t}(y_{t})+\sqrt{d\log T}\varepsilon_{\score,t}(y_{t})\right)\frac{q_{t}(y_{t})}{p_{t}(y_{t})}p_{T}(y_{T})\ind\left\{ y_{1}\in\mathcal{E},y_{T}\in\mathcal{I}_1\right\} \mathrm{d}y_{T}\\
 & \quad\leq\frac{4\log T}{T}\sum_{t=1}^{T}\mathbb{E}_{Y_{T}\sim p_{T}}\left[\left(d\varepsilon_{\Jacobi,t}(Y_{t})+\sqrt{d\log T}\varepsilon_{\score,t}(Y_{t})\right)\frac{q_{t}(Y_{t})}{p_{t}(Y_{t})}\right]\\
 & \quad=\frac{4\log T}{T}\sum_{t=1}^{T}\mathbb{E}_{Y_{t}\sim p_{t}}\left[\left(d\varepsilon_{\Jacobi,t}(Y_{t})+\sqrt{d\log T}\varepsilon_{\score,t}(Y_{t})\right)\frac{q_{t}(Y_{t})}{p_{t}(Y_{t})}\right]\\
 & \quad=\frac{4\log T}{T}\sum_{t=1}^{T}\mathbb{E}_{Y_{t}\sim q_{t}}\left[d\varepsilon_{\Jacobi,t}(Y_{t})+\sqrt{d\log T}\varepsilon_{\score,t}(Y_{t})\right]\\
 & \quad\lesssim (d\log T)\varepsilon_{\Jacobi} + \sqrt{d\log^{3}T}\varepsilon_{\score},
\end{align*}
where the first inequality is due to \eqref{eq:pt-qt-equiv-ODE-St}, and the last relation comes from \eqref{eq:score-assumptions-equiv}.

%
%
%
%
%
%

\paragraph{Step 4: controlling the second term on the right-hand side of \eqref{eq:decompose-I1-I1c}.}
In this step,  we find it helpful to introduce the following sets (in addition to $\mathcal{I}_1$ defined in \eqref{eq:defn-I1-proof-ode}), 
where we again abbreviate $\tau=\tau(y_T)$ as long as it is clear from the context: 
\begin{subequations}
	\label{eq:defn-I2-I3-I4-ode}
\begin{align}
\mathcal{I}_{2} & \coloneqq\Big\{ y_T : c_{14}\leq S_{\tau}\big(y_{T}\big)\leq2c_{14}\Big\},
	\label{eq:defn-I2-I3-I4-ode-I2}\\
\mathcal{I}_{3} & \coloneqq\bigg\{ y_T : S_{\tau-1}\big(y_{T}\big)\leq c_{14},\xi_{\tau}\big(y_T\big)\geq c_{14},\frac{q_{\tau-1}(y_{\tau-1})}{p_{\tau-1}(y_{\tau-1})}\leq\frac{8q_{\tau}(y_{\tau})}{p_{\tau}(y_{\tau})}\bigg\},
	\label{eq:defn-I2-I3-I4-ode-I3}\\
	\mathcal{I}_{4} & \coloneqq\bigg\{ y_T : S_{\tau-1}\big(y_{T}\big)\leq c_{14},\xi_{\tau}\big(y_T\big)\geq c_{14},\frac{q_{\tau-1}(y_{\tau-1})}{p_{\tau-1}(y_{\tau-1})}>\frac{8q_{\tau}(y_{\tau})}{p_{\tau}(y_{\tau})}\bigg\}.
\label{eq:defn-I2-I3-I4-ode-I4}
\end{align}
\end{subequations}
%
It follows immediately from the definition that $\mathcal{I}_{1} \cup \mathcal{I}_{2} \cup \mathcal{I}_{3} \cup \mathcal{I}_{4} = \mathbb{R}^d.$
%
%
In words, 
for any point $y_T$ in $\mathcal{I}_2$, the resulting score error remains well-controlled in the $\tau$-th iteration; 
in comparison, the points in $\mathcal{I}_3$ and $\mathcal{I}_4$ might incur large score errors in the $\tau$-th iteration. 
The difference between $\mathcal{I}_3$ and $\mathcal{I}_4$ then lies in the comparison between the density ratios $q_t/p_t$ 
in the $(\tau-1)$-th and the $\tau$-th iteration. 

We shall tackle each of these sets separately, with the combined result summarized in the lemma below. 
\begin{lems}
	\label{lem:I2-I3-I4-bound}
	It holds that
	\begin{align}
		\mathop{\mathbb{E}}_{Y_{T}\sim p_{T}}\bigg[\frac{q_{1}(Y_{1})}{p_{1}(Y_{1})}\ind\left\{ Y_{1}\in\mathcal{E},Y_{T}\in\mathcal{I}_2\cup \mathcal{I}_3 \cup \mathcal{I}_4\right\} \bigg]
		&\lesssim \frac{d^{2}\log^{4}T}{T}+\frac{d^{6}\log^{6}T}{T^{2}}+\sqrt{d\log^{3}T}\varepsilon_{\score}+(d\log T)\varepsilon_{\Jacobi}. 
		\label{eq:I2-I3-I4-bound}
	\end{align}
\end{lems}
\noindent See Appendix~\ref{sec:proof-lem:I2-I3-I4-bound} for the proof of this lemma.  

\paragraph{Step 5: putting all pieces together.}
Recall that $\mathcal{I}_1\cup \mathcal{I}_{2} \cup \mathcal{I}_{3} \cup \mathcal{I}_{4} = \mathbb{R}^d.$ 
Taking \eqref{eqn:ode-tv-1}, \eqref{eq:decompose-I1-I1c}, \eqref{eq:I1-expectation-UB-ode} and \eqref{eq:I2-I3-I4-bound} collectively, we conclude that
\begin{align*}
\mathsf{TV}(p_{1},q_{1}) & \leq \mathop{\mathbb{E}}_{Y_{T}\sim p_{T}}\bigg[\Big(\frac{q_{1}(Y_{1})}{p_{1}(Y_{1})}-1\Big)\ind\big\{ Y_{1}\in\mathcal{E},Y_{T}\in \mathcal{I}_1\big\} \bigg] \\
	&\qquad 
	+
	\mathop{\mathbb{E}}_{Y_{T}\sim p_{T}}\bigg[\frac{q_{1}(Y_{1})}{p_{1}(Y_{1})}\ind\left\{ Y_{1}\in\mathcal{E},Y_{T}\in\mathcal{I}_2\cup \mathcal{I}_3 \cup \mathcal{I}_4\right\} \bigg]+\exp(-c_{6}d\log T)\\
 & \lesssim\frac{d^{2}\log^{4}T}{T}+\frac{d^{6}\log^{6}T}{T^{2}}+\sqrt{d\log^{3}T}\varepsilon_{\score}+d\varepsilon_{\Jacobi}\log T
\end{align*}
as claimed.



%
%

\subsection{Analysis for the DDPM-type sampler (Theorem~\ref{thm:main-SDE})}

Turning attention to the DDPM-type stochastic sampler \eqref{eqn:sde-sampling}, 
we now present the main steps for the proof of Theorem~\ref{thm:main-SDE}.

\paragraph{Preparation.} Let us first introduce the following mapping
\begin{subequations}
\begin{align}
	\mu_t^{\star}(x_t) &\defn \frac{1}{\sqrt{\alpha_t}} \Big( x_t + ( 1-\alpha_t ) s_t^{\star}(x_t) \Big) = \frac{1}{\sqrt{\alpha_t}} x_t - 
	\frac{ 1-\alpha_t }{\sqrt{\alpha_t}(1-\overline{\alpha}_{t})} \int_{x_0}  p_{X_0\mymid X_{t}} (x_0\mymid x_t) \big( x_t - \sqrt{\overline{\alpha}_{t}} x_0 \big)\mathrm{d} x_0, \\
	\mu_t(x_t) &\defn \frac{1}{\sqrt{\alpha_t}} \Big( x_t + ( 1-\alpha_t ) s_t(x_t) \Big),
	\label{eqn:nu-t-2}
\end{align}
\end{subequations}
where the first line relies on the expression \eqref{eq:st-MMSE-expression}. 
For any $t$, let us also introduce an auxiliary vector
\begin{equation}
	Y_{t-1}^{\star} \coloneqq \frac{1}{\sqrt{\alpha_t}} \Big( Y_t + (1-\alpha_t)s_t^{\star}(Y_t)\Big) + \sigma_t Z_t,
	\label{eqn:sde-sampling-auxiliary}
\end{equation}
which applies the DDPM update rule from $Y_t$ using the true score function $s^{\star}_t$.  
From the update rule \eqref{eqn:sde-sampling} and \eqref{eqn:sde-sampling-auxiliary}, we can write 
\begin{subequations}
\begin{align}
	p_{Y_{t-1}\mymid Y_{t}}(x_{t-1}\mymid x_{t}) & =\frac{1}{\big(2\pi\frac{1-\alpha_{t}}{\alpha_{t}}\big)^{d/2}}\exp\bigg(-\frac{\alpha_{t}}{2(1-\alpha_{t})}\big\| x_{t-1}-\mu_{t}(x_t)\big\|_{2}^{2}\bigg)
	\label{eq:dist-Yt-Yt-1} \\
	p_{Y_{t-1}^{\star}\mymid Y_{t}}(x_{t-1}\mymid x_{t}) & =\frac{1}{\big(2\pi\frac{1-\alpha_{t}}{\alpha_{t}}\big)^{d/2}}\exp\bigg(-\frac{\alpha_{t}}{2(1-\alpha_{t})}\big\| x_{t-1}-\mu_{t}^{\star}(x_t)\big\|_{2}^{2}\bigg)
	\label{eq:dist-Ytstar}
\end{align}
\end{subequations}
for any two points $x_t,x_{t-1}\in \mathbb{R}^d$. 
For notational simplicity, we shall also use the following notation throughout: 
\begin{equation}
	\widehat{x}_t \coloneqq \frac{1}{\sqrt{\alpha_t}} x_t. 
	\label{eq:defn-xt-proof-thm3}
\end{equation}

Armed with this set of notation, 
we are ready to present the proof of Theorem~\ref{thm:main-SDE}, which consists of several steps below.

\paragraph{Step 1: decomposition of the KL divergence.}
The celebrated Pinsker inequality (see, e.g., \citet[Lemma~2.5]{tsybakov2009introduction}) tells us that
\begin{equation}
\mathsf{TV}(p_{X_{1}},p_{Y_{1}})\leq\sqrt{\frac{1}{2}\mathsf{KL}(p_{X_{1}}\parallel p_{Y_{1}})},\label{eq:Pinsker-thm3}
\end{equation}
and hence it suffices to work with the KL divergence. 
Recall that $X_1\rightarrow \cdots \rightarrow X_T$ and $Y_T\rightarrow \cdots \rightarrow Y_1$ are both Markov chains (so are their reverse processes). 
In order to compute the KL divergence between $p_{X_1}$ and $p_{Y_1}$, we make note of the following elementary relations:
\begin{align*}
	\mathsf{KL}(p_{X_{1},\ldots,X_{T}}\parallel p_{Y_{1},\ldots,Y_{T}}) & =\mathsf{KL}(p_{X_{1}}\parallel p_{Y_{1}})+\sum_{t=2}^{T} \mathop{\mathbb{E}}_{x\sim q_{t-1}}\Big[\mathsf{KL}\Big(p_{X_{t}\mymid X_{t-1}}(\cdot\mid x)\,\big\|\,p_{Y_{t}\mymid Y_{t-1}}(\cdot\mid x)\Big)\Big]\\
	& =\mathsf{KL}(p_{X_{T}}\parallel p_{Y_{T}})+\sum_{t=2}^{T} \mathop{\mathbb{E}}_{x\sim q_{t}}\Big[\mathsf{KL}\Big(p_{X_{t-1}\mymid X_{t}}(\cdot\mid x)\,\big\|\,p_{Y_{t-1}\mymid Y_{t}}(\cdot\mid x)\Big)\Big],
\end{align*}
where we recall that $q_t$ (resp.~$p_t$) denotes the distribution of $X_t$ (resp.~$Y_t$). 
This combined with the non-negativity of the KL divergence indicates that
\begin{align}
\mathsf{KL}(p_{X_{1}}\parallel p_{Y_{1}}) 
 & \leq \mathsf{KL}(p_{X_{T}}\parallel p_{Y_{T}})+
	\sum_{t=2}^{T}\mathop{\mathbb{E}}_{x\sim q_{t}}\Big[\mathsf{KL}\Big(p_{X_{t-1}\mymid X_{t}}(\cdot\mid x) \,\big\|\, p_{Y_{t-1}\mymid Y_{t}}(\cdot\mid x)\Big)\Big]. 
	\label{eqn:kl-decomp}
\end{align} 
This allows us to focus attention on the transition probabilities at each time instance $t$. 
On the right-hand side of \eqref{eqn:kl-decomp}, the term that is the easiest to bound is $\mathsf{KL}(p_{X_{T}}\parallel p_{Y_{T}})$. It has been shown in Lemma~\ref{lem:KL-T} that 
 \[
	 \mathsf{KL}(p_{X_{T}}\parallel p_{Y_{T}}) \lesssim \frac{1}{T^{200}}. 
\]
Thus, it suffices to focus attention on bounding 
$\mathsf{KL}\big(p_{X_{t-1}\mymid X_{t}}(\cdot\mid x) \parallel p_{Y_{t-1}\mymid Y_{t}}(\cdot\mid x)\big)$ for each $1<t\leq T$, 
which forms the main content of the subsequent proof.

\paragraph{Step 2: controlling the conditional distributions $p_{X_{t-1}\mymid X_{t}}$ and $p_{Y^{\star}_{t-1}\mymid Y_{t}}$.}  
In order to compute the KL divergence of interest in \eqref{eqn:kl-decomp}, 
one needs to calculate the two conditional distributions $p_{X_{t-1}\mymid X_{t}}$ and $p_{Y_{t-1}\mymid Y_{t}}$, 
which we study in this step. 
To do so, we find it helpful to first introduce the following set 
\begin{align}
\label{eqn:eset}
	\mathcal{E} \defn \bigg\{(x_t, x_{t-1}) \mymid -\log p_{X_t}(x_t) \leq \frac{1}{2}c_6 d\log T, ~\|x_{t-1} - \widehat{x}_t\|_2 \leq c_3 \sqrt{d(1 - \alpha_t)\log T} \bigg\},
\end{align}
where the two numerical constants $c_3,c_6>0$ are introduced in Lemma~\ref{lem:river}. 
Informally, $\mathcal{E}$ encompasses a typical range of the values of $(X_t,X_{t-1})$, and 
our analysis shall often proceed by studying the points in $\mathcal{E}$ and those outside $\mathcal{E}$ separately.

The first result below quantifies the conditional density $p_{X_{t-1}\mymid X_{t}}(x_{t-1}\mymid x_{t})$ for those points residing within $\mathcal{E}$, 
which plays a central role in comparing $p_{X_{t-1}\mymid X_{t}}$ against $p_{Y_{t-1}\mymid Y_{t}}$ (see \eqref{eq:dist-Yt-Yt-1}). 
The proof can be found in Appendix~\ref{sec:proof-lem:sde}.  
\begin{lems}
\label{lem:sde}
There exists some large enough numerical constant $c_{\zeta}>0$ such that: 
for every $(x_t, x_{t-1}) \in \mathcal{E}$,  
	\begin{align}
		p_{X_{t-1}\mymid X_{t}}(x_{t-1}\mymid x_{t})=
		\frac{1}{\big(2\pi\frac{1-\alpha_{t}}{\alpha_{t}}\big)^{d/2}} \exp\bigg(-\frac{\alpha_{t}\big\| x_{t-1}-\mu_{t}^{\star}(x_t)\big\|_{2}^{2}}{2(1-\alpha_{t})}+\zeta_{t}(x_{t-1},x_t)\bigg) 
		\label{eq:cond-dist-crude}
	\end{align}
holds for some residual term $\zeta_{t}(x_{t-1},x_t)$ obeying 
\begin{equation}
	\big|\zeta_{t}(x_{t-1},x_t) \big|\leq c_{\zeta} d^{2}\bigg(\frac{1-\alpha_{t}}{\alpha_{t}-\overline{\alpha}_{t}}\bigg)\log^{2}T. 
	\label{eq:eq:cond-dist-crude-xit}
\end{equation}
Here, we recall the definition of $\mu_t^{\star}(x_t)$ in \eqref{eqn:nu-t-2}.
%
%
%
%
\end{lems}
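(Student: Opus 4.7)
My plan is to prove the lemma by a Bayesian decomposition followed by a two‐stage expansion of the resulting density ratio.

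\textbf{Step 1 (Bayes + completing the square).} Bayes' rule gives
\[
p_{X_{t-1}\mymid X_t}(x_{t-1}\mymid x_t) = p_{X_t\mymid X_{t-1}}(x_t\mymid x_{t-1})\cdot\frac{p_{X_{t-1}}(x_{t-1})}{p_{X_t}(x_t)},
\]
and the forward transition $p_{X_t\mymid X_{t-1}}$ is the Gaussian density $(2\pi(1-\alpha_t))^{-d/2}\exp(-\alpha_t\|x_{t-1}-\widehat{x}_t\|_2^2/(2(1-\alpha_t)))$. Using the algebraic identity $\|x_{t-1}-\widehat{x}_t\|_2^2-\|x_{t-1}-\mu_t(x_t)\|_2^2 = 2(x_{t-1}-\widehat{x}_t)^\top(\mu_t(x_t)-\widehat{x}_t)-\|\mu_t(x_t)-\widehat{x}_t\|_2^2$ together with the relation $\mu_t(x_t)-\widehat{x}_t=\frac{1-\alpha_t}{\sqrt{\alpha_t}}s_t(x_t)$, the proof of \eqref{eq:cond-dist-crude} reduces to showing
\[
\log\frac{p_{X_{t-1}}(x_{t-1})}{p_{X_t}(x_t)} = -\tfrac{d}{2}\log\alpha_t + \sqrt{\alpha_t}(x_{t-1}-\widehat{x}_t)^\top s_t(x_t) - \tfrac{1-\alpha_t}{2}\|s_t(x_t)\|_2^2 + \zeta_t(x_{t-1},x_t),
\]
with a remainder of the size claimed in \eqref{eq:eq:cond-dist-crude-xit}.

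\textbf{Step 2 (Split the log‐ratio).} Write
\[
\log\frac{p_{X_{t-1}}(x_{t-1})}{p_{X_t}(x_t)} \;=\; \log\frac{p_{X_{t-1}}(x_{t-1})}{p_{X_{t-1}}(\widehat{x}_t)} \;+\; \log\frac{p_{X_{t-1}}(\widehat{x}_t)}{p_{X_t}(x_t)}.
\]
For the first piece, Taylor expand $\log p_{X_{t-1}}$ around $\widehat{x}_t$: the linear term is $s_{t-1}(\widehat{x}_t)^\top(x_{t-1}-\widehat{x}_t)$, and the quadratic remainder is controlled by the Jacobian formula \eqref{eq:Jt-x-expression-ij-23} applied at an intermediate point on the segment $\widehat{x}_t\to x_{t-1}$; Lemma~\ref{lem:river} ensures these intermediate points are typical for $X_{t-1}$, after which Lemma~\ref{lem:x0} (via \eqref{eqn:condi-g}) bounds the conditional second moments of $\sqrt{\overline{\alpha}_{t-1}}X_0$ by $O(d(1-\overline{\alpha}_{t-1})\log T)$, giving a Jacobian operator norm of $O(d\log T/(1-\overline{\alpha}_{t-1}))$. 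For the second piece, use $p_{X_{t-1}}(\widehat{x}_t)=\alpha_t^{d/2}p_{\sqrt{\alpha_t}X_{t-1}}(x_t)$ and the integral representation
\[
\frac{p_{\sqrt{\alpha_t}X_{t-1}}(x_t)}{p_{X_t}(x_t)} = \Big(\tfrac{1-\overline{\alpha}_t}{\alpha_t-\overline{\alpha}_t}\Big)^{d/2}\mathbb{E}_{X_0\mymid X_t=x_t}\!\Big[\exp\Big({-}\tfrac{(1-\alpha_t)\,\|x_t-\sqrt{\overline{\alpha}_t}X_0\|_2^2}{2(\alpha_t-\overline{\alpha}_t)(1-\overline{\alpha}_t)}\Big)\Big].
\]
By Lemma~\ref{lem:x0} the exponent is $o(1)$ with overwhelming posterior probability, so a Taylor expansion of the exponential together with \eqref{eq:expansion-ratio-1-alpha} gives an expression whose dominant linear and quadratic pieces are governed by the conditional moments $\mathbb{E}[\|x_t-\sqrt{\overline{\alpha}_t}X_0\|_2^2\mymid X_t=x_t]$ and the score via $g_t(x_t) = -(1-\overline{\alpha}_t)s_t(x_t)$.

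\textbf{Step 3 (Reassembly and remainder estimate).} Add the two expansions and identify terms. The crucial matching is between $s_{t-1}(\widehat{x}_t)^\top(x_{t-1}-\widehat{x}_t)$ from the first piece and the linear-in-$s_t(x_t)$ term produced by the second piece: writing both scores through the MMSE formula \eqref{eq:st-MMSE-expression} and using \eqref{eqn:condi-g} shows that $s_{t-1}(\widehat{x}_t)=\sqrt{\alpha_t}\,s_t(x_t)+O(\sqrt{d\log T/(1-\overline{\alpha}_t)})$; the error, multiplied by $\|x_{t-1}-\widehat{x}_t\|_2=O(\sqrt{d(1-\alpha_t)\log T})$ inside $\mathcal{E}$, contributes only $O(d(1-\alpha_t)\log T/(\alpha_t-\overline{\alpha}_t))$ to $\zeta_t$. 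Every other surviving term (Jacobian quadratic, exponential-expansion quadratic, $d^2$-correction in \eqref{eq:expansion-ratio-1-alpha}) is likewise of order $d^2(1-\alpha_t)\log^2 T/(\alpha_t-\overline{\alpha}_t)$.

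\textbf{Main obstacle.} The bookkeeping in Step 3 is the delicate part: two independent Taylor expansions (one in $x_{t-1}-\widehat{x}_t$, one in $(1-\alpha_t)/(\alpha_t-\overline{\alpha}_t)$) must conspire so that (i) the linear-in-$(x_{t-1}-\widehat{x}_t)$ coefficient assembles into exactly $\sqrt{\alpha_t}s_t(x_t)$, (ii) the quadratic-in-$s_t(x_t)$ coefficient matches $-\tfrac{1-\alpha_t}{2}$, and (iii) the $\alpha_t^{d/2}\cdot(\tfrac{1-\overline{\alpha}_t}{\alpha_t-\overline{\alpha}_t})^{d/2}$ prefactors combine with $-\tfrac{d}{2}\log\alpha_t$ up to the claimed residual. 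Handling (i) requires quantifying the small mismatch between $s_{t-1}(\widehat{x}_t)$ and $\sqrt{\alpha_t}s_t(x_t)$ through simultaneous conditional tail bounds on $X_0$ given $X_{t-1}=\widehat{x}_t$ and given $X_t=x_t$, which is the only step where the typicality event $\mathcal{E}$ (together with Lemma~\ref{lem:river}) is truly essential.
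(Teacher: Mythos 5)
Your decomposition (Bayes, complete the square, split $\log\frac{p_{X_{t-1}}(x_{t-1})}{p_{X_t}(x_t)}$ into a Taylor piece around $\widehat{x}_t$ plus a density-ratio piece) is structurally the same as the paper's Step 1, and the Jacobian bound $\|J_{t-1}\|\lesssim d\log T$ you invoke for the quadratic remainder is exactly the paper's \eqref{eq:Jacobi-a}. The paper handles the prefactor $p_{X_{t-1}}(\widehat{x}_t)/p_{X_t}(x_t)$ by a normalization trick (Step 3: integrate the exponential form over $x_{t-1}$ and use $\int p_{X_{t-1}\mid X_t}=1$), whereas you want to expand it directly via the integral representation and \eqref{eq:expansion-ratio-1-alpha}; that alternative is legitimate, since the resulting $\frac{d(1-\alpha_t)}{2(\alpha_t-\overline{\alpha}_t)}$-type terms are lower order than the claimed $\zeta_t$ budget.

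\textbf{The gap is in your Step 3 score-matching bound.} You assert $s_{t-1}(\widehat{x}_t)=\sqrt{\alpha_t}\,s_t(x_t)+O\big(\sqrt{d\log T/(1-\overline{\alpha}_t)}\big)$, but that error is just the trivial bound obtained by estimating each score separately (each has magnitude $\lesssim\sqrt{d\log T/(1-\overline{\alpha}_t)}$); it exploits no cancellation. Multiplying it by $\|x_{t-1}-\widehat{x}_t\|_2\lesssim\sqrt{d(1-\alpha_t)\log T}$ gives $d\log T\sqrt{\frac{1-\alpha_t}{1-\overline{\alpha}_t}}$, \emph{not} $O\big(\frac{d(1-\alpha_t)\log T}{\alpha_t-\overline{\alpha}_t}\big)$ as you claim. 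Since $\frac{1-\alpha_t}{1-\overline{\alpha}_t}\lesssim\frac{\log T}{T}$ here, $\sqrt{\tfrac{1-\alpha_t}{1-\overline{\alpha}_t}}\gg\tfrac{1-\alpha_t}{\alpha_t-\overline{\alpha}_t}$, so the contribution you get is larger than the allowed $\zeta_t$ budget $c_\zeta d^2\frac{1-\alpha_t}{\alpha_t-\overline{\alpha}_t}\log^2 T$ by roughly a factor $\sqrt{T}/(d\log^{3/2}T)$. This step is precisely where the proof is delicate: one needs the much finer estimate
\[
\Big\|\tfrac{g_{t-1}(\widehat{x}_t)}{1-\overline{\alpha}_{t-1}}-\tfrac{g_t(x_t)}{1-\overline{\alpha}_t}\Big\|_2\lesssim(1-\alpha_t)\Big(\tfrac{d\log T}{\alpha_t-\overline{\alpha}_t}\Big)^{3/2},
\]
the paper's \eqref{eq:approx-t-a}, whose proof (Claim~\ref{lem:claim-pXt-1-pXt-middle}) compares the \emph{posteriors} $p_{X_0\mid X_{t-1}}(\cdot\mid\widehat{x}_t)$ and $p_{X_0\mid X_t}(\cdot\mid x_t)$ and shows they differ multiplicatively only by $1+O\big(\frac{d(1-\alpha_t)\log T}{1-\overline{\alpha}_{t-1}}\big)$ on the bulk of $x_0$. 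Your phrase ``simultaneous conditional tail bounds'' gestures at the right mechanism, but the order you write down is the lossy one, and with it the residual estimate \eqref{eq:eq:cond-dist-crude-xit} does not follow.
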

\noindent 
By comparing Lemma~\ref{lem:sde} with expression \eqref{eq:dist-Yt-Yt-1}, 
we see that when restricted to the set $\mathcal{E}$, 
the two conditional distributions $p_{X_{t-1}\mymid X_{t}}(x_{t-1}\mymid x_{t})$ and $p_{Y_{t-1}\mymid Y_{t}}(x_{t-1}\mymid x_{t})$ (i.e., informally, the time-reversed transition kernels) are fairly close to each other, 
a crucial observation that suggests the validity of the diffusion generative model.

Furthermore, we are also in need of bounding the ratio of the two conditional distributions when going beyond the set $\mathcal{E}$. 
As it turns out, it suffices to develop a crude bound on the logarithm of such ratios (which are used in defining the KL divergence), as stated in the following lemma. 
\begin{lems}
\label{lem:sde-full}
For all $(x_t, x_{t-1}) \in \real^d \times \real^d$, it holds that
\begin{align}
\log\frac{p_{X_{t-1}\mymid X_{t}}(x_{t-1}\mymid x_{t})}{p_{Y_{t-1}^{\star}\mymid Y_{t}}(x_{t-1}\mymid x_{t})}\leq2T\left(\|x_{t-1}-\widehat{x}_{t}\|_{2}^{2}+\|x_{t}\|_{2}^{2}+T^{2c_{R}}\right).
	\label{eq:SDE-ratio-crude}
\end{align}
\end{lems}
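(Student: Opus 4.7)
The plan is to make both conditional densities completely explicit and bound the log-ratio term by term, using only the support assumption $\|X_0\|_2\le R=T^{c_R}$. Since the forward chain is Markov with $X_t=\sqrt{\alpha_t}\,X_{t-1}+\sqrt{1-\alpha_t}\,W_t$ for $W_t\sim\mathcal{N}(0,I_d)$ independent of $X_{t-1}$, Bayes' rule gives
\[
p_{X_{t-1}\mymid X_t}(x_{t-1}\mymid x_t)
= \frac{1}{(2\pi(1-\alpha_t))^{d/2}}\exp\!\Big(-\tfrac{\alpha_t\|x_{t-1}-\widehat{x}_t\|_2^2}{2(1-\alpha_t)}\Big)\cdot\frac{p_{X_{t-1}}(x_{t-1})}{p_{X_t}(x_t)},
\]
and dividing by the Gaussian form of $p_{Y_{t-1}\mymid Y_t}$ from \eqref{eq:dist-Yt-Yt-1} followed by taking logs produces the clean decomposition
\begin{align*}
\log\frac{p_{X_{t-1}\mymid X_t}(x_{t-1}\mymid x_t)}{p_{Y_{t-1}\mymid Y_t}(x_{t-1}\mymid x_t)}
&= -\tfrac{d}{2}\log\alpha_t + \tfrac{\alpha_t}{2(1-\alpha_t)}\Big(\|x_{t-1}-\mu_t(x_t)\|_2^2 - \|x_{t-1}-\widehat{x}_t\|_2^2\Big) \\
&\quad + \log p_{X_{t-1}}(x_{t-1}) - \log p_{X_t}(x_t).
\end{align*}

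Next I would bound the four resulting pieces crudely. The first is harmless since $\alpha_t\ge 1/2$ gives $-\tfrac{d}{2}\log\alpha_t\lesssim d\beta_t = o(1)$. For the quadratic piece, write $x_{t-1}-\mu_t(x_t)=(x_{t-1}-\widehat{x}_t)+(\widehat{x}_t-\mu_t(x_t))$ and apply $\|a+b\|_2^2-\|a\|_2^2\le \|a\|_2^2+2\|b\|_2^2$. From the definition \eqref{eqn:nu-t-2} one has $\widehat{x}_t-\mu_t(x_t)=\tfrac{1-\alpha_t}{\sqrt{\alpha_t}(1-\overline{\alpha}_t)}\,g_t(x_t)$, while Jensen together with $\|X_0\|_2\le R$ a.s.\ yields $\|g_t(x_t)\|_2\le \|x_t\|_2+\sqrt{\overline{\alpha}_t}\,R$. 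Thus the quadratic piece is at most a constant multiple of $\tfrac{1}{\beta_t}\|x_{t-1}-\widehat{x}_t\|_2^2+\tfrac{1-\alpha_t}{(1-\overline{\alpha}_t)^2}\bigl(\|x_t\|_2^2+T^{2c_R}\bigr)$. For the last two pieces I would exploit that the marginals are Gaussian mixtures: the upper bound $p_{X_{t-1}}(x_{t-1})\le (2\pi(1-\overline{\alpha}_{t-1}))^{-d/2}$ follows from the Gaussian supremum, while a matching lower bound
\[
p_{X_t}(x_t)\ge (2\pi(1-\overline{\alpha}_t))^{-d/2}\inf_{\|x_0\|_2\le R}\exp\!\Big(-\tfrac{\|x_t-\sqrt{\overline{\alpha}_t}x_0\|_2^2}{2(1-\overline{\alpha}_t)}\Big)\ge (2\pi(1-\overline{\alpha}_t))^{-d/2}\exp\!\Big(-\tfrac{\|x_t\|_2^2+T^{2c_R}}{1-\overline{\alpha}_t}\Big)
\]
is obtained by restricting the mixing measure of $X_0$ to its support ball and applying $\|x_t-\sqrt{\overline{\alpha}_t}x_0\|_2^2\le 2\|x_t\|_2^2+2T^{2c_R}$. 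Consequently $\log p_{X_{t-1}}(x_{t-1}) - \log p_{X_t}(x_t)$ is bounded by a log-ratio of the $1-\overline{\alpha}_\cdot$ factors plus $\tfrac{\|x_t\|_2^2+T^{2c_R}}{1-\overline{\alpha}_t}$.

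Putting the four bounds together and invoking the learning-rate facts in \eqref{eqn:properties-alpha-proof} (which control $1/\beta_t$, $1/(1-\overline{\alpha}_t)$, and $\log\tfrac{1-\overline{\alpha}_t}{1-\overline{\alpha}_{t-1}}$ by a polynomial in $T$), every term fits inside the advertised bound $2T\bigl(\|x_{t-1}-\widehat{x}_t\|_2^2+\|x_t\|_2^2+T^{2c_R}\bigr)$. The main obstacle is the bookkeeping of these polynomial-in-$T$ coefficients---in particular that the worst-case $1/(1-\overline{\alpha}_t)\le 1/\beta_1=T^{c_0}$ and $1/\beta_t$ both fit under the advertised factor once the numerical constants $c_0,c_1$ in \eqref{eqn:alpha-t} are chosen appropriately. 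Because this lemma will be used only outside the typical set $\mathcal{E}$, where the probability weight is super-polynomially small, the crude constant $2T$ is ample and no delicate cancellations are required.
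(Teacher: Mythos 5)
Your proof is correct and takes essentially the same route as the paper: make both conditional densities explicit via Bayes and the Gaussian form of the one-step transition kernel, bound $\|\widehat{x}_t-\mu_t(x_t)\|_2$ crudely through $\|g_t(x_t)\|_2\le\|x_t\|_2+\sqrt{\overline\alpha_t}R$ (a direct consequence of the compact support of $X_0$), and control $\log p_{X_{t-1}}(x_{t-1})-\log p_{X_t}(x_t)$ by restricting the Gaussian mixture to the support ball $\|x_0\|_2\le R$. Your decomposition is marginally tidier than the paper's---you combine the two Gaussian normalizations into the $O(d\beta_t)$ term $-\tfrac{d}{2}\log\alpha_t$ and work directly with the difference $\|x_{t-1}-\mu_t(x_t)\|_2^2-\|x_{t-1}-\widehat{x}_t\|_2^2$---but the ingredients and the final bookkeeping are the same.

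One phrasing to tighten in your last paragraph: since $c_0$ in \eqref{eqn:alpha-t} is stipulated to be \emph{large}, the worst case $1/(1-\overline\alpha_t)\asymp T^{c_0}$ at small $t$ does not in fact fit under the literal prefactor $2T$, so ``choosing $c_0,c_1$ appropriately'' is not the right framing (the paper's stated constant has the same slack). What actually saves the lemma, as you also observe, is that it is only ever invoked on the complement of $\mathcal{E}$, whose $p_{X_t}\otimes p_{X_{t-1}\mymid X_t}$-measure is $\exp(-c_3d\log T)=T^{-c_3 d}$ with $c_3$ arbitrarily large; hence any fixed $T^{O(1)}$ prefactor suffices, and the paper's own accelerated-sampler analogue Lemma~\ref{lem:sde-R-full} indeed uses $T^{c_0+2c_R+2}$ rather than $2T$.
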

\noindent 
The proof of  Lemma~\ref{lem:sde-full} is provided in Appendix~\ref{sec:proof-lem:sde-full}.

%


\paragraph{Step 3: bounding the KL divergence between $p_{X_{t-1}\mymid X_t}$ and $p_{Y^{\star}_{t-1}\mymid Y_t}$.}
Equipped with the above two lemmas, we are positioned to first control the KL divergence between $p_{X_{t-1}\mymid X_t}$ and $p_{Y^{\star}_{t-1}\mymid Y_t}$; 
by doing so we are neglecting the impact of the score estimation error for the moment (see the definition \eqref{eqn:sde-sampling-auxiliary} of $Y^{\star}_{t-1}$). 
It is first seen from Lemma~\ref{lem:sde} and \eqref{eq:dist-Yt-Yt-1} that: for any $(x_t,x_{t-1})\in \mathcal{E}$, 
\begin{align}
	\frac{p_{X_{t-1}\mymid X_{t}}(x_{t-1}\mymid x_{t})}{p_{Y_{t-1}^{\star}\mymid Y_{t}}(x_{t-1}\mymid x_{t})} & =\exp\Big(O\Big(d^{2}\Big(\frac{1-\alpha_{t}}{\alpha_{t}-\overline{\alpha}_{t}}\Big)\log^{2}T\Big)\Big)=1+O\bigg(d^{2}\Big(\frac{1-\alpha_{t}}{\alpha_{t}-\overline{\alpha}_{t}}\Big)\log^{2}T\bigg) \notag\\
 & =1+O\bigg(d^{2}\frac{\log^{3}T}{T}\bigg)\in\Big[\frac{1}{2},2\Big],
	\label{eq:ratio-p-X-Y-range}
\end{align}
where the last line results from \eqref{eqn:properties-alpha-proof-1} and the assumption that $T\geq c_{10} d^2 \log^3T$ for some large enough constant $c_{10}>0$.     
We can then calculate 
\begin{align}
 & \mathbb{E}_{x_{t}\sim q_{t}}\Big[\mathsf{KL}\Big(p_{X_{t-1}\mymid X_{t}}(\cdot\mymid x_{t})\parallel p_{Y_{t-1}^{\star}\mymid Y_{t}}(\cdot\mymid x_{t})\Big)\Big]\notag\\
 & =\Big(\int_{\mathcal{E}}+\int_{\mathcal{E}^{\mathrm{c}}}\Big)p_{X_{t}}(x_{t})p_{X_{t-1}\mymid X_{t}}(x_{t-1}\mymid x_{t})\log\frac{p_{X_{t-1}\mymid X_{t}}(x_{t-1}\mymid x_{t})}{p_{Y_{t-1}^{\star}\mymid Y_{t}}(x_{t-1}\mymid x_{t})}\mathrm{d}x_{t-1}\mathrm{d}x_{t},\notag\\
 & \stackrel{(\text{i})}{=}\int_{\mathcal{E}}p_{X_{t}}(x_{t})\Bigg\{ p_{X_{t-1}\mymid X_{t}}(x_{t-1}\mymid x_{t})-p_{Y_{t-1}^{\star}\mymid Y_{t}}(x_{t-1}\mymid x_{t})\notag\\
 & \qquad\qquad\qquad+p_{X_{t-1}\mymid X_{t}}(x_{t-1}\mymid x_{t})\cdot O\Bigg(\bigg(\frac{p_{Y_{t-1}^{\star}\mymid Y_{t}}(x_{t-1}\mymid x_{t})}{p_{X_{t-1}\mymid X_{t}}(x_{t-1}\mymid x_{t})}-1\bigg)^{2}\Bigg)\Bigg\}\mathrm{d}x_{t-1}\mathrm{d}x_{t} \notag\\
 & \qquad+\int_{\mathcal{E}^{\mathrm{c}}}p_{X_{t}}(x_{t})p_{X_{t-1}\mymid X_{t}}(x_{t-1}\mymid x_{t})\log\frac{p_{X_{t-1}\mymid X_{t}}(x_{t-1}\mymid x_{t})}{p_{Y_{t-1}^{\star}\mymid Y_{t}}(x_{t-1}\mymid x_{t})}\mathrm{d}x_{t-1}\mathrm{d}x_{t}\notag%
\\
 & \stackrel{(\text{ii})}{=}\int_{\mathcal{E}}p_{X_{t}}(x_{t})\Bigg\{ p_{X_{t-1}\mymid X_{t}}(x_{t-1}\mymid x_{t})-p_{Y_{t-1}^{\star}\mymid Y_{t}}(x_{t-1}\mymid x_{t})+p_{X_{t-1}\mymid X_{t}}(x_{t-1}\mymid x_{t}) O\bigg(d^{4}\Big(\frac{1-\alpha_{t}}{\alpha_{t}-\overline{\alpha}_{t}}\Big)^{2}\log^{4}T\bigg)\Bigg\}\mathrm{d}x_{t-1}\mathrm{d}x_{t}\notag\\
 & \qquad+\int_{\mathcal{E}^{\mathrm{c}}}p_{X_{t}}(x_{t})p_{X_{t-1}\mymid X_{t}}(x_{t-1}\mymid x_{t})\Big\{2T\big(\|x_{t}\|_{2}^{2}+\|x_{t-1}-\widehat{x}_{t}\|_{2}^{2}+T^{2c_{R}}\big)\Big\}\mathrm{d}x_{t-1}\mathrm{d}x_{t}.%
\label{eqn:serenade}
\end{align}
Here,  $(\text{i})$ results from the elementary fact that: if $\big|\frac{p_Y(x)}{p_X(x)} - 1\big| < \frac{1}{2}$, then the Taylor expansion gives
\begin{align*}
p_X(x)\log \frac{p_X(x)}{p_Y(x)} &= -p_X(x)\log \Big(1 + \frac{p_Y(x) - p_X(x)}{p_X(x)}\Big) \notag \\
&= p_X(x) - p_Y(x) + p_X(x)O\bigg(\Big(\frac{p_Y(x)}{p_X(x)} - 1\Big)^2\bigg);
\end{align*}
regarding (ii), we invoke \eqref{eq:ratio-p-X-Y-range} and Lemma~\ref{lem:sde-full}.

To continue, let us bound each term on the right-hand side of \eqref{eqn:serenade} separately.  
From the definition of the set $\mathcal{E}$ (cf.~\eqref{eqn:eset}), direct calculations yield 
\begin{align}
\mathbb{P}\big((X_{t},X_{t-1})\notin\mathcal{E}\big) & =\int_{(x_{t},x_{t-1})\notin\mathcal{E}}p_{X_{t-1}}(x_{t-1})p_{X_{t}\mymid X_{t-1}}(x_{t}\mymid x_{t-1})\mathrm{d}x_{t-1}\mathrm{d}x_{t} \notag\\
 & =\int_{(x_{t},x_{t-1})\notin\mathcal{E}}p_{X_{t-1}}(x_{t-1})\frac{1}{\big(2\pi(1-\alpha_{t})\big)^{d/2}}\exp\bigg(-\frac{\|x_{t}-\sqrt{\alpha_{t}}x_{t-1}\|_{2}^{2}}{2(1-\alpha_{t})}\bigg)\mathrm{d}x_{t-1}\mathrm{d}x_{t} \notag\\
 & \le\exp\big(-c_3d\log T\big),
	\label{eq:P-Xt-X-t-1-not-E}
\end{align}
and similarly,
\begin{align}
\int_{(x_{t-1},x_{t})\notin\mathcal{E}}p_{X_{t}}(x_{t})p_{X_{t-1}\mymid X_{t}}(x_{t-1}\mymid x_{t})\Big(2T\big(\|x_{t}\|_{2}^{2}+\|x_{t-1}-\widehat{x}_{t}\|_{2}^{2}+T^{2c_{R}}\big)\Big)\mathrm{d}x_{t-1}\mathrm{d}x_{t} & \le\exp\big(-c_3d\log T\big).	
\label{eq:P-Xt-X-t-1-not-E-246}
\end{align}
In addition, for every $(x_t,x_{t-1})$ obeying $\|x_{t-1} - x_t/\sqrt{\alpha_t}\|_2 > c_3 \sqrt{d(1 - \alpha_t)\log T}$ and $-\log p_{X_t}(x_t) \leq \frac{1}{2}c_6 d\log T$, 
one can use the definition \eqref{eqn:nu-t-2} of $\mu_t^{\star}(\cdot)$ to obtain
\begin{align}
	\|x_{t-1}-\mu_{t}^{\star}(x_{t})\|_{2} & =\bigg\| x_{t-1}-\frac{1}{\sqrt{\alpha_{t}}}x_{t}-\frac{1-\alpha_{t}}{\sqrt{\alpha_{t}}(1-\overline{\alpha}_{t})}\mathbb{E}\Big[x_{t}-\sqrt{\overline{\alpha}_{t}}X_{0}\mid X_{t}=x_{t}\Big]\bigg\|_{2} \label{eq:x-tminus1-mu-2norm}\\
 & \geq\bigg\| x_{t-1}-\frac{1}{\sqrt{\alpha_{t}}}x_{t}\bigg\|_{2}-\frac{1-\alpha_{t}}{\sqrt{\alpha_{t}}(1-\overline{\alpha}_{t})}\mathbb{E}\Big[\big\| x_{t}-\sqrt{\overline{\alpha}_{t}}X_{0}\big\|_{2}\mid X_{t}=x_{t}\Big] \notag\\
 & \geq c_{3}\sqrt{d(1-\alpha_{t})\log T}-6\overline{c}_{5}\frac{1-\alpha_{t}}{\sqrt{\alpha_{t}(1-\overline{\alpha}_{t})}}\sqrt{d\log T} \notag\\
 & =\Bigg(c_{3}-6\overline{c}_{5}\frac{\sqrt{1-\alpha_{t}}}{\sqrt{\alpha_{t}(1-\overline{\alpha}_{t})}}\Bigg)\sqrt{d(1-\alpha_{t})\log T}\geq\frac{c_{3}}{2}\sqrt{d(1-\alpha_{t})\log T},
	\label{eq:x-tminus1-mu-2norm-479}
\end{align}
where the third line results from \eqref{eq:E-xt-X0} in Lemma~\ref{lem:x0},
and the last line applies \eqref{eqn:properties-alpha-proof}
and holds true as long as $c_{3}$ is large enough. 
In turn, this combined with \eqref{eq:dist-Yt-Yt-1} indicates that: for any $x_t$ obeying $-\log p_{X_t}(x_t) \leq \frac{1}{2}c_6 d\log T$, 
\begin{align}
	\int_{x_{t-1}:\|x_{t-1} - x_t/\sqrt{\alpha_t}\|_2 > c_3 \sqrt{d(1 - \alpha_t)\log T}} 
	p_{Y_{t-1}^{\star} \mymid Y_t}(x_{t-1}\mymid x_t)\mathrm{d}x_{t-1} &\le \exp\Big(- \frac{c_3}{2} d\log T \Big).
	\label{eq:P-Yt-X-t-1-not-E}
\end{align}
As a result, \eqref{eq:P-Xt-X-t-1-not-E} and \eqref{eq:P-Yt-X-t-1-not-E} taken collectively demonstrate that
\begin{align}
 & \left|\int_{\mathcal{E}}p_{X_{t}}(x_{t})\Big\{ p_{X_{t-1}\mymid X_{t}}(x_{t-1}\mymid x_{t})-p_{Y_{t-1}^{\star}\mymid Y_{t}}(x_{t-1}\mymid x_{t})\Big\}\mathrm{d}x_{t-1}\mathrm{d}x_{t}\right| \notag\\
 & \quad=\left|1-\mathbb{P}\big((X_{t},X_{t-1})\notin\mathcal{E}\big)-\int_{(x_{t},x_{t-1})\notin\mathcal{E}}p_{X_{t}}(x_{t})\Big\{1-p_{Y_{t-1}^{\star}\mymid Y_{t}}(x_{t-1}\mymid x_{t})\Big\}\mathrm{d}x_{t-1}\mathrm{d}x_{t}\right| \notag\\
 & \quad\leq2\exp\Big(- \frac{c_3}{2} d\log T \Big).
	\label{eq:X-Y-leak}
\end{align}

Substituting \eqref{eq:P-Xt-X-t-1-not-E-246} and \eqref{eq:X-Y-leak} into \eqref{eqn:serenade} yields: for each $t\geq 2$,  
\begin{align}
\label{eqn:each-kl}
	\mathop{\mathbb{E}}_{x_{t}\sim q_{t}}\Big[\mathsf{KL}\Big(p_{X_{t-1}\mymid X_{t}}(\cdot\mymid x_{t})\parallel p_{Y_{t-1}^{\star}\mymid Y_{t}}(\cdot\mymid x_{t})\Big)\Big] 
	 \lesssim
	 	d^4\Big(\frac{1-\alpha_t}{\alpha_t-\overline{\alpha}_{t}}\Big)^{2}\log^4 T  + 3\exp\Big(- \frac{c_3}{2} d\log T \Big)
		\lesssim \frac{d^{4}\log^{6}T}{T^{2}},
\end{align}
where the last inequality utilizes the properties \eqref{eqn:properties-alpha-proof} of the learning rates.

\paragraph{Step 4: quantifying the effect of score estimation errors.}
Thus far, we have quantified the KL divergence between $p_{X_{t-1}\mymid X_{t}}$ and $p_{Y_{t-1}^{\star}\mymid Y_{t}}$. 
Given that $Y_{t-1}^{\star}$ is obtained from $Y_t$ using the true score function, 
we still need to control the effect of the score estimation error, 
which is accomplished by means of the following lemma. The proof of this lemma can be found in Appendix~\ref{sec:proof-lem:influence-error-KL}. 
\begin{lems}
	\label{lem:influence-error-KL}
	For any $1<t\leq T$, one has
\begin{align}
	&\mathop{\mathbb{E}}_{x_{t}\sim q_{t}}\Big[\mathsf{KL}\Big(p_{X_{t-1}\mymid X_{t}}(\cdot\mymid x_{t})\parallel p_{Y_{t-1}\mymid Y_{t}}(\cdot\mymid x_{t})\Big)\Big] - 
	\mathop{\mathbb{E}}_{x_{t}\sim q_{t}}\Big[\mathsf{KL}\Big(p_{X_{t-1}\mymid X_{t}}(\cdot\mymid x_{t})\parallel p_{Y_{t-1}^{\star}\mymid Y_{t}}(\cdot\mymid x_{t})\Big)\Big] \notag\\
	&\qquad \lesssim \exp\big(- c_{20} d\log T \big) + \frac{d\log^3 T}{T} \mathop{\mathbb{E}}_{X_t\sim q_t}\big[\varepsilon_{\score, t}(X_t)^2\big] 
	\label{eq:H-upper-bound-DDPM-lem}
\end{align}
for some universal constant $c_{20}>0$. 
\end{lems}
%


\paragraph{Step 5: putting all this together.} 
To finish up, substitute \eqref{eqn:each-kl}, \eqref{eq:H-upper-bound-DDPM} and \eqref{eq:H-upper-bound-DDPM} into the decomposition~\eqref{eqn:kl-decomp} to obtain 
\begin{align*}
\mathsf{KL}(p_{X_{1}}\parallel p_{Y_{1}}) & \lesssim\mathsf{KL}(p_{X_{T}}\parallel p_{Y_{T}})+\sum_{t=1}^{T-1}\mathop{\mathbb{E}}_{x_{t}\sim q_{t}}\Big[\mathsf{KL}\Big(p_{X_{t-1}\mymid X_{t}}(\cdot\mymid x_{t})\parallel p_{Y_{t-1}^{\star}\mymid Y_{t}}(\cdot\mymid x_{t})\Big)\Big]\\
 & +\sum_{t=1}^{T-1}\left\{ \mathop{\mathbb{E}}_{x_{t}\sim q_{t}}\Big[\mathsf{KL}\Big(p_{X_{t-1}\mymid X_{t}}(\cdot\mymid x_{t})\parallel p_{Y_{t-1}\mymid Y_{t}}(\cdot\mymid x_{t})\Big)\Big]-\mathop{\mathbb{E}}_{x_{t}\sim q_{t}}\Big[\mathsf{KL}\Big(p_{X_{t-1}\mymid X_{t}}(\cdot\mymid x_{t})\parallel p_{Y_{t-1}^{\star}\mymid Y_{t}}(\cdot\mymid x_{t})\Big)\Big]\right\} \\
	& \lesssim\mathsf{KL}(p_{X_{T}}\parallel p_{Y_{T}})+\sum_{2\leq t\leq T}\frac{d^{4}\log^{6}T}{T^{2}}+\left(d\log^{3}T\right)\sum_{t=2}^{T}\mathop{\mathbb{E}}_{X_{t}\sim q_{t}}\left[\varepsilon_{\score,t}(X_{t})^{2}\right]\\
 & \asymp\frac{d^{4}\log^{6}T}{T}+d\varepsilon_{\score}^{2}\log^{3}T,
\end{align*}
where the last relation applies the bound on $\mathsf{KL}(p_{X_T} \parallel p_{Y_T})$ as in \eqref{eqn:KL-T-123}. 
This establishes Theorem~\ref{thm:main-SDE}. 


\section{Discussion}
\label{sec:discussion}

In this paper, we have developed a new suite of non-asymptotic theory for establishing the convergence and faithfulness of diffusion generative modeling, 
assuming access to reliable estimates of the (Stein) score functions. 
Our analysis framework seeks to track the dynamics of the reverse process directly using elementary tools, 
which eliminates the need to look at the continuous-time limit and invoke the SDE and ODE toolboxes. 
Only very minimal assumptions on the target data distribution are imposed. 
In addition to demonstrating the non-asymptotic iteration complexities of two mainstream discrete-time samplers --- a deterministic sampler based on the probability flow ODE, and a DDPM-type stochastic sampler --- we have discovered potential strategies to further accelerate the sampling processes, taking advantage of estimates of a small number of additional objects.  
The analysis framework laid out in the current paper might shed light on how to analyze other variants of score-based generative models as well.

Moving forward, there are plenty of questions that require in-depth theoretical understanding. 
For instance, the dimension dependency in our convergence results remains sub-optimal; can we further refine our theory in order to reveal tight dependency in this regard? 
Can we establish sharp convergence results in terms of the Wasserstein distance, which 
could sometimes be ``closer'' to how humans differentiate pictures and might potentially help relax Assumption~\ref{assumption:score-estimate-Jacobi} in the case of deterministic samplers?  
To what extent can we further accelerate the sampling process, without requiring much more information than the score functions? 
Ideally, one would hope to achieve accleration with the aid of the score functions only.    
It would also be of paramount interest to establish end-to-end performance guarantees that take into account both the score learning phase and the sampling phase.

\section*{Acknowledgements}

Y.~Wei is supported in part by the the NSF grants DMS-2147546/2015447, CAREER award DMS-2143215, CCF-2106778, and the Google Research Scholar Award. Y.~Chen is supported in part by the Alfred P.~Sloan Research Fellowship, the Google Research Scholar Award, the AFOSR grant FA9550-22-1-0198, 
the ONR grant N00014-22-1-2354,  and the NSF grants CCF-2221009 and CCF-1907661. Y.~Chi are supported in part by the grants ONR N00014-19-1-2404, NSF CCF-2106778, DMS-2134080 and CNS-2148212.

\appendix


\section{Proof for several preliminary facts}
\label{sec:proof-preliminary}

\subsection{Proof of~properties \eqref{eq:Jt-x-expression-ij-23}}
Elementary calculations reveal that: the $(i,j)$-th entry of $J_{t}(x)$ is given by
\begin{align}
\big[J_{t}(x)\big]_{i,j} & =\ind\{i=j\} + \frac{1}{1-\overline{\alpha}_{t}}\bigg\{\Big(\int_{x_{0}}p_{X_{0}\mymid X_{t}}(x_{0}\mymid x)\big(x_{i}-\sqrt{\overline{\alpha}_{t}}x_{0,i}\big)\mathrm{d}x_{0}\Big)\Big(\int_{x_{0}}p_{X_{0}\mymid X_{t}}(x_{0}\mymid x)\big(x_{j}-\sqrt{\overline{\alpha}_{t}}x_{0,j}\big)\mathrm{d}x_{0}\Big)\notag\nonumber \\
 & \qquad\qquad-\int_{x_{0}}p_{X_{0}\mymid X_{t}}(x_{0}\mymid x)\big(x_{i}-\sqrt{\overline{\alpha}_{t}}x_{0,i}\big)\big(x_{j}-\sqrt{\overline{\alpha}_{t}}x_{0,j}\big)\mathrm{d}x_{0}\bigg\}.
 \label{eqn:derivative-2}
\end{align}
This immediately establishes the matrix expression \eqref{eq:Jt-x-expression-ij-23}. 

\subsection{Proof of~properties \eqref{eqn:properties-alpha-proof} regarding the learning rates}

\label{sec:proof-properties-alpha}

\paragraph{Proof of property~\eqref{eqn:properties-alpha-proof-00}.} 
From the choice of $\beta_t$ in \eqref{eqn:alpha-t}, we have
\[
\alpha_{t}=1-\beta_{t}\geq1-\frac{c_{1}\log T}{T}\geq\frac{1}{2},\qquad t\geq2.
\]
The case with $t=1$ holds trivially since $\beta_1=1/T^{c_0}$ for some large enough constant $c_0>0$.

\paragraph{Proof of properties~\eqref{eqn:properties-alpha-proof-1} and \eqref{eqn:properties-alpha-proof-3}.}
We start by proving \eqref{eqn:properties-alpha-proof-1}. 
Let $\tau$ be an integer obeying 
\begin{equation}
	\beta_{1}\bigg(1+\frac{c_{1}\log T}{T}\bigg)^{\tau} \le 1 < \beta_{1}\bigg(1+\frac{c_{1}\log T}{T}\bigg)^{\tau+1},
	\label{eq:defn-tau-proof-alpha}
\end{equation}
and we divide into two cases based on $\tau$. 
\begin{itemize}
	\item Consider any $t$ satisfying $t\leq \tau$.  
		In this case, it suffices to prove that
		\begin{align}
			1-\overline{\alpha}_{t-1} \ge \frac{1}{3}\beta_{1}\bigg(1+\frac{c_{1}\log T}{T}\bigg)^{t}.
			\label{eq:induction-proof-alpha}
		\end{align}
		Clearly, if \eqref{eq:induction-proof-alpha} is valid, then any $t\leq \tau$ obeys
		\[
			\frac{1-\alpha_{t}}{1-\overline{\alpha}_{t-1}}
			= \frac{\beta_{t}}{1-\overline{\alpha}_{t-1}}
			\leq\frac{\frac{c_{1}\log T}{T}\beta_{1}\big(1+\frac{c_{1}\log T}{T}\big)^{t}}{\frac{1}{3}\beta_{1}\big(1+\frac{c_{1}\log T}{T}\big)^{t}}=\frac{3c_{1}\log T}{T}
		\]
as claimed. Towards proving \eqref{eq:induction-proof-alpha}, 
first note that the base case with $t=2$ holds true trivially since $1-\overline{\alpha}_{1} =  1-\alpha_1 = \beta_{1} \geq \beta_1 \big(1+\frac{c_{1}\log T}{T}\big)^{2} /3$. 
		Next, let $t_0>2$ be {\em the first time} that Condition \eqref{eq:induction-proof-alpha} fails to hold and suppose that $t_0\leq \tau$.  
		It then follows that
		\begin{align}
			1-\overline{\alpha}_{t_{0}-2}=1-\frac{\overline{\alpha}_{t_{0}-1}}{\alpha_{t_{0}-1}}\le1-\overline{\alpha}_{t_{0}-1}
			< \frac{1}{3}\beta_{1}\bigg(1+\frac{c_{1}\log T}{T}\bigg)^{t_{0}}\leq\frac{1}{2}\beta_{1}\bigg(1+\frac{c_{1}\log T}{T}\bigg)^{t_{0}-1} 
			< \frac{1}{2}, 			
		\end{align}
		where the last inequality result from \eqref{eq:defn-tau-proof-alpha} and the assumption $t_0\leq \tau$. 
		This taken together with the assumptions \eqref{eq:induction-proof-alpha} and $t_0\leq \tau$ implies that 
\[
\frac{(1-\alpha_{t_{0}-1})\overline{\alpha}_{t_{0}-1}}{1-\overline{\alpha}_{t_{0}-2}}\geq\frac{\frac{c_{1}\log T}{T}\beta_{1}\min\Big\{\big(1+\frac{c_{1}\log T}{T}\big)^{t_{0}-1},1\Big\}\cdot\big(1-\frac{1}{2}\big)}{\frac{1}{2}\beta_{1}\big(1+\frac{c_{1}\log T}{T}\big)^{t_{0}-1}}=\frac{\frac{c_{1}\log T}{T}\beta_{1}\big(1+\frac{c_{1}\log T}{T}\big)^{t_{0}-1}}{\beta_{1}\big(1+\frac{c_{1}\log T}{T}\big)^{t_{0}-1}}=\frac{c_{1}\log T}{T}.
\]
		As a result, we can further derive	
		\begin{align*}
1-\overline{\alpha}_{t_{0}-1} & =1-\alpha_{t_{0}-1}\overline{\alpha}_{t_{0}-2}=1-\overline{\alpha}_{t_{0}-2}+(1-\alpha_{t_{0}-1})\overline{\alpha}_{t_{0}-2}\\
 & =\bigg(1+\frac{(1-\alpha_{t_{0}-1})\overline{\alpha}_{t_{0}-2}}{1-\overline{\alpha}_{t_{0}-2}}\bigg)(1-\overline{\alpha}_{t_{0}-2})\\
 & \ge\bigg(1+\frac{c_{1}\log T}{T}\bigg)(1-\overline{\alpha}_{t_{0}-2})\ge\bigg(1+\frac{c_{1}\log T}{T}\bigg)\cdot\bigg\{\frac{1}{3}\beta_{1}\bigg(1+\frac{c_{1}\log T}{T}\bigg)^{t_{0}-1}\bigg\}\\
 & = \frac{1}{3}\beta_{1}\bigg(1+\frac{c_{1}\log T}{T}\bigg)^{t_{0}},			
		\end{align*}
		where the penultimate line holds since \eqref{eq:induction-proof-alpha} is first violated at $t=t_0$;  
		this, however, contradicts with the definition of $t_0$. 
		Consequently, one must have $t_0>\tau$, meaning that \eqref{eq:induction-proof-alpha} holds for all $t \le \tau$. 

	\item We then turn attention to those $t$ obeying $t>\tau$. 
		In this case, it suffices to make the observation that
		\begin{equation}
			1-\overline{\alpha}_{t-1} \ge 1 - \overline{\alpha}_{\tau-1} \geq  \frac{1}{3}\beta_{1}\bigg(1+\frac{c_{1}\log T}{T}\bigg)^{\tau} 
			= \frac{ \frac{1}{3}\beta_{1}  \big(1+\frac{c_{1}\log T}{T}\big)^{\tau+1}  }{ 1+ \frac{c_{1}\log T}{T} } 
			\ge \frac{1}{4},
		\end{equation}
		where the second and the third inequalities come from \eqref{eq:induction-proof-alpha}. 
		Therefore, one obtains
		\[
			\frac{1-\alpha_{t}}{1-\overline{\alpha}_{t-1}}\leq\frac{\frac{c_{1}\log T}{T}}{1/4}\leq\frac{4c_{1}\log T}{T}.
		\]

\end{itemize}
The above arguments taken together establish property \eqref{eqn:properties-alpha-proof-1}.

In addition, it comes immediately from \eqref{eqn:properties-alpha-proof-1} that
\[
1\leq\frac{1-\overline{\alpha}_{t}}{1-\overline{\alpha}_{t-1}}=1+\frac{\overline{\alpha}_{t-1}-\overline{\alpha}_{t}}{1-\overline{\alpha}_{t-1}}=1+\frac{\overline{\alpha}_{t-1}(1-\alpha_{t})}{1-\overline{\alpha}_{t-1}}\leq1+\frac{4c_{1}\log T}{T},
\]
thereby justifying property \eqref{eqn:properties-alpha-proof-3}.



%
%

\paragraph{Proof of property~\eqref{eqn:properties-alpha-proof-alphaT}.}
Turning attention to the second claim \eqref{eqn:properties-alpha-proof-alphaT}, 
we note that for any $t$ obeying $t \ge \frac{T}{2} \gtrsim \frac{T}{\log T}$, 
one has
\[
1-\alpha_{t}=\frac{c_{1}\log T}{T}\min\bigg\{\beta_{1}\Big(1+\frac{c_{1}\log T}{T}\Big)^{t},\,1\bigg\}=\frac{c_{1}\log T}{T}.
\]
This in turn allows one to deduce that
\begin{align*}
\overline{\alpha}_{T} \le \prod_{t: t \ge T/2} \alpha_t \le \Big(1-\frac{c_{1}\log T}{T}\Big)^{T/2} \le \frac{1}{T^{c_2}}
\end{align*}
for an arbitrarily large constant $c_2>0$.

\paragraph{Proof of property~\eqref{eq:expansion-ratio-1-alpha}.}
Finally, it is easily seen from the Taylor expansion that the learning rates $\{\alpha_t\}$ satisfy
\begin{align*}
\Big(\frac{1-\overline{\alpha}_{t}}{\alpha_{t}-\overline{\alpha}_{t}}\Big)^{d/2} & =\bigg(1+\frac{1-\alpha_{t}}{\alpha_{t}-\overline{\alpha}_{t}}\bigg)^{d/2}\nonumber \\
 & =1+\frac{d(1-\alpha_{t})}{2(\alpha_{t}-\overline{\alpha}_{t})}+\frac{d(d-2)(1-\alpha_{t})^{2}}{8(\alpha_{t}-\overline{\alpha}_{t})^{2}}+O\bigg(d^{3}\Big(\frac{1-\alpha_{t}}{\alpha_{t}-\overline{\alpha}_{t}}\Big)^{3}\bigg),
\end{align*}
provided that $\frac{d(1-\alpha_{t})}{\alpha_{t}-\overline{\alpha}_{t}}\lesssim 1$.

\subsection{Proof of Lemma~\ref{lem:x0}}
\label{sec:proof-lem:x0}


To establish this lemma, we first make the following claim, whose proof is deferred to the end of this subsection.  
\begin{claim}
\label{eq:claim-123}
Consider any $c_5 \geq 2$ and suppose that $c_6\geq 2c_R$. There exists some $x_0 \in \real^d$ such that
\begin{subequations}
\label{eqn:lemma-x0-claim}
\begin{align}
\label{eqn:lemma-x0-claim-1}
	\|\sqrt{\overline{\alpha}_{t}}x_0 - y\|_2 &\leq c_5 \sqrt{\theta(y) d(1-\overline{\alpha}_{t})\log T} 
	\qquad\quad \text{and} \\
	\mathbb{P}\big(\|X_0 - x_0\|_2 \leq \epsilon\big) &\ge \Big( \frac{\epsilon}{T^{2\theta(y)}} \Big)^d 
	\qquad\quad \text{with} \quad \epsilon = \frac{1}{T^{c_0/2}}
	\label{eqn:lemma-x0-claim-2}
\end{align}
\end{subequations}
hold simultaneously, where $c_0$ is defined in \eqref{eqn:alpha-t}.
\end{claim}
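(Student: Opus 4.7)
The plan is a covering-plus-pigeonhole argument fueled by the assumed lower bound $p_{X_t}(y) \geq \exp(-c_6 d\log T)$. Define the ``compatible set'' $\mathcal{S} \defn \{x_0 \in \real^d : \|\sqrt{\overline{\alpha}_t} x_0 - y\|_2 \leq c_5\sqrt{d(1-\overline{\alpha}_t)\log T}\}$. First I would convert the lower bound on $p_{X_t}(y)$ into a lower bound on $\mathbb{P}(X_0 \in \mathcal{S})$; then I would cover $B(0,T^{c_R}) \supset \mathrm{supp}(X_0)$ by small balls and invoke pigeonhole to produce a ball that both meets $\mathcal{S}$ and absorbs a sizable fraction of the $\mathcal{S}$-mass. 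Any point of the intersection will then serve as the desired $x_0$.

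For the mass estimate, write
\begin{align*}
	p_{X_t}(y) = \int p_{X_0}(x_0)\,G_t\big(y-\sqrt{\overline{\alpha}_t}\,x_0\big)\,\mathrm{d}x_0,
\end{align*}
where $G_t$ is the density of $\mathcal{N}(0,(1-\overline{\alpha}_t)I_d)$. Splitting by $\mathcal{S}$ vs.\ $\mathcal{S}^c$ and bounding $G_t$ uniformly by the peak $(2\pi(1-\overline{\alpha}_t))^{-d/2}$ on $\mathcal{S}$ and by the Gaussian tail $(2\pi(1-\overline{\alpha}_t))^{-d/2}\exp(-c_5^2 d\log T/2)$ on $\mathcal{S}^c$ yields
\begin{align*}
	\mathbb{P}(X_0 \in \mathcal{S}) \geq \big(2\pi(1-\overline{\alpha}_t)\big)^{d/2}\,p_{X_t}(y) - T^{-c_5^2 d/2}.
\end{align*}
Using $p_{X_t}(y) \geq T^{-c_6 d}$ together with $1-\overline{\alpha}_t \geq \beta_1 = T^{-c_0}$ from \eqref{eqn:alpha-t}, and taking $c_5 \geq 5$ large enough that $c_5^2/2$ exceeds $c_0/2+c_6$ by a margin, the first term dominates and yields a quantitative lower bound of the form $\mathbb{P}(X_0 \in \mathcal{S}) \gtrsim T^{-(c_6+c_0/2)d}$ (up to an exponential-in-$d$ but $T$-free factor).

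For the covering step, $B(0,T^{c_R})$ admits a cover by $N \lesssim (T^{c_R}/\epsilon)^d$ balls of radius $\epsilon/2$. Pigeonhole restricted to balls meeting $\mathcal{S}$ produces one ball $B_i$ with $\mathbb{P}(X_0 \in B_i) \geq \mathbb{P}(X_0 \in \mathcal{S})/N$; pick $x_0$ to be any point in $B_i \cap \mathcal{S}$. Condition~\eqref{eqn:lemma-x0-claim-1} then holds by construction, and since $\mathrm{diam}(B_i) \leq \epsilon$, the event $\{X_0 \in B_i\}$ is contained in $\{\|X_0 - x_0\|_2 \leq \epsilon\}$, which delivers~\eqref{eqn:lemma-x0-claim-2}.

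The main obstacle is the bookkeeping of $d$-dependent constants: both the mass lower bound and the covering count carry exponential-in-$d$ geometric factors (from Gaussian normalization and ball-covering estimates), and matching the pigeonhole bound $\mathbb{P}(\mathcal{S})/N$ to the target $(\epsilon/(2T^{c_6+c_R}))^d$ requires the $T$-exponents to cancel cleanly. The enabling alignment is that $\epsilon = T^{-c_0/2} = \sqrt{\beta_1}$ synchronizes the $T^{c_0/2}$ factors appearing in $N$, in $(2\pi(1-\overline{\alpha}_t))^{-d/2}$, and in the target bound, so that the remaining slack is a purely $d$-dependent multiplicative constant that can be absorbed into $c_6$, which the lemma only requires to be ``large enough''.
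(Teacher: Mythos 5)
Your overall plan — pass from the density lower bound to a mass lower bound on the compatible set $\mathcal{S}$, then cover and invoke pigeonhole — is the direct (contrapositive) version of the paper's proof by contradiction, and your mass estimate $\mathbb{P}(X_0\in\mathcal{S}) \geq (2\pi(1-\overline{\alpha}_t))^{d/2}\,p_{X_t}(y) - T^{-c_5^2 d/2}$ is correct. However, the closing bookkeeping has a genuine gap.

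The prefactor $(2\pi(1-\overline{\alpha}_t))^{d/2}$ is, in the worst case $1-\overline{\alpha}_t \asymp T^{-c_0}$, of order $T^{-c_0 d/2}$ — a $T$-dependent loss, not a $d$-dependent constant. Tracking exponents: $\mathbb{P}(\mathcal{S}) \gtrsim T^{-(c_0/2+c_6)d}$, your covering count $N \lesssim (T^{c_R}/\epsilon)^d = T^{(c_R+c_0/2)d}$, so pigeonhole gives $\mathbb{P}(X_0\in B_i) \gtrsim T^{-(c_0+c_6+c_R)d}$, whereas the target $(\epsilon/(2T^{c_6+c_R}))^d = (1/2)^d\,T^{-(c_0/2+c_6+c_R)d}$. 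You are short by a factor $T^{c_0 d/2}$, and this cannot be absorbed into $c_6$ because $c_6$ appears on both sides of the claim (in the hypothesis \eqref{eqn:choice-y} and in the conclusion \eqref{eqn:lemma-x0-claim-2}). So the final sentence of your proposal, asserting that the slack is a purely $d$-dependent constant, is not correct.

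The missing idea is that the cover should be of the compatible set $\mathcal{S}\cap B(0,T^{c_R})$ itself, not of all of $B(0,T^{c_R})$. The set $\mathcal{S}$ has radius $\asymp c_5\sqrt{d(1-\overline{\alpha}_t)\log T}/\sqrt{\overline{\alpha}_t}$, which for the problematic regime ($1-\overline{\alpha}_t$ small, $\overline{\alpha}_t\approx 1$) is $\asymp \sqrt{1-\overline{\alpha}_t}\sqrt{d\log T}$; with $\epsilon = T^{-c_0/2} \leq \sqrt{1-\overline{\alpha}_t}$ this gives $N \lesssim \big(\sqrt{1-\overline{\alpha}_t}\sqrt{d\log T}/\epsilon\big)^d$, and the $(1-\overline{\alpha}_t)^{d/2}$ factors now cancel:
\begin{align*}
\frac{\mathbb{P}(\mathcal{S})}{N} \gtrsim \frac{(1-\overline{\alpha}_t)^{d/2}\,T^{-c_6 d}}{(1-\overline{\alpha}_t)^{d/2}\,(d\log T)^{d/2}\,\epsilon^{-d}} = (d\log T)^{-d/2}\,\epsilon^d\,T^{-c_6 d},
\end{align*}
which beats the target $(1/2)^d\,\epsilon^d\,T^{-(c_6+c_R)d}$ as soon as $T^{c_R}\gtrsim\sqrt{d\log T}$. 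When instead the radius of $\mathcal{S}$ exceeds $T^{c_R}$, one falls back to $N\lesssim(T^{c_R}/\epsilon)^d$ but then $1-\overline{\alpha}_t\asymp 1$ so the prefactor $(2\pi(1-\overline{\alpha}_t))^{d/2}$ is $O(1)^d$ and there is no loss. This is also what the paper does — it builds the $\epsilon$-net of the near set intersected with $B(0,R)$ — although the paper's accounting is looser and your direct formulation of the mass estimate (which makes the $(1-\overline{\alpha}_t)^{d/2}$ factor explicit rather than suppressing it via $p_{X_t\mid X_0}(y\mid x)\leq 1$) is in fact the more transparent starting point, provided you pair it with the tighter covering count.
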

With the above claim in place, we are ready to prove Lemma~\ref{lem:x0}. 
For notational simplicity, we let $X$ represent a random vector whose distribution $p_X(\cdot)$ obeys 
\begin{align}
	p_X(x) = p_{X_0\mid X_t}( x \mymid  y). 
	\label{eq:pX-properties}
\end{align}

Consider the point $x_0$ in Claim~\ref{eq:claim-123}, and let us look at a set:
\begin{align*}
\mathcal{E} \coloneqq \Big\{x : \sqrt{\overline{\alpha}_{t}}\|x - x_0\|_2 > 4c_5 \sqrt{\theta(y) d(1-\overline{\alpha}_{t})\log T}\Big\},
\end{align*}
where $c_5\geq 2$ (see Claim~\ref{eq:claim-123}). 
Combining this with property \eqref{eqn:lemma-x0-claim-1} about $x_0$ results in
\begin{align}
	\mathbb{P}\Big(\ltwo{\sqrt{\overline{\alpha}_{t}}X-y}>5 c_5\sqrt{\theta(y) d(1-\overline{\alpha}_{t})\log T}\Big) & \leq\mathbb{P}(X\in\mathcal{E}). \label{eq:UB-P-set-E}
\end{align}
Consequently, everything boils down to bounding $\mathbb{P}(X\in\mathcal{E})$. 
Towards this, we first invoke the Bayes rule $p_{X_0 \mymid X_t}(x \mymid y) \propto p_{X_0}(x)p_{X_t \mymid X_0}(y \mymid x) $ to derive
%
%
\begin{align}
\mathbb{P}(X_{0}\in\mathcal{E}\mymid X_{t}=y) & =\frac{\int_{x\in\mathcal{E}}p_{X_{0}}(x)p_{X_{t}\mymid X_{0}}(y\mymid x)\diff x}{\int_{x}p_{X_{0}}(x)p_{X_{t}\mymid X_{0}}(y\mymid x)\diff x} \notag\\
 & \le\frac{\int_{x\in\mathcal{E}}p_{X_{0}}(x)p_{X_{t}\mymid X_{0}}(y\mymid x)\diff x}{\int_{x:\|x-x_{0}\|_{2}\leq\epsilon}p_{X_{0}}(x)p_{X_{t}\mymid X_{0}}(y\mymid x)\diff x} \notag\\
 & \le\frac{\sup_{x\in\mathcal{E}}p_{X_{t}\mymid X_{0}}(y\mymid x)}{\inf_{x:\|x-x_{0}\|_{2}\leq\epsilon}p_{X_{t}\mymid X_{0}}(y\mymid x)}\cdot\frac{\mathbb{P}(X_{0}\in\mathcal{E})}{\mathbb{P}(\|X_{0}-x_{0}\|_{2}\leq\epsilon)}. 
	\label{eq:UB1-P-X0-Xt}
\end{align}
To further bound this quantity, 
note that: in view of the definition of $\mathcal{E}$ and 
expression~\eqref{eqn:lemma-x0-claim-1}, one has
\begin{align*}
	\sup_{x\in\mathcal{E}}p_{X_{t}\mymid X_{0}}(y\mymid x) & =\sup_{x:\|\sqrt{\overline{\alpha}_{t}}x-\sqrt{\overline{\alpha}_{t}}x_{0}\|_{2}>4c_5\sqrt{\theta(y) d(1-\overline{\alpha}_{t})\log T}}p_{X_{t}\mymid X_{0}}(y\mymid x)\\
 & \leq\sup_{x:\|\sqrt{\overline{\alpha}_{t}}x-y\|_{2}>3c_5\sqrt{\theta(y) d(1-\overline{\alpha}_{t})\log T}}p_{X_{t}\mymid X_{0}}(y\mymid x)\\
 & \leq\frac{1}{\big(2\pi(1-\overline{\alpha}_{t})\big)^{d/2}}\exp\bigg(-\frac{9c_5^2\theta(y) d\log T}{2}\bigg)
\end{align*}
and
\begin{align*}
\inf_{x:\|x-x_{0}\|_{2}\leq\epsilon}p_{X_{t}\mymid X_{0}}(y\mymid x) & \geq\frac{1}{\big(2\pi(1-\overline{\alpha}_{t})\big)^{d/2}}\inf_{x:\|x-x_{0}\|_{2}\leq\epsilon}\exp\bigg(-\frac{\ltwo{y-\sqrt{\overline{\alpha}_{t}}x}^{2}}{2(1-\overline{\alpha}_{t})}\bigg)\\
 & \geq\frac{1}{\big(2\pi(1-\overline{\alpha}_{t})\big)^{d/2}}\inf_{x:\|x-x_{0}\|_{2}\leq\epsilon}\exp\bigg(-\frac{\ltwo{y-\sqrt{\overline{\alpha}_{t}}x_{0}}^{2}}{1-\overline{\alpha}_{t}}-\frac{\ltwo{\sqrt{\overline{\alpha}_{t}}x-\sqrt{\overline{\alpha}_{t}}x_{0}}^{2}}{1-\overline{\alpha}_{t}}\bigg)\\
 & \geq\frac{1}{\big(2\pi(1-\overline{\alpha}_{t})\big)^{d/2}}\exp\bigg(-\frac{\ltwo{y-\sqrt{\overline{\alpha}_{t}}x_{0}}^{2}}{1-\overline{\alpha}_{t}}-\frac{\epsilon^{2}}{1-\overline{\alpha}_{t}}\bigg)\\
 & \geq\frac{1}{\big(2\pi(1-\overline{\alpha}_{t})\big)^{d/2}}\exp\bigg(-c_5^2\theta(y) d\log T-\frac{1}{T^{c_{0}}}\frac{1}{1-\overline{\alpha}_{t}}\bigg)\\
 & \geq\frac{1}{\big(2\pi(1-\overline{\alpha}_{t})\big)^{d/2}}\exp\big(-2c_5^2\theta(y) d\log T\big),
\end{align*}
where the second line is due to the elementary inequality $\|a+b\|_2^2 \leq 2\|a\|_2^2+2\|b\|_2^2$,  
the penultimate line relies on \eqref{eqn:lemma-x0-claim}, 
and the last line holds true since $1-\overline{\alpha}_{t}\geq1-\alpha_{1}=1/T^{c_{0}}$ (see \eqref{eqn:alpha-t}).  
Substitution of the above two displays into \eqref{eq:UB1-P-X0-Xt}, we arrive at
\begin{align}
	\mathbb{P}(X_{0}\in\mathcal{E}\mymid X_{t}=y) & \le \exp\big(-2.5c_5^2\theta(y) d\log T \big)\cdot\frac{1}{\mathbb{P}(\|X_{0}-x_{0}\|_{2}\leq\epsilon)} \notag\\
 & \le\exp\big(-2.5c_5^2\theta d\log T \big)\cdot\left(T^{2\theta(y) +c_{0}/2}\right)^{d} \notag\\
 & \le\exp\big(-(2.5c_5^2\theta(y) -2\theta(y)-c_{0}/2)d\log T\big), 
\end{align}
where the second inequality invokes \eqref{eqn:lemma-x0-claim-2}. 
Substituting this into \eqref{eq:UB-P-set-E} and recalling the distribution \eqref{eq:pX-properties} of $X$, 
we arrive at
\begin{align*}
	\mathbb{P}\Big(\ltwo{\sqrt{\overline{\alpha}_{t}}X-y}>5 c_5\sqrt{\theta(y)d(1-\overline{\alpha}_{t})\log T}\Big) 
	& \leq \exp\big(-(2.5c_5^2\theta(y)-2\theta(y)-c_{0}/2)d\log T\big)\\
	& \leq \exp\big(- c_5^2\theta(y)d\log T\big),
\end{align*}
with the proviso that $c_5\geq 2$ and $c_6\geq c_0$ (so that $\theta(y)\geq c_6\geq c_0$). 
This concludes the proof of the advertised result \eqref{eq:P-xt-X0-124} when $c_5 \ge 2$ and $c_6\geq 2c_R+c_0$, 
as long as Claim~\ref{eq:claim-123} can be justified.

With the above result in place, it then follows that
\begin{align*}
&\mathbb{E}\left[\big\| x_{t}-\sqrt{\overline{\alpha}_{t}}X_{0}\big\|_{2}\,\big|\,X_{t}=x_{t}\right] \\
	&\quad \leq5c_5\sqrt{\theta(y) d(1-\overline{\alpha}_{t})\log T}+\mathbb{E}\left[\big\| x_{t}-\sqrt{\overline{\alpha}_{t}}X_{0}\big\|_{2}\ind\big\{\|x_{t}-\sqrt{\overline{\alpha}_{t}}X_{0}\|_{2}\geq5c_5\sqrt{\theta(y) d(1-\overline{\alpha}_{t})\log T}\big\}\,\Big|\,X_{t}=x_{t}\right]\\
 &\quad \leq5c_5\sqrt{\theta(y) d(1-\overline{\alpha}_{t})\log T}+\int_{5c_5\sqrt{\theta(y) d(1-\overline{\alpha}_{t})\log T}}^{\infty}\mathbb{P}\big(\|x_{t}-\sqrt{\overline{\alpha}_{t}}x_{0}\|_{2}\geq\tau\mymid X_{t}=x_{t}\big)\mathrm{d}\tau\\
 &\quad \leq5c_5\sqrt{\theta(y) d(1-\overline{\alpha}_{t})\log T}+\int_{5c_5\sqrt{\theta(y) d(1-\overline{\alpha}_{t})\log T}}^{\infty}\exp\bigg(-\frac{\tau^{2}}{25(1-\overline{\alpha}_{t})}\bigg)\mathrm{d}\tau\\
 &\quad \leq5c_5\sqrt{\theta(y) d(1-\overline{\alpha}_{t})\log T}+\exp\big(-c_5^{2} \theta(y) d\log T\big)\\
 &\quad \leq6c_5\sqrt{\theta(y) d(1-\overline{\alpha}_{t})\log T},
\end{align*}
as claimed in \eqref{eq:E-xt-X0} by taking $c_5 = 2$. 
The proofs for \eqref{eq:E2-xt-X0}, \eqref{eq:E3-xt-X0} and \eqref{eq:E4-xt-X0} follow from similar aguments and are hence omitted for the sake of brevity.


\paragraph{Proof of Claim~\ref{eq:claim-123}.}
We prove this claim by contradiction. 
Specifically, suppose instead that: 
for every $x$ obeying $\|\sqrt{\overline{\alpha}_{t}}x - y\|_2 \leq c_5 \sqrt{\theta(y) d(1-\overline{\alpha}_{t})\log T },$ we have
\begin{equation}
	\mathbb{P}(\|X_0 - x\|_2 \le \epsilon) \leq \bigg( \frac{\epsilon}{2T^{\theta(y)}R} \bigg)^d
	\qquad \text{with } 
	\epsilon = \frac{1}{T^{c_0/2}}.
	\label{eq:contradition-x-y}
\end{equation}
Clearly, the choice of $\epsilon$ ensures that $\epsilon< \frac{1}{2}\sqrt{d(1-\overline{\alpha}_t)\log T}$.  
In the following, we would like to show that this assumption leads to contradiction.

First of all, let us look at $p_{X_{t}}$, which obeys
\begin{align}
\notag p_{X_{t}}(y) & =\int_{x}p_{X_{0}}(x)p_{X_{t}\mymid X_{0}}(y\mymid x)\mathrm{d}x\\
\notag & =\int_{x:\,\|\sqrt{\overline{\alpha}_{t}}x-y\|_{2}\geq c_5\sqrt{\theta(y) d(1-\overline{\alpha}_{t})\log T}}p_{X_{0}}(x)p_{X_{t}\mymid X_{0}}(y\mymid x)\mathrm{d}x\\
 & \qquad+\int_{x:\,\|\sqrt{\overline{\alpha}_{t}}x-y\|_{2}< c_5\sqrt{\theta(y) d(1-\overline{\alpha}_{t})\log T}}p_{X_{0}}(x)p_{X_{t}\mymid X_{0}}(y\mymid x)\mathrm{d}x.
%
	\label{eqn:contradiction-12}
\end{align} 
To further control \eqref{eqn:contradiction-12}, 
we make two observations:
\begin{itemize}
	\item[1)] The first term on the right-hand side of \eqref{eqn:contradiction-12} can be bounded by
\begin{align}
	& \notag\int_{x:\,\|\sqrt{\overline{\alpha}_{t}}x-y\|_{2}\geq c_5\sqrt{\theta(y)d(1-\overline{\alpha}_{t})\log T}}p_{X_{0}}(x)p_{X_{t}\mymid X_{0}}(y\mymid x)\mathrm{d}x\\
 & \qquad\le \sup_{z:\,\|z\|_{2}\geq c_5\sqrt{\theta(y) d(1-\overline{\alpha}_{t})\log T}}\frac{1}{\big(2\pi(1-\overline{\alpha}_{t})\big)^{d/2}}\exp\bigg(-\frac{\ltwo{z}^{2}}{2(1-\overline{\alpha}_{t})}\bigg) \notag\\
	& \qquad < \frac{1}{2} \exp\big(-\theta(y) d\log T\big),
\end{align}
provided that $c_5 \ge 2$ and $c_6 > 0$ is large enough (note that $\theta(y) \ge c_6$). 
Here, we have used $X_t\overset{\mathrm{(i)}}{=}\sqrt{\overline{\alpha}_t}X_0+\sqrt{1-\overline{\alpha}_t}W$ with $W\sim \mathcal{N}(0,I_d)$
as well as standard properties about Gaussian distributions.

	\item[2)] Regarding the second term on the right-hand side of \eqref{eqn:contradiction-12}, 
		let us construct an epsilon-net 
		$\mathcal{N}_{\epsilon} = \{z_i\}$ for the following set
		$$
			\big\{x : \|\sqrt{\overline{\alpha}_{t}}x - y\|_2 \leq c_5 \sqrt{\theta(y) d(1-\overline{\alpha}_{t})\log T} \text{ and } \ltwo{x} \leq R \big\},
		$$
		so that for each $x$ in this set, one can find a vector $z_i\in  \mathcal{N}_{\epsilon}$ such that $\|x - z_i\|_2\leq \epsilon$. 
		Clearly, we can choose $\mathcal{N}_{\epsilon}$ so that its cardinality obeys $|\mathcal{N}_{\epsilon}|\leq (2R/\epsilon)^d$.  
		Define $\mathcal{B}_i \defn \{x \mid \ltwo{x - z_i}\leq \epsilon\}$ for each $z_i\in \mathcal{N}_{\epsilon}$.  
		Armed with these sets, we can derive
\begin{align*}
	\int_{x:\|\sqrt{\overline{\alpha}_{t}}x-y\|_{2}<c_5\sqrt{\theta(y)d(1-\overline{\alpha}_{t})\log T}}p_{X_{0}}(x)p_{X_{t}\mymid X_{0}}(y\mymid x) \mathrm{d}x
	&\le \big(2\pi(1-\overline{\alpha}_{t})\big)^{-d/2}\sum_{i=1}^{|\mathcal{N}_{\epsilon}|}\mathbb{P}(X_{0}\in\mathcal{B}_{i}) \\
	& \le \big(2\pi(1-\overline{\alpha}_{t})\big)^{-d/2}\bigg(\frac{\epsilon}{2T^{2\theta(y)}R}\bigg)^{d}\bigg(\frac{ 2R}{\epsilon}\bigg)^{d} \\
	& < \frac{1}{2} \exp\big(-\theta(y) d\log T \big), 
\end{align*}
where the penultimate step comes from the assumption \eqref{eq:contradition-x-y}. 

\end{itemize}
The above results taken collectively lead to 
\begin{align}
	 p_{X_{t}}(y)  & < \exp\big(-\theta(y) d\log T\big),\label{eqn:contradiction}
\end{align} 
thus contradicting the definition \eqref{eqn:choice-y-prelim} of $\theta(y)$.

Consequently, we have proven the existence of $x$ obeying 
$\|\sqrt{\overline{\alpha}_{t}}x - y\|_2 \leq c_5 \sqrt{\theta(y) d(1-\overline{\alpha}_{t})\log T }$ and 
\begin{equation*}
	\mathbb{P}(\|X_0 - x\|_2 \le \epsilon) > \bigg( \frac{\epsilon}{2T^{\theta(y)}R} \bigg)^d 
	\geq \bigg( \frac{\epsilon}{T^{2\theta(y)}} \bigg)^d,
\end{equation*}
provided that $\theta(y)\geq c_6 \geq 2c_R$. 
This completes the proof of Claim~\ref{eq:claim-123}. 
%


\subsection{Proof of Lemma~\ref{lem:river}}
\label{sec:proof-lem:river}
For notational convenience, let us denote 
$\widehat{x}_t= x_t / \sqrt{\alpha_t}$ throughout the proof. 
As a key step of the proof, we note that for any $x\in \mathbb{R}^d$,  
\begin{align}
\notag p_{X_{t-1}}(x) & =\int_{x_{0}}p_{X_{0}}(x_{0})p_{X_{t-1}\mymid X_{0}}(x\mymid x_{0})\mathrm{d}x_{0}\\
	\notag & = \int_{x_{0}} p_{X_{0}}(x_{0})p_{X_{t}\mymid X_{0}}(x_{t}\mymid x_{0})\cdot\frac{p_{X_{t-1}\mymid X_{0}}(x\mymid x_{0})}{p_{X_{t}\mymid X_{0}}(x_{t}\mymid x_{0})}\mathrm{d}x_{0}\\
 & = p_{X_{t}}(x_{t})\int_{x_{0}} p_{X_{0}\mymid X_{t}}(x_{0}\mymid x_{t})\cdot\frac{p_{X_{t-1}\mymid X_{0}}(x\mymid x_{0})}{p_{X_{t}\mymid X_{0}}(x_{t}\mymid x_{0})}\mathrm{d}x_{0}, 
	\label{eqn:rachmaninoff-mid}
\end{align}
thus establishing a link between $p_{X_{t-1}}(x)$ and $p_{X_{t}}(x_{t})$. 
Consequently, in order to control $p_{X_{t-1}}(x)$, 
it is helpful to first look at the density ratio $\frac{p_{X_{t-1}\mymid X_{0}}(x\mymid x_{0})}{p_{X_{t}\mymid X_{0}}(x_{t}\mymid x_{0})}$, 
which we accomplish in the sequel.

Recall that $X_t \overset{\mathrm{d}}{=} \sqrt{\overline{\alpha}}\, X_0 + \sqrt{1-\overline{\alpha}}\, W $ with $W\sim \mathcal{N}(0,I_d)$. 
In what follows, let us consider any $x_t, x_0\in \mathbb{R}^d$ and any $x$ obeying 
\begin{equation}
	\|\widehat{x}_t - x\|_2 \leq c_3 \sqrt{d(1-\alpha_t)\log T}
	\label{eq:defn-x-constraint-river}
\end{equation}
for some constant $c_3>0$.
The density ratio of interest satisfies
\begin{align}
 & \frac{p_{X_{t-1}\mymid X_{0}}(x\mymid x_{0})}{p_{X_{t}\mymid X_{0}}(x_{t}\mymid x_{0})} =\bigg(\frac{1-\overline{\alpha}_{t}}{1-\overline{\alpha}_{t-1}}\bigg)^{d/2}\exp\Bigg(\frac{\big\| x_{t}-\sqrt{\overline{\alpha}_{t}}x_{0}\big\|_{2}^{2}}{2(1-\overline{\alpha}_{t})}-\frac{\big\| x-\sqrt{\overline{\alpha}_{t-1}}x_{0}\big\|_{2}^{2}}{2(1-\overline{\alpha}_{t-1})}\Bigg) \notag\\
 & \qquad \leq\exp\Bigg(\frac{d(1-\alpha_{t})+\big\|\widehat{x}_{t}-x\big\|_{2}^{2}+2\big\|\widehat{x}_{t}-x\big\|_{2}\big\| x_{t}-\sqrt{\overline{\alpha}_{t}}x_{0}\big\|_{2}}{2(1-\overline{\alpha}_{t-1})}+\frac{(1-\alpha_{t})\big\| x_{t}-\sqrt{\overline{\alpha}_{t}}x_{0}\big\|_{2}^{2}}{(1-\overline{\alpha}_{t-1})^{2}}\Bigg),
	\label{eq:density-ratio-eqn-river}
\end{align}
where the last inequality follows from the two relations below: 
\[
\log\frac{1-\overline{\alpha}_{t}}{1-\overline{\alpha}_{t-1}}=\log\left(1+\frac{\overline{\alpha}_{t-1}(1-\alpha_{t})}{1-\overline{\alpha}_{t-1}}\right)\leq\frac{1-\alpha_{t}}{1-\overline{\alpha}_{t-1}}
\]
\begin{align*}
\text{and}\qquad & \left|\frac{\big\| x_{t}-\sqrt{\overline{\alpha}_{t}}x_{0}\big\|_{2}^{2}}{2(1-\overline{\alpha}_{t})}-\frac{\big\| x-\sqrt{\overline{\alpha}_{t-1}}x_{0}\big\|_{2}^{2}}{2(1-\overline{\alpha}_{t-1})}\right|\\
 & \qquad=\left|\frac{\big\| x_{t}-\sqrt{\overline{\alpha}_{t}}x_{0}\big\|_{2}^{2}}{2(1-\overline{\alpha}_{t})}-\frac{\big\|\widehat{x}_{t}-x\big\|_{2}^{2}+\big\|\widehat{x}_{t}-\sqrt{\overline{\alpha}_{t-1}}x_{0}\big\|_{2}^{2}-2\big\langle\widehat{x}_{t}-x,\widehat{x}_{t}-\sqrt{\overline{\alpha}_{t-1}}x_{0}\big\rangle}{2(1-\overline{\alpha}_{t-1})}\right|\\
 & \qquad\leq\frac{\big\|\widehat{x}_{t}-x\big\|_{2}^{2}}{2(1-\overline{\alpha}_{t-1})}+\frac{\big\|\widehat{x}_{t}-x\big\|_{2}\big\|\widehat{x}_{t}-\sqrt{\overline{\alpha}_{t-1}}x_{0}\big\|_{2}}{1-\overline{\alpha}_{t-1}}+\frac{(1-\alpha_{t})\big\| x_{t}-\sqrt{\overline{\alpha}_{t}}x_{0}\big\|_{2}^{2}}{2\alpha_{t}(1-\overline{\alpha}_{t})(1-\overline{\alpha}_{t-1})}\\
 & \qquad\leq\frac{\big\|\widehat{x}_{t}-x\big\|_{2}^{2}+2\big\|\widehat{x}_{t}-x\big\|_{2}\big\|\widehat{x}_{t}-\sqrt{\overline{\alpha}_{t-1}}x_{0}\big\|_{2}}{2(1-\overline{\alpha}_{t-1})}+\frac{(1-\alpha_{t})\big\| x_{t}-\sqrt{\overline{\alpha}_{t}}x_{0}\big\|_{2}^{2}}{(1-\overline{\alpha}_{t-1})^{2}}. 
\end{align*}
Next, let us define the following set given $x_t$: 
\begin{align}
\widetilde{\mathcal{E}} \coloneqq \Big\{x_0 : \big\|x_t - \sqrt{\overline{\alpha}_t}x_0 \big\|_2 \leq c_4 \sqrt{d(1 - \overline{\alpha}_t)\log T} \Big\}
	\label{eq:defn-E-river}
\end{align}
for some large enough numerical constant $c_4>0$, and we shall look at $\widetilde{\mathcal{E}}$ and $\widetilde{\mathcal{E}}^{\mathrm{c}}$ separately. 
Towards this, we make the following observations:

\begin{itemize}
	\item For any $x_0 \in \widetilde{\mathcal{E}}$, one can utilize \eqref{eq:defn-E-river} and \eqref{eq:defn-x-constraint-river} to deduce that 
\begin{align}
 & \frac{d(1-\alpha_{t})+\big\|\widehat{x}_{t}-x\big\|_{2}^{2}+2\big\|\widehat{x}_{t}-x\big\|_{2}\big\| x_{t}-\sqrt{\overline{\alpha}_{t}}x_{0}\big\|_{2}}{2(1-\overline{\alpha}_{t-1})}+\frac{(1-\alpha_{t})\big\| x_{t}-\sqrt{\overline{\alpha}_{t}}x_{0}\big\|_{2}^{2}}{(1-\overline{\alpha}_{t-1})^{2}} \notag\\
 & \qquad\lesssim\frac{d(1-\alpha_{t})}{1-\overline{\alpha}_{t-1}}+\frac{d(1-\alpha_{t})\log T+d\sqrt{(1-\overline{\alpha}_{t})(1-\alpha_{t})}\log T}{1-\overline{\alpha}_{t-1}}+\frac{(1-\alpha_{t})^{2}d(1-\overline{\alpha}_{t})\log T}{(1-\overline{\alpha}_{t-1})^{2}} \notag\\
 & \qquad\lesssim(d\log T)\left\{ \frac{1-\alpha_{t}}{1-\overline{\alpha}_{t-1}}+\sqrt{\frac{1-\alpha_{t}}{1-\overline{\alpha}_{t-1}}}\sqrt{\frac{1-\overline{\alpha}_{t}}{1-\overline{\alpha}_{t-1}}}+\left(\frac{1-\alpha_{t}}{1-\overline{\alpha}_{t-1}}\right)^{2}\right\} \notag\\
 & \qquad\lesssim d\sqrt{\frac{1-\alpha_{t}}{1-\overline{\alpha}_{t-1}}}\log T,
	\label{eq:ratio-d-sum-x-123}
\end{align}
where the last inequality makes use of the facts $\frac{1-\alpha_{t}}{1-\overline{\alpha}_{t-1}}\leq1$ (cf.~\eqref{eqn:properties-alpha-proof-1}) and
\begin{equation}
\frac{1-\overline{\alpha}_{t}}{1-\overline{\alpha}_{t-1}}=\frac{1-\alpha_{t}}{1-\overline{\alpha}_{t-1}}+\frac{\alpha_{t}-\overline{\alpha}_{t}}{1-\overline{\alpha}_{t-1}}
	= \frac{1-\alpha_{t}}{1-\overline{\alpha}_{t-1}}+\alpha_{t}\leq2.
	\label{eq:ratio-alpha-t-t1}
\end{equation}
Moreover, the properties \eqref{eqn:properties-alpha-proof} of the stepsizes tell us that
\[
d\sqrt{\frac{1-\alpha_{t}}{1-\overline{\alpha}_{t-1}}}\log T\lesssim d\sqrt{\frac{\log^3 T}{T}}\leq c_{10}
\]
for some small enough constant $c_{10}>0$, as long as $T\geq c_{11}d^{2}\log^3 T$ for some sufficiently large constant $c_{11}>0$. 
Taking this together with \eqref{eq:density-ratio-eqn-river} and \eqref{eq:ratio-d-sum-x-123} reveals that 
\begin{align}
\label{eqn:scriabin}
\frac{p_{X_{t-1} \mymid X_0}(x \mymid x_0)}{p_{X_{t} \mymid X_0}(x_t \mymid x_0)} &= 1 + O\bigg(d\sqrt{\frac{1-\alpha_t}{1-\overline{\alpha}_{t-1}}}\log T\bigg),
\end{align}
with the proviso that \eqref{eq:defn-x-constraint-river} holds and $x_0 \in \widetilde{\mathcal{E}}$.

	\item Instead, if $x_0 \notin \widetilde{\mathcal{E}}$, then one can obtain 
\begin{align*}
 & \frac{d(1-\alpha_{t})+\big\|\widehat{x}_{t}-x\big\|_{2}^{2}+2\big\|\widehat{x}_{t}-x\big\|_{2}\big\| x_{t}-\sqrt{\overline{\alpha}_{t}}x_{0}\big\|_{2}}{2(1-\overline{\alpha}_{t-1})}+\frac{(1-\alpha_{t})\big\| x_{t}-\sqrt{\overline{\alpha}_{t}}x_{0}\big\|_{2}^{2}}{(1-\overline{\alpha}_{t-1})^{2}}\\
 & \quad\overset{(\mathrm{i})}{\leq}\frac{d(1-\alpha_{t})+\big(1+\frac{1-\overline{\alpha}_{t-1}}{1-\alpha_{t}}\big)\big\|\widehat{x}_{t}-x\big\|_{2}^{2}+\frac{1-\alpha_{t}}{1-\overline{\alpha}_{t-1}}\big\| x_{t}-\sqrt{\overline{\alpha}_{t}}x_{0}\big\|_{2}^{2}}{2(1-\overline{\alpha}_{t-1})}+\frac{(1-\alpha_{t})\big\| x_{t}-\sqrt{\overline{\alpha}_{t}}x_{0}\big\|_{2}^{2}}{(1-\overline{\alpha}_{t-1})^{2}}\\
 & \quad\overset{(\mathrm{ii})}{\leq}\frac{d(1-\alpha_{t})+c_{3}\big(1+\frac{1-\overline{\alpha}_{t-1}}{1-\alpha_{t}}\big)d(1-\alpha_{t})\log T}{2(1-\overline{\alpha}_{t-1})}+\frac{2(1-\alpha_{t})\big\| x_{t}-\sqrt{\overline{\alpha}_{t}}x_{0}\big\|_{2}^{2}}{(1-\overline{\alpha}_{t-1})^{2}}\\
 & \quad\overset{(\mathrm{iii})}{\leq}\frac{c_{3}d\big(1-\alpha_{t}+1-\overline{\alpha}_{t-1}\big)\log T}{1-\overline{\alpha}_{t-1}}\cdot\frac{\big\| x_{t}-\sqrt{\overline{\alpha}_{t}}x_{0}\big\|_{2}^{2}}{c_{4}d(1-\overline{\alpha}_{t-1})\log T}+\frac{2(1-\alpha_{t})\big\| x_{t}-\sqrt{\overline{\alpha}_{t}}x_{0}\big\|_{2}^{2}}{(1-\overline{\alpha}_{t-1})^{2}}\\
 & \quad\leq\frac{2c_{3}}{c_{4}}\cdot\frac{\big\| x_{t}-\sqrt{\overline{\alpha}_{t}}x_{0}\big\|_{2}^{2}}{1-\overline{\alpha}_{t-1}}+\frac{8c_{1}\big\| x_{t}-\sqrt{\overline{\alpha}_{t}}x_{0}\big\|_{2}^{2}\log T}{T(1-\overline{\alpha}_{t-1})}\\
 & \quad\overset{(\mathrm{iv})}{\leq}\left(\frac{2c_{3}}{c_{4}}+\frac{8c_{1}\log T}{T}\right)\frac{2\big\| x_{t}-\sqrt{\overline{\alpha}_{t}}x_{0}\big\|_{2}^{2}}{1-\overline{\alpha}_{t}}\leq\frac{8c_{3}}{c_{4}}\frac{\big\| x_{t}-\sqrt{\overline{\alpha}_{t}}x_{0}\big\|_{2}^{2}}{1-\overline{\alpha}_{t}}, 
\end{align*}
where (i) comes from the Cauchy-Schwarz inequality, (ii) is valid
due to the assumption on $\|\widehat{x}_{t}-x\|_{2}$, (iii) follows
from the definition of $\widetilde{\mathcal{E}}$, and (iv) is a consequence
of \eqref{eq:ratio-alpha-t-t1}. 		
Substitution into \eqref{eq:density-ratio-eqn-river} leads to
\begin{align}
\frac{p_{X_{t-1}\mymid X_{0}}(x\mymid x_{0})}{p_{X_{t}\mymid X_{0}}(x_{t}\mymid x_{0})}\leq \exp\Bigg(\frac{8c_{3}}{c_{4}}\frac{\big\| x_{t}-\sqrt{\overline{\alpha}_{t}}x_{0}\big\|_{2}^{2}}{1-\overline{\alpha}_{t}}\Bigg), 
	\label{eq:ratio-swap-large}
\end{align}
provided that \eqref{eq:defn-x-constraint-river} holds and $x_0 \notin \widetilde{\mathcal{E}}$. 
\end{itemize}
In light of the above calculations, we can invoke \eqref{eqn:rachmaninoff} to demonstrate that: for any $x$ obeying \eqref{eq:defn-x-constraint-river}, 
\begin{align}
\notag p_{X_{t-1}}(x) 
%
%
	& =p_{X_{t}}(x_{t})\left(\int_{x_{0}\in\widetilde{\mathcal{E}}}+\int_{x_{0}\notin\widetilde{\mathcal{E}}}\right)p_{X_{0}\mymid X_{t}}(x_{0}\mymid x_{t})\cdot\frac{p_{X_{t-1}\mymid X_{0}}(x\mymid x_{0})}{p_{X_{t}\mymid X_{0}}(x_{t}\mymid x_{0})}\mathrm{d}x_{0}\\
\notag & =p_{X_{t}}(x_{t})\int_{x_{0}\in\widetilde{\mathcal{E}}}\Bigg(1+O\bigg(d\sqrt{\frac{1-\alpha_{t}}{1-\overline{\alpha}_{t-1}}}\log T\bigg)\Bigg)p_{X_{0}\mymid X_{t}}(x_{0}\mymid x_{t})\mathrm{d}x_{0}\\
 & \qquad+p_{X_{t}}(x_{t})\int_{x_{0}\notin\widetilde{\mathcal{E}}}
	O\left( \exp\Bigg(\frac{8c_{3}}{c_{4}}\frac{\big\| x_{t}-\sqrt{\overline{\alpha}_{t}}x_{0}\big\|_{2}^{2}}{1-\overline{\alpha}_{t}}\Bigg) \right) 
	p_{X_{0}\mymid X_{t}}(x_{0}\mymid x_{t})\mathrm{d}x_{0}.\label{eqn:rachmaninoff}
\end{align}

By virtue of Lemma~\ref{lem:x0}, if $-\log p_{X_t}(x_t) \leq \frac{1}{2}c_6 d\log T$ for some large constant $c_6>0$,  
then it holds that 
\begin{align}
\mathbb{P}\left\{ \big\|\sqrt{\overline{\alpha}_{t}}X_{0}-x_{t}\big\|_{2}\geq5c_{5}\sqrt{d(1-\overline{\alpha}_{t})\log T}\mid X_{t}=x_{t}\right\} \leq \exp(-c_{5}^{2}c_6d\log T)	
\end{align}
for any $c_5\ge 2$. 
Some elementary calculation then reveals that
\begin{align}
\int_{x_{0}\notin\widetilde{\mathcal{E}}}\exp\Bigg(\frac{8c_{3}}{c_{4}}\frac{\big\| x_{t}-\sqrt{\overline{\alpha}_{t}}x_{0}\big\|_{2}^{2}}{1-\overline{\alpha}_{t}}\Bigg)p_{X_{0}\mymid X_{t}}(x_{0}\mymid x_{t})\mathrm{d}x_{0} & \lesssim \frac{1}{T^{c_0}}
	\label{eqn:rachmaninoff-2}
\end{align}
with $c_0$ defined in \eqref{eqn:alpha-t}, 
provided that $c_4/c_3$ is sufficiently large. 
Hence, substituting it into \eqref{eqn:rachmaninoff} demonstrates that, for any $x$ satisfying \eqref{eq:defn-x-constraint-river}, 
\begin{align}
 p_{X_{t-1}}(x) & =\Bigg(1+O\bigg(d\sqrt{\frac{1-\alpha_{t}}{1-\overline{\alpha}_{t-1}}}\log T\bigg)\Bigg)\big(1-o(1)\big)p_{X_{t}}(x_{t})+O\bigg(\frac{1}{T^{c_{0}}}\bigg)p_{X_{t}}(x_{t})
	\label{eqn:prokofiev-0}\\
 & =\Bigg(1+O\bigg(d\sqrt{\frac{1-\alpha_{t}}{1-\overline{\alpha}_{t-1}}}\log T+\frac{1}{T^{c_{0}}}\bigg)\Bigg)p_{X_{t}}(x_{t})\notag\\
 & =\Bigg(1+O\bigg(d\sqrt{\frac{1-\alpha_{t}}{1-\overline{\alpha}_{t-1}}}\log T\bigg)\Bigg)p_{X_{t}}(x_{t})\label{eqn:prokofiev}\\
	& \in \left[ \frac{1}{2} p_{X_{t}}(x_{t}), \, \frac{3}{2} p_{X_{t}}(x_{t}) \right] , \label{eqn:prokofiev-2}
\end{align}
where the penultimate inequality holds since (according to \eqref{eqn:alpha-t})
\[
\sqrt{\frac{1-\alpha_{t}}{1-\overline{\alpha}_{t-1}}}\geq\sqrt{1-\alpha_{t}}=\sqrt{\beta_{t}}\geq\sqrt{\frac{c_{1}\log T}{T^{c_{0}+1}}},
\]
and the last inequality holds due to \eqref{eqn:properties-alpha-proof-1} and the condition that $T/ (d^2 \log ^3 T)$ is sufficiently large. 
In other words, \eqref{eqn:prokofiev-2} reveals that $ p_{X_{t-1}}(x)$ is sufficiently close to $p_{X_{t}}(x_{t})$.

We are now ready to establish our claim.  In view of the assumption \eqref{eq:assumption-lem:river}, we have
\[
\big\| x_{t}(\gamma)-\widehat{x}_{t}\big\|_{2}=\big\|\gamma x_{t-1}+(1-\gamma)\widehat{x}_{t}-\widehat{x}_{t}\big\|_{2}=\gamma\big\| x_{t-1}-\widehat{x}_{t}\big\|_{2}\leq c_{3}\sqrt{d(1-\alpha_{t})\log T}.
\]
Therefore, taking $x$ to be $x_{t}(\gamma)$ in \eqref{eqn:prokofiev-2} tells us that: if $-\log p_{X_t}(x_t) \leq \frac{1}{2}c_6 d\log T$, then
\[
	-\log p_{X_{t-1}}\big( x_{t}(\gamma) \big) \leq -\log p_{X_{t}}(x_{t}) + \log 2 \leq  c_6 d\log T
\]
as claimed.

\subsection{Proof of Lemma~\ref{lem:KL-T}}
\label{sec:proof-lem-KL-T}

Recognizing that $Y_T\sim \mathcal{N}(0, I_d)$ and that
$X_T \overset{\mathrm{d}}{=} \sqrt{\overline{\alpha}_T} X_0 + \sqrt{1 - \overline{\alpha}_T}\,\overline{W}_t$ with $\overline{W}_t\sim \mathcal{N}(0, I_d)$ (independent from $X_0$),  
one has 
\begin{align}
\notag\mathsf{KL}(p_{X_{T}}\parallel p_{Y_{T}}) & =\int p_{X_{T}}(x)\log\frac{p_{X_{T}}(x)}{p_{Y_{T}}(x)}\diff x\\
\notag & \stackrel{(\text{i})}{=}\int p_{X_{T}}(x)\log\frac{\int_{y:\|y\|_{2}\leq\sqrt{\overline{\alpha}_{T}}T^{c_{R}}}p_{\sqrt{\overline{\alpha}_{T}}X_{0}}(y)p_{\sqrt{1-\overline{\alpha}_{T}}\,\overline{W}_{t}}(x-y)\diff y}{p_{Y_{T}}(x)}\diff x\\
\notag & \leq\int p_{X_{T}}(x)\log\frac{\sup_{y:\|y\|_{2}\leq\sqrt{\overline{\alpha}_{T}}T^{c_{R}}}p_{\sqrt{1-\overline{\alpha}_{T}}\,\overline{W}_{t}}(x-y)}{p_{Y_{T}}(x)}\diff x\\
\notag & =\int p_{X_{T}}(x)\Bigg(-d/2\log(1-\overline{\alpha}_{T})+\sup_{y:\|y\|_{2}\leq\sqrt{\overline{\alpha}_{T}}T^{c_{R}}}\bigg(-\frac{\|x-y\|_{2}^{2}}{2(1-\overline{\alpha}_{T})}+\frac{\|x\|_{2}^{2}}{2}\Bigg)\diff x\\
\notag & \stackrel{(\text{ii})}{\leq}\int p_{X_{T}}(x)\bigg(-d/2\log(1-\overline{\alpha}_{T})+\|x\|_{2}\sup_{y:\|y\|_{2}\leq\sqrt{\overline{\alpha}_{T}}T^{c_{R}}}\frac{\|y\|_{2}}{1-\overline{\alpha}_{T}}\bigg)\diff x\\
\notag & \leq-d/2\log(1-\overline{\alpha}_{T})+\frac{\sqrt{\overline{\alpha}_{T}}T^{c_{R}}}{2(1-\overline{\alpha}_{T})}\mathbb{E}\left[\|X_{T}\|_{2}\right]\\
 & \stackrel{(\text{iii})}{\lesssim}\overline{\alpha}_{T}d+\frac{\sqrt{\overline{\alpha}_{T}}T^{c_{R}}}{2(1-\overline{\alpha}_{T})}\left(\sqrt{\overline{\alpha}_{T}}T^{c_{R}}+\sqrt{d}\right)\stackrel{(\text{iv})}{\lesssim}\frac{1}{T^{200}},\label{eqn:KL-T-123}
\end{align}
where $(\text{i})$ arises from the assumption that $\ltwo{X_0} \leq T^{c_R}$, 
(ii) applies the Cauchy-Schwarz inequality,  (iii) holds true since
\[
\mathbb{E}\left[\|X_{T}\|_{2}\right]\leq\sqrt{\overline{\alpha}_{T}}\|X_{0}\|_{2}+\mathbb{E}\left[\|\overline{W}_{t}\|_{2}\right]\leq\sqrt{\overline{\alpha}_{T}}T^{c_{R}}+\sqrt{\mathbb{E}\left[\|\overline{W}_{t}\|_{2}^{2}\right]}\leq\sqrt{\overline{\alpha}_{T}}T^{c_{R}}+\sqrt{d},
\]
and (iv) makes use of \eqref{eqn:properties-alpha-proof-alphaT} given that $c_2\geq 1000$.
The proof is thus completed by invoking the Pinsker inequality \citep[Lemma~2.5]{tsybakov2009introduction}.

\section{Proof of auxiliary lemmas for the ODE-based sampler}
\label{sec:proof-lem-auxiliary-ode}


\subsection{Proof of Lemma~\ref{lem:main-ODE}}
\label{sec:proof-lem:main-ODE}


\subsubsection{Proof of relations~\eqref{eq:xt_up} and \eqref{eq:xt}}

Recall the definition of $\phi_t$ and $\phi_t^{\star}$ in \eqref{defn:phit-x}, 
and introduce the following vector: 
\begin{align}
	u &\coloneqq x - \phi_t(x) = x - \phi_t^{\star}(x) + \phi_t^{\star}(x) - \phi_t(x)  \notag\\
	&= \frac{1-\alpha_{t}}{2(1-\overline{\alpha}_{t})} \int_{x_0} \big(x - \sqrt{\overline{\alpha}_{t}}x_0\big) p_{X_0 \mymid X_{t}}(x_0 \,|\, x) \mathrm{d} x_0 - 
	\frac{1-\alpha_{t}}{2}\big(s_t(x) - s_t^{\star}(x)\big).
	\label{eq:defn-u-Lemma-main-ODE}
\end{align}
The proof is composed of the following steps.

\paragraph{Step 1: decomposing $p_{\sqrt{\alpha_{t}}X_{t-1}}\big(\phi_{t}(x)\big)/p_{X_{t}}(x)$.} 
Recognizing that 
\begin{equation}
	X_t \overset{\mathrm{d}}{=} \sqrt{\overline{\alpha}_t}X_0+ \sqrt{1-\overline{\alpha}_t}\,W 
	\qquad \text{with }W\sim \mathcal{N}(0,I_d)
	\label{eq:Xt-dist-proof1}
\end{equation}
and making use of the Bayes rule, we can express the conditional distribution $p_{X_0 \mymid X_{t}}\big(\phi_{t}(x)\big)$ as
\begin{align}
\label{eqn:bayes}
p_{X_{0}\mymid X_{t}}(x_{0}\,|\,x)=\frac{p_{X_{0}}(x_{0})}{p_{X_{t}}(x)}p_{X_{t}\mymid X_{0}}(x\,|\,x_{0})=\frac{p_{X_{0}}(x_{0})}{p_{X_{t}}(x)}\cdot\frac{1}{\big(2\pi(1-\overline{\alpha}_{t})\big)^{d/2}}\exp\bigg(-\frac{\big\| x-\sqrt{\overline{\alpha}_{t}}x_{0}\big\|_{2}^{2}}{2(1-\overline{\alpha}_{t})}\bigg).	
\end{align}
Moreover, it follows from \eqref{eq:Xt-dist-proof1} that
\begin{equation}
	\sqrt{\alpha_t} X_{t-1} \overset{\mathrm{d}}{=} \sqrt{\alpha_t} \big(\sqrt{\overline{\alpha}_{t-1}}X_0+ \sqrt{1-\overline{\alpha}_{t-1}}\,W\big) 
	= \sqrt{\overline{\alpha}_{t}}X_0+ \sqrt{\alpha_t-\overline{\alpha}_{t}}\,W .
	\label{eq:Xt-1-rescale-expression}
\end{equation}
These taken together allow one to rewrite $p_{\sqrt{\alpha_t}X_{t-1}}$ such that: 
%
\begin{align}
\frac{p_{\sqrt{\alpha_{t}}X_{t-1}}\big(\phi_{t}(x)\big)}{p_{X_{t}}(x)} & \overset{\mathrm{(i)}}{=}\frac{1}{p_{X_{t}}(x)}\int_{x_{0}}p_{X_{0}}(x_{0})\frac{1}{\big(2\pi(\alpha_{t}-\overline{\alpha}_{t})\big)^{d/2}}\exp\bigg(-\frac{\big\|\phi_{t}(x)-\sqrt{\overline{\alpha}_{t}}x_{0}\big\|_{2}^{2}}{2(\alpha_{t}-\overline{\alpha}_{t})}\bigg)\mathrm{d}x_{0}\notag\\
 & \overset{\mathrm{(ii)}}{=}\frac{1}{p_{X_{t}}(x)}\int_{x_{0}}p_{X_{0}}(x_{0})\frac{1}{\big(2\pi(\alpha_{t}-\overline{\alpha}_{t})\big)^{d/2}}\exp\bigg(-\frac{\big\| x-\sqrt{\overline{\alpha}_{t}}x_{0}\big\|_{2}^{2}}{2(1-\overline{\alpha}_{t})}\bigg)\notag\\
 & \qquad\cdot\exp\bigg(-\frac{(1-\alpha_{t})\big\| x-\sqrt{\overline{\alpha}_{t}}x_{0}\big\|_{2}^{2}}{2(\alpha_{t}-\overline{\alpha}_{t})(1-\overline{\alpha}_{t})}-\frac{\|u\|_{2}^{2}-2u^{\top}\big(x-\sqrt{\overline{\alpha}_{t}}x_{0}\big)}{2(\alpha_{t}-\overline{\alpha}_{t})}\bigg)\mathrm{d}x_{0}\notag\\
 & \overset{\mathrm{(iii)}}{=}\Big(\frac{1-\overline{\alpha}_{t}}{\alpha_{t}-\overline{\alpha}_{t}}\Big)^{d/2}\cdot\int_{x_{0}}p_{X_{0}\mymid X_{t}}(x_{0}\mymid x)\cdot\notag\\
 & \qquad\qquad\qquad\exp\bigg(-\frac{(1-\alpha_{t})\big\| x-\sqrt{\overline{\alpha}_{t}}x_{0}\big\|_{2}^{2}}{2(\alpha_{t}-\overline{\alpha}_{t})(1-\overline{\alpha}_{t})}-\frac{\|u\|_{2}^{2}-2u^{\top}\big(x-\sqrt{\overline{\alpha}_{t}}x_{0}\big)}{2(\alpha_{t}-\overline{\alpha}_{t})}\bigg)\mathrm{d}x_{0}\notag\\
 & \overset{\mathrm{(iv)}}{=}\left\{ 1+\frac{d(1-\alpha_{t})}{2(\alpha_{t}-\overline{\alpha}_{t})}+O\bigg(d^{2}\Big(\frac{1-\alpha_{t}}{\alpha_{t}-\overline{\alpha}_{t}}\Big)^{2}\bigg)\right\} \cdot \notag\\
 & \quad\int_{x_{0}}p_{X_{0}\mymid X_{t}}(x_{0}\mymid x)\exp\bigg(-\frac{(1-\alpha_{t})\big\| x-\sqrt{\overline{\alpha}_{t}}x_{0}\big\|_{2}^{2}}{2(\alpha_{t}-\overline{\alpha}_{t})(1-\overline{\alpha}_{t})}-\frac{\|u\|_{2}^{2}-2u^{\top}\big(x-\sqrt{\overline{\alpha}_{t}}x_{0}\big)}{2(\alpha_{t}-\overline{\alpha}_{t})}\bigg)\mathrm{d}x_{0}.
	\label{eqn:fei}
\end{align}
Here,  identity (i) holds due to \eqref{eq:Xt-1-rescale-expression} 
and hence
\[
p_{\sqrt{\alpha_{t}}X_{t-1}}(x) =\int_{x_{0}}p_{X_{0}}(x_{0})p_{\sqrt{\alpha_{t}-\overline{\alpha}_{t}}W}\big(x-\sqrt{\overline{\alpha}_{t}}x_{0}\big)\mathrm{d}x_{0}; 
\]
identity (ii) follows from \eqref{eq:defn-u-Lemma-main-ODE} and elementary algebra; 
relation (iii) is a consequence of the Bayes rule \eqref{eqn:bayes}; 
and relation (iv) results from \eqref{eq:expansion-ratio-1-alpha}.

\paragraph{Step 2: controlling the integral in the decomposition~\eqref{eqn:fei}.}
In order to further control the right-hand side of expression~\eqref{eqn:fei}, we need to evaluate the integral in~\eqref{eqn:fei}. 
To this end, we make a few observations. 
\begin{itemize}
	\item To begin with,  Lemma~\ref{lem:x0} tells us that 
\begin{subequations}
\label{eqn:BBB}
\begin{align}
\label{eqn:brahms}
	\mathbb{P}\Big(\big\|\sqrt{\overline{\alpha}_{t}}X_0 - x\big\|_2 > 5c_5 \sqrt{\theta_t(x) d(1 - \overline{\alpha}_{t})\log T}\mymid X_t = x\Big) \leq \exp\big(-c_5^2\theta_t(x) d\log T\big) 
\end{align}
for any quantity $c_5 \ge 2$, provided that $c_6\geq 2c_R+c_0$. 

	\item A little algebra based on this relation allows one to bound $u$ (cf.~\eqref{eq:defn-u-Lemma-main-ODE}) as follows:  
\begin{align}
\|u\|_{2} & \le \frac{1-\alpha_{t}}{2}\varepsilon_{\score, t}(x) + \frac{1-\alpha_{t}}{2(1-\overline{\alpha}_{t})}\mathbb{E}\left[\big\| \sqrt{\overline{\alpha}_{t}}X_{0} - x \big\|_{2}\,\big|\,X_{t}=x\right] \notag\\
	& \leq \frac{1-\alpha_{t}}{2}\varepsilon_{\score, t}(x) + \frac{6(1-\alpha_{t})}{1-\overline{\alpha}_{t}} \sqrt{\theta_t(x) d(1 - \overline{\alpha}_{t})\log T},
	\label{eqn:johannes}
\end{align}
\end{subequations}
where the last inequality arises from Lemma~\ref{lem:x0}. 
\end{itemize}
Next, let us define
\begin{equation}
	\mathcal{E}_c^{\mathsf{typical}}
	\coloneqq\Big\{ x_0:\big\| x-\sqrt{\overline{\alpha}_{t}}x_{0}\big\|_{2}\leq 5c\sqrt{\theta_t(x) d(1-\overline{\alpha}_{t})\log T}\Big\}	 
\end{equation}
for any quantity $c>0$. 
Then for any $x_0 \in \mathcal{E}_c^{\mathsf{typical}}$, it is clearly seen from \eqref{eqn:BBB} and \eqref{eqn:properties-alpha-proof} that
\begin{subequations}
	\label{eq:long-UB-123}
\begin{align}
\frac{(1-\alpha_{t})\big\| x-\sqrt{\overline{\alpha}_{t}}x_{0}\big\|_{2}^{2}}{2(\alpha_{t}-\overline{\alpha}_{t})(1-\overline{\alpha}_{t})} & \leq\frac{25c^{2}}{2}\frac{(1-\alpha_{t})\theta_t(x)d\log T}{\alpha_{t}-\overline{\alpha}_{t}}\leq\frac{100c_{1}c^{2}\theta_t(x)d\log^{2}T}{T};\label{eq:long-UB-1}\\
\frac{\|u\|_{2}^{2}}{2(\alpha_{t}-\overline{\alpha}_{t})} & \leq\frac{(1-\alpha_{t})^{2}}{4(\alpha_{t}-\overline{\alpha}_{t})}\varepsilon_{\score,t}(x)^{2}+\frac{36(1-\alpha_{t})^{2}}{(1-\overline{\alpha}_{t})(\alpha_{t}-\overline{\alpha}_{t})}\theta_t(x)d\log T\label{eq:long-UB-2}\\
 & \leq\frac{2c_{1}^{2}\log^{2}T}{T^{2}}\varepsilon_{\score,t}(x)^{2}+\frac{2304c_{1}^{2}}{T^{2}}\theta_t(x)d\log^{3}T,\notag\\
\left|\frac{u^{\top}\big(x-\sqrt{\overline{\alpha}_{t}}x_{0}\big)}{\alpha_{t}-\overline{\alpha}_{t}}\right| & \leq\frac{\|u\|_{2}\big\| x-\sqrt{\overline{\alpha}_{t}}x_{0}\big\|_{2}}{\alpha_{t}-\overline{\alpha}_{t}}\nonumber\\
 & \leq\frac{5c(1-\alpha_{t})}{2(\alpha_{t}-\overline{\alpha}_{t})}\varepsilon_{\score,t}(x)\sqrt{\theta_t(x)d(1-\overline{\alpha}_{t})\log T}+\frac{30c(1-\alpha_{t})\theta_t(x)d\log T}{\alpha_{t}-\overline{\alpha}_{t}}\label{eq:long-UB-3}\\
 & \leq\frac{20cc_{1}}{T}\varepsilon_{\score,t}(x)\sqrt{\theta_t(x)d(1-\overline{\alpha}_{t})\log^{3}T}+\frac{240cc_{1}\theta_t(x)d\log^{2}T}{T}. \label{eq:long-UB-4}
\end{align}
\end{subequations}
As a consequence, for any $x_0\in \mathcal{E}_c^{\mathsf{typical}}$ for $c \ge 2$, we have seen from \eqref{eq:long-UB-4} and \eqref{eqn:properties-alpha-proof} that
\begin{align}
 & -\frac{(1-\alpha_{t})\big\| x-\sqrt{\overline{\alpha}_{t}}x_{0}\big\|_{2}^{2}}{2(\alpha_{t}-\overline{\alpha}_{t})(1-\overline{\alpha}_{t})}-\frac{\|u\|_{2}^{2}}{2(\alpha_{t}-\overline{\alpha}_{t})}+\frac{u^{\top}\big(x-\sqrt{\overline{\alpha}_{t}}x_{0}\big)}{\alpha_{t}-\overline{\alpha}_{t}} 
	\leq\frac{u^{\top}\big(x-\sqrt{\overline{\alpha}_{t}}x_{0}\big)}{\alpha_{t}-\overline{\alpha}_{t}}
	\notag\\
 &\qquad \leq \frac{5c(1-\alpha_{t})}{2(\alpha_{t}-\overline{\alpha}_{t})}\varepsilon_{\score,t}(x)\sqrt{\theta_t(x)d(1-\overline{\alpha}_{t})\log T}+\frac{30c(1-\alpha_{t})\theta_t(x)d\log T}{\alpha_{t}-\overline{\alpha}_{t}} \label{eq:multi-term-UB-45678}\\	
 & \qquad
 \le\frac{20cc_{1}}{T}\varepsilon_{\score,t}(x)\sqrt{\theta_t(x)d\log^3 T}+\frac{240cc_{1}}{T}\theta_t(x)d\log^{2}T 
	\le c\theta_t(x)d, \label{eq:multi-term-UB-456}
\end{align}
provided that 
\[
\frac{40c_{1}\varepsilon_{\score,t}(x)\log^{\frac{3}{2}}T}{T}\leq\sqrt{\theta_t(x)d}\qquad\text{and}\qquad T\geq480c_{1}\log^{2}T.
\]

\paragraph{Step 2(a): proof of relation \eqref{eq:xt_up}.}

Substituting \eqref{eq:multi-term-UB-45678} into \eqref{eqn:fei} and making use of \eqref{eqn:properties-alpha-proof} under our assumption on $T$ yield
\begin{align*}
\frac{p_{\sqrt{\alpha_{t}}X_{t-1}}\big(\phi_{t}(x)\big)}{p_{X_{t}}(x)} & \leq2\exp\bigg(\frac{5c(1-\alpha_{t})}{2(\alpha_{t}-\overline{\alpha}_{t})}\varepsilon_{\score,t}(x)\sqrt{\theta_t(x)d\log T}+\frac{30c(1-\alpha_{t})}{\alpha_{t}-\overline{\alpha}_{t}}\theta_t(x)d\log T\bigg)\int_{x_{0}}p_{X_{0}\mymid X_{t}}(x_{0}\mymid x)\mathrm{d}x_{0}\\
 & \leq2\exp\bigg(\frac{5c(1-\alpha_{t})}{2(\alpha_{t}-\overline{\alpha}_{t})}\varepsilon_{\score,t}(x)\sqrt{\theta_t(x)d\log T}+\frac{30c(1-\alpha_{t})}{\alpha_{t}-\overline{\alpha}_{t}}\theta_t(x)d\log T\bigg),
\end{align*}
thus establishing \eqref{eq:xt_up} by taking $c=2$.

\paragraph{Step 2(b): proof of relation \eqref{eq:xt}.}
Suppose now that
\begin{equation}
C_{10}\frac{\theta_t(x)d\log^{2}T+\varepsilon_{\score,t}(x)\sqrt{\theta_t(x)d\log^{3}T}}{T}\leq1
	\label{eq:Lemma3-strong-assump}
\end{equation}
holds for some large enough constant $C_{10}>0$. 
Under this additional condition, it can be easily verified that
\begin{align}
 & \left|-\frac{(1-\alpha_{t})\big\| x-\sqrt{\overline{\alpha}_{t}}x_{0}\big\|_{2}^{2}}{2(\alpha_{t}-\overline{\alpha}_{t})(1-\overline{\alpha}_{t})}-\frac{\|u\|_{2}^{2}}{2(\alpha_{t}-\overline{\alpha}_{t})}+\frac{u^{\top}\big(x-\sqrt{\overline{\alpha}_{t}}x_{0}\big)}{\alpha_{t}-\overline{\alpha}_{t}}\right|\notag\\
 & \qquad \leq c_{10}\left(\theta_t(x)d\log T+\varepsilon_{\score,t}(x)\sqrt{\theta_t(x)d\log T}\right)\frac{1-\alpha_{t}}{\alpha_{t}-\overline{\alpha}_{t}}
	\label{eq:exponent-UB-1357}
\end{align}
for any $x_0 \in \mathcal{E}_2^{\mathsf{typical}}$ (with $c=2$), where  $c_{10}>0$ is some sufficiently small constant. 
Therefore,   
the Taylor expansion  $e^{-z} = 1 - z + O(z^2)$ (for all $|z| < 1$) gives 
\begin{align}
 & \exp\bigg(-\frac{(1-\alpha_{t})\big\| x-\sqrt{\overline{\alpha}_{t}}x_{0}\big\|_{2}^{2}}{2(\alpha_{t}-\overline{\alpha}_{t})(1-\overline{\alpha}_{t})}-\frac{\|u\|_{2}^{2}-2u^{\top}\big(x-\sqrt{\overline{\alpha}_{t}}x_{0}\big)}{2(\alpha_{t}-\overline{\alpha}_{t})}\bigg)\notag\\
%
 & =1-\frac{(1-\alpha_{t})\big\| x-\sqrt{\overline{\alpha}_{t}}x_{0}\big\|_{2}^{2}}{2(\alpha_{t}-\overline{\alpha}_{t})(1-\overline{\alpha}_{t})}+\frac{u^{\top}\big(x-\sqrt{\overline{\alpha}_{t}}x_{0}\big)}{\alpha_{t}-\overline{\alpha}_{t}}+O\bigg(\big(\theta_t(x)^2d^{2}\log^{2}T + \varepsilon_{\score, t}(x)^2\theta_t(x) d\log T\big)\Big(\frac{1-\alpha_{t}}{\alpha_{t}-\overline{\alpha}_{t}}\Big)^{2}\bigg)
	\label{eq:exp-UB-13579}
\end{align}
for any $x_0 \in \mathcal{E}_2^{\mathsf{typical}}$, 
which invokes \eqref{eq:exponent-UB-1357} and \eqref{eq:long-UB-2} (under the assumption~\eqref{eq:Lemma3-strong-assump}).  
%
%
Combine \eqref{eq:exp-UB-13579} and \eqref{eq:multi-term-UB-456} to show that
%
\begin{align}
 & \int_{x_{0}}p_{X_{0}\mymid X_{t}}(x_{0}\mymid x)\exp\bigg(-\frac{(1-\alpha_{t})\big\| x-\sqrt{\overline{\alpha}_{t}}x_{0}\big\|_{2}^{2}}{2(\alpha_{t}-\overline{\alpha}_{t})(1-\overline{\alpha}_{t})}-\frac{\|u\|_{2}^{2}-2u^{\top}\big(x-\sqrt{\overline{\alpha}_{t}}x_{0}\big)}{2(\alpha_{t}-\overline{\alpha}_{t})}\bigg)\mathrm{d}x_{0}\notag\\
 & =\left(\int_{x_{0}\in\mathcal{E}_{2}}+\int_{x_{0}\notin\mathcal{E}_{2}}\right)p_{X_{0}\mymid X_{t}}(x_{0}\mymid x)\exp\bigg(-\frac{(1-\alpha_{t})\big\| x-\sqrt{\overline{\alpha}_{t}}x_{0}\big\|_{2}^{2}}{2(\alpha_{t}-\overline{\alpha}_{t})(1-\overline{\alpha}_{t})}-\frac{\|u\|_{2}^{2}-2u^{\top}\big(x-\sqrt{\overline{\alpha}_{t}}x_{0}\big)}{2(\alpha_{t}-\overline{\alpha}_{t})}\bigg)\mathrm{d}x_{0}\notag\\
 & =\int_{x_{0}\in\mathcal{E}_{2}}p_{X_{0}\mymid X_{t}}(x_{0}\mymid x)\bigg(1-\frac{(1-\alpha_{t})\big\| x-\sqrt{\overline{\alpha}_{t}}x_{0}\big\|_{2}^{2}}{2(\alpha_{t}-\overline{\alpha}_{t})(1-\overline{\alpha}_{t})}+\frac{u^{\top}\big(x-\sqrt{\overline{\alpha}_{t}}x_{0}\big)}{\alpha_{t}-\overline{\alpha}_{t}}\bigg)\mathrm{d}x_{0}\notag\\
 & \qquad+O\bigg(\big(\theta_t(x)^{2}d^{2}\log^{2}T+\varepsilon_{\score,t}(x)^{2}\theta_t(x)d\log T\big)\Big(\frac{1-\alpha_{t}}{\alpha_{t}-\overline{\alpha}_{t}}\Big)^{2}\bigg)+O\left(\sum_{c=3}^{\infty}\int_{x_{0}\in\mathcal{E}_{c}\backslash\mathcal{E}_{c-1}}p_{X_{0}\mymid X_{t}}(x_{0}\mymid x)\exp\left(c\theta_t(x)d\right)\mathrm{d}x_{0}\right)\notag\\
 & =1-\frac{(1-\alpha_{t})\big(\int_{x_{0}}p_{X_{0}\mymid X_{t}}(x_{0}\mymid x)\big\| x-\sqrt{\overline{\alpha}_{t}}x_{0}\big\|_{2}^{2}\mathrm{d}x_{0}-\big\|\int_{x_{0}}p_{X_{0}\mymid X_{t}}(x_{0}\mymid x)\big(x-\sqrt{\overline{\alpha}_{t}}x_{0}\big)\mathrm{d}x_{0}\big\|_{2}^{2}\big)}{2(\alpha_{t}-\overline{\alpha}_{t})(1-\overline{\alpha}_{t})}\notag\\
 & \qquad+O\bigg(\theta_t(x)^{2}d^{2}\Big(\frac{1-\alpha_{t}}{\alpha_{t}-\overline{\alpha}_{t}}\Big)^{2}\log^{2}T+\varepsilon_{\score,t}(x)\sqrt{\theta_t(x)d\log T}\Big(\frac{1-\alpha_{t}}{\alpha_{t}-\overline{\alpha}_{t}}\Big)\bigg)+O\Big(\exp\big(-\theta_t(x)d\log T\big)\Big)\notag\\
 & %
=1-\frac{(1-\alpha_{t})\big(\int_{x_{0}}p_{X_{0}\mymid X_{t}}(x_{0}\mymid x)\big\| x-\sqrt{\overline{\alpha}_{t}}x_{0}\big\|_{2}^{2}\mathrm{d}x_{0}-\big\|\int_{x_{0}}p_{X_{0}\mymid X_{t}}(x_{0}\mymid x)\big(x-\sqrt{\overline{\alpha}_{t}}x_{0}\big)\mathrm{d}x_{0}\big\|_{2}^{2}\big)}{2(\alpha_{t}-\overline{\alpha}_{t})(1-\overline{\alpha}_{t})}\notag\\
 & %
\qquad+O\bigg(\theta_t(x)^{2}d^{2}\Big(\frac{1-\alpha_{t}}{\alpha_{t}-\overline{\alpha}_{t}}\Big)^{2}\log^{2}T+\varepsilon_{\score,t}(x)\sqrt{\theta_t(x)d\log T}\Big(\frac{1-\alpha_{t}}{\alpha_{t}-\overline{\alpha}_{t}}\Big)\bigg),\label{eq:exp-UB-135702}
\end{align}
where the penultimate relation holds since, according to \eqref{eqn:brahms},
\begin{align*}
\sum_{c=3}^{\infty}\int_{x_{0}\in\mathcal{E}_{c}\backslash\mathcal{E}_{c-1}}p_{X_{0}\mymid X_{t}}(x_{0}\mymid x)\exp\left(c\theta_t(x)d\right)\mathrm{d}x_{0} & \leq\sum_{c=3}^{\infty}\exp\left(-c^{2}\theta_t(x)d\log T\right)\exp\left(c\theta_t(x)d\right)\\
 & \leq\sum_{c=3}^{\infty}\exp\left(-\frac{1}{2}c^{2}\theta_t(x)d\log T\right)\leq\exp\big(-\theta_t(x)d\log T\big),
\end{align*}
and the last line in \eqref{eq:exp-UB-135702} again utilizes \eqref{eqn:properties-alpha-proof} and the fact that $\theta_t(x)\geq c_6$ for some large enough constant $c_6>0$.

%
Putting \eqref{eq:exp-UB-135702} and \eqref{eqn:fei} together yields
\begin{align*}
\frac{p_{\sqrt{\alpha_{t}}X_{t-1}}\big(\phi_{t}(x)\big)}{p_{X_{t}}(x)} & =1+\frac{d(1-\alpha_{t})}{2(\alpha_{t}-\overline{\alpha}_{t})}+O\bigg(\theta_t(x)^2d^{2}\Big(\frac{1-\alpha_{t}}{\alpha_{t}-\overline{\alpha}_{t}}\Big)^{2}\log^{2}T + \varepsilon_{\score, t}(x)\sqrt{\theta_t(x) d\log T}\Big(\frac{1-\alpha_{t}}{\alpha_{t}-\overline{\alpha}_{t}}\Big)\bigg)-\notag\\
 & \quad\frac{(1-\alpha_{t})\big(\int_{x_{0}}p_{X_{0}\mymid X_{t}}(x_{0}\mymid x)\big\| x-\sqrt{\overline{\alpha}_{t}}x_{0}\big\|_{2}^{2}\mathrm{d}x_{0}-\big\|\int_{x_{0}}p_{X_{0}\mymid X_{t}}(x_{0}\mymid x)\big(x-\sqrt{\overline{\alpha}_{t}}x_{0}\big)\mathrm{d}x_{0}\big\|_{2}^{2}\big)}{2(\alpha_{t}-\overline{\alpha}_{t})(1-\overline{\alpha}_{t})}
\end{align*}
as claimed.

\subsubsection{Proof of relation~\eqref{eq:yt}} 

Consider any random vector $Y$. 
To understand the density ratio $p_{\phi_t(Y)}(\phi_t(x))/p_{Y}(x)$, we make note of the transformation
\begin{subequations}
\begin{align} \label{eq:dist_tranform}
	p_{\phi_t(Y)}\big(\phi_t(x)\big) &= \mathsf{det}\Big(\frac{\partial \phi_t(x)}{\partial x}\Big)^{-1}p_{Y}(x), \\
	p_{\phi_t^{\star}(Y)}\big(\phi^{\star}_t(x)\big) &= \mathsf{det}\Big(\frac{\partial \phi^{\star}_t(x)}{\partial x}\Big)^{-1}p_{Y}(x),
\end{align}
\end{subequations}
where $\frac{\partial \phi_t(x)}{\partial x}$ and $\frac{\partial \phi^{\star}_t(x)}{\partial x}$  denote the Jacobian matrices. 
It thus suffices to control the quantity $\mathsf{det}\Big(\frac{\partial \phi_t(x)}{\partial x}\Big)^{-1}$.

To begin with, recall from \eqref{defn:phit-x} and \eqref{eq:st-MMSE-expression} that
\[
	\phi_{t}^{\star}(x)=x-\frac{1-\alpha_{t}}{2(1-\overline{\alpha}_{t})}g_{t}(x).
\]
As a result, one can use \eqref{eq:Jacobian-Thm4} and \eqref{eq:Jt-x-expression-ij-23} to derive
\begin{align}
I-\frac{\partial\phi^{\star}_{t}(x)}{\partial x}=\frac{1-\alpha_{t}}{2(1-\overline{\alpha}_{t})}J_{t}(x) & =\frac{1-\alpha_{t}}{2(1-\overline{\alpha}_{t})}\Bigg\{ I+\frac{1}{1-\overline{\alpha}_{t}}\bigg\{\mathbb{E}\big[X_{t}-\sqrt{\overline{\alpha}_{t}}X_{0}\mid X_{t}=x\big]\Big(\mathbb{E}\big[X_{t}-\sqrt{\overline{\alpha}_{t}}X_{0}\mid X_{t}=x\big]\Big)^{\top}\nonumber \\
 & \quad\quad-\mathbb{E}\Big[\big(X_{t}-\sqrt{\overline{\alpha}_{t}}X_{0}\big)\big(X_{t}-\sqrt{\overline{\alpha}_{t}}X_{0}\big)^{\top}\mid X_{t}=x\Big]\bigg\}\Bigg\}\nonumber \\
 & \eqqcolon\frac{1-\alpha_{t}}{2(1-\overline{\alpha}_{t})}\bigg\{ I+\frac{1}{1-\overline{\alpha}_{t}}B\bigg\}.\label{eq:defn-B-intermediate}
\end{align}
This allows one to show that
\begin{subequations}
\begin{align}
 & \mathsf{Tr}\Big(I-\frac{\partial\phi^{\star}_{t}(x)}{\partial x}\Big)=\frac{d(1-\alpha_{t})}{2(1-\overline{\alpha}_{t})}+ \notag\\
 & \quad\quad\frac{(1-\alpha_{t})\big(\big\|\int_{x_{0}}p_{X_{0}\mymid X_{t}}(x_{0}\mymid x)\big(x-\sqrt{\overline{\alpha}_{t}}x_{0}\big)\mathrm{d}x_{0}\big\|_{2}^{2}-\int_{x_{0}}p_{X_{0}\mymid X_{t}}(x_{0}\mymid x)\big\| x-\sqrt{\overline{\alpha}_{t}}x_{0}\big\|_{2}^{2}\mathrm{d}x_{0}\big)}{2(1-\overline{\alpha}_{t})^{2}}. 
\end{align}
Moreover, the matrix $B$ defined in \eqref{eq:defn-B-intermediate} satisfies
\begin{align*}
\|B\|_{\mathrm{F}} 
	\leq \Big\| \mathbb{E}\Big[\big(X_{t}-\sqrt{\overline{\alpha}_{t}}X_{0}\big)\big(X_{t}-\sqrt{\overline{\alpha}_{t}}X_{0}\big)^{\top}\mid X_{t}=x\Big] \Big\|_{\mathrm{F}}
	\leq \int_{x_{0}}p_{X_{0}\mymid X_{t}}(x_{0}\mymid x)\big\| x-\sqrt{\overline{\alpha}_{t}}x_{0}\big\|_{2}^{2}\mathrm{d}x_{0}
\end{align*}
due to Jensen's inequality. 
Taking this together with \eqref{eq:defn-B-intermediate} and Lemma~\ref{lem:x0} reveals that
\begin{align}
	\Big\|\frac{\partial \phi^{\star}_t(x)}{\partial x} - I\Big\| \leq 
\Big\|\frac{\partial \phi^{\star}_t(x)}{\partial x} - I\Big\|_{\mathrm{F}} 
	&\lesssim \frac{1-\alpha_{t}}{1-\overline{\alpha}_{t}}
\bigg(\sqrt{d} + \frac{\int_{x_0}   p_{X_0 \mymid X_{t}}(x_0 \mymid x)\big\|x - \sqrt{\overline{\alpha}_{t}}x_0\big\|_2^2\mathrm{d} x_0}{1-\overline{\alpha}_{t}}\bigg) \notag \\
&\lesssim \frac{\theta_t(x) d(1-\alpha_{t})\log T}{1-\overline{\alpha}_{t}}.
\end{align}
\end{subequations}
Additionally, the Taylor expansion guarantees that for any $A$ and $\Delta$, 
\begin{subequations}
\begin{align}
\label{eqn:matrix-det}
	\mathsf{det}\big(I+A+\Delta\big) &= 1 + \mathsf{Tr}(A) + O\big((\mathsf{Tr}(A))^2 + \|A\|_{\mathrm{F}}^2 + d^3\|A\|^3 + d\|\Delta\|\big) \\
	\mathsf{det}\big(I+A+\Delta\big)^{-1} &= 1 - \mathsf{Tr}(A) + O\big((\mathsf{Tr}(A))^2 + \|A\|_{\mathrm{F}}^2 + d^3\|A\|^3 + d\|\Delta\|\big)
\end{align}
\end{subequations}
hold as long as  $d\|A\| + d\|\Delta\| \leq c_{11}$ for some small enough constant $c_{11}>0$. 
The above properties taken collectively with \eqref{defn:phit-x} and \eqref{eq:pointwise-epsilon-score-J} allow us to demonstrate that 
\begin{align}
\notag & \frac{p_{\phi_{t}(Y)}(\phi_{t}(x))}{p_{Y}(x)}=\mathsf{det}\Big(\frac{\partial\phi_{t}(x)}{\partial x}\Big)^{-1}
=\left(\mathsf{det}\bigg(\frac{\partial\phi_{t}^{\star}(x)}{\partial x}+\frac{1-\alpha_{t}}{2}\Big[J_{s_{t}}(x)-J_{s_{t}^{\star}}(x)\Big]\bigg)\right)^{-1}\notag\\
&\quad =\left(\mathsf{det}\bigg(I + \frac{ \partial\phi_{t}^{\star}(x)}{\partial x} - I +\frac{1-\alpha_{t}}{2}\Big[J_{s_{t}}(x)-J_{s_{t}^{\star}}(x)\Big]\bigg)\right)^{-1}\notag\\
 & \quad=1-\mathsf{Tr}\Big(\frac{\partial\phi_{t}^{\star}(x)}{\partial x}-I\Big)+O\bigg(\theta_t(x)^{2}d^{2}\Big(\frac{1-\alpha_{t}}{\alpha_{t}-\overline{\alpha}_{t}}\Big)^{2}\log^{2}T+\theta^{3}d^{6}\log^{3}T\Big(\frac{1-\alpha_{t}}{\alpha_{t}-\overline{\alpha}_{t}}\Big)^{3}+(1-\alpha_{t})d\varepsilon_{\Jacobi,t}(x)\bigg)\notag\\
 & \quad=1+\frac{d(1-\alpha_{t})}{2(\alpha_{t}-\overline{\alpha}_{t})}+\frac{(1-\alpha_{t})\Big(\big\|\int_{x_{0}}p_{X_{0}\mymid X_{t}}(x_{0}\mymid x)\big(x-\sqrt{\overline{\alpha}_{t}}x_{0}\big)\mathrm{d}x_{0}\big\|_{2}^{2}-\int_{x_{0}}p_{X_{0}\mymid X_{t}}(x_{0}\mymid x)\big\| x-\sqrt{\overline{\alpha}_{t}}x_{0}\big\|_{2}^{2}\mathrm{d}x_{0}\Big)}{2(\alpha_{t}-\overline{\alpha}_{t})(1-\overline{\alpha}_{t})}\notag\\
 & \quad\qquad+O\bigg(\theta_t(x)^{2}d^{2}\Big(\frac{1-\alpha_{t}}{\alpha_{t}-\overline{\alpha}_{t}}\Big)^{2}\log^{2}T+\theta^{3}d^{6}\log^{3}T\Big(\frac{1-\alpha_{t}}{\alpha_{t}-\overline{\alpha}_{t}}\Big)^{3}+(1-\alpha_{t})d\varepsilon_{\Jacobi,t}(x)\bigg),	 
\end{align}
with the proviso that 
\begin{align*}
\frac{d^{2}(1-\alpha_{t})\log T}{\alpha_{t}-\overline{\alpha}_{t}} & \leq\frac{8c_{1}d^{2}\log^{2}T}{T}\leq c_{12}\qquad\text{and}\qquad(1-\alpha_{t})d\varepsilon_{\Jacobi,t}(x)\leq\frac{c_{1}d\varepsilon_{\Jacobi,t}(x)\log T}{T}\leq c_{12}
\end{align*}
for some sufficiently small constant $c_{12}>0$ (see \eqref{eqn:properties-alpha-proof}).   
%

\subsection{Proof of Lemma~\ref{lem:q1-large-qk-large}}
\label{sec:proof-lem:q1-large-qk-large}

In view of the definition \eqref{eq:defn-tao-i}, 
one has 
\begin{align}
	S_k(y_T) \leq c_{14}, \qquad \text{for any }k < \tau(y_T). 
	\label{eq:Sk-yT-all-Omegai}
\end{align}
%
Suppose instead that \eqref{eq:q_k_yk_UB} does not hold true, namely, 
$-\log q_k(y_k)> 2c_{6}d\log T$ for some $k<\tau(y_T)$, and we would like to show that this leads to contradiction. 

Towards this, let $1 < t \le k$ be the smallest time step obeying
\begin{align}
	\theta_t(y_t) = \max\bigg\{ -\frac{\log q_t(y_t)}{d\log T}, c_6 \bigg\} > 2 c_6 = 2 \theta_1(y_1),
\end{align}
where the last identity holds since $-\log q_1(y_1)\leq c_{6}d\log T$ and hence $\theta_1(y_1)=\max\big\{ -\frac{\log q_1(y_1)}{d\log T}, c_6 \big\}=c_6$. 
We claim that $t$ necessarily obeys 
\begin{align}
	2 c_6 < \theta_t(y_t) \leq 4 c_6.   
	\label{eq:thetat-c6-2-4}
\end{align}
Assuming the validity of Claim~\eqref{eq:thetat-c6-2-4} for the moment, 
it necessarily satisfies 
\[
	\theta_1(y_1),\theta_2(y_2),\cdots,\theta_t(y_t)\in [c_6,4c_6].
\]
According to the relations~\eqref{eq:crude-ratio-qt-1-qt} and \eqref{eq:Sk-yT-all-Omegai}, we derive
\begin{align*}
	c_{6}=\theta_{1}(y_{1}) & \le\theta_{t}(y_{t})-\theta_{1}(y_{1})=-\frac{\log q_{t}(y_{t})}{d\log T}-\theta_{1}(y_{1})\leq\frac{-\log q_{t}(y_{t})+\log q_{1}(y_{1})}{d\log T}\\
 & =\frac{1}{d\log T}\sum_{j=1}^{t-1}\big(\log q_{j}(y_{j})-\log q_{j+1}(y_{j+1})\big)\\
	& \leq2c_{1}+C_{10}\left\{ \frac{d\log^{3}T}{T}+\frac{S_{\tau(y_T)-1}(y_{T})}{d\log T}\right\} < 3c_{1}  
\end{align*}
under our sample size condition. This, however, cannot possibly hold if $c_6 \geq 3c_1$ as assumed for Lemma~\ref{lem:q1-large-qk-large}.

To finish up, it suffices to justify Claim~\eqref{eq:thetat-c6-2-4}.  
In order to see this, suppose instead that $\theta_t(y_t) > 4 c_6$. 
Given relation~\eqref{eq:Sk-yT-all-Omegai} that $S_k(y_T)\leq c_{14}$,  it can be readily seen from 
\eqref{eq:xt_up}, \eqref{eq:Sk-yT-all-Omegai} as well as the learning rate properties \eqref{eqn:properties-alpha-proof} that 
\begin{align*}
\theta_{t-1}(y_{t-1}) & =\theta_{t}(y_{t})+\theta_{t-1}(y_{t-1})-\theta_{t}(y_{t})\\
 & =\theta_{t}(y_{t})+\theta_{t-1}(y_{t-1})+\frac{\log p_{t}(y_{t})}{d\log T}\geq\theta_{t}(y_{t})-\frac{\log p_{t-1}(y_{t-1})-\log p_{t}(y_{t})}{d\log T}\\
 & \geq\theta_{t}(y_{t})-\frac{4c_{1}\Big(5\varepsilon_{\score,t}(y_{t})\sqrt{\theta_{t}(y_{t})d\log T}+60\theta_{t}(y_{t})d\log T\Big)}{dT}-\frac{\log2}{d\log T}\\
 & \geq\theta_{t}(y_{t})-\frac{4c_{1}\Big(5\varepsilon_{\score,t}(y_{t})\sqrt{d\log T}+60d\log T\Big)}{dT}\theta_{t}(y_{t})-\frac{\log2}{d\log T}\\
 & >\frac{1}{2}\theta_{t}(y_{t})>2c_{6},
\end{align*}
which is contradictory with the assumption that $t$ is the smallest step obeying $\theta_t(y_t) > 2 c_6$. Thus, we complete the proof of relation~\eqref{eq:q_k_yk_UB} as required. 
%
%
%
%
%
%
%

\subsection{Proof of Lemma~\ref{lem:density-ratio-tau}}
\label{sec:proof-lem-density-ratio-tau}

Next, consider any $y_T$, with $\{y_{T-1},\cdots,y_1\}$ being the associated deterministic sequence (cf.~\eqref{eq:defn-yt-sequence-proof})).  
As an immediate consequence of Lemma~\ref{lem:q1-large-qk-large} and the definition \eqref{eqn:choice-y} of $\theta_t(\cdot)$, one has 
\begin{equation}
	\theta_t(y_t)\leq 2c_6, \qquad \forall  t < \tau(y_T) 
	\label{eq:theta-t-all-small-ST}
\end{equation}
We then intend to invoke Lemma~\ref{lem:main-ODE} to control the term of interest. 
To do so, note that Lemma~\ref{lem:x0}, \eqref{eqn:properties-alpha-proof} and the definition \eqref{eq:defn-tao-i} of $\tau(y_T)$ taken together reveal that: 
for all $t<\tau(y_T)$ one has 
\[
	\frac{d(1-\alpha_{t})}{2(\alpha_{t}-\overline{\alpha}_{t})}\lesssim\frac{d\log T}{T}=o(1) ,
\]
\begin{align*}
 & \theta_{t}(y_{t})^{2}d^{2}\Big(\frac{1-\alpha_{t}}{\alpha_{t}-\overline{\alpha}_{t}}\Big)^{2}\log^{2}T+\varepsilon_{\score,t}(y_{t})\sqrt{\theta_{t}(y_{t})d\log T}\Big(\frac{1-\alpha_{t}}{\alpha_{t}-\overline{\alpha}_{t}}\Big)+\theta_{t}(y_{t})^{3}d^{6}\Big(\frac{1-\alpha_{t}}{\alpha_{t}-\overline{\alpha}_{t}}\Big)^{3}\log^{3}T+(1-\alpha_{t})d\varepsilon_{\Jacobi,t}(y_{t})\\
 & \qquad\lesssim\frac{d^{2}\log^{4}T}{T^{2}}+\frac{d^{6}\log^{6}T}{T^{3}}+\frac{\varepsilon_{\score,t}(y_{t})\sqrt{d\log^{3}T}}{T}+\frac{d\varepsilon_{\Jacobi,t}(y_{t})\log T}{T}=o(1),
\end{align*}
and
\begin{align*}
 & \left|\frac{(1-\alpha_{t})\Big(\big\|\int\mathbb{E}\big[X_{t}-\sqrt{\overline{\alpha}_{t}}X_{0}\mymid X_{t}=y_t\big]\big\|_{2}^{2}-\int\mathbb{E}\big[\big\| X_{t}-\sqrt{\overline{\alpha}_{t}}X_{0}\big\|_{2}^{2}\mymid X_{t}=y_t\big]\Big)}{(\alpha_{t}-\overline{\alpha}_{t})(1-\overline{\alpha}_{t})}\right|\\
 & \qquad\leq\left|\frac{(1-\alpha_{t})\int\mathbb{E}\big[\big\| X_{t}-\sqrt{\overline{\alpha}_{t}}X_{0}\big\|_{2}^{2}\mymid X_{t}=y_t\big]}{(\alpha_{t}-\overline{\alpha}_{t})(1-\overline{\alpha}_{t})}\right|\lesssim\frac{(1-\alpha_{t})d\log T}{\alpha_{t}-\overline{\alpha}_{t}}\lesssim\frac{d\log^2 T}{T}=o(1) .
\end{align*}
With these bounds in mind, applying relations~\eqref{eq:xt} and~\eqref{eq:yt} in Lemma~\ref{lem:main-ODE} leads to
\begin{align*}
 & \frac{p_{\sqrt{\alpha_{t}}Y_{t-1}}\big(\phi_{t}(y_{t})\big)}{p_{Y_{t}}(y_{t})}\bigg(\frac{p_{\sqrt{\alpha_{t}}X_{t-1}}\big(\phi_{t}(y_{t})\big)}{p_{X_{t}}(y_{t})}\bigg)^{-1}=\frac{p_{\phi_{t}(Y_{t})}\big(\phi_{t}(y_{t})\big)}{p_{Y_{t}}(y_{t})}\bigg(\frac{p_{\sqrt{\alpha_{t}}X_{t-1}}\big(\phi_{t}(y_{t})\big)}{p_{X_{t}}(y_{t})}\bigg)^{-1}\\
 & \qquad=1+O\Bigg(\frac{d^{2}\log^{4}T}{T^{2}}+\frac{d^{6}\log^{6}T}{T^{3}}+\frac{\varepsilon_{\score,t}(y_{t})\sqrt{d\log^{3}T}}{T}+\frac{d\varepsilon_{\Jacobi,t}(y_{t})\log T}{T}\Bigg)
\end{align*}
for all $t<\tau(y_T)$. 
Using the fact that $y_{t-1}=\frac{1}{\sqrt{\alpha_t}} \phi_t(y_t)$ and invoking the relation \eqref{eq:recursion},  
we arrive at
\begin{align*}
\frac{p_{t-1}(y_{t-1})}{q_{t-1}(y_{t-1})} & =\left\{ 1+O\Bigg(\frac{d^{2}\log^{4}T}{T^{2}}+\frac{d^{6}\log^{6}T}{T^{3}}+\frac{\varepsilon_{\score,t}(y_{t})\sqrt{d\log^{3}T}}{T}+\frac{d\varepsilon_{\Jacobi,t}(y_{t})\log T}{T}\Bigg) \right\} \frac{p_{t}(y_{t})}{q_{t}(y_{t})}
\end{align*}
for any $t<\tau(y_T)$. 
By abbreviating $\tau=\tau(y_{T})$ for notational simplicity, 
we reach
\begin{subequations}
	\label{eq:pt-qt-equiv-ODE-St-temp}
\begin{align}
\frac{q_{1}(y_{1})}{p_{1}(y_{1})} & =\left\{ 1+O\Bigg(\frac{d^{2}\log^{4}T}{T}+\frac{d^{6}\log^{6}T}{T^{2}}+S_{\tau-1}(y_{\tau-1})\Bigg)\right\} \frac{q_{\tau-1}(y_{\tau-1})}{p_{\tau-1}(y_{\tau-1})}\notag\\
 & \in\left[\frac{p_{\tau-1}(y_{\tau-1})}{2q_{\tau-1}(y_{\tau-1})},\frac{2p_{\tau-1}(y_{\tau-1})}{q_{\tau-1}(y_{\tau-1})}\right],	
	\label{eq:pt-qt-equiv-ODE-St-taui-temp}
\end{align}
and similarly, 
\begin{align}
	\frac{q_{k}(y_{k})}{2p_{k}(y_{k})} \leq \frac{q_{1}(y_{1})}{p_{1}(y_{1})} \leq 2 \frac{q_{k}(y_{k})}{p_{k}(y_{k})}, \qquad \forall k < \tau. 
	\label{eq:pt-qt-equiv-ODE-St-k-temp}
\end{align}
\end{subequations}

\subsection{Proof of Lemma~\ref{lem:I2-I3-I4-bound}}
\label{sec:proof-lem:I2-I3-I4-bound}

In the following, we shall tackle $\mathcal{I}_{2}$, $\mathcal{I}_{3}$ and $\mathcal{I}_{4}$ separately. 
Throughout this proof, we shall abbreviate $\tau=\tau(Y_T)$ (cf.~\eqref{eq:defn-tao-i}) whenever it is clear from the context.

\paragraph{The sub-collection in $\mathcal{I}_{2}$.}
By virtue of the definition~\eqref{eq:defn-I2-I3-I4-ode-I2} of $\mathcal{I}_{2}$, we make the observation that
\begin{align}
 &  \mathop{\mathbb{E}}_{Y_{T}\sim p_{T}}\bigg[\frac{q_{1}(Y_{1})}{p_{1}(Y_{1})}\ind\left\{ Y_{1}\in\mathcal{E},Y_{T}\in\mathcal{I}_2\right\} \bigg]
	\overset{\mathrm{(i)}}{\leq}  \mathop{\mathbb{E}}_{Y_{T}\sim p_{T}}\bigg[\frac{q_{1}(Y_{1})}{p_{1}(Y_{1})}\ind\left\{ Y_{1}\in\mathcal{E},Y_{T}\in\mathcal{I}_2\right\} \frac{S_{\tau}(Y_{T})}{c_{14}}\bigg]\nonumber \\
	& \quad \overset{\mathrm{(ii)}}{=}\frac{\log T}{c_{14}T} \sum_{t=2}^{\tau}\mathop{\mathbb{E}}_{Y_{T}\sim p_{T}}\bigg[\frac{q_{1}(Y_{1})}{p_{1}(Y_{1})}\ind\left\{ Y_{1}\in\mathcal{E},Y_{T}\in\mathcal{I}_2\right\} \left(d\varepsilon_{\Jacobi,t}(Y_{t})+\sqrt{d\log T}\varepsilon_{\score,t}(Y_{t})\right)\bigg]\nonumber \\
 & \quad\overset{\mathrm{(iii)}}{\leq} \frac{2\log T}{c_{14}T} \sum_{t=2}^{\tau}\mathop{\mathbb{E}}_{Y_{T}\sim p_{T}}\bigg[\frac{q_{t}(Y_{t})}{p_{t}(Y_{t})}\ind\left\{ Y_{1}\in\mathcal{E},Y_{T}\in\mathcal{I}_2\right\} \left(d\varepsilon_{\Jacobi,t}(Y_{t})+\sqrt{d\log T}\varepsilon_{\score,t}(Y_{t})\right)\bigg]\nonumber \\
 & \quad=\frac{2\log T}{c_{14}T}\sum_{t=2}^{T}\sum_{i\in\mathcal{I}_{2},\tau\geq t}\mathop{\mathbb{E}}_{Y_{T}\sim p_{T}}\bigg[\frac{q_{t}(Y_{t})}{p_{t}(Y_{t})}\ind\left\{ Y_{1}\in\mathcal{E},Y_{T}\in\mathcal{I}_2\right\} \left(d\varepsilon_{\Jacobi,t}(Y_{t})+\sqrt{d\log T}\varepsilon_{\score,t}(Y_{t})\right)\bigg]\nonumber \\
 & \quad\leq\frac{2\log T}{c_{14}T}\sum_{t=2}^{T}\mathop{\mathbb{E}}_{Y_{T}\sim p_{T}}\bigg[\frac{q_{t}(Y_{t})}{p_{t}(Y_{t})}\left(d\varepsilon_{\Jacobi,t}(Y_{t})+\sqrt{d\log T}\varepsilon_{\score,t}(Y_{t})\right)\bigg]\nonumber \\
 & \quad=\frac{2\log T}{c_{14}T}\sum_{t=2}^{T}\mathop{\mathbb{E}}_{Y_{t}\sim p_{t}}\bigg[\frac{q_{t}(Y_{t})}{p_{t}(Y_{t})}\left(d\varepsilon_{\Jacobi,t}(Y_{t})+\sqrt{d\log T}\varepsilon_{\score,t}(Y_{t})\right)\bigg]\nonumber \\
 & \quad=\frac{2\log T}{c_{14}T}\sum_{t=2}^{T}\mathop{\mathbb{E}}_{Y_{t}\sim q_{t}}\Big[d\varepsilon_{\Jacobi,t}(Y_{t})+\sqrt{d\log T}\varepsilon_{\score,t}(Y_{t})\Big]\nonumber \\
	& \quad \overset{\mathrm{(iv)}}{\lesssim} \big(d\varepsilon_{\Jacobi}+\sqrt{d\log T}\varepsilon_{\score}\big)\log T.\label{eq:UB-I2-ode}
\end{align}
Here, (i) follows since $S_{\tau}\big(y_{T}\big)\geq c_{14}$ in $\mathcal{I}_2$ (see \eqref{eq:defn-I2-I3-I4-ode-I2}); 
(ii) comes from the definition of $S_{t}(\cdot)$ (see \eqref{eq:defn-xik-Stk-proof}); 
(iii) holds since (by repeating the same proof arguments as for \eqref{eq:pt-qt-equiv-ODE-St} as long as $2c_{14}$ is small enough)
\[
	\frac{p_{1}(y_{1})}{q_{1}(y_{1})}\leq  \frac{2p_{t}(y_{t})}{q_{t}(y_{t})}, \qquad \forall t\le \tau ;
\]
%
and (iv) arises from \eqref{eq:score-assumptions-equiv}.

\paragraph{The sub-collection in $\mathcal{I}_{3}$.}
With regards to  $\mathcal{I}_{3}$ (cf.~\eqref{eq:defn-I2-I3-I4-ode-I3}), we can derive the following bound in a way similar to \eqref{eq:UB-I2-ode}: 
\begin{align}
 & \mathop{\mathbb{E}}_{Y_{T}\sim p_{T}}\bigg[\frac{q_{1}(Y_{1})}{p_{1}(Y_{1})}\ind\left\{ Y_{1}\in\mathcal{E},Y_{T}\in \mathcal{I}_3\right\} \bigg]\overset{\mathrm{(i)}}{\leq} \mathop{\mathbb{E}}_{Y_{T}\sim p_{T}}\bigg[\frac{q_{1}(Y_{1})}{p_{1}(Y_{1})}\ind\left\{ Y_{1}\in\mathcal{E},Y_{T}\in \mathcal{I}_3\right\} \frac{\xi_{\tau}(Y_{T})}{c_{14}}\bigg]\nonumber\\
 & =\frac{\log T}{c_{14}T}\mathop{\mathbb{E}}_{Y_{T}\sim p_{T}}\bigg[\frac{q_{1}(Y_{1})}{p_{1}(Y_{1})}\ind\left\{ Y_{1}\in\mathcal{E},Y_{T}\in \mathcal{I}_3\right\} \left(d\varepsilon_{\Jacobi,\tau}(Y_{\tau})+\sqrt{d\log T}\varepsilon_{\score,\tau}(Y_{\tau})\right)\bigg]\nonumber\\
 & \overset{\mathrm{(ii)}}{\leq}\frac{2\log T}{c_{14}T}\mathop{\mathbb{E}}_{Y_{T}\sim p_{T}}\bigg[\frac{q_{\tau-1}(Y_{\tau-1})}{p_{\tau-1}(Y_{\tau-1})}\ind\left\{ Y_{1}\in\mathcal{E},Y_{T}\in \mathcal{I}_3\right\} \left(d\varepsilon_{\Jacobi,\tau}(Y_{\tau})+\sqrt{d\log T}\varepsilon_{\score,\tau}(Y_{\tau})\right)\bigg]\notag\\
 & =\frac{2\log T}{c_{14}T}\sum_{t=2}^{T}
	\mathop{\mathbb{E}}_{Y_{T}\sim p_{T}}\bigg[\frac{q_{t-1}(Y_{t-1})}{p_{t-1}(Y_{t-1})}\ind\left\{ Y_{1}\in\mathcal{E},Y_{T}\in \mathcal{I}_3\right\} \left(d\varepsilon_{\Jacobi,t}(Y_{t})+\sqrt{d\log T}\varepsilon_{\score,t}(Y_{t})\right) \ind\{\tau=t\} \bigg]
	\label{eq:E-I3-UB-579}\\
 & \overset{\mathrm{(iii)}}{\leq}\frac{16\log T}{c_{14}T}\mathop{\mathbb{E}}_{Y_{T}\sim p_{T}}\bigg[\frac{q_{\tau}(Y_{\tau})}{p_{\tau}(Y_{\tau})}\ind\left\{ Y_{1}\in\mathcal{E},Y_{T}\in\mathcal{I}_3\right\} \left(d\varepsilon_{\Jacobi,\tau}(Y_{\tau})+\sqrt{d\log T}\varepsilon_{\score,\tau}(Y_{\tau})\right)\bigg]\notag\\
 & \leq\frac{16\log T}{c_{14}T}\sum_{t=2}^{T}\mathop{\mathbb{E}}_{Y_{T}\sim p_{T}}\bigg[\frac{q_{t}(Y_{t})}{p_{t}(Y_{t})}\left(d\varepsilon_{\Jacobi,t}(Y_{t})+\sqrt{d\log T}\varepsilon_{\score,t}(Y_{t})\right)\bigg]\notag\\
 & =\frac{16\log T}{c_{14}T}\sum_{t=2}^{T}\mathop{\mathbb{E}}_{Y_{t}\sim p_{t}}\bigg[\frac{q_{t}(Y_{t})}{p_{t}(Y_{t})}\left(d\varepsilon_{\Jacobi,t}(Y_{t})+\sqrt{d\log T}\varepsilon_{\score,t}(Y_{t})\right)\bigg]\notag\\
 & =\frac{16\log T}{c_{14}T}\sum_{t=2}^{T}\mathop{\mathbb{E}}_{Y_{t}\sim q_{t}}\Big[d\varepsilon_{\Jacobi,t}(Y_{t})+\sqrt{d\log T}\varepsilon_{\score,t}(Y_{t})\Big]\notag\\
 & \lesssim\left(d\varepsilon_{\Jacobi}+\sqrt{d\log T}\varepsilon_{\score}\right)\log T.
	\label{eq:UB-I3-ode}
\end{align}
Here, (i) comes from \eqref{eq:defn-I2-I3-I4-ode-I3}, 
(ii) arises from \eqref{eq:pt-qt-equiv-ODE-St-k}, 
whereas (iii) is a consequence of \eqref{eq:defn-I2-I3-I4-ode-I3}.

\paragraph{The sub-collection in $\mathcal{I}_{4}$.}

We now turn attention to $\mathcal{I}_4$ (cf.~\eqref{eq:defn-I2-I3-I4-ode-I4}), towards which we find it helpful to define
\begin{subequations}
	\label{eq:defn-J1t-J2t-J3t}
\begin{align}
\mathcal{J}_{1,t} & \coloneqq \Big\{ y_T :\xi_{t}\big(y_T\big)<c_{14}\Big\} \label{eq:defn-J1t-J2t-J3t-J1t}\\
\mathcal{J}_{2,t} & \coloneqq \bigg\{ y_T :\xi_{t}\big(y_T\big)\geq c_{14},\frac{q_{t-1}(y_{t-1})}{p_{t-1}(y_{t-1})}\leq\frac{8 q_{t}(y_t)}{p_{t}(y_t)}\bigg\} \label{eq:defn-J1t-J2t-J3t-J2t}\\
\mathcal{J}_{3,t} & \coloneqq \bigg\{ y_T :\xi_{t}\big(y_T\big)\geq c_{14},\frac{q_{t-1}(y_{t-1})}{p_{t-1}(y_{t-1})}>\frac{8 q_{t}(y_t)}{p_{t}(y_{t})}\bigg\}
\label{eq:defn-J1t-J2t-J3t-J3t}
\end{align}
\end{subequations}
for each $2\leq t\leq T$. 
%
Equipped with the above definitions, we first make the observation that 
\begin{align}
\mathop{\mathbb{E}}_{Y_{T}\sim p_{T}}\bigg[\frac{q_{1}(Y_{1})}{p_{1}(Y_{1})}\ind\left\{ Y_{1}\in\mathcal{E},Y_{T}\in\mathcal{I}_{4}\right\} \bigg] 
	& \leq 2 \mathop{\mathbb{E}}_{Y_{T}\sim p_{T}}\bigg[\frac{q_{\tau-1}(Y_{1})}{p_{\tau-1}(Y_{1})}\ind\left\{ Y_{1}\in\mathcal{E},Y_{T}\in\mathcal{I}_{4}\right\} \bigg]\notag\\
 	& = 2\sum_{t=2}^{T}\mathop{\mathbb{E}}_{Y_{T}\sim p_{T}}\bigg[\frac{q_{t-1}(Y_{t-1})}{p_{t-1}(Y_{t-1})}\ind\left\{ Y_{1}\in\mathcal{E},Y_{T}\in\mathcal{I}_{4}\right\}
	\ind\{\tau=t\} \bigg]\notag\\
	& \leq 2\sum_{t=2}^{T}\mathop{\mathbb{E}}_{Y_{T}\sim p_{T}}\bigg[\frac{q_{t-1}(Y_{t-1})}{p_{t-1}(Y_{t-1})}\ind\left\{Y_{1}\in\mathcal{E}, Y_{T}\in \mathcal{J}_{3,t}\right\} \bigg], 
	\label{eq:sum-I4-UB-13579}
\end{align}
where the first inequality follows from \eqref{eq:pt-qt-equiv-ODE-St-k}, 
and the last line comes from the definition of $\mathcal{I}_4$ (cf.~\eqref{eq:defn-I2-I3-I4-ode-I4}) and $\mathcal{J}_{3,t}$ (cf.~\eqref{eq:defn-J1t-J2t-J3t-J3t}).   
For notational simplicity, let us define, for $2\leq t\leq T$, 
\begin{align*}
h_{t} & \coloneqq\frac{q_{t}(Y_{t})}{p_{t}(Y_{t})}.
\end{align*}
In view of the second inequality in \eqref{eq:defn-J1t-J2t-J3t-J3t}, one has 
$h_{t-1} > 8h_{t}$ as long as $y_T \in \mathcal{J}_{3,t}$. 
Consequently, 
\begin{align*}
& \sum_{t=2}^{T}h_{t-1}\ind\left\{Y_T \in \mathcal{J}_{3,t}\right\} \\
& < 
\sum_{t=2}^{T}h_{t-1}\ind\left\{Y_T \in \mathcal{J}_{3,t}\right\} + \frac{1}{7}\sum_{t=2}^{T}h_{t-1}\ind\left\{Y_T \in \mathcal{J}_{3,t}\right\} -\frac{8}{7}\sum_{t=2}^{T}h_{t}\ind\left\{Y_T \in \mathcal{J}_{3,t}\right\}\\
& =\frac{8}{7}\sum_{t=2}^{T}
\bigg(
	\Big(h_{t-1} - h_{t-1}\ind\left\{Y_T \in \mathcal{J}_{1,t}\right\}  - h_{t-1}\ind\left\{Y_T \in \mathcal{J}_{2,t}\right\} \Big)
-
 \Big(h_{t} - h_{t}\ind\left\{Y_T \in \mathcal{J}_{1,t}\right\}  - h_{t}\ind\left\{Y_T \in \mathcal{J}_{2,t} \right\}\Big) 
\bigg)\\
&= \frac{8}{7}\sum_{t=2}^{T}\big(h_{t}-{h}_{t-1}\big)\ind\left\{Y_T \in \mathcal{J}_{1,t}\cup \mathcal{J}_{2,t}\right\}
+
	\frac{8}{7}\sum_{t=2}^{T} \big(h_{t-1} - h_{t}\big).
\end{align*}
Here, the second line holds true since, for all $t$, one has (i) $\mathcal{J}_{1,t}\cup \mathcal{J}_{2,t} \cup \mathcal{J}_{3,t}= \mathbb{R}^d$, and (ii) $\mathcal{J}_{1,t}$, 
$\mathcal{J}_{2,t}$ and $\mathcal{J}_{3,t}$ are disjoint. 
Substituting this into \eqref{eq:sum-I4-UB-13579}, we arrive at
\begin{align}
	&\mathop{\mathbb{E}}_{Y_{T}\sim p_{T}}\bigg[\frac{q_{1}(Y_{1})}{p_{1}(Y_{1})}
	\ind\left\{ Y_{1}\in\mathcal{E}, Y_T \in \mathcal{J}_{3,t}\right\} \bigg]  
	\leq 2\sum_{t=2}^{T}\mathop{\mathbb{E}}_{Y_{T}\sim p_{T}}\big[h_{t-1}\ind\left\{Y_T \in \mathcal{J}_{3,t}\right\}\big] \notag\\
 & \quad \leq\frac{8}{7}\sum_{t=2}^{T}\Big(\mathop{\mathbb{E}}_{Y_{T}\sim p_{T}}\big[h_{t}\ind\left\{Y_T \in \mathcal{J}_{1,t}\cup \mathcal{J}_{2,t}\right\}\big] - \mathop{\mathbb{E}}_{Y_{T}\sim p_{T}}\big[h_{t-1}\ind\left\{Y_T \in \mathcal{J}_{1,t}\cup \mathcal{J}_{2,t}\right\}\big]\Big) \notag\\
	&\qquad\qquad + \frac{8}{7}\sum_{t=2}^{T}\Big(\mathop{\mathbb{E}}_{Y_{T}\sim p_{T}}\big[h_{t-1}\big]-\mathop{\mathbb{E}}_{Y_{T}\sim p_{T}}\big[h_{t}\big]\Big) . 
	\label{eq:sum-I4-UB-7924}
\end{align}

In order to further bound \eqref{eq:sum-I4-UB-7924}, 
we make note of a few basic facts. Firstly, the identity below holds: 
\begin{align*}
\mathop{\mathbb{E}}_{Y_{T}\sim p_{T}}\big[h_{t}\big] 
& =\mathop{\mathbb{E}}_{Y_{T}\sim p_{T}}\bigg[\frac{q_{t}(Y_{t})}{p_{t}(Y_{t})} \bigg]
	=\mathop{\mathbb{E}}_{Y_{t}\sim p_{t}}\bigg[\frac{q_{t}(Y_{t})}{p_{t}(Y_{t})} \bigg]
	= 1, \qquad  2\leq t\leq T.
\end{align*}
Secondly, by defining the set
\begin{align}
	\mathcal{E}_t &\coloneqq \Big\{y : q_{t}(y) >  \exp\big(- c_{6} d\log T \big)  \Big\}, \qquad 2\leq t\leq T, 
\end{align}
we can show that
\begin{align*}
	&\sum_{t=2}^{T}\mathop{\mathbb{E}}_{Y_{T}\sim p_{T}}\big[h_{t}\ind\left\{ Y_{t}\notin\mathcal{E}_{t}, Y_T\in\mathcal{J}_{1,t}\right\} \big] 
 \le\sum_{t=2}^{T}\mathop{\mathbb{E}}_{Y_{T}\sim p_{T}}\bigg[\frac{q_{t}(Y_{t})}{p_{t}(Y_{t})}\ind\left\{ Y_{t}\notin\mathcal{E}_{t}\right\} \bigg]
=\sum_{t=2}^{T}\mathop{\mathbb{E}}_{Y_{t}\sim p_{t}}\bigg[\frac{q_{t}(Y_{t})}{p_{t}(Y_{t})}\ind\left\{ Y_{t}\notin\mathcal{E}_{t}\right\} \bigg]\\
 & \qquad =\sum_{t=2}^{T}\mathbb{P}_{Y_{t}\sim q_{t}}\left\{ Y_{t}\notin\mathcal{E}_{t} \right\} =\sum_{t=2}^{T}\mathbb{P}_{X_{t}\sim q_{t}}\left\{ X_{t}\notin\mathcal{E}_{t}\right\} \\
 & \qquad\leq\sum_{t=2}^{T}\mathbb{P}_{X_{t}\sim q_{t}}\left\{ X_{t}\notin\mathcal{E}_{t} \text{ and }\|X_{t}\|_{2}\leq T^{2c_{R}+2}\right\} +\sum_{t=2}^{T}\mathbb{P}_{X_{t}\sim q_{t}}\left\{ \|X_{t}\|_{2}>T^{2c_{R}+2}\right\} \\
 & \qquad\leq\sum_{t=2}^{T}{\displaystyle \int}_{x_{t}:q_{t}(x_{t})\leq\exp(-c_{6}d\log T),\|x_{t}\|_{2}\leq T^{2c_{R}+2}}q_{t}(x_{t})\mathrm{d}x_{t}+T\exp\big(-c_{6}d\log T\big)\\
 & \qquad\leq T\big(2T^{2c_{R}+2}\big)^{d}\exp(-c_{6}d\log T)+T\exp\big(-c_{6}d\log T\big)\le\exp\Big(-\frac{c_{6}}{2}d\log T\Big),
\end{align*}
where the penultimate line comes from \eqref{eq:Xt-2range-ODE}, and
the last inequality holds true as long as $c_{6}$ is large enough. 
Plugging the preceding two results into \eqref{eq:sum-I4-UB-7924}, we reach
\begin{align}
	&\mathop{\mathbb{E}}_{Y_{T}\sim p_{T}}\bigg[\frac{q_{1}(Y_{1})}{p_{1}(Y_{1})}\ind\left\{ Y_{1}\in\mathcal{E},Y_{T}\in\mathcal{I}_{4}\right\} \bigg]  
	 \le \frac{8}{7}\sum_{t=2}^{T}
	\mathop{\mathbb{E}}_{Y_{T}\sim p_{T}}\big[ (h_{t}-h_{t-1}) \ind\left\{y_{t}\in\mathcal{E}_t, Y_T \in \mathcal{J}_{1,t}\right\}\big] 
	\notag\\
	&\quad\quad\qquad\qquad + \frac{8}{7}\sum_{t=2}^{T}\mathop{\mathbb{E}}_{Y_{T}\sim p_{T}}\big[h_{t}\ind\left\{Y_T \in \mathcal{J}_{2,t}\right\}\big] + \exp\Big(-\frac{c_{6}}{2}d\log T\Big).
	\label{eq:sum-I4-UB-7935}
\end{align}
%



As it turns out, the sum w.r.t.~the set $\mathcal{J}_{1,t}$ and the sum w.r.t.~the set $\mathcal{J}_{2,t}$ 
in \eqref{eq:sum-I4-UB-7935}  can be controlled respectively using the same arguments as for $\mathcal{I}_1$ and $\mathcal{I}_3$ to derive
\begin{align*}
\sum_{t=2}^{T}\mathop{\mathbb{E}}_{Y_{T}\sim p_{T}}\big[(h_{t}-h_{t-1})\ind\left\{ y_{t}\in\mathcal{E}_{t},Y_{T}\in\mathcal{J}_{1,t}\right\} \big] & \lesssim\frac{d^{2}\log^{4}T}{T}+\frac{d^{6}\log^{6}T}{T^{2}}+\sqrt{d\log^{3}T}\varepsilon_{\score}+(d\log T)\varepsilon_{\Jacobi},\\
\sum_{t=2}^{T}\mathop{\mathbb{E}}_{Y_{T}\sim p_{T}}\big[h_{t}\ind\left\{ Y_{T}\in\mathcal{J}_{2,t}\right\} \big] & \lesssim\sqrt{d\log^{3}T}\varepsilon_{\score}+(d\log T)\varepsilon_{\Jacobi};
\end{align*}
we omit the arguments here for the sake of brevity.  
Therefore, we have proven that
\begin{align}
\mathop{\mathbb{E}}_{Y_{T}\sim p_{T}}\bigg[\frac{q_{1}(Y_{1})}{p_{1}(Y_{1})}\ind\left\{ Y_{1}\in\mathcal{E},Y_{T}\in\mathcal{I}_{4}\right\} \bigg] & \lesssim\frac{d^{2}\log^{4}T}{T}+\frac{d^{6}\log^{6}T}{T^{2}}+\sqrt{d\log^{3}T}\varepsilon_{\score}+(d\log T)\varepsilon_{\Jacobi}.  
	\label{eq:UB-I4-ode}
\end{align}

\paragraph{Putting all this together.} 
Taking \eqref{eq:UB-I2-ode}, \eqref{eq:UB-I3-ode} and \eqref{eq:UB-I4-ode} together, we establish the advertised result. 

\section{Proofs of auxiliary lemmas for the DDPM-type sampler} 
\label{sec:proof-lemmas-sde-sdeR}

\subsection{Proof of Lemma~\ref{lem:sde}}
\label{sec:proof-lem:sde}

For notational simplicity, we find it helpful to define, for any constant $\gamma \in [0,1]$, 
\begin{align}
	x_{t}(\gamma) \coloneqq \gamma x_{t-1} + (1-\gamma) \widehat{x}_t
	\qquad
	\text{and}\qquad \widehat{x}_t \coloneqq \frac{1}{\sqrt{\alpha_t}} x_t.
\end{align}

\paragraph{Step 1: decomposing the target distribution $p_{X_{t-1}\mymid X_{t}}(x_{t-1}\mymid x_{t})$.}
With this piece of notation in mind, we can recall the forward process~\eqref{eq:forward-process} and calculate: 
for any $x_{t-1},x_t\in \mathbb{R}^d$,  
\begin{align}
\notag & p_{X_{t-1}\mymid X_{t}}(x_{t-1}\mymid x_{t})\\
\notag & =\frac{1}{p_{X_{t}}(x_{t})}p_{X_{t-1},X_{t}}(x_{t-1},x_{t})=\frac{1}{p_{X_{t}}(x_{t})}\exp\Big(\log p_{X_{t-1}}(x_{t-1})+\log p_{X_{t}\mymid X_{t-1}}(x_{t}\mymid x_{t-1})\Big)\notag\\
 & =\frac{1}{p_{X_{t}}(x_{t})}\exp\bigg(\log p_{X_{t-1}}(\widehat{x}_{t})+\int_{0}^{1}\Big[\nabla\log p_{X_{t-1}}\big(x_{t}(\gamma)\big)\Big]^{\top}(x_{t-1}-\widehat{x}_{t})\mathrm{d}\gamma+\log p_{X_{t}\mymid X_{t-1}}(x_{t}\mymid x_{t-1})\bigg)\notag\\
 & =\frac{p_{X_{t-1}}(\widehat{x}_{t})}{p_{X_{t}}(x_{t})}\exp\bigg((x_{t-1}-\widehat{x}_{t})^{\top}\int_{0}^{1}\mathrm{d}\gamma\int_{x_{0}}\frac{\nabla p_{X_{t-1}\mymid X_{0}}\big(x_{t}(\gamma)\mymid x_{0}\big)p_{X_{0}}(x_{0})}{p_{X_{t-1}}\big(x_{t}(\gamma)\big)}\mathrm{d}x_{0}+\log p_{X_{t}\mymid X_{t-1}}(x_{t}\mymid x_{t-1})\bigg),\label{eqn:sde-cond}
\end{align}
where the penultimate line comes from the fundamental theorem of calculus. 
In particular, the exponent in \eqref{eqn:sde-cond} consists of a term that satisfies
\begin{align}
\notag & (x_{t-1}-\widehat{x}_{t})^{\top}\int_{0}^{1}\mathrm{d}\gamma\int_{x_{0}}\frac{\nabla p_{X_{t-1}\mymid X_{0}}\big(x_{t}(\gamma)\mymid x_{0}\big)p_{X_{0}}(x_{0})}{p_{X_{t-1}}\big(x_{t}(\gamma)\big)}\mathrm{d}x_{0}\\
\notag & =(x_{t-1}-\widehat{x}_{t})^{\top}\int_{0}^{1}\mathrm{d}\gamma\int_{x_{0}}\frac{\nabla p_{X_{t-1}\mymid X_{0}}\big(x_{t}(\gamma)\mymid x_{0}\big)}{p_{X_{t-1}\mymid X_{0}}\big(x_{t}(\gamma)\mymid x_{0}\big)}p_{X_{0}\mymid X_{t-1}}\big(x_{0}\mymid x_{t}(\gamma)\big)\mathrm{d}x_{0}\\
 & =-(x_{t-1}-\widehat{x}_{t})^{\top}\int_{0}^{1}\mathrm{d}\gamma\int_{x_{0}}\frac{x_{t}(\gamma)-\sqrt{\overline{\alpha}_{t-1}}x_{0}}{1-\overline{\alpha}_{t-1}}p_{X_{0}\mymid X_{t-1}}\big(x_{0}\mymid x_{t}(\gamma)\big)\mathrm{d}x_{0} \notag\\
	& \eqqcolon -\frac{1}{1-\overline{\alpha}_{t-1}}(x_{t-1}-\widehat{x}_{t})^{\top}\int_{0}^{1}g_{t-1}\big(x_{t}(\gamma)\big)\mathrm{d}\gamma, \label{eqn:schubert-12}
\end{align}
where the second line holds since 
$p_{X_{0}\mymid X_{t-1}}\big(x_{0}\mymid x_{t}(\gamma)\big)p_{X_{t-1}}\big(x_{t}(\gamma)\big)=p_{X_{t-1}\mymid X_{0}}\big(x_{t}(\gamma)\mymid x_{0}\big)p_{X_{0}}\big(x_{0}\big)$, 
and we remind the reader of the definition of $g_t(\cdot)$ in \eqref{eq:st-MMSE-expression}.

%
%

To continue, it is then seen from the fundamental
theorem of calculus that
\begin{align*}
g_{t-1}\big(x_{t}(\gamma)\big) & =g_{t-1}\big(\widehat{x}_{t}\big)+\int_{0}^{1}J_{t-1}\Big((1-\tau)\widehat{x}_{t}+\tau x_{t}(\gamma)\Big)\big(x_{t}(\gamma)-\widehat{x}_{t}\big)\mathrm{d}\tau, 
\end{align*}
where $J_{t-1}(x)\coloneqq\frac{\partial g_{t-1}(x)}{\partial x}\in\mathbb{R}^{d\times d}$
is the associated Jacobian matrix. 
 As a consequence, we can show that
\begin{align}
\notag & (x_{t-1}-\widehat{x}_{t})^{\top}\int_{0}^{1}\mathrm{d}\gamma\int_{x_{0}}\frac{\nabla p_{X_{t-1}\mymid X_{0}}\big(x_{t}(\gamma)\mymid x_{0}\big)p_{X_{0}}(x_{0})}{p_{X_{t-1}}\big(x_{t}(\gamma)\big)}\mathrm{d}x_{0}\\
	\notag & =-\frac{1}{1-\overline{\alpha}_{t-1}}\left\{ (x_{t-1}-\widehat{x}_{t})^{\top}g_{t-1}\big(\widehat{x}_{t}\big)+(x_{t-1}-\widehat{x}_{t})^{\top}\int_{0}^{1}\int_{0}^{1}J_{t-1}\Big((1-\tau)\widehat{x}_{t}+\tau x_{t}(\gamma)\Big)\big(x_{t}(\gamma)-\widehat{x}_{t}\big)\mathrm{d}\tau\mathrm{d}\gamma\right\} \\
	& =-\frac{1}{1-\overline{\alpha}_{t-1}}\left\{ (x_{t-1}-\widehat{x}_{t})^{\top}g_{t-1}\big(\widehat{x}_{t}\big)+(x_{t-1}-\widehat{x}_{t})^{\top}\left[\int_{0}^{1}\int_{0}^{1}\gamma J_{t-1}\Big((1-\tau)\widehat{x}_{t}+\tau x_{t}(\gamma)\Big)\mathrm{d}\tau\mathrm{d}\gamma\right]\big(x_{t-1}-\widehat{x}_{t}\big)\right\}  .
\label{eqn:schubert}
\end{align}
Combining \eqref{eqn:sde-cond} and \eqref{eqn:schubert} allows us to rewrite the target quantity $p_{X_{t-1}\mymid X_{t}}(x_{t-1}\mymid x_{t})$ as: 
\begin{align}
 & p_{X_{t-1}\mymid X_{t}}(x_{t-1}\mymid x_{t})\notag\\
 & =\frac{p_{X_{t-1}}(\widehat{x}_{t})}{p_{X_{t}}(x_{t})}\exp\Bigg((x_{t-1}-\widehat{x}_{t})^{\top}\int_{0}^{1}\mathrm{d}\gamma\int_{x_{0}}\frac{\nabla p_{X_{t-1}\mymid X_{0}}(\widetilde{x}_{t}\mymid x_{0})p_{X_{0}}(x_{0})}{p_{X_{t-1}}(\widetilde{x}_{t})}\diff x_{0}+\log p_{X_{t}\mymid X_{t-1}}(x_{t}\mymid x_{t-1})\Bigg)\notag\\
	& =\frac{p_{X_{t-1}}(\widehat{x}_{t})}{p_{X_{t}}(x_{t})}\exp\Bigg(-\frac{(x_{t-1}-\widehat{x}_{t})^{\top}g_{t-1}\big(\widehat{x}_{t}\big)+(x_{t-1}-\widehat{x}_{t})^{\top}\left[\int_{0}^{1}\int_{0}^{1}\gamma J_{t-1}\Big((1-\tau)\widehat{x}_{t}+\tau x_{t}(\gamma)\Big)\mathrm{d}\tau\mathrm{d}\gamma\right]\big(x_{t-1}-\widehat{x}_{t}\big)}{1-\overline{\alpha}_{t-1}}\notag\\
	& \qquad\qquad-\frac{\alpha_t\|x_{t-1} - \widehat{x}_{t}\|_{2}^{2}}{2(1-\alpha_{t})} -\frac{d}{2}\log\big(2\pi(1-\alpha)\big) \Bigg),\label{eqn:allegro-here}
\end{align}
where we have also used the fact that conditional on  $X_{t}\mymid X_{t-1}=x_{t-1} \sim \mathcal{N}\big(\sqrt{\overline{\alpha}_{t}}x_{t-1},(1-\overline{\alpha}_{t})I_{d}\big).$ 
Note that the pre-factor $\frac{p_{X_{t-1}}(\widehat{x}_{t})}{p_{X_{t}}(x_{t})}$ in the above display is independent from the specific value of $x_{t-1}$.

\paragraph{Step 2: controlling the exponent in \eqref{eqn:allegro-here}.}  
Consider now any $(x_t, x_{t-1})\in \mathcal{E}$ (cf.~\eqref{eqn:eset}). 
In order to further simplify the exponent in the display \eqref{eqn:allegro-here}, 
we make the following claims: 
\begin{itemize}
	\item[(a)] for any $x$ that can be written as $x=w x_{t-1} + (1-w) x_t / \sqrt{\alpha_t}$ for some $w\in[0,1]$, 
		the Jacobian matrix $J_{t-1}(x)=\frac{\partial g_{t-1}(x)}{\partial x}$ obeys
\begin{subequations}
	\label{eq:two-claims-Jacobi-approx}
\begin{align}
\label{eq:Jacobi-a}
	\big\| J_{t-1}(x) - I \big\| \lesssim d\log T ;
\end{align}
	\item[(b)] in addition, one has
\begin{align}
	\frac{1}{1-\overline{\alpha}_{t}}\Big\|\big(x_{t-1}-\widehat{x}_{t}\big)g_{t}(x_{t})-\sqrt{\alpha_{t}}\big(x_{t-1}-\widehat{x}_{t}\big)g_{t}(x_{t})\Big\|_{2} & \lesssim\frac{d\log^{2}T}{T^{3/2}}.
\label{eq:approx-t-bbb}
\\
\bigg\|\frac{g_{t-1}(\widehat{x}_{t})}{1-\overline{\alpha}_{t-1}}-\frac{g_{t}(x_{t})}{1-\overline{\alpha}_{t}}\bigg\|_{2} & \lesssim(1-\alpha_{t})\Big(\frac{d\log T}{\alpha_{t}-\overline{\alpha}_{t}}\Big)^{3/2}, 
	\label{eq:approx-t-a}  
\end{align}
\end{subequations}

\end{itemize}
Assuming the validity of these claims (which will be established in Appendix~\ref{sec:proof-claims-Jacobi-approx}) and recalling the definition of $\mu_t(\cdot)$ in \eqref{eqn:nu-t-2}, we can use \eqref{eqn:allegro-here} together with a little algebra to obtain
\begin{align}
	p_{X_{t-1}\mymid X_{t}}(x_{t-1}\mymid x_{t}) & =f_{0}(x_{t})\exp\bigg(-\frac{\alpha_{t}\|x_{t-1}-\mu_{t}^{\star}(x_t)\|_{2}^{2}}{2(1-\alpha_{t})}+\zeta_{t}(x_{t-1},x_t)\bigg)
	\label{eq:density-approx-condition-X-f0}
\end{align}
for some function $f_{0}(\cdot)$ and some residual term $\zeta_{t}(x_{t-1},x_t)$ obeying
\begin{align*}
|\zeta_{t}(x_{t-1},x_t)| & \lesssim\big\| x_{t-1}-\widehat{x}_{t}\big\|_{2}^{2}\sup_{x:\,\exists w\in[0,1]\text{ s.t. }x=(1-w)\widehat{x}_{t}+wx_{t-1}}\big\| J_{t-1}(x)\big\|\\
 & \quad+\big\| x_{t-1}-\widehat{x}_{t}\big\|_{2}\Bigg\|\frac{\int_{x_{0}}p_{X_{0}\mymid X_{t-1}}(x_{0}\mymid\widehat{x}_{t})(\widehat{x}_{t}-\sqrt{\overline{\alpha}_{t-1}}x_{0})\mathrm{d}x_{0}}{1-\overline{\alpha}_{t-1}}-\frac{\int_{x_{0}}p_{X_{0}\mymid X_{t}}(x_{0}\mymid x_{t})(x_{t}-\sqrt{\overline{\alpha}_{t}}x_{0})\mathrm{d}x_{0}}{1-\overline{\alpha}_{t}}\Bigg\|_{2}\\
 & \lesssim\frac{\big(d(1-\alpha_{t})\log T\big)d\log T}{1-\overline{\alpha}_{t-1}}+\sqrt{d(1-\alpha_{t})\log T}\,(1-\alpha_{t})\bigg(\frac{d\log T}{\alpha_{t}-\overline{\alpha}_{t}}\bigg)^{3/2}\\
 & \lesssim\frac{(1-\alpha_{t})d^{2}\log^{2}T}{1-\overline{\alpha}_{t-1}}+d^{2}\bigg(\frac{1-\alpha_{t}}{\alpha_{t}-\overline{\alpha}_{t}}\bigg)^{3/2}\log^{2}T\asymp\frac{(1-\alpha_{t})d^{2}\log^{2}T}{\alpha_{t}-\overline{\alpha}_{t}},
\end{align*}
where the penultimate line makes use of the assumption $(x_t,x_{t-1})\in \mathcal{E}$ (cf.~\eqref{eqn:eset}), 
and the last inequality holds since $\alpha_t\geq 1/2$ (cf.~\eqref{eqn:alpha-t}).

\paragraph{Step 3: approximating the function $f_0(x_t)$.} 
To finish up, it remains to quantify the function $f_0(\cdot)$ in \eqref{eq:density-approx-condition-X-f0}. 
Note that for any $x_{t}$ obeying $p_{X_{t}}(x_{t})\geq\exp\big(-\frac{1}{2}c_6d\log T\big)$,
it is easily seen that
\begin{align*}
	&\int_{x_{t-1}:(x_{t},x_{t-1})\notin\mathcal{E}}p_{X_{t-1}\mymid X_{t}}(x_{t-1}\mymid x_{t})\mathrm{d}x_{t-1}  
	=\frac{\int_{x_{t-1}:(x_{t},x_{t-1})\notin\mathcal{E}}p_{X_{t}\mymid X_{t-1}}(x_{t}\mymid x_{t-1})p_{X_{t-1}}(x_{t-1})\mathrm{d}x_{t-1}}{p_{X_{t}}(x_{t})}\\
	& \qquad \leq\frac{\frac{1}{(2\pi(1-\alpha_{t}))^{d/2}}\int_{x_{t-1}:(x_{t},x_{t-1})\notin\mathcal{E}}\exp\big(-\frac{\|x_{t}-\sqrt{\overline{\alpha}_{t}}x_{t-1}\|_{2}^{2}}{2(1-\alpha_{t})}\big)\mathrm{d}x_{t-1}}{\exp\big(-\frac{1}{2}c_6d\log T\big)}\\
 & \qquad \leq \frac{\exp\big(-c_{3}d\log T\big)}{\exp\big(-\frac{1}{2}c_{6}d\log T\big)}\leq\exp\Big(-\frac{1}{4}c_{6}d\log T\Big), 
\end{align*}
provided that $c_3\geq 3c_6/4$. 
This means that
\begin{equation}
	1\geq \int_{x_{t-1}:(x_{t},x_{t-1})\in\mathcal{E}}p_{X_{t-1}\mymid X_{t}}(x_{t-1}\mymid x_{t})\mathrm{d}x_{t-1}\geq1-\exp\left(-\frac{1}{4}c_6d\log T\right).
	\label{eq:sanwidch-bound-135}
\end{equation}
Moreover, for any $(x_{t},x_{t-1})\in\mathcal{E}$, one has
\begin{align*}
\sqrt{\alpha_{t}}\big\| x_{t-1}-\mu_{t}^{\star}(x_t)\big\|_{2} & =\sqrt{\alpha_{t}}\bigg\| x_{t-1}-\widehat{x}_{t}-\frac{1-\alpha_{t}}{\sqrt{\alpha_{t}}(1-\overline{\alpha}_{t})}\mathbb{E}\Big[x_{t}-\sqrt{\overline{\alpha}_{t}}X_{0}\mid X_{t}=x_{t}\Big]\bigg\|_{2}\\
 & \geq\sqrt{\alpha_{t}}\big\| x_{t-1}-\widehat{x}_{t}\big\|_{2}-\frac{1-\alpha_{t}}{(1-\overline{\alpha}_{t})}\mathbb{E}\Big[\big\| x_{t}-\sqrt{\overline{\alpha}_{t}}X_{0}\big\|_{2}\mid X_{t}=x_{t}\Big]\\
 & \geq c_{3}\sqrt{\alpha_{t}}\cdot\sqrt{d(1-\alpha_{t})\log T}-\frac{1-\alpha_{t}}{1-\overline{\alpha}_{t}}6\overline{c}_{5}\sqrt{d(1-\overline{\alpha}_{t})\log T}\\
 & \geq\frac{1}{2}c_{3}\sqrt{d(1-\alpha_{t})\log T},
\end{align*}
where the first identity comes from the definition \eqref{eqn:nu-t-2}
of $\mu_{t}^{\star}(x_t)$, the penultimate line makes use of the result
\eqref{eq:E-xt-X0} in Lemma~\ref{lem:x0}, and the last inequality
is valid as long as $c_{3}$ is sufficiently large. 
This in turn allows one to derive
\begin{align}
\frac{1}{\big(2\pi\frac{1-\alpha_{t}}{\alpha_{t}}\big)^{d/2}}\int_{x_{t-1}:(x_{t},x_{t-1})\in\mathcal{E}}\exp\bigg(-\frac{\alpha_{t}\|x_{t-1}-\mu_{t}^{\star}(x_t)\|_{2}^{2}}{2(1-\alpha_{t})}\bigg)\mathrm{d}x_{t-1}\geq1-\exp\big(-c_{3}d\log T\big).
\label{eq:sanwidch-bound-1357}
\end{align}

In addition, by virtue of \eqref{eq:density-approx-condition-X-f0}, the integral in \eqref{eq:sanwidch-bound-135} can be respectively bounded from above and from below as follows:
\begin{align*}
	& \int_{x_{t-1}:(x_{t},x_{t-1})\in\mathcal{E}}p_{X_{t-1}\mymid X_{t}}(x_{t-1}\mymid x_{t})\mathrm{d}x_{t-1} \\
	& \qquad =\frac{f_{0}(x_{t})\int_{x_{t-1}:(x_{t},x_{t-1})\in\mathcal{E}}\exp\Big(-\frac{\alpha_{t}}{2(1-\alpha_{t})}\big\| x_{t-1}-\mu_{t}^{\star}(x_t)\big\|_{2}^{2}+\zeta_{t}(x_{t-1},x_{t})\Big)\mathrm{d}x_{t-1}}{\big(2\pi\frac{1-\alpha_{t}}{\alpha_{t}}\big)^{-d/2}\int_{x_{t-1}}\exp\Big(-\frac{\alpha_{t}}{2(1-\alpha_{t})}\big\| x_{t-1}-\mu_{t}^{\star}(x_t)\big\|_{2}^{2}\Big)\mathrm{d}x_{t-1}}\\
 & \qquad \leq\frac{f_{0}(x_{t})\int_{x_{t-1}:(x_{t},x_{t-1})\in\mathcal{E}}\exp\Big(-\frac{\alpha_{t}}{2(1-\alpha_{t})}\big\| x_{t-1}-\mu_{t}^{\star}(x_t)\big\|_{2}^{2}+\zeta_{t}(x_{t-1},x_{t})\Big)\mathrm{d}x_{t-1}}{\big(2\pi\frac{1-\alpha_{t}}{\alpha_{t}}\big)^{-d/2}\int_{x_{t-1}:(x_{t},x_{t-1})\in\mathcal{E}}\exp\Big(-\frac{\alpha_{t}}{2(1-\alpha_{t})}\big\| x_{t-1}-\mu_{t}^{\star}(x_t)\big\|_{2}^{2}\Big)\mathrm{d}x_{t-1}}\\
 & \qquad \lesssim\frac{f_{0}(x_{t})}{\big(2\pi\frac{1-\alpha_{t}}{\alpha_{t}}\big)^{-d/2}}\exp\Bigg\{ O\bigg(d^{2}\Big(\frac{1-\alpha_{t}}{\alpha_{t}-\overline{\alpha}_{t}}\Big)\log^{2}T\bigg)\Bigg\}
\end{align*}
and 
\begin{align*}
 & \int_{x_{t-1}:(x_{t},x_{t-1})\in\mathcal{E}}p_{X_{t-1}\mymid X_{t}}(x_{t-1}\mymid x_{t})\mathrm{d}x_{t-1}\\
 & \quad\geq\big(1-\exp\left(-c_3d\log T\right)\big)\frac{f_{0}(x_{t})\int_{x_{t-1}:(x_{t},x_{t-1})\in\mathcal{E}}\exp\Big(-\frac{\alpha_{t}}{2(1-\alpha_{t})}\big\| x_{t-1}-\mu_{t}^{\star}(x_t)\big\|_{2}^{2}+\zeta_{t}(x_{t-1},x_{t})\Big)\mathrm{d}x_{t-1}}{\big(2\pi\frac{1-\alpha_{t}}{\alpha_{t}}\big)^{-d/2}\int_{x_{t-1}:(x_{t},x_{t-1})\in\mathcal{E}}\exp\Big(-\frac{\alpha_{t}}{2(1-\alpha_{t})}\big\| x_{t-1}-\mu_{t}^{\star}(x_t)\big\|_{2}^{2}\Big)\mathrm{d}x_{t-1}}\\
 & \quad\geq\big(1-\exp\left(-c_3d\log T\right)\big)\frac{f_{0}(x_{t})}{\big(2\pi\frac{1-\alpha_{t}}{\alpha_{t}}\big)^{-d/2}}\exp\Bigg\{-O\bigg(d^{2}\Big(\frac{1-\alpha_{t}}{\alpha_{t}-\overline{\alpha}_{t}}\Big)\log^{2}T\bigg)\Bigg\}.
\end{align*}
These taken collectively with \eqref{eq:sanwidch-bound-135} allow one to demonstrate that
\[
\max\Bigg\{\frac{f_{0}(x_{t})}{\big(2\pi\frac{1-\alpha_{t}}{\alpha_{t}}\big)^{-d/2}},\frac{\big(2\pi\frac{1-\alpha_{t}}{\alpha_{t}}\big)^{-d/2}}{f_{0}(x_{t})}\Bigg\}
= \exp\Bigg\{ O\bigg(d^{2}\Big(\frac{1-\alpha_{t}}{\alpha_{t}-\overline{\alpha}_{t}}\Big)\log^{2}T\bigg)\Bigg\}=1+O\bigg(d^{2}\Big(\frac{1-\alpha_{t}}{\alpha_{t}-\overline{\alpha}_{t}}\Big)\log^{2}T\bigg),
\]
with the proviso that $d^{2}\big(\frac{1-\alpha_{t}}{\alpha_{t}-\overline{\alpha}_{t}}\big)\log^{2}T\lesssim 1$.

Combining this with \eqref{eq:density-approx-condition-X-f0} concludes the proof of Lemma~\ref{lem:sde}, as long as the two claims in \eqref{eq:two-claims-Jacobi-approx} are valid (to be justified in Appendix~\ref{sec:proof-claims-Jacobi-approx}).

\subsubsection{Proof of auxiliary claims \eqref{eq:two-claims-Jacobi-approx} in Lemma~\ref{lem:sde}}
\label{sec:proof-claims-Jacobi-approx}

\paragraph{Proof of relation~\eqref{eq:Jacobi-a}.} 
%
%
Recall from \eqref{eq:Jt-x-expression-ij-23} that
\begin{align*}
J_{t-1}(x)-I & =\frac{1}{1-\overline{\alpha}_{t-1}}\mathbb{E}\left[x-\sqrt{\overline{\alpha}_{t-1}}X_{0}\mid X_{t-1}=x\right]\Big(\mathbb{E}\left[x-\sqrt{\overline{\alpha}_{t-1}}X_{0}\mid X_{t-1}=x\right]\Big)^{\top}\\
 & \qquad-\frac{1}{1-\overline{\alpha}_{t-1}}\mathbb{E}\left[\big(x-\sqrt{\overline{\alpha}_{t-1}}X_{0}\big)\big(x-\sqrt{\overline{\alpha}_{t-1}}X_{0}\big)^{\top}\mid X_{t-1}=x\right].
\end{align*}
Recognizing that 
\begin{align*}
\left\Vert \mathbb{E}\big[ZZ^{\top}\big]-\mathbb{E}[Z]\mathbb{E}[Z]^{\top}\right\Vert  & =\left\Vert \mathbb{E}\Big[\big(Z-\mathbb{E}[Z]\big)\big(Z-\mathbb{E}[Z]\big)^{\top}\Big]\right\Vert \leq\left\Vert \mathbb{E}\big[ZZ^{\top}\big]\right\Vert 
  \leq\mathbb{E}\big[\left\Vert ZZ^{\top}\right\Vert \big]=\mathbb{E}\big[\|Z\|_{2}^{2}\big]
\end{align*}
for any random vector $Z$, we can readily obtain
\begin{align}
\big\| J_{t-1}(x)-I\big\| & \leq\frac{1}{1-\overline{\alpha}_{t-1}}\mathbb{E}\left[\big\| x-\sqrt{\overline{\alpha}_{t-1}}X_{0}\big\|_{2}^{2}\mid X_{t-1}=x\right].
	\label{eq:Jg-x-bound-temp}
\end{align}

When $(x_{t},x_{t-1})\in \mathcal{E}$, 
it follows from Lemma~\ref{lem:river} that 
\begin{equation}
	-\log p_{X_{t-1}}(x) \leq c_6 d \log T
\end{equation}
for any $x$ lying in the line segment connecting $x_{t-1}$ and $\widehat{x}_t$. 
With this result in place, 
taking \eqref{eq:Jg-x-bound-temp} together with the bound \eqref{eq:E2-xt-X0} in Lemma~\ref{lem:x0} immediately leads to
\begin{align*}
\big\| J_{t-1}(x)-I\big\| & \lesssim\frac{1}{1-\overline{\alpha}_{t-1}}\cdot\left\{ d\big(1-\overline{\alpha}_{t-1}\big)\log T\right\} \asymp d\log T
\end{align*}
for any $x$ lying within the line segment between $x_{t-1}$ and
$x_{t}/\sqrt{\alpha_{t}}$, as claimed.

\paragraph{Proof of relation~\eqref{eq:approx-t-bbb}.} 
To prove this result, we observe that
\begin{align*}
	& \left|\left(1-\sqrt{\alpha_{t}}\right)\frac{(x_{t-1}-\widehat{x}_{t})^{\top}g_t(x_t)}{1-\overline{\alpha}_{t}}\right|\leq\frac{1-\alpha_{t}}{1+\sqrt{\alpha_{t}}}\frac{\|x_{t-1}-\widehat{x}_{t}\|_{2}\mathbb{E}\big[\|x_{t}-\sqrt{\overline{\alpha}_{t}}X_{0}\|_{2}\mid X_{t}=x_{t}\big]}{1-\overline{\alpha}_{t}}\\
 & \qquad\lesssim\frac{\log T}{T}\cdot\sqrt{d(1-\alpha_{t})\log T}\cdot\sqrt{d(1-\overline{\alpha}_{t})\log T}\lesssim\frac{d\log^{2}T}{T^{3/2}}, 
\end{align*}
where the last line comes from Lemma~\ref{lem:x0} as well as the basic property \eqref{eqn:properties-alpha-proof-00} about the learning rates.

\paragraph{Proof of relation~\eqref{eq:approx-t-a}.} 
 To begin with, the triangle inequality together with the fact $\overline{\alpha}_t=\prod_{k=1}^t \alpha_k$ gives
\begin{align}
\notag & \bigg\|\frac{\int_{x_{0}}p_{X_{0}\mymid X_{t-1}}(x_{0}\mymid\widehat{x}_{t})(\widehat{x}_{t}-\sqrt{\overline{\alpha}_{t-1}}x_{0})\mathrm{d}x_{0}}{1-\overline{\alpha}_{t-1}}-\frac{\int_{x_{0}}p_{X_{0}\mymid X_{t}}(x_{0}\mymid x_{t})(x_{t}-\sqrt{\overline{\alpha}_{t}}x_{0})\mathrm{d}x_{0}}{1-\overline{\alpha}_{t}}\bigg\|_{2}\\
\notag & \le\bigg\|\frac{\int_{x_{0}}p_{X_{0}\mymid X_{t-1}}(x_{0}\mymid\widehat{x}_{t})(x_{t}-\sqrt{\overline{\alpha}_{t}}x_{0})\mathrm{d}x_{0}-\int_{x_{0}}p_{X_{0}\mymid X_{t}}(x_{0}\mymid x_{t})(x_{t}-\sqrt{\overline{\alpha}_{t}}x_{0})\mathrm{d}x_{0}}{\sqrt{\alpha_{t}}(1-\overline{\alpha}_{t-1})}\bigg\|_{2}\\
 & \qquad+\bigg\|\Big(\frac{1}{\sqrt{\alpha_{t}}(1-\overline{\alpha}_{t-1})}-\frac{1}{1-\overline{\alpha}_{t}}\Big)\int_{x_{0}}p_{X_{0}\mymid X_{t}}(x_{0}\mymid x_{t})(x_{t}-\sqrt{\overline{\alpha}_{t}}x_{0})\mathrm{d}x_{0}\bigg\|_{2}.\label{eqn:adagio}
\end{align}

Let us first consider the last term in \eqref{eqn:adagio}. 
According to Lemma~\ref{lem:x0}, given that $-\log p_{X_t}(x_t) \leq  \frac{1}{2}c_6 d\log T$ for some constant $c_6>0$,  one has 
\begin{align}
\int_{x_0}  p_{X_0\mymid X_{t}}(x_0\mymid x_t) \big\|x_t - \sqrt{\overline{\alpha}_{t}} x_0 \big\|_2\mathrm{d} x_0 
%
	%
	\lesssim & \sqrt{d(1 - \overline{\alpha}_t)\log T}. 
	\label{eqn:rondo}
\end{align}
This in turn reveals that
\begin{align}
 & \bigg\|\bigg(\frac{1}{\sqrt{\alpha_{t}}(1-\overline{\alpha}_{t-1})}-\frac{1}{1-\overline{\alpha}_{t}}\bigg)\int_{x_{0}}p_{X_{0}\mymid X_{t}}(x_{0}\mymid x_{t})(x_{t}-\sqrt{\overline{\alpha}_{t}}x_{0})\mathrm{d}x_{0}\bigg\|_{2}\nonumber\\
 & \qquad\lesssim\bigg|\frac{1}{\sqrt{\alpha_{t}}(1-\overline{\alpha}_{t-1})}-\frac{1}{1-\overline{\alpha}_{t}}\bigg|\cdot\sqrt{d(1-\overline{\alpha}_{t})\log T}\nonumber\\
 & \qquad\asymp\frac{(1-\sqrt{\alpha_{t}})(1+\overline{\alpha}_{t-1}\sqrt{\alpha_{t}})}{\sqrt{\alpha_{t}}(1-\overline{\alpha}_{t-1})(1-\overline{\alpha}_{t})}\sqrt{d(1-\overline{\alpha}_{t})\log T} \notag\\
 & \qquad\asymp\frac{1-\alpha_{t}}{(\alpha_{t}-\overline{\alpha}_{t})^{3/2}}\sqrt{d\log T},\label{eq:bound-last-term-135}
\end{align}
where the last inequality makes use of the properties \eqref{eqn:properties-alpha-proof}.

Next, we turn attention to the first term in \eqref{eqn:adagio}, which relies on the following claim. 
\begin{claim}
\label{lem:claim-pXt-1-pXt-middle}
Consider any point $x_t$ obeying $-\log p_{X_t}(x_t) \leq c_6 d\log T$ for some large constant $c_6>0$. 
One has
\begin{subequations}
\label{eq:claim-pXt-1-pXt-middle}
\begin{align}
	p_{X_{t-1}}(\widehat{x}_t) 
	&= \bigg(1 + O\Big(\frac{d(1-\alpha_t)\log T}{1-\overline{\alpha}_{t-1}}\Big)\bigg)p_{X_{t}}(x_t),
\end{align}
In addition, by defining the following set 
$$ \mathcal{E}_1 \coloneqq \big\{ x: \|x_t - \sqrt{\overline{\alpha}_{t}}x\|_2 \leq c_4 \sqrt{d(1-\overline{\alpha}_t)\log T} \big\}$$ 
for some large enough constant $c_4>0$, we have
\begin{align}
\frac{p_{X_{t-1}\mymid X_{0}}(\widehat{x}_{t}\mymid x_{0})}{p_{X_{t}\mymid X_{0}}(x_{t}\mymid x_{0})} & =1+O\bigg(\frac{d(1-\alpha_{t})\log T}{1-\overline{\alpha}_{t-1}}\bigg),\qquad
	&&\text{if }x_{0}\in\mathcal{E}_{1},\\
	\frac{p_{X_{t-1}\mymid X_{0}}(\widehat{x}_{t}\mymid x_{0})}{p_{X_{t}\mymid X_{0}}(x_{t}\mymid x_{0})} & \leq\exp\Bigg(\frac{16c_{1}\big\| x_{t}-\sqrt{\overline{\alpha}_{t}}x_{0}\big\|_{2}^{2}\log T}{(1-\overline{\alpha}_{t})T}\Bigg),\qquad&&\text{if }x_{0}\notin\mathcal{E}_{1}.
\end{align}
\end{subequations}
\end{claim}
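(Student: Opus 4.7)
The plan is to establish part (b) first by a direct Gaussian density computation, then derive part (a) by combining part (b) with the Bayes-style integral representation used in the proof of Lemma~\ref{lem:river}, splitting the integral over $\mathcal{E}_1$ and $\mathcal{E}_1^{\mathrm{c}}$ and invoking the tail bounds in Lemma~\ref{lem:x0}.

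For part (b), I would recall that $X_t \mid X_0 = x_0 \sim \mathcal{N}(\sqrt{\overline{\alpha}_t}x_0, (1-\overline{\alpha}_t)I_d)$ and $X_{t-1} \mid X_0 = x_0 \sim \mathcal{N}(\sqrt{\overline{\alpha}_{t-1}}x_0, (1-\overline{\alpha}_{t-1})I_d)$. Using $\widehat{x}_t = x_t/\sqrt{\alpha_t}$ together with $\alpha_t(1-\overline{\alpha}_{t-1}) = \alpha_t - \overline{\alpha}_t$ gives the algebraic identity $\|\widehat{x}_t - \sqrt{\overline{\alpha}_{t-1}}x_0\|_2^2/(1-\overline{\alpha}_{t-1}) = \|x_t - \sqrt{\overline{\alpha}_t}x_0\|_2^2/(\alpha_t - \overline{\alpha}_t)$, which turns the ratio of Gaussian densities into the clean form
\begin{align*}
\frac{p_{X_{t-1}\mymid X_{0}}(\widehat{x}_{t}\mymid x_{0})}{p_{X_{t}\mymid X_{0}}(x_{t}\mymid x_{0})}
= \Big(\tfrac{1-\overline{\alpha}_{t}}{1-\overline{\alpha}_{t-1}}\Big)^{d/2}
\exp\!\bigg(-\tfrac{(1-\alpha_{t})\|x_{t}-\sqrt{\overline{\alpha}_{t}}x_{0}\|_{2}^{2}}{2(1-\overline{\alpha}_{t})(\alpha_{t}-\overline{\alpha}_{t})}\bigg).
\end{align*}
When $x_0 \in \mathcal{E}_1$, the exponent is bounded in magnitude by $c_4^2 d(1-\alpha_t)\log T/[2(\alpha_t-\overline{\alpha}_t)] \lesssim d(1-\alpha_t)\log T/(1-\overline{\alpha}_{t-1})$ (using $\alpha_t \ge 1/2$ from \eqref{eqn:properties-alpha-proof-00}), and the pre-factor is $1 + O(d(1-\alpha_t)/(1-\overline{\alpha}_{t-1}))$ by Taylor expansion of $(1 + \overline{\alpha}_{t-1}(1-\alpha_t)/(1-\overline{\alpha}_{t-1}))^{d/2}$, yielding the first subclaim. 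For $x_0 \notin \mathcal{E}_1$, we simply drop the negative exponential factor (which is $\le 1$) and use \eqref{eqn:properties-alpha-proof-3} to get $(1-\overline{\alpha}_t)/(1-\overline{\alpha}_{t-1}) \le 1 + 4c_1\log T/T$, so the ratio is at most $\exp(2c_1 d\log T/T)$; in $\mathcal{E}_1^{\mathrm{c}}$ we have $\|x_t - \sqrt{\overline{\alpha}_t}x_0\|_2^2/(1-\overline{\alpha}_t) > c_4^2 d\log T \geq d/8$, which means $2c_1 d\log T/T \leq 16c_1 \|x_t-\sqrt{\overline{\alpha}_t}x_0\|_2^2\log T/[(1-\overline{\alpha}_t)T]$, giving the claimed crude bound.

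For part (a), I would start from the Bayes identity (same manipulation that appears at the top of Section~\ref{sec:proof-lem:river}):
\begin{align*}
\frac{p_{X_{t-1}}(\widehat{x}_{t})}{p_{X_{t}}(x_{t})}
= \int_{x_0} \frac{p_{X_{t-1}\mymid X_{0}}(\widehat{x}_{t}\mymid x_{0})}{p_{X_{t}\mymid X_{0}}(x_{t}\mymid x_{0})}\, p_{X_{0}\mymid X_{t}}(x_{0}\mymid x_{t})\,\mathrm{d} x_{0}.
\end{align*}
Split this integral into $\mathcal{E}_1$ and $\mathcal{E}_1^{\mathrm{c}}$. On $\mathcal{E}_1$, part (b) makes the integrand equal to $1 + O(d(1-\alpha_t)\log T/(1-\overline{\alpha}_{t-1}))$ uniformly, so this part contributes $1 + O(d(1-\alpha_t)\log T/(1-\overline{\alpha}_{t-1}))$ once we recall from Lemma~\ref{lem:x0} that $\mathbb{P}(X_0 \in \mathcal{E}_1^{\mathrm{c}} \mid X_t = x_t) \leq \exp(-c_5^2 d\log T)$. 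On $\mathcal{E}_1^{\mathrm{c}}$, apply the crude bound from part (b) and then integrate against $p_{X_0\mymid X_t}(\cdot \mymid x_t)$. Since $16c_1\log T/T$ is tiny, the multiplier $\exp(16c_1\|x_t-\sqrt{\overline{\alpha}_t}x_0\|_2^2\log T/[(1-\overline{\alpha}_t)T])$ is dwarfed by the Gaussian-type tail $\exp(-\|x_t-\sqrt{\overline{\alpha}_t}x_0\|_2^2/(25(1-\overline{\alpha}_t)))$ derived in the proof of Lemma~\ref{lem:x0}, so the product still decays super-polynomially, and the $\mathcal{E}_1^{\mathrm{c}}$ contribution is $O(T^{-c})$ for arbitrarily large $c$.

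The main obstacle I expect is bookkeeping in the $\mathcal{E}_1^{\mathrm{c}}$ region: ensuring that the crude multiplicative bound from part (b) does not overwhelm the Gaussian tail decay provided by Lemma~\ref{lem:x0}. This requires choosing $c_4$ large enough compared to the constants $c_1, c_5$ so that $16c_1/T$ is negligible against the quadratic decay rate $1/(25(1-\overline{\alpha}_t))$ of the conditional density, and it also requires verifying that the hypothesis $-\log p_{X_t}(x_t) \le c_6 d\log T$ is enough to invoke the conditional tail moments \eqref{eq:P-xt-X0-124}–\eqref{eq:E4-xt-X0} of Lemma~\ref{lem:x0}. Once those constants are chosen, both estimates collapse into the advertised bounds and the remaining algebra is routine Taylor expansion.
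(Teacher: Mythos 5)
Your proof is correct and follows essentially the same structure as the paper's: establish (b) by a direct Gaussian density calculation using the identity $\alpha_t(1-\overline{\alpha}_{t-1})=\alpha_t-\overline{\alpha}_t$ (the paper arrives at the same simplified exponent $-\frac{(1-\alpha_t)\|x_t-\sqrt{\overline{\alpha}_t}x_0\|_2^2}{2(1-\overline{\alpha}_t)(\alpha_t-\overline{\alpha}_t)}$), then use the Bayes identity from \eqref{eqn:rachmaninoff-mid} and split over $\mathcal{E}_1$ versus $\mathcal{E}_1^{\mathrm{c}}$ to prove (a). The only noticeable departure is in the $\mathcal{E}_1^{\mathrm{c}}$ part of (b): you drop the nonpositive exponent of the exact formula and bound the prefactor directly, whereas the paper invokes the looser upper bound from the proof of Lemma~\ref{lem:river}; your version is marginally cleaner but the constants and the final bound come out the same.
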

As an immediate consequence of this claim, one has
\begin{align}
\frac{p_{X_{0}\mymid X_{t-1}}(x_{0}\mymid\widehat{x}_{t})}{p_{X_{0}\mymid X_{t}}(x_{0}\mymid x_{t})} & =\frac{p_{X_{t-1}\mymid X_{0}}(\widehat{x}_{t}\mymid x_{0})}{p_{X_{t}\mymid X_{0}}(x_{t}\mymid x_{0})}\cdot\frac{p_{X_{t}}(x_{t})}{p_{X_{t-1}}(\widehat{x}_{t})}=\left(1+O\bigg(\frac{d(1-\alpha_{t})\log T}{1-\overline{\alpha}_{t-1}}\bigg)\right)\frac{p_{X_{t-1}\mymid X_{0}}(\widehat{x}_{t}\mymid x_{0})}{p_{X_{t}\mymid X_{0}}(x_{t}\mymid x_{0})} \notag\\
 & \begin{cases}
=1+O\Big(\frac{d(1-\alpha_{t})\log T}{1-\overline{\alpha}_{t-1}}\Big), & \text{if }x_{0}\in\mathcal{E}_{1},\\
\leq \exp\Big(\frac{16c_{1} \| x_{t}-\sqrt{\overline{\alpha}_{t}}x_{0}\|_{2}^{2}\log T}{(1-\overline{\alpha}_{t})T}\Big),\quad & \text{if }x_{0}\notin\mathcal{E}_{1}.
\end{cases}
\label{eq:ratio-12345}
\end{align}
In turn, this allows one to deduce that
\begin{align*}
\notag & \bigg\|\int_{x_{0}}p_{X_{0}\mymid X_{t-1}}(x_{0}\mymid\widehat{x}_{t})(x_{t}-\sqrt{\overline{\alpha}_{t}}x_{0})\mathrm{d}x_{0}-\int_{x_{0}}p_{X_{0}\mymid X_{t}}(x_{0}\mymid x_{t})(x_{t}-\sqrt{\overline{\alpha}_{t}}x_{0})\mathrm{d}x_{0}\bigg\|_{2}\\
\notag & \quad\le\int_{x_{0}\in\mathcal{E}_{1}}\big|p_{X_{0}\mymid X_{t-1}}(x_{0}\mymid\widehat{x}_{t})-p_{X_{0}\mymid X_{t}}(x_{0}\mymid x_{t})\big|\cdot\big\| x_{t}-\sqrt{\overline{\alpha}_{t}}x_{0}\big\|_{2}\mathrm{d}x_{0}\\
 & \qquad\qquad+\int_{x_{0}\notin\mathcal{E}_{1}}\big|p_{X_{0}\mymid X_{t-1}}(x_{0}\mymid\widehat{x}_{t})-p_{X_{0}\mymid X_{t}}(x_{0}\mymid x_{t})\big|\cdot\big\| x_{t}-\sqrt{\overline{\alpha}_{t}}x_{0}\big\|_{2}\mathrm{d}x_{0}\\
 & \quad\le\int_{x_{0}}O\bigg(\frac{d(1-\alpha_{t})\log T}{1-\overline{\alpha}_{t-1}}\bigg)p_{X_{0}\mymid X_{t}}(x_{0}\mymid x_{t})\big\| x_{t}-\sqrt{\overline{\alpha}_{t}}x_{0}\big\|_{2}\mathrm{d}x_{0}\\
 & \qquad\qquad+\int_{x_{0}\notin\mathcal{E}_{1}}\exp\Bigg(\frac{16c_{1}\big\| x_{t}-\sqrt{\overline{\alpha}_{t}}x_{0}\big\|_{2}^{2}\log T}{(1-\overline{\alpha}_{t})T}\Bigg)p_{X_{0}\mymid X_{t}}(x_{0}\mymid x_{t})\big\| x_{t}-\sqrt{\overline{\alpha}_{t}}x_{0}\big\|_{2}\mathrm{d}x_{0}\\
 & \quad\le O\bigg(\frac{d(1-\alpha_{t})\log T}{1-\overline{\alpha}_{t-1}}\bigg)\mathbb{E}\left[\big\| x_{t}-\sqrt{\overline{\alpha}_{t}}X_{0}\big\|_{2}\mid X_{t}=x_{0}\right]\\
 & \qquad\qquad+\int_{x_{0}\notin\mathcal{E}_{1}}\exp\Bigg(\frac{20c_{1}\big\| x_{t}-\sqrt{\overline{\alpha}_{t}}x_{0}\big\|_{2}^{2}\log T}{(1-\overline{\alpha}_{t})T}\Bigg)p_{X_{0}\mymid X_{t}}(x_{0}\mymid x_{t})\mathrm{d}x_{0}.
\end{align*}
Invoking \eqref{eq:E-xt-X0} in Lemma~\ref{lem:x0} as well as
similar arugments for \eqref{eqn:rachmaninoff-2} (assuming that $c_{4}$
is sufficiently large), we arrive at
\begin{align}
\notag & \frac{1}{\sqrt{\alpha_{t}}(1-\overline{\alpha}_{t-1})}\bigg\|\int_{x_{0}}p_{X_{0}\mymid X_{t-1}}(x_{0}\mymid\widehat{x}_{t})(x_{t}-\sqrt{\overline{\alpha}_{t}}x_{0})\mathrm{d}x_{0}-\int_{x_{0}}p_{X_{0}\mymid X_{t}}(x_{0}\mymid x_{t})(x_{t}-\sqrt{\overline{\alpha}_{t}}x_{0})\mathrm{d}x_{0}\bigg\|_{2}\\
 & \quad\quad\lesssim\frac{d^{3/2}(1-\alpha_{t})\sqrt{1-\overline{\alpha}_{t}}\log^{3/2}T}{\sqrt{\alpha_{t}}(1-\overline{\alpha}_{t-1})^{2}}+\frac{1}{\sqrt{\alpha_{t}}(1-\overline{\alpha}_{t-1})}\cdot\frac{1}{T^{c_{0}}} \notag\\
 & \quad\quad\asymp\frac{d^{3/2}(1-\alpha_{t})\log^{3/2}T}{(1-\overline{\alpha}_{t-1})^{3/2}},
	\label{eq:case2-1359}
\end{align}
where $c_0$ is some large enough constant, and we have also made use of the properties in \eqref{eqn:properties-alpha-proof}.

Substituting \eqref{eq:bound-last-term-135} and \eqref{eq:case2-1359} into \eqref{eqn:adagio} readily concludes the proof.


\begin{proof}[Proof of Claim~\ref{lem:claim-pXt-1-pXt-middle}]
%
%
%
Let us make the following observations: for any $x_0\in \mathcal{E}_1$, one has
\begin{align}
\label{eqn:lotus}
\notag \frac{p_{X_{t-1}\mymid X_{0}}(\widehat{x}_{t}\mymid x_{0})}{p_{X_{t}\mymid X_{0}}(x_{t}\mymid x_{0})} & =\bigg(\frac{1-\overline{\alpha}_{t}}{1-\overline{\alpha}_{t-1}}\bigg)^{d/2}\exp\bigg(\frac{\big\| x_{t}-\sqrt{\overline{\alpha}_{t}}x_{0}\big\|_{2}^{2}}{2(1-\overline{\alpha}_{t})}-\frac{\big\|\widehat{x}_{t}-\sqrt{\overline{\alpha}_{t-1}}x_{0}\big\|_{2}^{2}}{2(1-\overline{\alpha}_{t-1})}\bigg)\\
\notag & =\exp\Bigg\{\big(1+o(1)\big)\frac{\overline{\alpha}_{t-1}(1-\alpha_{t})}{1-\overline{\alpha}_{t-1}}\cdot\frac{d}{2}\Bigg\}\exp\bigg(-\frac{(1-\alpha_{t})\big\| x_{t}-\sqrt{\overline{\alpha}_{t}}x_{0}\big\|_{2}^{2}}{2(1-\overline{\alpha}_{t})(\alpha_{t}-\overline{\alpha}_{t})}\bigg)\\
\notag & =\exp\Bigg\{ O\bigg(\frac{d(1-\alpha_{t})}{1-\overline{\alpha}_{t-1}}+\frac{(1-\alpha_{t})\big\| x_{t}-\sqrt{\overline{\alpha}_{t}}x_{0}\big\|_{2}^{2}}{(1-\overline{\alpha}_{t-1})(1-\overline{\alpha}_{t})}\bigg)\Bigg\}\\
\notag & =\exp\Bigg\{ O\bigg(\frac{d(1-\alpha_{t})}{1-\overline{\alpha}_{t-1}}+\frac{d(1-\alpha_{t})\log T}{1-\overline{\alpha}_{t-1}}\bigg)\Bigg\}\\
 & =1+O\bigg(\frac{d(1-\alpha_{t})\log T}{1-\overline{\alpha}_{t-1}}\bigg), 
\end{align}
where the second line holds since 
\[
\log\frac{1-\overline{\alpha}_{t}}{1-\overline{\alpha}_{t-1}}=\log\bigg(1+\frac{\overline{\alpha}_{t-1}(1-\alpha_{t})}{1-\overline{\alpha}_{t-1}}\bigg)=\big(1+o(1)\big)\frac{\overline{\alpha}_{t-1}(1-\alpha_{t})}{1-\overline{\alpha}_{t-1}}, 
\]
and we have also made use of \eqref{eqn:properties-alpha-proof} (so that $\frac{d(1-\alpha_{t})\log T}{1-\overline{\alpha}_{t-1}}=o(1)$ under our assumption on $T$). 
Additionally, for any $x_0 \notin \mathcal{E}_1$, it follows from the argument for \eqref{eq:density-ratio-eqn-river} that
\begin{align*}
\frac{p_{X_{t-1}\mymid X_{0}}(\widehat{x}_{t}\mymid x_{0})}{p_{X_{t}\mymid X_{0}}(x_{t}\mymid x_{0})} & \leq\exp\Bigg(\frac{d(1-\alpha_{t})}{2(1-\overline{\alpha}_{t-1})}+\frac{(1-\alpha_{t})\big\| x_{t}-\sqrt{\overline{\alpha}_{t}}x_{0}\big\|_{2}^{2}}{(1-\overline{\alpha}_{t-1})^{2}}\Bigg)\\
 & \leq\exp\Bigg(\frac{(1-\alpha_{t})\big\| x_{t}-\sqrt{\overline{\alpha}_{t}}x_{0}\big\|_{2}^{2}}{2c_{4}^{2}(1-\overline{\alpha}_{t-1})^{2}\log T}+\frac{(1-\alpha_{t})\big\| x_{t}-\sqrt{\overline{\alpha}_{t}}x_{0}\big\|_{2}^{2}}{(1-\overline{\alpha}_{t-1})^{2}}\Bigg)\\
 & \leq\exp\Bigg(\frac{8c_{1}\big\| x_{t}-\sqrt{\overline{\alpha}_{t}}x_{0}\big\|_{2}^{2}\log T}{(1-\overline{\alpha}_{t-1})T}\Bigg)
	\leq\exp\Bigg(\frac{16c_{1}\big\| x_{t}-\sqrt{\overline{\alpha}_{t}}x_{0}\big\|_{2}^{2}\log T}{(1-\overline{\alpha}_{t})T}\Bigg), 
\end{align*}
where the last line comes from \eqref{eqn:properties-alpha-proof-1} and \eqref{eqn:properties-alpha-proof-3}.  
Taking the above bounds together and invoking the same calculation as in \eqref{eqn:rachmaninoff}, \eqref{eqn:rachmaninoff-2} and \eqref{eqn:prokofiev-2}, we reach
\begin{align*}
	p_{X_{t-1}}(\widehat{x}_t) 
&= \bigg(1 + O\Big(\frac{d(1-\alpha_t)\log T}{1-\overline{\alpha}_{t-1}}\Big)\bigg)p_{X_{t}}(x_t), 
\end{align*}
thus concluding the proof of  Claim~\ref{lem:claim-pXt-1-pXt-middle}. 
\end{proof}




\subsection{Proof of Lemma~\ref{lem:sde-full}}
\label{sec:proof-lem:sde-full}


First, recognizing that $X_t$ follows a Gaussian distribution when conditioned on $X_{t-1}=x_{t-1}$, we can derive
\begin{align*}
\log p_{X_{t-1}\mymid X_{t}}(x_{t-1}\mymid x_{t}) & =\log\frac{p_{X_{t}\mymid X_{t-1}}(x_{t}\mymid x_{t-1})p_{X_{t-1}}(x_{t-1})}{p_{X_{t}}(x_{t})}\notag\\
 & =\log\frac{p_{X_{t-1}}(x_{t-1})}{p_{X_{t}}(x_{t})}+\frac{\|x_{t}-\sqrt{{\alpha_{t}}}x_{t-1}\|_{2}^{2}}{2(1-\alpha_{t})}-\frac{d}{2}\log\big(2\pi(1-\alpha_{t})\big)\\
 & \leq\log\frac{p_{X_{t-1}}(x_{t-1})}{p_{X_{t}}(x_{t})}+T\big(\|x_{t-1}-\widehat{x}_{t}\|_{2}^{2}+1\big),
\end{align*}
where the last inequality makes use of the properties \eqref{eqn:properties-alpha-proof} about $\alpha_t$ 
(recall that $\widehat{x}_{t}$ has been defined in \eqref{eq:defn-xt-proof-thm3}). 
Some direct calculations then yield 
\begin{align*}
\log\frac{p_{X_{t-1}}(x_{t-1})}{p_{X_{t}}(x_{t})} & =\log\frac{\int_{x_{0}}p_{X_{0}}(x_{0})\exp\Big(-\frac{\|x_{t-1}-\sqrt{\overline{\alpha}_{t-1}}x_{0}\|_{2}^{2}}{2(1-\overline{\alpha}_{t-1})}\Big)\mathrm{d}x_{0}}{\int_{x_{0}}p_{X_{0}}(x_{0})\exp\Big(-\frac{\|x_{t}-\sqrt{\overline{\alpha}_{t}}x_{0}\|_{2}^{2}}{2(1-\overline{\alpha}_{t})}\Big)\mathrm{d}x_{0}}-\frac{d}{2}\log\Big(\frac{1-\overline{\alpha}_{t-1}}{1-\overline{\alpha}_{t}}\Big)\\
 & \leq\sup_{x_{0}:\|x_{0}\|_{2}\leq T^{c_{R}}}\Bigg\{\frac{\|x_{t}-\sqrt{\overline{\alpha}_{t}}x_{0}\|_{2}^{2}}{2(1-\overline{\alpha}_{t})}-\frac{\|x_{t-1}-\sqrt{\overline{\alpha}_{t-1}}x_{0}\|_{2}^{2}}{2(1-\overline{\alpha}_{t-1})}\Bigg\}+\frac{d}{2}\log2\\
 & \leq\sup_{x_{0}:\|x_{0}\|_{2}\leq T^{c_{R}}}\frac{\|x_{t}-\sqrt{\overline{\alpha}_{t}}x_{0}\|_{2}^{2}}{2(1-\overline{\alpha}_{t})}+\frac{d}{2}\log2\\
 & \leq\sup_{x_{0}:\|x_{0}\|_{2}\leq T^{c_{R}}}\frac{\|x_{t}\|_{2}^{2}+\|x_{0}\|_{2}^{2}}{1-\overline{\alpha}_{t}}+\frac{d}{2}\log2\\
 & \leq2T\left(\|x_{t}\|_{2}^{2}+T^{2c_{R}}\right),
\end{align*}
where the second line makes use of the properties in \eqref{eqn:properties-alpha-proof}. 
Combining the above two relations, we arrive at 
\begin{align}
\label{eqn:x-condi}
\log p_{X_{t-1} \mymid X_t}(x_{t-1}\mymid x_t) 
&\leq 2T \big( \| x_t\|_2^2 + \|x_{t-1} - \widehat{x}_t\|_2^2 + T^{2c_{R}} \big) .
\end{align}

We then turn attention to the conditional distribution $p_{Y^{\star}_{t-1} \mymid Y_t}$. 
First, it follows from \eqref{eq:dist-Ytstar} that
%
%
%
\begin{align}
\label{eqn:y-condi}
\log\frac{1}{p_{Y_{t-1}^{\star}\mymid Y_{t}}(x_{t-1}\mymid x_{t})} & =\frac{\alpha_{t}\|x_{t-1}-\mu_{t}^{\star}(x_t)\|_{2}^{2}}{2(1-\alpha_{t})}+\frac{d}{2}\log\bigg(2\pi\frac{1-\alpha_{t}}{\alpha_{t}}\bigg)\\
 & \leq T\left(\|x_{t-1}-\widehat{x}_{t}\|_{2}^{2}+\|x_{t}\|_{2}^{2}+T^{2c_{R}}\right)+d\log T,  
\end{align}
with $\mu_t(x_t)$ defined in \eqref{eqn:nu-t-2}. 
To see why the last inequality follows, we recall from \eqref{eqn:nu-t-2} that
\begin{align}
\|x_{t-1}-\mu_{t}^{\star}(x_t)\|_{2}^{2} & \leq2\|x_{t-1}-\widehat{x}_{t}\|_{2}^{2}+2\|\widehat{x}_{t}-\mu_{t}^{\star}(x_t)\|_{2}^{2} \notag\\
 & =2\|x_{t-1}-\widehat{x}_{t}\|_{2}^{2}+2\bigg(\frac{1-\alpha_{t}}{\sqrt{\alpha_{t}}(1-\overline{\alpha}_{t})}\bigg)^{2}\Big\|\int_{x_{0}}p_{X_{0}\mymid X_{t}}(x_{0}\mymid x_{t})\big(x_{t}-\sqrt{\overline{\alpha}_{t}}x_{0}\big)\mathrm{d}x_{0}\Big\|_{2}^{2}\notag\\
 & \leq2\|x_{t-1}-\widehat{x}_{t}\|_{2}^{2}+\frac{2(1-\alpha_{t})^{2}}{\alpha_{t}(1-\overline{\alpha}_{t-1})^{2}}\sup_{x_{0}:\|x_{0}\|_{2}\leq T^{c_{R}}}\|x_{t}-\sqrt{\overline{\alpha}_{t}}x_{0}\|_{2}^{2}\notag\\
 & \leq2\|x_{t-1}-\widehat{x}_{t}\|_{2}^{2}+\frac{64c_{1}^{2}\log^{2}T}{T^{2}}\bigg(2\|x_{t}\|_{2}^{2}+2\overline{\alpha}_{t}T^{2c_{R}}\bigg)\notag\\
 & \leq2\|x_{t-1}-\widehat{x}_{t}\|_{2}^{2}+\|x_{t}\|_{2}^{2}+T^{2c_{R}},
	\label{eq:dist-xt-mu-xt-UB-crude}
\end{align}
where we have invoked the properties \eqref{eqn:properties-alpha-proof}.

Putting everything together, we arrive at the advertised crude bound: 
\[
\log\frac{p_{X_{t-1}\mymid X_{t}}(x_{t-1}\mymid x_{t})}{p_{Y^{\star}_{t-1}\mymid Y_{t}}(x_{t-1}\mymid x_{t})}\leq2T\left(\|x_{t-1}-\widehat{x}_{t}\|_{2}^{2}+\|x_{t}\|_{2}^{2}+T^{2c_{R}}\right).
\]

\subsection{Proof of Lemma~\ref{lem:influence-error-KL}} 
\label{sec:proof-lem:influence-error-KL}

To begin with, we make the observation that
\begin{align}
 & \mathbb{E}_{x_{t}\sim q_{t}}\Big[\mathsf{KL}\Big(p_{X_{t-1}\mymid X_{t}}(\cdot\mymid x_{t})\parallel p_{Y_{t-1}\mymid Y_{t}}(\cdot\mymid x_{t})\Big)\Big] - \mathbb{E}_{x_{t}\sim q_{t}}\Big[\mathsf{KL}\Big(p_{X_{t-1}\mymid X_{t}}(\cdot\mymid x_{t})\parallel p_{Y_{t-1}^{\star}\mymid Y_{t}}(\cdot\mymid x_{t})\Big)\Big] \notag\\
 & = \int p_{X_{t}}(x_{t})p_{X_{t-1}\mymid X_{t}}(x_{t-1}\mymid x_{t})\log\frac{p_{Y_{t-1}^{\star}\mymid Y_{t}}(x_{t-1}\mymid x_{t})}{p_{Y_{t-1}\mymid Y_{t}}(x_{t-1}\mymid x_{t})}\mathrm{d}x_{t-1}\mathrm{d}x_{t} \notag\\
 & = \int p_{X_{t}}(x_{t})p_{X_{t-1}\mymid X_{t}}(x_{t-1}\mymid x_{t})\frac{\alpha_{t}}{2(1-\alpha_{t})}\Big(\big\| x_{t-1}-\mu_{t}(x_t)\big\|_{2}^{2} - \big\| x_{t-1}-\mu_{t}^{\star}(x_t)\big\|_{2}^{2}\Big)\mathrm{d}x_{t-1}\mathrm{d}x_{t} \notag\\
	& = \underset{\eqqcolon\, \mathcal{H}_t}{\underbrace{ \int p_{X_{t}}(x_{t})p_{X_{t-1}\mymid X_{t}}(x_{t-1}\mymid x_{t})\bigg(\frac{1-\alpha_t}{2}\varepsilon_{\score, t}(x_t)^2 - \sqrt{\alpha_t}\big(x_{t-1} - \mu_{t}^{\star}(x_t)\big)^{\top}\big(s_t(x) - s_t^{\star}(x_t)\big)\bigg)\mathrm{d}x_{t-1}\mathrm{d}x_{t} }} , \label{eqn:tmp-sde-brahms} 
\end{align}
where we have used \eqref{eq:dist-Yt-Yt-1} and \eqref{eq:dist-Ytstar} as well as the definition \eqref{eq:pointwise-epsilon-score-J} of $\varepsilon_{\score, t}(x)$. 
%
Everything comes down to proving that
\begin{align}
	\mathcal{H}_t
	\lesssim \exp\Big(- \frac{\min\{c_{3},c_{6}\}}{8} d\log T \Big) + \frac{d\log^3 T}{T}\mathbb{E}_{X_t\sim q_t}\big[\varepsilon_{\score, t}(X_t)^2\big]. 
	\label{eq:H-upper-bound-DDPM}
\end{align}

Towards this end, one first notes that the first component of $\mathcal{H}_t$ obeys 
\begin{align}
	\int p_{X_{t}}(x_{t})p_{X_{t-1}\mymid X_{t}}(x_{t-1}\mymid x_{t})\frac{1-\alpha_{t}}{2}\varepsilon_{\score,t}(x_{t})^{2}\mathrm{d}x_{t-1}\mathrm{d}x_{t} & =\frac{1-\alpha_{t}}{2}\mathbb{E}_{X_{t}\sim q_{t}}\big[\varepsilon_{\score,t}(X_{t})^{2}\big] \notag\\
 & \lesssim\frac{\log T}{T}\mathbb{E}_{X_{t}\sim q_{t}}\big[\varepsilon_{\score,t}(X_{t})^{2}\big],
	\label{eq:Ht-first-component-bound}
\end{align}
where the last inequality comes from \eqref{eqn:properties-alpha-proof}. 
We then switch attention to the second component of $\mathcal{H}_t$. 
In view of the distribution of $Y_{t-1}^{\star}\mymid Y_{t}$ in \eqref{eq:dist-Ytstar}, we obtain 
\begin{align}
 & \int p_{X_{t}}(x_{t})p_{X_{t-1}\mymid X_{t}}(x_{t-1}\mymid x_{t})\big(x_{t-1}-\mu_{t}^{\star}(x_{t})\big)^{\top}\big(s_{t}(x_{t})-s_{t}^{\star}(x_{t})\big)\mathrm{d}x_{t-1}\mathrm{d}x_{t}\notag\\
 & =\int p_{X_{t}}(x_{t})\big(p_{X_{t-1}\mymid X_{t}}(x_{t-1}\mymid x_{t})-p_{Y_{t-1}^{\star}\mymid Y_{t}}(x_{t-1}\mymid x_{t})\big)\big(x_{t-1}-\mu_{t}^{\star}(x_{t})\big)^{\top}\big(s_{t}(x_{t})-s_{t}^{\star}(x_{t})\big)\mathrm{d}x_{t-1}\mathrm{d}x_{t}\notag\\
 & \quad+\int_{x_{t}}p_{X_{t}}(x_{t})\left\{ \int_{x_{t-1}}p_{Y_{t-1}^{\star}\mymid Y_{t}}\big(x_{t-1}\mymid x_{t}\big)\big(x_{t-1}-\mu_{t}^{\star}(x_{t})\big)^{\top}\mathrm{d}x_{t-1}\right\} \big(s_{t}(x_{t})-s_{t}^{\star}(x_{t})\big)\mathrm{d}x_{t}\notag\\
 & \stackrel{(\text{i})}{=} \int p_{X_{t}}(x_{t})p_{X_{t-1}\mymid X_{t}}(x_{t-1}\mymid x_{t})\bigg(1-\frac{p_{Y_{t-1}^{\star}\mymid Y_{t}}(x_{t-1}\mymid x_{t})}{p_{X_{t-1}\mymid X_{t}}(x_{t-1}\mymid x_{t})}\bigg)\big(x_{t-1}-\mu_{t}^{\star}(x_{t})\big)^{\top}\big(s_{t}(x_{t})-s_{t}^{\star}(x_{t})\big)\mathrm{d}x_{t-1}\mathrm{d}x_{t}\notag\\
 & \leq\underset{\eqqcolon\,\mathcal{K}_{1}}{\underbrace{\int_{\mathcal{E}}p_{X_{t}}(x_{t})p_{X_{t-1}\mymid X_{t}}(x_{t-1}\mymid x_{t})\bigg(1-\frac{p_{Y_{t-1}^{\star}\mymid Y_{t}}(x_{t-1}\mymid x_{t})}{p_{X_{t-1}\mymid X_{t}}(x_{t-1}\mymid x_{t})}\bigg)\big(x_{t-1}-\mu_{t}^{\star}(x_{t})\big)^{\top}\big(s_{t}(x_{t})-s_{t}^{\star}(x_{t})\big)\mathrm{d}x_{t-1}\mathrm{d}x_{t}}}\notag\\
 & \quad+\underset{\eqqcolon\,\mathcal{K}_{2}}{\underbrace{\int_{\mathcal{E}^{\mathrm{c}}}p_{X_{t}}(x_{t})\left\{ p_{X_{t-1}\mymid X_{t}}(x_{t-1}\mymid x_{t})+p_{Y_{t-1}^{\star}\mymid Y_{t}}(x_{t-1}\mymid x_{t})\right\} \big\| x_{t-1}-\mu_{t}^{\star}(x_{t})\big\|_{2}\big\| s_{t}(x_{t})-s_{t}^{\star}(x_{t})\big\|_{2}\mathrm{d}x_{t-1}\mathrm{d}x_{t}}}, 
	\label{eq:int-Xt-Xt-1-792}
\end{align}
where $(\text{i})$ holds since $\mathbb{E}\big[Y_{t-1}^{\star}-\mu_{t}^{\star}(Y_t)\mid Y_t\big]=0$. 
This leaves us with two terms to cope with. 
\begin{itemize}
	\item
Regarding the term $\mathcal{K}_1$, we can bound
\begin{align}
	\mathcal{K}_1
	&\overset{\mathrm{(i)}}{\lesssim} \frac{d^2\log^3 T}{T}\int_{\mathcal{E}} p_{X_{t}}(x_{t})p_{X_{t-1}\mymid X_{t}}(x_{t-1}\mymid x_{t}) \big\|x_{t-1} - \mu_{t}^{\star}(x_t)\big\|_2\varepsilon_{\score, t}(x_t)\mathrm{d}x_{t-1}\mathrm{d}x_{t} 
	\notag\\
&\overset{\mathrm{(ii)}}{\lesssim} \frac{d^3\log^3 T}{T}\int_{\mathcal{E}} p_{X_{t}}(x_{t})p_{X_{t-1}\mymid X_{t}}(x_{t-1}\mymid x_{t})
\big\|x_{t-1} - \mu_{t}^{\star}(x_t)\big\|_2^2\mathrm{d}x_{t-1}\mathrm{d}x_{t} 
	+ \frac{d\log^3 T}{T}
\mathbb{E}_{X_t\sim q_t} \big[\varepsilon^2_{\score, t}(X_t)\big] 
	\notag\\
&= \frac{d^3\log^3 T}{T}\int_{\mathcal{E}} p_{X_{t}}(x_{t})\frac{p_{X_{t-1}\mymid X_{t}}(x_{t-1}\mymid x_{t})}{p_{Y_{t-1}^{\star}\mymid Y_{t}}(x_{t-1}\mymid x_{t})}p_{Y_{t-1}^{\star}\mymid Y_{t}}(x_{t-1}\mymid x_{t})
\big\|x_{t-1} - \mu_{t}^{\star}(x_t)\big\|_2^2\mathrm{d}x_{t-1}\mathrm{d}x_{t} \notag\\
&\qquad\qquad + 
 \frac{d\log^3 T}{T}
\mathbb{E}_{X_t\sim q_t}\big[\varepsilon^2_{\score, t}(X_t)\big] 
	\notag\\
	&\overset{\mathrm{(iii)}}{\lesssim} \frac{d^3\log^3 T}{T}\bigg(1+O\Big(\frac{d^2\log^3 T}{T}\Big)\bigg)\frac{d(1 - \alpha_t)}{\alpha_t} + 
	\frac{d\log^3 T}{T} \mathbb{E}_{X_t\sim q_t}\big[\varepsilon^2_{\score, t}(X_t)\big] + \exp\Big(- \frac{c_3}{4} d\log T \Big) \notag\\
	&\overset{\mathrm{(iv)}}{\lesssim} \frac{d^{4}\log^{4}T}{T^{2}} + \frac{d\log^3 T}{T}\mathbb{E}_{X_t\sim q_t}\big[\varepsilon_{\score, t}(X_t)^2\big]. 
	\label{eq:final-bound-K1}
\end{align}
Here, (i) comes from \eqref{eq:ratio-p-X-Y-range}, 
(ii) is due to the elementary inequality $2ab\leq a^2+b^2$, 
(iii) invokes the relation~\eqref{eq:ratio-p-X-Y-range} as well as the Gaussian distribution of $Y_{t-1}^{\star}\mid Y_t$ in \eqref{eq:dist-Ytstar}, 
whereas (iv) applies \eqref{eqn:properties-alpha-proof}. 

	\item With regards to the term $\mathcal{K}_2$, 
	it follows from the Cauchy-Schwarz inequality that
\[
\mathcal{K}_{2}\lesssim\sqrt{\mathcal{K}_{3}\mathcal{K}_{4}},
\]
where
\begin{subequations}
\label{eq:defn-J3-J4-DDPM}
\begin{align}
\mathcal{K}_{3} & \coloneqq\int_{\mathcal{E}^{\mathrm{c}}}p_{X_{t}}(x_{t})\left\{ p_{X_{t-1}\mymid X_{t}}(x_{t-1}\mymid x_{t})+p_{Y_{t-1}^{\star}\mymid Y_{t}}(x_{t-1}\mymid x_{t})\right\} \big\| x_{t-1}-\mu_{t}^{\star}(x_{t})\big\|_{2}^{2}\mathrm{d}x_{t-1}\mathrm{d}x_{t};
	\label{eq:defn-J3-J4-DDPM-K3}\\
\mathcal{K}_{4} & \coloneqq\int_{\mathcal{E}^{\mathrm{c}}}p_{X_{t}}(x_{t})\left\{ p_{X_{t-1}\mymid X_{t}}(x_{t-1}\mymid x_{t})+p_{Y_{t-1}^{\star}\mymid Y_{t}}(x_{t-1}\mymid x_{t})\right\} \big\| s_{t}(x_{t})-s_{t}^{\star}(x_{t})\big\|_{2}^{2}\mathrm{d}x_{t-1}\mathrm{d}x_{t} \notag\\
 & \leq2\int p_{X_{t}}(x_{t})\big\| s_{t}(x_{t})-s_{t}^{\star}(x_{t})\big\|_{2}^{2}\mathrm{d}x_{t}=2\mathbb{E}_{X_{t}\sim q_{t}}\left[\varepsilon_{\score,t}(X_{t})^{2}\right].
\end{align}
\end{subequations}
Consequently, it suffices to look at the term $\mathcal{K}_3$. 
According to \eqref{eq:x-tminus1-mu-2norm}, we have 
\begin{align*}
\big\|x_{t-1}-\mu_{t}^{\star}(x_{t})\big\|_{2}^{2} & \lesssim\bigg\| x_{t-1}-\frac{1}{\sqrt{\alpha_{t}}}x_{t}\bigg\|_{2}^{2}+\left(\frac{1-\alpha_{t}}{\sqrt{\alpha_{t}}(1-\overline{\alpha}_{t})}\right)^{2}\left(\mathbb{E}\Big[\big\| x_{t}-\sqrt{\overline{\alpha}_{t}}X_{0}\big\|_{2}\mid X_{t}=x_{t}\Big]\right)^{2}\\
 & \lesssim\bigg\| x_{t-1}-\frac{1}{\sqrt{\alpha_{t}}}x_{t}\bigg\|_{2}^{2}+\frac{\log^{2}T}{T^{2}}\left(\|x_{t}\|_{2}^{2}+T^{2c_{R}}\right),
\end{align*}
where the last inequality makes use of \eqref{eqn:properties-alpha-proof}
and the assumption~\eqref{eq:assumption-data-bounded}. 
Combining this with \eqref{eq:P-Xt-X-t-1-not-E-246} gives
\begin{align*}
 & \int_{\mathcal{E}^{\mathrm{c}}}p_{X_{t}}(x_{t})p_{X_{t-1}\mymid X_{t}}(x_{t-1}\mymid x_{t})\big\| x_{t-1}-\mu_{t}^{\star}(x_{t})\big\|_{2}^{2}\mathrm{d}x_{t-1}\mathrm{d}x_{t}\\
 & \quad\lesssim\int_{\mathcal{E}^{\mathrm{c}}}p_{X_{t}}(x_{t})p_{X_{t-1}\mymid X_{t}}(x_{t-1}\mymid x_{t})\bigg\| x_{t-1}-\frac{1}{\sqrt{\alpha_{t}}}x_{t}\bigg\|_{2}^{2}\mathrm{d}x_{t-1}\mathrm{d}x_{t}\\
 & \qquad\qquad+\frac{\log^{2}T}{T^{2}}\int_{\mathcal{E}^{\mathrm{c}}}p_{X_{t}}(x_{t})p_{X_{t-1}\mymid X_{t}}(x_{t-1}\mymid x_{t})\left(\|x_{t}\|_{2}^{2}+T^{2c_{R}}\right)\mathrm{d}x_{t-1}\mathrm{d}x_{t}\\
 & \quad\lesssim\exp\left(-c_{3}d\log T\right).
\end{align*}
Additionally, let us decompose
\begin{align*}
 & \int_{\mathcal{E}^{\mathrm{c}}}p_{X_{t}}(x_{t})p_{Y_{t-1}^{\star}\mymid Y_{t}}(x_{t-1}\mymid x_{t})\big\| x_{t-1}-\mu_{t}^{\star}(x_{t})\big\|_{2}^{2}\mathrm{d}x_{t-1}\mathrm{d}x_{t}\\
 & \leq\underset{\eqqcolon\,\mathcal{K}_{5}}{\underbrace{\int_{(x_{t},x_{t-1}):p_{X_{t}}(x_{t})<e^{-0.5c_{6}d\log T}}p_{X_{t}}(x_{t})p_{Y_{t-1}^{\star}\mymid Y_{t}}(x_{t-1}\mymid x_{t})\big\| x_{t-1}-\mu_{t}^{\star}(x_{t})\big\|_{2}^{2}\mathrm{d}x_{t-1}\mathrm{d}x_{t}}}\\
 & \quad+\underset{\eqqcolon\,\mathcal{K}_{6}}{\underbrace{\int_{(x_{t},x_{t-1}):p_{X_{t}}(x_{t})\geq e^{-0.5c_{6}d\log T},\|x_{t}-\widehat{x}_{t}\|_{2}>c_{3}\sqrt{d(1-\alpha_{t})\log T}}p_{X_{t}}(x_{t})p_{Y_{t-1}^{\star}\mymid Y_{t}}(x_{t-1}\mymid x_{t})\big\| x_{t-1}-\mu_{t}^{\star}(x_{t})\big\|_{2}^{2}\mathrm{d}x_{t-1}\mathrm{d}x_{t}}}.
\end{align*}
Substitution into \eqref{eq:defn-J3-J4-DDPM-K3} yields
\begin{align}
	\mathcal{K}_{2} & \lesssim\sqrt{\mathcal{K}_{3}\mathcal{K}_{4}}\lesssim
	\sqrt{\big\{\exp\left(-c_{3}d\log T\right)+\mathcal{K}_{5}+\mathcal{K}_{6}\big\}\mathbb{E}_{X_{t}\sim q_{t}}\left[\varepsilon_{\score,t}(X_{t})^{2}\right]} ,
	\label{eq:K2-UB-K3K4K5K6}
\end{align}
thus motivating us to bound $\mathcal{K}_5$ and $\mathcal{K}_6$ separately. 

\begin{itemize}
	\item
Regarding $\mathcal{K}_6$, we can use \eqref{eq:x-tminus1-mu-2norm-479} to demonstrate that
\begin{align*}
	\mathcal{K}_{6} & \leq\int p_{X_{t}}(x_{t})\mathbb{E}\left[\big\| Y_{t-1}^{\star}-\mu_{t}^{\star}(Y_{t})\big\|_{2}^{2}\mathds{1}\Big\{\big\| Y_{t-1}^{\star}-\mu_{t}^{\star}(Y_{t})\big\|_{2}\geq\frac{c_{3}}{2}\sqrt{d(1-\alpha_{t})\log T}\Big\}\mid Y_{t}=x_{t}\right]\mathrm{d}x_{t}\\
 & \lesssim\exp\left(-\frac{c_{3}}{4}d\log T\right).
\end{align*}

\item
Turning to $\mathcal{K}_5$, one can invoke \eqref{eq:Xt-2range-ODE} to derive
\begin{align*}
\mathcal{K}_{5} & \leq\int_{x_{t}:p_{X_{t}}(x_{t})<e^{-0.5c_{6}d\log T}}p_{X_{t}}(x_{t})\mathbb{E}\left[\big\| Y_{t-1}^{\star}-\mu_{t}^{\star}(Y_{t})\big\|_{2}^{2}\mid Y_{t}=x_{t}\right]\mathrm{d}x_{t}\\
 & \leq d\int_{x_{t}:p_{X_{t}}(x_{t})<e^{-0.5c_{6}d\log T}}p_{X_{t}}(x_{t})\mathrm{d}x_{t}\\
 & \leq d\int_{x_{t}:p_{X_{t}}(x_{t})<e^{-0.5c_{6}d\log T},\|x_{t}\|_{2}\leq T^{c_{R}+2}}p_{X_{t}}(x_{t})\mathrm{d}x_{t}+\mathbb{P}\left\{ \|X_{t}\|_{2}\geq T^{c_{R}+2}\right\} \\
 & \leq d\exp\Big(-\frac{1}{2}c_{6}d\log T\Big)\cdot(2T^{c_{R}+2})^{d}+\exp(-c_{6}d\log T)\\
 & \lesssim\exp\Big(-\frac{1}{4}c_{6}d\log T\Big),
\end{align*}
provided that $c_{6}>0$ is large enough. 
\end{itemize}
Combining the above results with \eqref{eq:K2-UB-K3K4K5K6}, we arrive at
\begin{align}
\mathcal{K}_{2} & \lesssim\sqrt{\Big\{\exp\left(-\frac{c_{3}}{4}d\log T\right)+\exp\left(-\frac{c_{6}}{4}d\log T\right)\Big\}\mathbb{E}_{X_{t}\sim q_{t}}\left[\varepsilon_{\score,t}(X_{t})^{2}\right]} \notag\\
	& \lesssim\exp\Big(-\frac{\min\{c_{3},c_{6}\}}{4}d\log T\Big)+\exp\Big(-\frac{\min\{c_{3},c_{6}\}}{4}d\log T\Big)\mathbb{E}_{X_{t}\sim q_{t}}\left[\varepsilon_{\score,t}(X_{t})^{2}\right]. \label{eq:final-bound-K2}
\end{align}
%

%

\end{itemize}

Taking the above bounds \eqref{eq:Ht-first-component-bound}, \eqref{eq:final-bound-K1} and \eqref{eq:final-bound-K2} together with \eqref{eqn:tmp-sde-brahms} immediately finishes the proof.

\section{Analysis for the accelerated deterministic sampler (Theorem~\ref{thm:main-ODE-fast})}

The aim of this section is to establish Theorem~\ref{thm:main-ODE-fast}. 
The proof follows similar arguments as that of Theorem~\ref{thm:main-ODE}.

\subsection{Proof of Theorem~\ref{thm:main-ODE-fast}}
\label{sec:pf-theorem-ode-fast}

\paragraph{Auxiliary vectors and their properties.}
Before embarking on the proof, let us introduce several pieces of notation: 
\begin{subequations}
\label{eqn:varphi-fast}
\begin{align}
\varphi_{t}(x) & \defn x-u_{t}(x), 
\end{align}
where
\begin{align}
u_{t}(x) & \defn\bigg\{\frac{1-\alpha_{t}}{2(1-\overline{\alpha}_{t})}+\frac{(1-\alpha_{t})^{2}}{8(1-\overline{\alpha}_{t})^{2}}-\frac{(1-\alpha_{t})^{2}}{8(1-\overline{\alpha}_{t})^{3}}\|g_{t}(x)\|_{2}^{2}\bigg\} g_{t}(x)\notag \\
 & \quad\quad+\frac{(1-\alpha_{t})^{2}}{8(1-\overline{\alpha}_{t})^{3}}\mathbb{E}\left[\big(X_{t}-\sqrt{\overline{\alpha}_{t}}X_{0}\big)\big(X_{t}-\sqrt{\overline{\alpha}_{t}}X_{0}\big)^{\top}\mymid X_{t}=x\right]g_{t}(x)\notag \\
 & \quad\quad-\frac{(1-\alpha_{t})^{2}}{8(1-\overline{\alpha}_{t})^{3}}\mathbb{E}\left[\big\| X_{t}-\sqrt{\overline{\alpha}_{t}}X_{0}\big\|_{2}^{2}\big(X_{t}-\sqrt{\overline{\alpha}_{t}}X_{0}-g_{t}(X_{t})\big)\mymid X_{t}=x\right]\label{eq:defn-ut-x-thm2}\\
 & =-\bigg\{\frac{1-\alpha_{t}}{2}+\frac{(1-\alpha_{t})^{2}}{8(1-\overline{\alpha}_{t})}-\frac{(1-\alpha_{t})^{2}}{8}\|s_{t}^{\star}(x)\|_{2}^{2}\bigg\} s_{t}^{\star}(x) - \frac{(1-\alpha_{t})^{2}}{8(1-\overline{\alpha}_{t})}\mathbb{E}\left[\overline{W}_{t}\overline{W}_{t}^{\top}s_{t}^{\star}(X_{t})\mymid X_{t}=x\right]\nonumber \\
 & \qquad-\frac{(1-\alpha_{t})^{2}}{8(1-\overline{\alpha}_{t})}\mathbb{E}\left[\big\|\overline{W}_{t}\big\|_{2}^{2}\bigg(\frac{1}{\sqrt{1-\overline{\alpha}_{t}}}\overline{W}_{t}+(1-\overline{\alpha}_{t})s_{t}^{\star}(X_{t})\bigg)\mymid X_{t}=x\right].
\end{align}
\end{subequations}
Here, we recall that $g_t(x)=-(1-\overline{\alpha}_t)s_t^{\star}(x)$  has been defined in \eqref{eq:st-MMSE-expression} 
and that $X_t = \sqrt{\overline{\alpha}_t}X_0 + \sqrt{1-\overline{\alpha}_t}\,\overline{W}_t$ with $\overline{W}_t\sim \mathcal{N}(0,I_d)$ (see \eqref{eqn:Xt-X0}). 
In view of Assumption~\ref{assumption:ODE-estimate} and the fact that the MMSE estimator is the conditional expectation \citep[Section~3.3.1]{hajek2015random}, 
we have
\begin{align*}
w_{t}(x) & =\mathbb{E}\left[\big\|\overline{W}_{t}\big\|_{2}^{2}\bigg(\frac{1}{\sqrt{1-\overline{\alpha}_{t}}}\overline{W}_{t}+(1-\overline{\alpha}_{t})s_{t}^{\star}(X_{t})\bigg) 
+\frac{1}{1-\overline{\alpha}_{t}}\overline{W}_{t}\overline{W}_{t}^{\top}s_{t}^{\star}(X_{t})\mid X_{t}=x\right],
\end{align*}
which taken together with \eqref{eqn:varphi-fast} and \eqref{eqn:ode-sampling-R} confirms the following equivalent expression for the sampler \eqref{eqn:ode-sampling-R}: 
\begin{equation}
	Y_{t-1} = \frac{1}{\sqrt{\alpha_{t}}}\varphi_{t}\big(Y_{t}\big) = \frac{1}{\sqrt{\alpha_{t}}} Y_t - \frac{1}{\sqrt{\alpha_{t}}} u_t(Y_t).
	\label{eq:Yt-recursion-accelerate-proof}
\end{equation}
%
%
%

 We now single out a useful property about $u_t(\cdot)$. For any point $x_t\in \mathbb{R}^d$ obeying $-\log p_{X_t}(x_t) \leq c_6 d \log T$ for some large constant $c_6>0$ (see Lemma~\ref{lem:x0}), one has
 \begin{subequations}
	\label{eqn:u-prop-2323}
\begin{align}
	\big\| u_t(x_t) \big\|_2 \lesssim (1-\alpha_{t})\Big(\frac{d\log T}{1-\overline{\alpha}_{t}}\Big)^{1/2}, \label{eq:ut-properties-thm2-crude}
\end{align}
and one can also write
\begin{align}
\label{eqn:u-prop-2323-12}
	u_t(x_t) &= \frac{1-\alpha_{t}}{2(1-\overline{\alpha}_{t})}g_t(x_t) + \xi_t(x_t)
\end{align}
for some residual term $\xi_t(x_t)$ obeying
\begin{align}
\big\|\xi_{t}(x_{t})\big\|_{2} & \lesssim(1-\alpha_{t})^{2}\bigg(\frac{d\log T}{1-\overline{\alpha}_{t}}\bigg)^{3/2}.\label{eq:zeta-t-UB-thm2}
\end{align} 
 \end{subequations}
To streamline presentation, we leave the proof of \eqref{eqn:u-prop-2323} to the end of this subsection.

%

%
%

\paragraph{Main steps of the proof.} 
Akin to the proof of Theorem~\ref{thm:main-ODE}, the key idea lies in understanding the transformation $\Phi_t$ (cf.~\eqref{eqn:ode-sampling-R}), or equivalently, $\varphi_t$ (cf.~\eqref{eqn:varphi-fast}).
There are several objects that play an important role in the analysis, which we single out as follows:    
\begin{subequations}
\label{eqn:ODE-constant}
\begin{align}
A_{t} & \defn\frac{1}{1-\overline{\alpha}_{t}}\int p_{X_{0}\mymid X_{t}}(x_{0}\mymid x)\big\| x-\sqrt{\overline{\alpha}_{t}}x_{0}\big\|_{2}^{2}\mathrm{d}x_{0};\\
B_{t} & \defn\frac{1}{1-\overline{\alpha}_{t}}\bigg\|\int p_{X_{0}\mymid X_{t}}(x_{0}\mymid x)\big(x-\sqrt{\overline{\alpha}_{t}}x_{0}\big)\mathrm{d}x_{0}\bigg\|_{2}^{2};\\
C_{t} & \defn\frac{1}{(1-\overline{\alpha}_{t})^{2}}\int p_{X_{0}\mymid X_{t}}(x_{0}\mymid x)\big\| x-\sqrt{\overline{\alpha}_{t}}x_{0}\big\|_{2}^{4}\mathrm{d}x_{0};\\
D_{t} & \defn\frac{1}{(1-\overline{\alpha}_{t})^{2}}\int p_{X_{0}\mymid X_{t}}(x_{0}\mymid x)\Big(\big\langle g_{t}(x),\,x-\sqrt{\overline{\alpha}_{t}}x_{0}\big\rangle\Big)^{2}\mathrm{d}x_{0};\\
E_{t} & \defn\frac{1}{(1-\overline{\alpha}_{t})^{2}}\int p_{X_{0}\mymid X_{t}}(x_{0}\mymid x)\big\| x-\sqrt{\overline{\alpha}_{t}}x_{0}\big\|_{2}^{2}\big\langle g_{t}(x),\,x-\sqrt{\overline{\alpha}_{t}}x_{0}\big\rangle\mathrm{d}x_{0}.
\end{align}
\end{subequations}
Here, we suppress the dependency on $x$ in the above five objects to simplify notation whenever it is clear from the context. In view of Lemma~\ref{lem:x0} and the properties  \eqref{eqn:properties-alpha-proof}, 
we have the following bounds: 
\begin{subequations}
\label{eqn:ODE-constant-bounds}
\begin{align}
\big|B_{t}\big|\leq\big|A_{t}\big| & \lesssim\frac{1}{1-\overline{\alpha}_{t}}\cdot d(1-\overline{\alpha}_{t})\log T\asymp d\log T;\\
\big|C_{t}\big| & \lesssim\frac{1}{(1-\overline{\alpha}_{t})^{2}}d^{2}(1-\overline{\alpha}_{t})^{2}\log^{2}T\asymp d^{2}\log^{2}T;\\
\big|D_{t}\big| & \leq\frac{\|g_{t}(x)\|_{2}^{2}}{(1-\overline{\alpha}_{t})^{2}}\int p_{X_{0}\mymid X_{t}}(x_{0}\mymid x)\big\| x-\sqrt{\overline{\alpha}_{t}}x_{0}\big\|_{2}^{2}\mathrm{d}x_{0}\lesssim d^{2}\log^{2}T;\\
\big|E_{t}\big| & \leq\frac{\|g_{t}(x)\|_{2}^{2}}{(1-\overline{\alpha}_{t})^{2}}\int p_{X_{0}\mymid X_{t}}(x_{0}\mymid x)\big\| x-\sqrt{\overline{\alpha}_{t}}x_{0}\big\|_{2}^{3}\mathrm{d}x_{0}\lesssim d^{2}\log^{2}T.
\end{align}
\end{subequations}

As it turns out, Theorem~\ref{thm:main-ODE-fast} can be established in a very similar way as in the proof of Theorem~\ref{thm:main-ODE}. 
In essence, the only step that needs to be changed is to replace~\eqref{eq:ODE} in Lemma~\ref{lem:main-ODE} with~\eqref{eq:ODE-fast} in the lemma below.

\begin{lems} 
\label{lem:main-ODE-fast}
Suppose that $ \frac{d^2(1-\alpha_{t})\log T}{\alpha_{t}-\overline{\alpha}_{t}}  \lesssim 1$. For every $x \in \real$ obeying $-\log p_{X_{t}}(x) \leq c_6 d\log T$ for some large enough constant $c_6>0$,  
we have
\begin{subequations}
\label{eq:ODE-fast}
\begin{align} 
\label{eq:xt-higher}
\frac{p_{\sqrt{\alpha_t}X_{t-1}}(\varphi_t(x))}{p_{X_{t}}(x)} &= 1 + \frac{(1-\alpha_{t})(d+B_t-A_t)}{2(1-\overline{\alpha}_{t})} + O\Big(d^3\Big(\frac{1-\alpha_{t}}{\alpha_{t}-\overline{\alpha}_{t}}\Big)^3\log^3 T\Big) \notag\\
&\qquad+ \frac{(1-\alpha_{t})^2}{8(1-\overline{\alpha}_{t})^2}\big[d(d+2) + (4+2d)(B_t-A_t) - B_t^2 + C_t + 2D_t - 3E_t + A_tB_t\big].
\end{align}
Moreover, for any random vector $Y$, one has
\begin{align} \label{eq:yt-higher}
\frac{p_{\varphi_t(Y)}(\varphi_t(x))}{p_{Y}(x)} &= 1 + \frac{(1-\alpha_{t})(d+B_t-A_t)}{2(1-\overline{\alpha}_{t})} + O\Big(d^6\Big(\frac{1-\alpha_{t}}{\alpha_{t}-\overline{\alpha}_{t}}\Big)^3\log^3 T\Big) \notag\\
&\qquad+ \frac{(1-\alpha_{t})^2}{8(1-\overline{\alpha}_{t})^2}\big[d(d+2) + (4+2d)(B_t-A_t) - B_t^2 + C_t + 2D_t - 3E_t + A_tB_t\big]. 
\end{align}
\end{subequations}
Here, the quantities $A_t,\ldots,E_t$ are defined in~\eqref{eqn:ODE-constant}.
\end{lems}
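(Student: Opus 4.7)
The plan is to mirror the three-step analysis used for Lemma~\ref{lem:main-ODE}, but carry every Taylor expansion one order further so that all contributions of size $(1-\alpha_t)^2/(1-\overline{\alpha}_t)^2$ are retained and identified. The key structural reason this works is that $\varphi_t(x)=x-u_t(x)$ with $u_t$ given by~\eqref{eqn:varphi-fast}, and, by~\eqref{eqn:u-prop-2323-12}, $u_t(x)=\tfrac{1-\alpha_t}{2(1-\overline{\alpha}_t)}g_t(x)+\xi_t(x)$ with $\|\xi_t(x)\|_2 \lesssim (1-\alpha_t)^2(d\log T/(1-\overline{\alpha}_t))^{3/2}$. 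Thus $u_t$ agrees with the first-order correction in Lemma~\ref{lem:main-ODE} up to a precisely chosen second-order shift, and the whole point is that this shift is engineered to cancel the residual second-order quantities $A_t,B_t,C_t,D_t,E_t$ that would otherwise appear.

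For~\eqref{eq:xt-higher}, I would begin exactly as in~\eqref{eqn:fei}, writing
\begin{align*}
\frac{p_{\sqrt{\alpha_{t}}X_{t-1}}(\varphi_t(x))}{p_{X_{t}}(x)}
&=\Big(\frac{1-\overline{\alpha}_{t}}{\alpha_{t}-\overline{\alpha}_{t}}\Big)^{d/2}\int p_{X_{0}\mymid X_{t}}(x_{0}\mymid x)\\
&\quad\cdot\exp\Big(-\tfrac{(1-\alpha_{t})\|x-\sqrt{\overline{\alpha}_{t}}x_{0}\|_{2}^{2}}{2(\alpha_{t}-\overline{\alpha}_{t})(1-\overline{\alpha}_{t})}-\tfrac{\|u_{t}(x)\|_{2}^{2}-2u_{t}(x)^{\top}(x-\sqrt{\overline{\alpha}_{t}}x_{0})}{2(\alpha_{t}-\overline{\alpha}_{t})}\Big)\mathrm{d}x_{0}.
\end{align*}
I would then expand the pre-factor via~\eqref{eq:expansion-ratio-1-alpha} to order $d^2((1-\alpha_t)/(\alpha_t-\overline{\alpha}_t))^2$, and expand the exponential in the integrand via $e^{-z}=1-z+z^2/2+O(|z|^3)$, substituting the decomposition~\eqref{eqn:u-prop-2323-12} for $u_t$. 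Integrating term by term against $p_{X_{0}\mymid X_{t}}(\cdot\mymid x)$ produces $A_t$ from the leading quadratic-in-$(x-\sqrt{\overline{\alpha}_t}x_0)$ term, $B_t$ from the linear-squared contribution via $\xi_t$, and $B_t^2, C_t, D_t, E_t, A_tB_t$ from the cross terms and the $z^2/2$ piece. The tail portion $\{x_0:\|x-\sqrt{\overline{\alpha}_t}x_0\|_2>5c_5\sqrt{d(1-\overline{\alpha}_t)\log T}\}$ is handled as in~\eqref{eq:exp-UB-13570}–\eqref{eq:exp-UB-135702} using Lemma~\ref{lem:x0}, which makes its contribution negligible.

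For~\eqref{eq:yt-higher}, I would use the change-of-variables identity $p_{\varphi_t(Y)}(\varphi_t(x))=\mathsf{det}(\partial\varphi_t(x)/\partial x)^{-1}p_Y(x)$ and expand the determinant to one order beyond~\eqref{eqn:matrix-det} via
\[
\mathsf{det}(I+A)=1+\mathsf{Tr}(A)+\tfrac{1}{2}\big((\mathsf{Tr}\,A)^2-\mathsf{Tr}(A^2)\big)+O\big(d^3\|A\|^3\big).
\]
Differentiating the expression for $u_t$ in~\eqref{eq:defn-ut-x-thm2} term by term (using $\partial g_t/\partial x=J_t$ from~\eqref{eq:Jt-x-expression-ij-23}), taking trace, and collecting all contributions through order $((1-\alpha_t)/(1-\overline{\alpha}_t))^2$ should produce the same quadratic combination of $A_t,B_t,C_t,D_t,E_t$ appearing in~\eqref{eq:xt-higher}, with the weaker $d^6((1-\alpha_t)/(\alpha_t-\overline{\alpha}_t))^3\log^3 T$ remainder reflecting the $d^3\|A\|^3$ determinant cubic. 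The moment bounds in Lemma~\ref{lem:x0} again control the Jacobian norms on the typical set.

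The main obstacle will be bookkeeping: there are many cross-terms in both expansions (five named objects, plus their squares and products), and the proof must verify — by direct algebraic matching — that the second-order coefficient in~\eqref{eq:xt-higher} is \emph{identical} to the second-order coefficient in~\eqref{eq:yt-higher}. This equality is exactly what justifies the design of the correction $u_t$: when the ratio in the inductive step of the proof of Theorem~\ref{thm:main-ODE-fast} (analogous to \eqref{eq:recursion}) is formed, all the $O((1-\alpha_t)/(1-\overline{\alpha}_t))$ and $O(((1-\alpha_t)/(1-\overline{\alpha}_t))^2)$ terms cancel, leaving only the cubic residual $d^6((1-\alpha_t)/(\alpha_t-\overline{\alpha}_t))^3\log^3 T$ that sums telescopically across the $T$ steps to give the $d^6\log^6 T/T^2$ rate. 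A subsidiary but nontrivial task is deriving~\eqref{eqn:u-prop-2323} carefully from~\eqref{eq:defn-ut-x-thm2}, since it requires isolating the first-order piece $\tfrac{1-\alpha_t}{2(1-\overline{\alpha}_t)}g_t$ and showing that the remaining five summands collapse into a residual $\xi_t$ of the stated size via the conditional-moment estimates of Lemma~\ref{lem:x0}.
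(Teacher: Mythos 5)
Your proposal mirrors the paper's own proof essentially step for step: expand the ratio $p_{\sqrt{\alpha_t}X_{t-1}}(\varphi_t(x))/p_{X_t}(x)$ exactly as in \eqref{eqn:fei}, carry the Taylor expansion of the exponential to second order on the typical set from Lemma~\ref{lem:x0}, substitute the decomposition \eqref{eqn:u-prop-2323-12} for $u_t$, and integrate to produce $A_t,\ldots,E_t$; then for \eqref{eq:yt-higher} use the change-of-variables Jacobian and a second-order determinant expansion, differentiating $u_t$ term by term and matching coefficients. The only cosmetic difference is that you write the expansion of $\det(I+A)$ rather than of $\det(I+A)^{-1}$ (the paper uses the latter directly, with $\|A\|_{\mathrm F}^2$ in place of $\operatorname{Tr}(A^2)$, which coincide since the Jacobian here is symmetric); inverting your series gives the same thing.
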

\noindent 
The proof of this lemma can be found in Appendix~\ref{sec:proof-lem:main-ODE-fast}. 
Crucially, the terms in \eqref{eq:xt-higher} and those in \eqref{eq:yt-higher} coincide (except for the residual terms 
$O\big(d^3\big(\frac{1-\alpha_{t}}{\alpha_{t}-\overline{\alpha}_{t}}\big)^3\log^3 T\big)$ and $O\big(d^6\big(\frac{1-\alpha_{t}}{\alpha_{t}-\overline{\alpha}_{t}}\big)^3\log^3 T\big)$), 
which will cancel each other during the subsequent proof.

With Lemma~\ref{lem:main-ODE-fast} in place, we claim that for every $t \geq 1$,
\begin{align} \label{eq:claim}
	\mathbb{P}\big(Y_t \in {\mathcal{E}}_t\big) \ge 1 - (T-t+1)\exp(-c_3d\log T)
\end{align}
for some constant $c_{3} > 0$, where the set  $\mathcal{E}_t$ is defined as
\begin{align*}
	{\mathcal{E}}_t \defn \bigg\{y : \Big|\frac{p_{Y_t}(y)}{p_{X_t}(y)} - 1\Big| \leq c_5 (T-t+1)\frac{d^6\log^6 T}{T^3}\bigg\}
\end{align*}
for some constant $c_5>0$. We leave its proof to the end of this section. 

Suppose  for the moment that the above claim \eqref{eq:claim} is valid. 
Then taking $t=1$ leads to 
\begin{align*}
 \mathbb{P}(Y_{1}\in\mathcal{E}_{1})
	=\mathbb{P}\Bigg(\bigg|\frac{q_{1}(Y_{1})}{p_{1}(Y_{1})}-1\bigg|\leq \frac{c_{5}d^{6}\log^{6}T}{T^{2}}\Bigg)\ge1- T\exp\big(- c_3 d\log T\big) , 
\end{align*}
which also reveals that
$
	\int_{y\notin \mathcal{E}_{1}} p_1(y) \mathrm{d}y \leq  T\exp\big(- c_3 d\log T\big)	.
$
Additionally, we make the observation that
\begin{align*}
{\displaystyle \int}_{y\notin\mathcal{E}_{1}}\big|p_{1}(y)-q_{1}(y)\big|\mathrm{d}y & \leq{\displaystyle \int}_{y\notin\mathcal{E}_{1}}p_{1}(y)\mathrm{d}y+{\displaystyle \int}_{y\notin\mathcal{E}_{1}}q_{1}(y)\mathrm{d}y={\displaystyle \int}_{y\notin\mathcal{E}_{1}}p_{1}(y)\mathrm{d}y+1-{\displaystyle \int}_{y\in\mathcal{E}_{1}}q_{1}(y)\mathrm{d}y\\
 & \leq{\displaystyle \int}_{y\notin\mathcal{E}_{1}}p_{1}(y)\mathrm{d}y+1-{\displaystyle \int}_{y\in\mathcal{E}_{1}}p_{1}(y)\mathrm{d}y+{\displaystyle \int}_{y\in\mathcal{E}_{1}}\big|p_{1}(y)-q_{1}(y)\big|\mathrm{d}y\\
 & = 2{\displaystyle \int}_{y\notin\mathcal{E}_{1}}p_{1}(y)\mathrm{d}y+{\displaystyle \int}_{y\in\mathcal{E}_{1}}\big|p_{1}(y)-q_{1}(y)\big|\mathrm{d}y.
\end{align*}
The above results together with the definition of the total variation distance gives 
\begin{align}
\notag\mathsf{TV}\big(q_{1},p_{1}\big) 
& =\frac{1}{2}\int_{y\in\mathcal{E}_1}\big|q_{1}(y)-p_{1}(y)\big|\diff y+\frac{1}{2}\int_{y\notin\mathcal{E}_1}\big|q_{1}(y)-p_{1}(y)\big|\diff y\\
 & \leq\int_{y\in\mathcal{E}_1}\big|q_{1}(y)-p_{1}(y)\big|\diff y+{\displaystyle \int}_{y\notin\mathcal{E}_{1}}p_{1}(y)\mathrm{d}y\notag\\
 & =\Exs_{Y_{1}\sim p_{1}}\bigg[\Big|\frac{q_{1}(Y_{1})}{p_{1}(Y_{1})}-1\Big|\cdot\ind\left\{ Y_{1}\in\mathcal{E}_{1}\right\} \bigg]+{\displaystyle \int}_{y\notin\mathcal{E}_{1}}p_{1}(y)\mathrm{d}y\notag\\
 & \leq \frac{c_{5}d^{6}\log^{6}T}{T^{2}}+\exp\big(-c_{3}d\log T\big) \notag\\
 & \asymp \frac{d^{6}\log^{6}T}{T^{2}}.
\label{eqn:ode-tv-1}
\end{align}
This establishes the advertised result in Theorem~\ref{thm:main-ODE-fast}, 
provided that Claim~\eqref{eq:claim} can be verified. 


%
\begin{proof}[Proof of properties~\eqref{eqn:u-prop-2323}]
To justify the above results \eqref{eqn:u-prop-2323}, note that Lemma~\ref{lem:x0} implies that
\begin{subequations}
\label{eq:gt-UB-all-thm2}
\begin{align}
	\big\| g_{t}(x_{t})\big\|_{2}  \leq\mathbb{E}\left[\big\| X_{t}-\sqrt{\overline{\alpha}_{t}}X_{0}\big\|_{2}\mymid X_{t}=x_{t}\right] &\lesssim\sqrt{d(1-\overline{\alpha}_{t})\log T},\label{eq:gt-UB-typical-thm2}\\
	\frac{1-\alpha_{t}}{1-\overline{\alpha}_{t}}\big\| g_{t}(x_{t})\big\|_{2}\lesssim\frac{1-\alpha_{t}}{1-\overline{\alpha}_{t}}\sqrt{d(1-\overline{\alpha}_{t})\log T} &\asymp(1-\alpha_{t})^{2}\sqrt{\frac{d\log T}{1-\overline{\alpha}_{t}}}, \\
	\frac{(1-\alpha_{t})^{2}}{(1-\overline{\alpha}_{t})^{2}}\big\| g_{t}(x_{t})\big\|_{2}  \lesssim\frac{(1-\alpha_{t})^{2}}{(1-\overline{\alpha}_{t})^{2}}\sqrt{d(1-\overline{\alpha}_{t})\log T} &\asymp(1-\alpha_{t})^{2}\frac{\sqrt{d\log T}}{(1-\overline{\alpha}_{t})^{3/2}},\label{eq:gt-UB-typical-thm2-w1}\\
	\frac{(1-\alpha_{t})^{2}}{(1-\overline{\alpha}_{t})^{3}}\|g_{t}(x_{t})\|_{2}^{3}  \lesssim\frac{(1-\alpha_{t})^{2}}{(1-\overline{\alpha}_{t})^{3}}\left(d(1-\overline{\alpha}_{t})\log T\right)^{3/2} &\asymp(1-\alpha_{t})^{2}\bigg(\frac{d\log T}{1-\overline{\alpha}_{t}}\bigg)^{3/2},\label{eq:eq:gt-UB-typical-thm2-w2}\\
	\frac{(1-\alpha_{t})^{2}}{(1-\overline{\alpha}_{t})^{3}}\left\Vert \mathbb{E}\left[\big(x_{t}-\sqrt{\overline{\alpha}_{t}}X_{0}\big)\big(x_{t}-\sqrt{\overline{\alpha}_{t}}X_{0}\big)^{\top}\mymid X_{t}=x_t\right]g_{t}(x_{t})\right\Vert _{2} & \lesssim\frac{(1-\alpha_{t})^{2}\big\| g_{t}(x_{t})\big\|_{2}}{(1-\overline{\alpha}_{t})^{3}}\mathbb{E}\left[\big\| x_{t}-\sqrt{\overline{\alpha}_{t}}X_{0}\big\|_{2}^{2}\mymid X_{t}=x_t\right]\nonumber \\
 & \lesssim(1-\alpha_{t})^{2}\bigg(\frac{d\log T}{1-\overline{\alpha}_{t}}\bigg)^{3/2},\label{eq:eq:gt-UB-typical-thm2-w3}
\end{align}
and 
\begin{align}
 & \frac{(1-\alpha_{t})^{2}}{(1-\overline{\alpha}_{t})^{3}}\left\Vert \mathbb{E}\left[\big\| x_{t}-\sqrt{\overline{\alpha}_{t}}X_{0}\big\|_{2}^{2}\big(x_{t}-\sqrt{\overline{\alpha}_{t}}x_{0}-g_{t}(X_{t})\big)\mymid X_{t}=x_{t}\right]\right\Vert _{2} \notag\\
 & \leq\frac{(1-\alpha_{t})^{2}}{(1-\overline{\alpha}_{t})^{3}}\left\{ \left\Vert \mathbb{E}\left[\big\| x_{t}-\sqrt{\overline{\alpha}_{t}}X_{0}\big\|_{2}^{3}\mymid X_{t}=x_{t}\right]\right\Vert _{2}+\big\| g_{t}(x_{t})\big\|_{2}\left\Vert \mathbb{E}\left[\big\| x_{t}-\sqrt{\overline{\alpha}_{t}}X_{0}\big\|_{2}^{2}\mymid X_{t}=x_{t}\right]\right\Vert _{2}\right\} \notag\\
 & \lesssim(1-\alpha_{t})^{2}\bigg(\frac{d\log T}{1-\overline{\alpha}_{t}}\bigg)^{3/2}.
\end{align}
\end{subequations}
Substituting the bounds \eqref{eq:gt-UB-all-thm2} into \eqref{eq:defn-ut-x-thm2} immediately establishes \eqref{eqn:u-prop-2323}. 
\end{proof}

\begin{proof}[Proof of the claim~\eqref{eq:claim}]
We would like to prove this claim by induction, 
for which we start with the base case with $t = T$. 
Recall that $X_T \overset{\mathrm{d}}{=} \sqrt{\overline{\alpha}_{T}}X_0 + \sqrt{1 - \overline{\alpha}_{T}} B$ and $Y_T \overset{\mathrm{d}}{=} B$ 
with $B\sim\mathcal{N}(0, I_d)$ independent of $X_{0}$, 
and that $\|X_0\|_2\leq R$ with $R=T^{c_R}$ for some constant $c_R>0$.  
For large enough $T$, it immediately follows from \eqref{eqn:properties-alpha-proof-alphaT} that 
\begin{align} \label{eq:claim-Py-1}
\mathbb{P}(Y_T \in {\mathcal{E}}_t) \ge 1 - \exp(-c_3d\log T)
\end{align}
for some constant $c_3>0$ large enough.

Suppose now that the claim~\eqref{eq:claim} holds for some $t\geq 2$, and we wish to prove the claim for $t-1.$ 
We would first like to claim that with probability at least $1 - (T-t)\exp(-c_{3}d\log T)$ for some constant $c_3>0$, 
one has
\begin{align}
\label{eqn:x-paganini}
	q_t(Y_t) \geq \exp\big(- c_6 d\log T \big).
\end{align}
With \eqref{eqn:x-paganini} in place, one sees that Lemma~\ref{lem:main-ODE-fast} is applicable to $x = Y_t$ with high probability.

Next, consider any $y$ obeying $-\log p_{X_t}(y)\leq c_6d \log T$. 
With these bounds~\eqref{eqn:ODE-constant-bounds} in mind, applying relations~\eqref{eq:xt-higher} and~\eqref{eq:yt-higher} in Lemma~\ref{lem:main-ODE-fast} leads to
\begin{align*}
\frac{p_{\sqrt{\alpha_{t}}Y_{t-1}}\big(\phi_{t}(y)\big)}{p_{Y_{t}}(y)}\bigg(\frac{p_{\sqrt{\alpha_{t}}X_{t-1}}\big(\phi_{t}(y)\big)}{p_{X_{t}}(y)}\bigg)^{-1} & =\frac{p_{\phi_{t}(Y_{t})}\big(\phi_{t}(y)\big)}{p_{Y_{t}}(y)}\bigg(\frac{p_{\sqrt{\alpha_{t}}X_{t-1}}\big(\phi_{t}(y)\big)}{p_{X_{t}}(y)}\bigg)^{-1}\\
 & =1+O\Bigg(d^{6}\Big(\frac{1-\alpha_{t}}{\alpha_{t}-\overline{\alpha}_{t}}\Big)^{3}\log^{3}T\Bigg).
\end{align*}
Replacing $y$ with $Y_t$ in the above display, using the fact that $Y_{t-1}=\frac{1}{\sqrt{\alpha_t}} \phi_t(Y_t)$, and invoking the relation \eqref{eq:recursion},  
we immediately arrive at
\begin{align*}
\frac{p_{t-1}(Y_{t-1})}{q_{t-1}(Y_{t-1})} & =\left\{ 1+O\Bigg(d^{6}\Big(\frac{1-\alpha_{t}}{\alpha_{t}-\overline{\alpha}_{t}}\Big)^{3}\log^{3}T\Bigg)\right\} \cdot\frac{p_{t}(Y_{t})}{q_{t}(Y_{t})}
\end{align*}
with probability exceeding  $1 - (T-t)\exp(-c_{3}d\log T)$. This concludes the proof of Claim~\eqref{eq:claim} via standard induction arguments. 
\end{proof}

\subsection{Proof of Lemma~\ref{lem:main-ODE-fast}}
\label{sec:proof-lem:main-ODE-fast}

The proof of Lemma~\ref{lem:main-ODE-fast} is derived in a very similar way to the proof of Lemma~\ref{lem:main-ODE}, 
as detailed below. 
For notational simplicity, we shall abbreviate 
\begin{align}
	u = u_t(x) \qquad \text{and} \qquad z = g_t(x)
\end{align}
whenever it is clear from the context. 

%
%

\subsubsection{Proof of relation~\eqref{eq:xt-higher}}
Through direct calculations as shown in \eqref{eqn:fei}, we can obtain 
\begin{align}
	&p_{\sqrt{\alpha_t}X_{t-1}}\big(\varphi_t(x)\big) 
= p_{X_{t}}(x)\bigg(\frac{1-\overline{\alpha}_{t}}{\alpha_t-\overline{\alpha}_t}\bigg)^{d/2} \notag\\
	&\qquad\qquad \cdot\int_{x_0}  p_{X_0 \mymid X_{t}}(x_0 \mymid x)\exp\bigg(-\frac{(1-\alpha_{t})\big\|x - \sqrt{\overline{\alpha}_{t}}x_0\big\|_2^2}{2(\alpha_t-\overline{\alpha}_{t})(1-\overline{\alpha}_{t})}-\frac{\|u\|_2^2 -2u^{\top}\big(x - \sqrt{\overline{\alpha}_{t}}x_0\big)}{2(\alpha_t-\overline{\alpha}_{t})}\bigg)\mathrm{d} x_0. 
\end{align}
Moreover, similar to the analysis of Lemma~\ref{lem:main-ODE},  we focus our attention on the set given $x$: 
\begin{align}
\mathcal{E} \defn \big\{x_0 : \big\|x - \sqrt{\overline{\alpha}_{t}}x_0\big\|_2 \leq 5c_5 \sqrt{d(1 - \overline{\alpha}_{t})\log T}\big\}, 
\end{align}
which allows us to derive, for some numerical constant $c_8>0$, 
\begin{align}
&\int p_{X_0 \mymid X_{t}}(x_0 \mymid x)\exp\Big(-\frac{(1-\alpha_{t})\big\|x - \sqrt{\overline{\alpha}_{t}}x_0\big\|_2^2}{2(\alpha_t-\overline{\alpha}_{t})(1-\overline{\alpha}_{t})}-\frac{\|u\|_2^2 -2u^{\top}\big(x - \sqrt{\overline{\alpha}_{t}}x_0\big)}{2(\alpha_t-\overline{\alpha}_{t})}\Big) \mathrm{d} x_0 \notag \\
	\notag &= O\left( \exp(- c_8 c_5^2 d\log T ) \right) + \int_{x_0 \in \mathcal{E}} p_{X_0 \mymid X_{t}}(x_0 \mymid x)\exp\bigg(-\frac{(1-\alpha_{t})\big\|x - \sqrt{\overline{\alpha}_{t}}x_0\big\|_2^2}{2(\alpha_t-\overline{\alpha}_{t})(1-\overline{\alpha}_{t})}-\frac{\|u\|_2^2 -2u^{\top}\big(x - \sqrt{\overline{\alpha}_{t}}x_0\big)}{2(\alpha_t-\overline{\alpha}_{t})}\bigg) \mathrm{d} x_0\\
& \eqqcolon \text{RHS} 
	\label{eqn:tmp-rhs}
\end{align}
To further control the right-hand side above, recall that the learning rates are selected such that $\frac{1-\alpha_{t}}{1-\overline{\alpha}_{t-1}} \le \frac{4c_1\log T}{T}$ for $1 < t\leq T$ (see \eqref{eqn:properties-alpha-proof-1}). In view of the Taylor expansion
 $e^{- x} = 1 - x + \frac{1}{2}x^2 + O(x^3)$ for $x \leq 1/2$, we can derive 
\begin{align}
\label{eqn:ode-lemma-fast-rhs}
&\text{RHS}= O\left( \exp(- c_8 c_5^2 d\log T ) \right)  \notag\\
	&\quad+\int_{x_0\in \mathcal{E}} p_{X_0 \mymid X_{t}}(x_0 \mymid x)\bigg\{
1-\frac{(1-\alpha_{t})\big\|x - \sqrt{\overline{\alpha}_{t}}x_0\big\|_2^2}{2(\alpha_t-\overline{\alpha}_{t})(1-\overline{\alpha}_{t})}-\frac{\frac{(1-\alpha_{t})^2}{4(1-\overline{\alpha}_{t})^2}\|z\|_2^2 -2u^{\top}\big(x - \sqrt{\overline{\alpha}_{t}}x_0\big)}{2(\alpha_t-\overline{\alpha}_{t})} \notag\\
&\quad+\frac{(1-\alpha_{t})^2}{8(\alpha_t-\overline{\alpha}_{t})^2(1-\overline{\alpha}_{t})^2}\Big(\big\|x - \sqrt{\overline{\alpha}_{t}}x_0\big\|_2^2-z^{\top}\big(x - \sqrt{\overline{\alpha}_{t}}x_0\big)\Big)^2 
+ O\Big(d^3\Big(\frac{1-\alpha_{t}}{\alpha_{t}-\overline{\alpha}_{t}}\Big)^3\log^3 T\Big)
	\bigg\} \mathrm{d} x_0. 
\end{align}
In order to see this, we recall the property of $u$ (cf.~\eqref{eqn:u-prop-2323}) as 
\begin{align}
\left\| u - \frac{1-\alpha_{t}}{2(1-\overline{\alpha}_{t})}z \right\|_2 \leq O\bigg((1-\alpha_{t})^2\Big(\frac{d\log T}{1-\overline{\alpha}_{t}}\Big)^{3/2}\bigg). 
\end{align}
As a consequence, for any $x_0 \in \mathcal{E}$ we have
\begin{align*}
	\frac{(1-\alpha_{t})\big\|x - \sqrt{\overline{\alpha}_{t}}x_0\big\|_2^2}{2(\alpha_t-\overline{\alpha}_{t})(1-\overline{\alpha}_{t})} &= O\Big(d\Big(\frac{1-\alpha_{t}}{\alpha_{t}-\overline{\alpha}_{t}}\Big)\log T\Big)
\end{align*}
and
\begin{align*}
\frac{\|u\|_2^2 -2u^{\top}\big(x - \sqrt{\overline{\alpha}_{t}}x_0\big)}{2(\alpha_t-\overline{\alpha}_{t})} &= \frac{\frac{(1-\alpha_{t})^2}{4(1-\overline{\alpha}_{t})^2}\|z\|_2^2 -2u^{\top}\big(x - \sqrt{\overline{\alpha}_{t}}x_0\big)}{2(\alpha_t-\overline{\alpha}_{t})} + O\bigg(d^2\Big(\frac{1-\alpha_{t}}{\alpha_{t}-\overline{\alpha}_{t}}\Big)^3\log^2 T\bigg) \\
&= \frac{z^{\top}\big(x - \sqrt{\overline{\alpha}_{t}}x_0\big)}{2(\alpha_t-\overline{\alpha}_{t})(1-\overline{\alpha}_{t})} + O\bigg(d^2\Big(\frac{1-\alpha_{t}}{\alpha_{t}-\overline{\alpha}_{t}}\Big)^2\log^2 T\bigg) \\
&= O\bigg(d\Big(\frac{1-\alpha_{t}}{\alpha_{t}-\overline{\alpha}_{t}}\Big)\log T\bigg),
\end{align*}
where we have invoked the properties \eqref{eqn:properties-alpha-proof}. 
Taking the above results together and using the following basic properties regarding quantities $A_{t},\ldots, E_{t}$
 (defined in \eqref{eqn:ODE-constant}) 
\begin{align*}
&\int p_{X_0 \mymid X_{t}}(x_0 \mymid x)\big\|x - \sqrt{\overline{\alpha}_{t}}x_0\big\|_2^2 \mathrm{d} x_0 = (1-\overline{\alpha}_{t})A_t, \\
&\int p_{X_0 \mymid X_{t}}(x_0 \mymid x)\|z\|_2^2 \mathrm{d} x_0 = (1-\overline{\alpha}_{t})B_t, \\
&\int p_{X_0 \mymid X_{t}}(x_0 \mymid x)u^{\top}\big(x - \sqrt{\overline{\alpha}_{t}}x_0\big) \mathrm{d} x_0 = \frac{1-\alpha_{t}}{2}B_t + \frac{(1-\alpha_{t})^2}{8(1-\overline{\alpha}_{t})}\big[B_t - B_t^2 + D_t - E_t + A_tB_t\big], \\
&\int p_{X_0 \mymid X_{t}}(x_0 \mymid x)\Big(\big\|x - \sqrt{\overline{\alpha}_{t}}x_0\big\|_2^2-z^{\top}\big(x - \sqrt{\overline{\alpha}_{t}}x_0\big)\Big)^2 \mathrm{d} x_0 = (1-\overline{\alpha}_{t})^2\big[C_t + D_t - 2E_t\big],
\end{align*}
we arrive at 
\begin{align*}
\eqref{eqn:ode-lemma-fast-rhs}
&= 1 - \frac{(1-\alpha_{t})(A_t-B_t)}{2(\alpha_t-\overline{\alpha}_{t})}
+ \frac{(1-\alpha_{t})^2}{8(1-\overline{\alpha}_{t})^2}\big[- B_t^2 + C_t + 2D_t - 3E_t + A_tB_t\big] 
+ O\bigg(d^3\Big(\frac{1-\alpha_{t}}{\alpha_{t}-\overline{\alpha}_{t}}\Big)^3\log^3 T\bigg).
\end{align*}

Once again, we note that integrating over set $\mathcal{E}$ and over all possible $x_0$ only incurs a difference at most as large as $ O\big( \exp(- c_8 c_5^2 d\log T ) \big)$. 
Putting all this together establishes the advertised result~\eqref{eq:xt-higher}.

\subsubsection{Proof of relation~\eqref{eq:yt-higher}}

Consider any random vector $Y$, and let us invoke again the basic transformation
\begin{align*}
	p_{\varphi_t(Y)}(\varphi_t(x)) &= \mathsf{det}\Big(\frac{\partial \varphi_t(x)}{\partial x}\Big)^{-1}p_{Y}(x), 
\end{align*}
where $\frac{\partial \varphi_t(x)}{\partial x}$ denotes the Jacobian matrix. 
We are then left with controlling the quantity $\mathsf{det}\Big(\frac{\partial\varphi_t(x)}{\partial x}\Big)^{-1}$.

Towards this, let us again recall that the determinant of a matrix satisfies  
\begin{align*}
\mathsf{det}(I + A)^{-1} = 1 - \mathsf{Tr}(A) + \frac{1}{2}\big[\mathsf{Tr}(A)^2 + \|A\|_{\mathrm{F}}^2\big] + O\big(d^3\|A\|^3\big),
\end{align*}
provided that $d\|A\| \leq c_{20}$ for some small enough constant $c_{20}>0$.
This relation leads to
\begin{align}
\label{eqn:jude}
p_{\varphi_t(Y)}(\varphi_t(x)) 
&= \mathsf{det}\Big(\frac{\partial \varphi_t(x)}{\partial x}\Big)^{-1}p_{Y}(x) \notag\\
&= \bigg\{1 + \mathsf{Tr}\Big(\frac{\partial u}{\partial x}\Big) + \frac{1}{2}\Big[\mathsf{Tr}\Big(\frac{\partial u}{\partial x}\Big)^2 + \Big\|\frac{\partial u}{\partial x}\Big\|_{\mathrm{F}}^2\Big] 
+ O\Big(d^3 \Big\|\frac{\partial u}{\partial x}\Big\|\Big)\bigg\} p_{Y}(x),
\end{align}
where we invoke the definition in \eqref{eqn:varphi-fast} that 
\begin{align*}
\varphi_t(x) = x - u 
&= x - \Big(\frac{(1-\alpha_{t})}{2(1-\overline{\alpha}_{t})} + \frac{(1-\alpha_{t})^2}{8(1-\overline{\alpha}_{t})^2} - \frac{(1-\alpha_{t})^2}{8(1-\overline{\alpha}_{t})^3}\|z\|_2^2\Big) z \notag\\
&- \frac{(1-\alpha_{t})^2}{8(1-\overline{\alpha}_{t})^3}\int_{x_0} p_{X_0 \mymid X_{t}}(x_0 \mymid x)\big(x - \sqrt{\overline{\alpha}_{t}}x_0\big)\big(x - \sqrt{\overline{\alpha}_{t}}x_0\big)^{\top}z \mathrm{d} x_0 \notag\\
&+ \frac{(1-\alpha_{t})^2}{8(1-\overline{\alpha}_{t})^3}\int_{x_0} p_{X_0 \mymid X_{t}}(x_0 \mymid x)\big\|x - \sqrt{\overline{\alpha}_{t}}x_0\big\|_2^2\big(x - \sqrt{\overline{\alpha}_{t}}x_0 - z\big) \mathrm{d} x_0. 
\end{align*}

To further control the right-hand side above of the above display, let us first make note of several identities. 
Proving these identities only requires elementary calculation regarding Gaussian integration and derivatives, which is omitted here for brevity. 
Specifically, one has
\begin{subequations}
\begin{align}
\frac{\partial z}{\partial x} &= J_t,\\
\frac{\partial}{\partial x} \|z\|_2^2z &= \|z\|_2^2J_t + 2zz^{\top}J_t,
\end{align}
\begin{align}
\notag &\frac{\partial}{\partial x} \int_{x_0} p_{X_0 \mymid X_{t}}(x_0 \mymid x)\big\|x - \sqrt{\overline{\alpha}_{t}}x_0\big\|_2^2\big(x - \sqrt{\overline{\alpha}_{t}}x_{0}\big) \mathrm{d} x_0 \\
\notag &= \int_{x_0} p_{X_0 \mymid X_{t}}(x_0 \mymid x)\big\|x - \sqrt{\overline{\alpha}_{t}}x_0\big\|_2^2 \mathrm{d} x_0 I + 2\int_{x_0} p_{X_0 \mymid X_{t}}(x_0 \mymid x)\big(x - \sqrt{\overline{\alpha}_{t}}x_0\big)\big(x - \sqrt{\overline{\alpha}_{t}}x_0\big)^{\top} \mathrm{d} x_0 \\
\notag &\qquad+ \frac{1}{1-\overline{\alpha}_{t}}\Big(\Big(\int_{x_0} p_{X_0 \mymid X_{t}}(x_0 \mymid x)\big\|x - \sqrt{\overline{\alpha}_{t}}x_0\big\|_2^2\big(x - \sqrt{\overline{\alpha}_{t}}x_{0}\big)\Big)\Big(\int_{x_0} p_{X_0 \mymid X_{t}}(x_0 \mymid x)\big(x - \sqrt{\overline{\alpha}_{t}}x_{0}\big) \mathrm{d} x_0\Big)^{\top} \notag \\
&\qquad-\int_{x_0} p_{X_0 \mymid X_{t}}(x_0 \mymid x)\big\|x - \sqrt{\overline{\alpha}_{t}}x_0\big\|_2^2\big(x - \sqrt{\overline{\alpha}_{t}}x_0\big)\big(x - \sqrt{\overline{\alpha}_{t}}x_0\big)^{\top} \mathrm{d} x_0\Big),\\
\notag &\frac{\partial}{\partial x} \int_{x_0} p_{X_0 \mymid X_{t}}(x_0 \mymid x)\big\|x - \sqrt{\overline{\alpha}_{t}}x_0\big\|_2^2z 
= \int_{x_0} p_{X_0 \mymid X_{t}}(x_0 \mymid x)\big\|x - \sqrt{\overline{\alpha}_{t}}x_0\big\|_2^2 \mathrm{d} x_0 J_t + 2zz^{\top} \\
&\quad+ \frac{1}{1-\overline{\alpha}_{t}}\Big(\Big(\int_{x_0} p_{X_0 \mymid X_{t}}(x_0 \mymid x)\big\|x - \sqrt{\overline{\alpha}_{t}}x_0\big\|_2^2\Big)zz^{\top} -z\Big(\int_{x_0} p_{X_0 \mymid X_{t}}(x_0 \mymid x)\big\|x - \sqrt{\overline{\alpha}_{t}}x_0\big\|_2^2\big(x - \sqrt{\overline{\alpha}_{t}}x_{0}\big) \mathrm{d} x_0\Big)^{\top}\Big),\\
\notag &\frac{\partial}{\partial x} \int_{x_0} p_{X_0 \mymid X_{t}}(x_0 \mymid x)\big(x - \sqrt{\overline{\alpha}_{t}}x_0\big)\big(x - \sqrt{\overline{\alpha}_{t}}x_0\big)^{\top}z \mathrm{d} x_0 = \|z\|_2^2I + zz^{\top} \notag\\
\notag &\qquad\qquad\qquad+\int_{x_0} p_{X_0 \mymid X_{t}}(x_0 \mymid x)\big(x - \sqrt{\overline{\alpha}_{t}}x_{0}\big)\big(x - \sqrt{\overline{\alpha}_{t}}x_{0}\big)^{\top}J_t \mathrm{d} x_0 \notag\\
\notag &\qquad\qquad\qquad+\frac{1}{1-\overline{\alpha}_{t}}\int_{x_0} p_{X_0 \mymid X_{t}}(x_0 \mymid x)\big(z^{\top}\big(x - \sqrt{\overline{\alpha}_{t}}x_{0}\big)\big)\big(x - \sqrt{\overline{\alpha}_{t}}x_{0}\big)z^{\top} \mathrm{d} x_0 \notag\\
&\qquad\qquad\qquad-\frac{1}{1-\overline{\alpha}_{t}}\int_{x_0} p_{X_0 \mymid X_{t}}(x_0 \mymid x)\big(z^{\top}\big(x - \sqrt{\overline{\alpha}_{t}}x_{0}\big)\big)\big(x - \sqrt{\overline{\alpha}_{t}}x_{0}\big)\big(x - \sqrt{\overline{\alpha}_{t}}x_{0}\big)^{\top} \mathrm{d} x_0.
\end{align}
\end{subequations} 
Equipped with the above relations,  we can easily verify that
\begin{subequations}
\begin{align}
\Big\|\frac{\partial u}{\partial x}\Big\| &\lesssim \frac{d(1-\alpha_{t})\log T}{1-\overline{\alpha}_{t}},\\
\mathsf{Tr}\Big(\frac{\partial u}{\partial x}\Big) &= \frac{(1-\alpha_t)\big(d + B_t - A_t\big)}{2(1-\overline{\alpha}_{t})} \notag\\
&\qquad+ \frac{(1-\alpha_t)^2}{8(1-\overline{\alpha}_{t})^2}\big(d -2A_t - A_t^2 + 3A_tB_t + 2B_t - 3B_t^2 + C_t + 4D_t - 3E_t - F_t\big), \\
\Big\|\frac{\partial u}{\partial x}\Big\|_{\mathrm{F}}^2 &= \frac{(1-\alpha_t)^2}{4(1-\overline{\alpha}_{t})^2}\Big\|\frac{\partial z}{\partial x}\Big\|_{\mathrm{F}}^2 + O\Big(d^5\Big(\frac{1-\alpha_{t}}{\alpha_{t}-\overline{\alpha}_{t}}\Big)^3\log^3 T\Big) \notag\\
&= \frac{(1-\alpha_t)^2}{4(1-\overline{\alpha}_{t})^2}\big(d + 2(B_t-A_t) + B_t^2 + F_t - 2D_t\big) + O\Big(d^5\Big(\frac{1-\alpha_{t}}{\alpha_{t}-\overline{\alpha}_{t}}\Big)^3\log^3 T\Big),
\end{align}
as long as $d^2\big(\frac{1-\alpha_{t}}{\alpha_{t}-\overline{\alpha}_{t}}\big)\log T \lesssim 1$, 
where we recall the definition of the quantities $A_{t}$ to $E_{t}$ in \eqref{eqn:ODE-constant}, and
\begin{align}
F_t(x) &\defn \Big\|\frac{1}{1-\overline{\alpha}_{t}}\int_{x_0} p_{X_0 \mymid X_{t}}(x_0 \mymid x)\big(x - \sqrt{\overline{\alpha}_{t}}x_{0}\big)\big(x - \sqrt{\overline{\alpha}_{t}}x_{0}\big)^{\top} \mathrm{d} x_0\Big\|_{\mathrm{F}}^2.
\end{align}
Plugging these results into inequality~\eqref{eqn:jude} leads to 
\begin{align}
p_{\varphi_t(Y)}(\varphi_t(x)) 
&= p_{Y}(x)\bigg\{1 + \frac{(1-\alpha_{t})(d+B_t-A_t)}{2(1-\overline{\alpha}_{t})} + O\Big(d^6\Big(\frac{1-\alpha_{t}}{\alpha_{t}-\overline{\alpha}_{t}}\Big)^3\log^3 T\Big) \notag\\
&\qquad+ \frac{(1-\alpha_{t})^2}{8(1-\overline{\alpha}_{t})^2}\big[d(d+2) + (4+2d)(B_t-A_t) - B_t^2 + C_t + 2D_t - 3E_t + A_tB_t\big] \bigg\}. 
\end{align}
\end{subequations}
We have thus completed the proof of Lemma~\ref{lem:main-ODE-fast}.


\section{Analysis for the accelerated stochastic sampler (Theorems~\ref{thm:main-SDE-R})}
\label{sec:analysis-stochastic-samplers-fast}

The proof of Theorem~\ref{thm:main-SDE-R} follows similar structure as the proof of Theorem~\ref{thm:main-SDE}.  
Throughout the proof, we shall employ the notation $\widehat{x}_t \coloneqq x_t / \sqrt{\alpha_t}$ as before.

\subsection{Proof of Theorem~\ref{thm:main-SDE-R}}
\label{sec:pf-thm-sde-r}

\paragraph{Step 1: expressing the update rule in terms of a Jacobian matrix.}  
Let us recall the Jacobian matrix $J_t(x)=\frac{\partial g_t(x)}{\partial x}$ defined in \eqref{eq:Jacobian-Thm4}. 
In view of the expression \eqref{eq:Jt-x-expression-ij-23} as well as \eqref{eqn:Xt-X0} (i.e., $X_t-\sqrt{\overline{\alpha}_t}X_0 = \sqrt{1-\overline{\alpha}_t}\, \overline{W}_t$ with $ \overline{W}_t\sim \mathcal{N}(0,I_d)$), one can write
\begin{align}
J_{t}(x) 
 & =I_{d}+\mathbb{E}\big[\overline{W}_{t}\mid X_{t}=x\big]\Big(\mathbb{E}\big[\overline{W}_{t}\mid X_{t}=x\big]\Big)^{\top}-\mathbb{E}\big[\overline{W}_{t}\overline{W}_{t}^{\top}\mid X_{t}=x\big]\nonumber \\
 & =I_{d}+\big(1-\overline{\alpha}_{t}\big)s_{t}^{\star}(x)s_{t}^{\star}(x)^{\top}-\mathbb{E}\big[\overline{W}_{t}\overline{W}_{t}^{\top}\mid X_{t}=x\big],\label{eq:Jt-x-expression-ij-1}
\end{align}
where 
the last line makes use of the relation \eqref{eq:st-MMSE-expression}. 
Additionally, 
recall that (i) $v_t(x,z)$ (cf.~\eqref{eqn:training-variance}) is the MMSE estimator for estimating $\overline{W}_t\overline{W}_tz$ given $\sqrt{\overline{\alpha}_t}X_0+ \sqrt{1-\overline{\alpha}_t}\, \overline{W}_t=x$ and $Z_t=z$, and (ii) $Z_t$ is independent from $X_0$ and $\overline{W}_t$. Then this MMSE estimator admits the following expression \citep[Section~3.3.1]{hajek2015random}: 
\begin{equation}
v_{t}(x,z)=\mathbb{E}\big[\overline{W}_{t}\overline{W}_{t}^{\top}z\mid X_{t}=x\big]=\mathbb{E}\big[\overline{W}_{t}\overline{W}_{t}^{\top}\mid X_{t}=x\big]z.
	\label{eq:vt-explicit-expression}
\end{equation}
As a result, the mapping introduced in \eqref{eqn:sde-sampling-R-Psi} can be alternatively expressed as: 
\begin{align}
\Psi_{t}(x,z) & =\frac{1}{\sqrt{\alpha_{t}}}\Big(x+(1-\alpha_{t})s_{t}^{\star}(x)\Big)+\sigma_{t}\left\{ z-\frac{1-\alpha_{t}}{2(1-\overline{\alpha}_{t})}\left[z+(1-\overline{\alpha}_{t})s_{t}^{\star}(x)s_{t}^{\star}(x)^{\top}z-v_{t}(x,z)\right]\right\} \notag\\
 & =\mu_{t}(x)+\sigma_{t}\bigg(I-\frac{1-\alpha_{t}}{2(1-\overline{\alpha}_{t})}J_{t}(x)\bigg)z,
	\label{eqn:sde-sampling-R-Psi-proof}
\end{align}
where we have also used the definition \eqref{eqn:nu-t-2} of $\mu_t(\cdot)$. 
In comparison to the plain DDPM-type sampler (cf.~\eqref{eqn:sde-sampling}), 
the key correction term is the second component on the right-hand side of \eqref{eqn:sde-sampling-R-Psi-proof}, 
which adjusts the covariance of the additive Gaussian noise.

Equipped with the above expression \eqref{eqn:sde-sampling-R-Psi-proof}, 
we can readily express the conditional distribution of $Y_{t-1}$ (cf.~\eqref{eqn:sde-sampling-R-Y}) given $Y_t$ such that: for any points $x_t,x_{t-1}\in \mathbb{R}^d$, 
\begin{align}
p_{Y_{t-1}\mymid Y_{t}}(x_{t-1}\mymid x_{t}) & =\frac{1}{\big(2\pi\frac{1-\alpha_{t}}{\alpha_{t}}\big)^{d/2}\big|\det\big(I-\frac{1-\alpha_{t}}{2(1-\overline{\alpha}_{t})}J_{t}(x_{t})\big)\big|} \notag\\
 & \qquad\cdot\exp\Bigg(-\frac{\alpha_{t}}{2(1-\alpha_{t})}\bigg\|\Big(I-\frac{1-\alpha_{t}}{2(1-\overline{\alpha}_{t})}J_{t}(x_{t})\Big)^{-1}\big(x_{t-1}-\mu_{t}(x_{t})\big)\bigg\|_{2}^{2}\Bigg).
	\label{eq:p-Y-conditional-thm4}
\end{align}

\paragraph{Step 2: controlling the conditional distributions $p_{X_{t-1} \mymid X_t}$ and $p_{Y_{t-1} \mymid Y_t}$.} 
Akin to Step 2 in the proof of Theorem~\ref{thm:main-SDE}, 
we need to look at the conditional distribution $p_{X_{t-1} \mymid X_t}$  when restricted to points from the following set:  
\begin{align}
\label{eqn:eset-acclerated}
	\mathcal{E} \defn \bigg\{(x_t, x_{t-1}) \mymid -\log p_{X_t}(x_t) \leq \frac{1}{2}c_6 d\log T, ~\|x_{t-1} - \widehat{x}_t\|_2 \leq c_3 \sqrt{d(1 - \alpha_t)\log T} \bigg\},
\end{align}
with the numerical constants $c_3,c_6>0$ introduced in Lemma~\ref{lem:river}. 
The following lemma, which is a counterpart of Lemma~\ref{lem:sde} for the accelerated sampler, characterizes $p_{X_{t-1} \mymid X_t}$ in a fairly tight manner over the set $\mathcal{E}$. 
The proof is deferred to Appendix~\ref{sec:proof-lem:sde-R}. 
\begin{lems}
\label{lem:sde-R} 
There exists some large enough numerical constant $c_{\zeta}>0$ such that: 
for every $(x_t, x_{t-1}) \in \mathcal{E}$,  
	\begin{align}
		&p_{X_{t-1}\mymid X_{t}}(x_{t-1}\mymid x_{t})  =\frac{1}{\big(2\pi\frac{1-\alpha_{t}}{\alpha_{t}}\big)^{d/2}\big|\det\big(I-\frac{1-\alpha_{t}}{2(1-\overline{\alpha}_{t})}J_{t}(x_{t})\big)\big|} \notag\\
 & \qquad\qquad\qquad\cdot\exp\bigg(-\frac{\alpha_{t}}{2(1-\alpha_{t})}\bigg\|\bigg(I-\frac{1-\alpha_{t}}{2(1-\overline{\alpha}_{t})}J_{t}(x_{t})\bigg)^{-1}\big(x_{t-1}-\mu_{t}(x_{t})\big)\bigg\|_{2}^{2}+\zeta_{t}(x_{t-1},x_{t})\bigg)		 
		\label{eq:cond-dist-crude-fast}
	\end{align}
holds for some residual term $\zeta_{t}(x_{t-1},x_t)$ obeying 
\begin{equation}
	\big|\zeta_{t}(x_{t-1},x_t) \big|\leq c_{\zeta} \frac{d^{3}\log^{4.5}T}{T^{3/2}}. 
	\label{eq:eq:cond-dist-crude-xit-fast}
\end{equation}
Here, we recall the definition of $\mu_t(\cdot)$ (resp.~$J_t(\cdot)$) in \eqref{eqn:nu-t-2} (resp.~\eqref{eq:Jacobian-Thm4}).
\end{lems}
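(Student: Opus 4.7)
The plan is to upgrade the argument in the proof of Lemma~\ref{lem:sde} by tracking Jacobian information to one more order. Specifically, I would take as a starting point the exact expression~\eqref{eqn:allegro-here} (which already holds before any simplification) and match it to the target formula~\eqref{eq:p-Y-conditional-thm4} rather than to a plain isotropic Gaussian. Writing $A_t \coloneqq \tfrac{1-\alpha_t}{2(1-\overline{\alpha}_t)} J_t(x_t)$ and using the Taylor expansion $(I-A_t)^{-2} = I + 2A_t + 3A_t^2 + O(\|A_t\|^3)$, the exponent of the target density~\eqref{eq:p-Y-conditional-thm4} unfolds into
\[
-\tfrac{\alpha_t}{2(1-\alpha_t)}\|x_{t-1}-\mu_t(x_t)\|_2^2 - \tfrac{\alpha_t}{2(1-\overline{\alpha}_t)}(x_{t-1}-\mu_t(x_t))^\top J_t(x_t)(x_{t-1}-\mu_t(x_t)) + O(\tfrac{d^3\log^{3}T}{T^{3/2}}),
\]
plus a $-\log|\det(I-A_t)|$ normalization term, where the last error comes from $\|A_t\|\lesssim d\log T/T$ together with $\|x_{t-1}-\mu_t(x_t)\|_2\lesssim \sqrt{d(1-\alpha_t)\log T}$.

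Next I would refine the exponent of~\eqref{eqn:allegro-here} to the same order. The main task is to show that the integrated Jacobian appearing there satisfies
\[
\frac{1}{1-\overline{\alpha}_{t-1}}\int_0^1\!\!\int_0^1 \gamma\, J_{t-1}\big((1-\tau)\widehat{x}_t + \tau x_t(\gamma)\big)\,\mathrm{d}\tau\,\mathrm{d}\gamma \;=\; \frac{1}{2(1-\overline{\alpha}_{t-1})} J_{t-1}(\widehat{x}_t) + O\!\big(\tfrac{d^2\log^2 T}{T^{1/2}(1-\overline{\alpha}_{t-1})}\big),
\]
by a Lipschitz estimate on $J_{t-1}$ (which reduces to controlling third conditional moments via Lemma~\ref{lem:x0}), and then to upgrade Claim~\ref{lem:claim-pXt-1-pXt-middle} so that $\tfrac{1}{1-\overline{\alpha}_{t-1}}J_{t-1}(\widehat{x}_t)=\tfrac{1}{1-\overline{\alpha}_t}J_t(x_t) + O\big(d^2(1-\alpha_t)\log^2 T/(1-\overline{\alpha}_t)^{2}\big)$. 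Combined with the sharpened version of \eqref{eq:approx-t-a} (the linear part was controlled to $(1-\alpha_t)(d\log T/(\alpha_t-\overline{\alpha}_t))^{3/2}$ already), this converts the linear-in-$(x_{t-1}-\widehat{x}_t)$ term of~\eqref{eqn:allegro-here} into the shifted quadratic centered at $\mu_t(x_t)$ and the quadratic term into the desired $J_t(x_t)$-form. Completing the square produces exactly the exponent of \eqref{eq:p-Y-conditional-thm4} modulo a residual of order $d^3\log^{4.5}T/T^{3/2}$, which I absorb into $\zeta_t(x_{t-1},x_t)$.

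Finally, I would pin down the normalization factor in the spirit of Step~3 of the proof of Lemma~\ref{lem:sde}. Having already matched the $x_{t-1}$-dependent part up to the claimed residual, the ratio between the unknown prefactor $\tfrac{p_{X_{t-1}}(\widehat{x}_t)}{p_{X_t}(x_t)}$ and $\big(2\pi\tfrac{1-\alpha_t}{\alpha_t}\big)^{-d/2}|\det(I-A_t)|^{-1}$ must equal $1+O(\zeta_t)$. This is extracted by integrating both expressions over $x_{t-1}$ in a typical shell $\{\|x_{t-1}-\mu_t(x_t)\|_2\lesssim \sqrt{d(1-\alpha_t)\log T}\}$, using the tail bounds \eqref{eq:sanwidch-bound-135}--\eqref{eq:sanwidch-bound-1357} for the $X$-integral and a matching Gaussian tail bound against the covariance $(I-A_t)^{-2}$ for the $Y$-integral (here $\|A_t\|\ll 1$ makes the covariance uniformly comparable to $I$).

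The main obstacle is the Jacobian-transfer step: showing that $J_{t-1}$ evaluated along the line segment between $\widehat{x}_t$ and $x_{t-1}$, appropriately renormalized, equals $J_t(x_t)$ up to $O(d^2\log^2 T/T^{1/2})$. This requires a Lipschitz-type bound on $x\mapsto J_t(x)$ — essentially a higher-moment analogue of Lemma~\ref{lem:x0} controlling $\mathbb{E}[\|X_t-\sqrt{\overline{\alpha}_t}X_0\|_2^3\mid X_t=x]$ variations across a typical shell — together with a comparable estimate across adjacent time indices, analogous to but strictly sharper than what was needed in Claim~\ref{lem:claim-pXt-1-pXt-middle}. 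Once this Lipschitz control is in place, the cubic-error book-keeping and the completion of the square are routine, and the residual $\zeta_t$ collects cleanly into the $d^3\log^{4.5}T/T^{3/2}$ bound of~\eqref{eq:eq:cond-dist-crude-xit-fast}.
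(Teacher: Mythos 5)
Your proposal mirrors the paper's actual proof: both start from the exact expression~\eqref{eqn:allegro-here}, transfer the integrated Jacobian first to $J_{t-1}(\widehat{x}_t)$ and then to $J_t(x_t)$ via perturbation estimates that are precisely \eqref{eq:Jacobi-b} and \eqref{eq:approx-t-b}, swap the linear term using \eqref{eq:approx-t-a} and \eqref{eq:approx-t-bbb}, complete the square against the second-order Taylor expansion of $\big(I-\tfrac{1-\alpha_t}{2(1-\overline{\alpha}_t)}J_t(x_t)\big)^{-2}$, and pin down the prefactor by integrating over the typical shell exactly as in Step~3 of Lemma~\ref{lem:sde}. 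The only cosmetic difference is how the Jacobian-transfer bound is produced — you envisage a direct Lipschitz bound on $J_{t-1}$ via conditional third moments, whereas the paper instead perturbs the first and second conditional moments through a density-ratio comparison and then reassembles $J_{t-1}$ via \eqref{eqn:derivative-2} — but both implementations land on the same error order and fit within the claimed $d^3\log^{4.5}T/T^{3/2}$ residual.
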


Moving beyond the set $\mathcal{E}$, we are still in need of bounding the log density ratio $\log \frac{p_{X_{t-1} \mymid X_t}}{p_{Y_{t-1} \mymid Y_t}}$ for all pairs $(x_t,x_{t-1})$ outside $\mathcal{E}$,  
in a way similar to Lemma~\ref{lem:sde-full} in  the proof of Theorem~\ref{thm:main-SDE}. 
Our crude bound towards this end is stated as follows, whose proof is postponed to Appendix~\ref{sec:proof-lem:sde-R-full}.
\begin{lems}
\label{lem:sde-R-full}
For all $(x_t, x_{t-1}) \in \real^d \times \real^d$, we have
\begin{align}
\log \frac{p_{X_{t-1} \mymid X_t}(x_{t-1}\mymid x_t)}{p_{Y_{t-1} \mymid Y_t}(x_{t-1}\mymid x_t)} 
& \leq T^{c_{0}+2c_{R}+2}\left\{ \big\| x_{t-1}-\widehat{x}_{t}\big\|_{2}^{2}+\|x_{t}\|_{2}^{2}+1\right\},
	\label{eq:SDE-ratio-crude-2}
\end{align}
where $c_0$ is defined in \eqref{eqn:alpha-t}. 
\end{lems}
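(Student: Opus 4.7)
The plan is to follow the same two-part strategy as in the proof of Lemma~\ref{lem:sde-full}: bound $\log p_{X_{t-1}\mymid X_t}$ from above and $\log\frac{1}{p_{Y_{t-1}\mymid Y_t}}$ from above, then combine. The numerator is handled verbatim by the argument already given for Lemma~\ref{lem:sde-full}: the same Gaussian transition identity together with $\Pr(\|X_0\|_2\le T^{c_R})=1$ and the size properties \eqref{eqn:properties-alpha-proof} of $\alpha_t$ yield a crude bound of the form $2T(\|x_{t-1}-\widehat{x}_t\|_2^2+\|x_t\|_2^2+T^{2c_R})$, so no new work is required there.

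The substantive content is the denominator, where the Gaussian kernel in \eqref{eq:p-Y-conditional-thm4} is now twisted by the matrix $M_t(x_t):=I-\frac{1-\alpha_t}{2(1-\overline{\alpha}_t)}J_t(x_t)$. I would first observe, using the expression \eqref{eq:Jt-x-expression-ij-23}, that $J_t(x)=I-\frac{1}{1-\overline{\alpha}_t}\operatorname{Var}(X_t-\sqrt{\overline{\alpha}_t}X_0\mid X_t=x)$, so $J_t(x)\preceq I$. Because $\frac{1-\alpha_t}{2(1-\overline{\alpha}_t)}\le \frac12$, this gives the uniform lower bound $M_t(x_t)\succeq \frac12 I$, hence $\|M_t(x_t)^{-1}\|\le 2$ and $|\det M_t(x_t)|\ge 2^{-d}$. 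For the upper side, the conditional bound $\|X_t-\sqrt{\overline{\alpha}_t}X_0\|_2\le \|x_t\|_2+T^{c_R}$ (holding almost surely given $X_t=x_t$) controls the variance by $(\|x_t\|_2+T^{c_R})^2$, which together with $\frac{1-\alpha_t}{(1-\overline{\alpha}_t)^2}\le T^{c_0}$ yields $\|M_t(x_t)\|\le 1+T^{c_0}(\|x_t\|_2+T^{c_R})^2$ and thus, via $\det M_t\le \|M_t\|^d$ and $\log(1+u)\le u$, an upper bound on $\log|\det M_t(x_t)|$ of order $dT^{c_0}(\|x_t\|_2^2+T^{2c_R})$.

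It remains to bound the quadratic form. Using $\|M_t^{-1}\|\le 2$, one has $\|M_t^{-1}(x_{t-1}-\mu_t(x_t))\|_2^2\le 4\|x_{t-1}-\mu_t(x_t)\|_2^2\le 8\|x_{t-1}-\widehat{x}_t\|_2^2+8\|\mu_t(x_t)-\widehat{x}_t\|_2^2$. From the definition \eqref{eqn:nu-t-2} and $\|g_t(x_t)\|_2\le \mathbb{E}[\|X_t-\sqrt{\overline{\alpha}_t}X_0\|_2\mid X_t=x_t]\le \|x_t\|_2+T^{c_R}$, we get $\|\mu_t(x_t)-\widehat{x}_t\|_2\lesssim \|x_t\|_2+T^{c_R}$, reproducing the same type of bound already used in \eqref{eq:dist-xt-mu-xt-UB-crude}. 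Multiplying by $\frac{\alpha_t}{2(1-\alpha_t)}\le T^{c_0}$ and combining with the determinant estimate and the $\frac{d}{2}\log(2\pi(1-\alpha_t)/\alpha_t)$ normalization term produces a clean bound of the form $T^{c_0}\{C_d(\|x_{t-1}-\widehat{x}_t\|_2^2+\|x_t\|_2^2+T^{2c_R})\}$ for the quantity $-\log p_{Y_{t-1}\mymid Y_t}$.

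Adding the bound on the numerator and absorbing all polynomial factors (in $d$, $T$, $T^{c_R}$) into the exponent $T^{c_0+2c_R+2}$ then yields \eqref{eq:SDE-ratio-crude-2}. The main technical point — and the only conceptual step beyond the proof of Lemma~\ref{lem:sde-full} — is the uniform lower bound $M_t(x_t)\succeq \frac12 I$, since without it neither $\|M_t^{-1}\|$ nor $|\det M_t|^{-1}$ admits a distribution-free control. I expect this to be straightforward once the representation $J_t=I-(1-\overline{\alpha}_t)^{-1}\operatorname{Var}(\cdots)$ is isolated; everything else is bookkeeping with the learning-rate estimates \eqref{eqn:properties-alpha-proof} and the trivial norm bounds arising from $\|X_0\|_2\le T^{c_R}$.
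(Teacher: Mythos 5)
Your proposal is correct and follows essentially the same route as the paper: the paper also derives a uniform lower bound $\Sigma(\widehat{x}_t) \succeq \frac{1-\alpha_t}{4}\,I$ from $J_t\preceq I$ (equivalent to your $M_t(x_t)\succeq\frac12 I$), derives an upper bound on $\Sigma(\widehat{x}_t)$ from the almost-sure bound $\|X_0\|_2\le T^{c_R}$, and then controls $-\log p_{Y_{t-1}\mymid Y_t}$ via the quadratic form plus log-determinant. Your intermediate constants differ slightly (e.g.\ $\frac{\alpha_t}{2(1-\alpha_t)}\le T^{c_0+1}$ in general rather than $T^{c_0}$, since $\beta_t$ can be as small as $\sim T^{-(c_0+1)}\log T$ for $t\ge 2$), but the generous exponent $T^{c_0+2c_R+2}$ in the target bound absorbs this.

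One point where you are actually more careful than the paper: you bound $\log p_{X_{t-1}\mymid X_t}$ separately, reusing the argument from Lemma~\ref{lem:sde-full}, and then add. The paper instead closes with the one-line claim that $\log\frac{p_{X_{t-1}\mymid X_t}}{p_{Y_{t-1}\mymid Y_t}}\le\log\frac{1}{p_{Y_{t-1}\mymid Y_t}}$, which requires $p_{X_{t-1}\mymid X_t}(x_{t-1}\mymid x_t)\le 1$ pointwise --- not generally true for a conditional density (indeed here the reverse transition kernel has variance $\sim 1-\alpha_t\ll 1$ and so can be much larger than $1$). Your two-piece bound is the sound version of this step and is exactly what is needed; the extra $2T(\|x_{t-1}-\widehat{x}_t\|_2^2+\|x_t\|_2^2+T^{2c_R})$ contribution from the numerator is dominated by $T^{c_0+2c_R+2}\{\|x_{t-1}-\widehat{x}_t\|_2^2+\|x_t\|_2^2+1\}$.
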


\paragraph{Step 3: bounding the KL divergence of interest.}  
With Lemmas~\ref{lem:sde-R}-\ref{lem:sde-R-full} in place, 
one can repeat the arguments in Step 3 in the proof of Theorem~\ref{thm:main-SDE} to arrive at
\[
\mathbb{E}_{x_{t}\sim X_{t}}\Big[\mathsf{KL}\Big(p_{X_{t-1}\mymid X_{t}}(\cdot\mymid x_{t})\parallel p_{Y_{t-1}\mymid Y_{t}}(\cdot\mymid x_{t})\Big)\Big]\lesssim\left(\frac{d^{3}\log^{4.5}T}{T^{3/2}}\right)^{2}.
\]
Substitution into \eqref{eqn:kl-decomp} and \eqref{eq:Pinsker-thm3} then yields
\[
2\mathsf{TV}(p_{X_{1}},p_{Y_{1}})^{2}\leq\mathsf{KL}(p_{X_{1}}\parallel p_{Y_{1}})\lesssim\mathsf{KL}(p_{X_{T}}\parallel p_{Y_{T}})+\sum_{t\geq2}^{T}\frac{d^{6}\log^{9}T}{T^{3}}\asymp\frac{d^{6}\log^{9}T}{T^{2}},
\]
where the last relation results from \eqref{eqn:KL-T-123}. This completes the proof of Theorem~\ref{thm:main-SDE-R}.

%
%


\subsection{Proof of Lemma~\ref{lem:sde-R}}
\label{sec:proof-lem:sde-R}

Recall that an explicit expression for $p_{X_{t-1}\mymid X_{t}}(x_{t-1}\mymid x_{t})$ has already been established in Lemma~\ref{lem:sde} (see \eqref{eqn:allegro-here})
A little algebra then allows one to write
\begin{align}
p_{X_{t-1}\mymid X_{t}}(x_{t-1}\mymid x_{t}) & =f_{1}(x_{t})\exp\Big(-f_{2}(x_{t},x_{t-1})+\zeta_{t,1}(x_{t},x_{t-1})\Big),
\label{eqn:allegro-here-23}
\end{align}
for some function $f_1(\cdot)$, where
\begin{subequations}
	\label{eq:defn-gt-f2-zeta}
\begin{align}
	g_{t-1}(x) & \coloneqq\int_{x_{0}}\big(x-\sqrt{\overline{\alpha}_{t-1}}x_{0}\big)p_{X_{0}|X_{t-1}}(x_{0}\mymid x)\mathrm{d}x_0, \\
f_{2}(x_{t},x_{t-1})&=\frac{\|x_{t}-\sqrt{\alpha_{t}}x_{t-1}\|_{2}^{2}}{2(1-\alpha_{t})}+\frac{(x_{t-1}-\widehat{x}_{t})^{\top}g_{t-1}\big(\widehat{x}_{t}\big)}{1-\overline{\alpha}_{t-1}}+\frac{\frac{1}{2}(x_{t-1}-\widehat{x}_{t})^{\top}J_{t-1}\big(\widehat{x}_{t}\big)\big(x_{t-1}-\widehat{x}_{t}\big)}{1-\overline{\alpha}_{t-1}},\\
\zeta_{t,1}(x_{t},x_{t-1}) & =(x_{t-1}-\widehat{x}_{t})^{\top}\frac{\int_{0}^{1}\int_{0}^{1}\gamma\left[J_{t-1}\Big((1-\tau)\widehat{x}_{t}+\tau x_{t}(\gamma)\Big)-J_{t-1}\big(\widehat{x}_{t}\big)\right]\mathrm{d}\tau\mathrm{d}\gamma}{1-\overline{\alpha}_{t-1}}\big(x_{t-1}-\widehat{x}_{t}\big),
\end{align}
\end{subequations}
and we remind the readers that $x_t(\gamma) = \gamma x_{t-1} + (1-\gamma) \widehat{x}_t$ with $\widehat{x}_t = x_t / \sqrt{\alpha_t}$.

In order to control \eqref{eqn:allegro-here-23}, we single out two useful facts: for every $(x_t,x_{t-1})\in \mathcal{E}$ (cf.~\eqref{eqn:eset-acclerated}),
\begin{align}
\label{eq:Jacobi-b}
	\big\|J_{t-1}\big(x_t(\gamma)\big) - J_{t-1}\big(\widehat{x}_t\big) \big\| \lesssim d^2\sqrt{\frac{1-\alpha_t}{1-\overline{\alpha}_{t-1}}}\log^2 T,
	\qquad \forall \gamma \in [0,1]
\end{align}
and
\begin{align}
\bigg\|\frac{J_{t-1}(\widehat{x}_t)}{1-\overline{\alpha}_{t-1}} - \frac{J_{t}(x_{t})}{1-\overline{\alpha}_{t}}\bigg\| 
\lesssim \frac{d^2(1-\alpha_t)\log^2 T}{(\alpha_t-\overline{\alpha}_{t})^2}.\label{eq:approx-t-b}
\end{align}
The proofs of these two facts are postponed to Appendix~\ref{sec:proof-lem:sde-R-claims}. 
These facts in turn allow us to bound 
\begin{equation}
\big|\zeta_{t,1}(x_{t},x_{t-1})\big|\leq\frac{\big\| x_{t-1}-\widehat{x}_{t}\big\|_{2}^{2}}{2(1-\overline{\alpha}_{t-1})}\sup_{\gamma\in[0,1]}\big\| J_{t-1}\big(x_{t}(\gamma)\big)-J_{t-1}\big(\widehat{x}_{t}\big)\big\|\lesssim d^{3}\bigg(\frac{1-\alpha_{t}}{1-\overline{\alpha}_{t-1}}\bigg)^{3/2}\log^{3}T\label{eq:zeta-t-lem7-UB}
\end{equation}
and
\begin{align}
 & \left|\frac{(x_{t-1}-\widehat{x}_{t})^{\top}J_{t-1}\big(\widehat{x}_{t}\big)\big(x_{t-1}-\widehat{x}_{t}\big)}{1-\overline{\alpha}_{t-1}}-\frac{(x_{t-1}-\widehat{x}_{t})^{\top}J_{t}(x_{t})\big(x_{t-1}-\widehat{x}_{t}\big)}{1-\overline{\alpha}_{t}}\right|\nonumber \\
 & \qquad\leq\big\| x_{t-1}-\widehat{x}_{t}\big\|_{2}^{2}\bigg\|\frac{J_{t-1}(\widehat{x}_{t})}{1-\overline{\alpha}_{t-1}}-\frac{J_{t}(x_{t})}{1-\overline{\alpha}_{t}}\bigg\|\lesssim\frac{d^{3}(1-\alpha_{t})^{2}\log^{3}T}{(\alpha_{t}-\overline{\alpha}_{t})^{2}}\asymp\frac{d^{3}(1-\alpha_{t})^{2}\log^{3}T}{(1-\overline{\alpha}_{t-1})^{2}}\label{eq:gap-Jt-lem7}
\end{align}
for any $(x_t,x_{t-1})\in \mathcal{E}$. 
Consequently, there exists some function $f_3(\cdot)$ such that
\begin{subequations}
	\label{eq:defn-gt-f3-zeta-34}
\begin{align}
p_{X_{t-1}\mymid X_{t}}(x_{t-1}\mymid x_{t}) & =f_{3}(x_{t})\exp\Big(-f_{4}(x_{t},x_{t-1})+\zeta_{t,2}(x_{t},x_{t-1})\Big),
\label{eqn:allegro-here-234}
\end{align}
where
\begin{align}
	f_{4}(x_{t},x_{t-1})&=\frac{\alpha_t\| \widehat{x}_{t}-x_{t-1}\|_{2}^{2}}{2(1-\alpha_{t})}+\frac{(x_{t-1}-\widehat{x}_{t})^{\top}g_{t-1}\big(\widehat{x}_{t}\big)}{1-\overline{\alpha}_{t-1}}+\frac{\frac{1}{2}(x_{t-1}-\widehat{x}_{t})^{\top}J_{t}(x_{t})\big(x_{t-1}-\widehat{x}_{t}\big)}{1-\overline{\alpha}_{t}}, \label{eqn:allegro-here-235}\\
 \big|\zeta_{t,2}(x_{t},x_{t-1})\big| & \lesssim d^{3}\bigg(\frac{1-\alpha_{t}}{1-\overline{\alpha}_{t-1}}\bigg)^{3/2}\log^{3}T 
	\lesssim \frac{d^3 \log^{4.5}T}{T^{3/2}}. 
\end{align}
\end{subequations}

To continue, we further observe that
\begin{equation}
\left|\frac{(x_{t-1}-\widehat{x}_{t})^{\top}g_{t-1}\big(\widehat{x}_{t}\big)}{1-\overline{\alpha}_{t-1}}-\frac{\sqrt{\alpha_{t}}(x_{t-1}-\widehat{x}_{t})^{\top}g_{t}(x_{t})}{1-\overline{\alpha}_{t}}\right|\lesssim\frac{d^{3}\log^{3.5}T}{T^{3/2}}, 
	\label{eq:gap-gt-lem7}
\end{equation}
which is an immediate consequence of the following two bounds (obtained using \eqref{eq:approx-t-a}, \eqref{eq:approx-t-bbb} and \eqref{eqn:properties-alpha-proof-1}): 
\begin{align*}
\left|\frac{(x_{t-1}-\widehat{x}_{t})^{\top}g_{t-1}\big(\widehat{x}_{t}\big)}{1-\overline{\alpha}_{t-1}}-\frac{(x_{t-1}-\widehat{x}_{t})^{\top}g_{t}(x_{t})}{1-\overline{\alpha}_{t}}\right|
	&\lesssim d^{2}\sqrt{\frac{(1-\alpha_{t})^{3}}{(\alpha_{t}-\overline{\alpha}_{t})^{3}}}\log^{2}T
	\lesssim\frac{d^{3}\log^{3.5}T}{T^{3/2}}  \\
	 \left|\left(1-\sqrt{\alpha_{t}}\right)\frac{(x_{t-1}-\widehat{x}_{t})^{\top}g_t(x_t)}{1-\overline{\alpha}_{t}}\right| 
	 &\lesssim\frac{d\log^{2}T}{T^{3/2}}. 
\end{align*}
%
This bound \eqref{eq:gap-gt-lem7} allows us to replace the second term on the right-hand side of \eqref{eqn:allegro-here-235} with $\frac{\sqrt{\alpha_{t}}(x_{t-1}-\widehat{x}_{t})^{\top}g_{t}(x_{t})}{1-\overline{\alpha}_{t}}$.  
It is also seen  from \eqref{eq:Jacobi-a} and the properties \eqref{eqn:properties-alpha-proof}
that
\begin{equation}
\left\Vert \frac{1-\alpha_{t}}{2(1-\overline{\alpha}_{t})}J_{t}(x_{t})\right\Vert \lesssim\frac{\log T}{T}\cdot d\log T\asymp\frac{d\log^{2}T}{T}=o(1), 
	\label{eq:rescaled-J-norm}
\end{equation}
and therefore, 
\begin{align*}
\left\Vert \left(1-\alpha_{t}\right)\frac{(x_{t-1}-\widehat{x}_{t})^{\top}J_{t}(x_{t})\big(x_{t-1}-\widehat{x}_{t}\big)}{2(1-\overline{\alpha}_{t})}\right\Vert  & \lesssim\frac{d\log^{2}T}{T}\cdot\big\| x_{t-1}-\widehat{x}_{t}\big\|_{2}^{2}\lesssim\frac{\left(1-\alpha_{t}\right)d^{2}\log^{3}T}{T}\lesssim\frac{d^{2}\log^{4}T}{T^{2}}.
\end{align*}
These combined with \eqref{eq:defn-gt-f3-zeta-34} allow us to show that: there exist some functions $f_5(\cdot)$ and $\widetilde{f}_5(\cdot)$ such that
\begin{subequations}
	\label{eq:defn-gt-f3-zeta-3456}
\begin{align}
p_{X_{t-1}\mymid X_{t}}(x_{t-1}\mymid x_{t}) & =f_{5}(x_{t})\exp\Big(-f_{6}(x_{t},x_{t-1})+\zeta_{t,3}(x_{t},x_{t-1})\Big)
\label{eqn:allegro-here-23456}
\end{align}
where
\begin{align}
	f_{6}(x_{t},x_{t-1}) & =\frac{\alpha_{t}\|x_{t-1}-\widehat{x}_{t}\|_{2}^{2}}{2(1-\alpha_{t})}+\frac{\sqrt{\alpha_{t}}(x_{t-1}-\widehat{x}_{t})^{\top}g_{t}(x_{t})}{1-\overline{\alpha}_{t}}+\frac{\frac{1}{2}\alpha_{t}(x_{t-1}-\widehat{x}_{t})^{\top}J_{t}(x_{t})\big(x_{t-1}-\widehat{x}_{t}\big)}{1-\overline{\alpha}_{t}} \notag\\
 & =\frac{\alpha_{t}}{2(1-\alpha_{t})}\left\{ \|x_{t-1}-\mu_{t}(x_{t})\|_{2}^{2}+\frac{1-\alpha_{t}}{1-\overline{\alpha}_{t}}(x_{t-1}-\widehat{x}_{t})^{\top}J_{t}(x_{t})\big(x_{t-1}-\widehat{x}_{t}\big)\right\} +\widetilde{f}_{t}(x_{t}),\label{eqn:allegro-here-23556}\\
\big|\zeta_{t,3}(x_{t},x_{t-1})\big| & \lesssim\frac{d^{3}\log^{4.5}T}{T^{3/2}}. 
\end{align}
\end{subequations}

To further proceed, we make note of another useful fact:
\begin{align*}
 & \frac{\alpha_{t}}{1-\alpha_{t}}\left|\big(x_{t-1}-\widehat{x}_{t}\big)\bigg(\frac{1-\alpha_{t}}{1-\overline{\alpha}_{t}}J_{t}(x_{t})\bigg)\left(\frac{1}{\sqrt{\alpha_{t}}}\frac{1-\alpha_{t}}{1-\overline{\alpha}_{t}}g_{t}(x_{t})\right)\right|\notag\\
 & \quad\leq\frac{\alpha_{t}}{1-\alpha_{t}}\big\| x_{t-1}-\widehat{x}_{t}\big\|_{2}\cdot\left\Vert \frac{1-\alpha_{t}}{2(1-\overline{\alpha}_{t})}J_{t}(x_{t})\right\Vert \cdot\frac{1-\alpha_{t}}{\sqrt{\alpha_{t}}(1-\overline{\alpha}_{t})}\mathbb{E}\left[\big\| x_{t}-\sqrt{\overline{\alpha}_{t}}X_{0}\big\|_{2}\mymid X_{t}=x_{t}\right]\notag\\
 & \quad\lesssim\sqrt{d(1-\alpha_{t})\log T}\cdot\frac{d\log^{2}T}{T}\cdot\frac{\log T}{T}\cdot\sqrt{d(1-\overline{\alpha}_{t})\log T}\notag\\
 & \quad\asymp\frac{d^{2}\log^{4}T}{T^{2}}\sqrt{1-\alpha_{t}}\lesssim\frac{d^{2}\log^{4.5}T}{T^{3/2}},
\end{align*}
where we have made use of the crude bound \eqref{eq:rescaled-J-norm} in conjunction with the properties \eqref{eqn:properties-alpha-proof}. 
Taking this observation together with \eqref{eq:defn-gt-f3-zeta-3456} and \eqref{eqn:nu-t-2}, 
we can apply a little algebra to derive
\begin{subequations}
	\label{eq:defn-gt-f3-zeta-56}
\begin{align}
p_{X_{t-1}\mymid X_{t}}(x_{t-1}\mymid x_{t}) & =f_{7}(x_{t})\exp\Big(-f_{8}(x_{t},x_{t-1})+\zeta_{t,4}(x_{t},x_{t-1})\Big)
\label{eqn:allegro-here-456}
\end{align}
for some function $f_7(\cdot)$, where
\begin{align}
f_{8}(x_{t},x_{t-1}) & =\frac{\alpha_{t}}{2(1-\alpha_{t})}\left\{ \big(x_{t-1}-\mu_{t}(x_{t})\big)^{\top}\bigg(I+\frac{1-\alpha_{t}}{2(1-\overline{\alpha}_{t})}J_{t}(x_{t})\bigg)\big(x_{t-1}-\mu_{t}(x_{t})\big)\right\} ,\\
 \big|\zeta_{t,4}(x_{t},x_{t-1})\big| & \lesssim  \frac{d^3 \log^{4.5}T}{T^{3/2}}. 
\end{align}
\end{subequations}
%

%

Note, however, that the covariance matrix $I+\frac{1-\alpha_{t}}{2(1-\overline{\alpha}_{t})}J_{t}(x_{t})$ 
still differs from the desired one $\big(I-\frac{1-\alpha_{t}}{2(1-\overline{\alpha}_{t})}J_{t}(x_{t})\big)^{-2}$. 
As it turns out, these two matrices are fairly close to each other. 
To see this, we write
\[
\bigg(I-\frac{1-\alpha_{t}}{2(1-\overline{\alpha}_{t})}J_{t}(x_{t})\bigg)^{-2}=I+\frac{1-\alpha_{t}}{1-\overline{\alpha}_{t}}J_{t}(x_{t})+A,
\]
where $A$ is a matrix obeying (see \eqref{eq:rescaled-J-norm})
\[
\|A\|\lesssim\left\Vert \frac{1-\alpha_{t}}{2(1-\overline{\alpha}_{t})}J_{t}(x_{t})\right\Vert ^{2}\lesssim\frac{d^{2}\log^{4}T}{T^{2}}.
\]
Consequently, we can demonstrate that
\begin{align*}
 & \frac{\alpha_{t}}{2(1-\alpha_{t})}\bigg\|\bigg(I-\frac{1-\alpha_{t}}{2(1-\overline{\alpha}_{t})}J_{t}(x_{t})\bigg)^{-1}\big(x_{t-1}-\mu_{t}(x_{t})\big)\bigg\|_{2}^{2}\\
 & =\frac{\alpha_{t}}{2(1-\alpha_{t})}\left\{ \big(x_{t-1}-\mu_{t}(x_{t})\big)^{\top}\bigg(I+\frac{1-\alpha_{t}}{1-\overline{\alpha}_{t}}J_{t}(x_{t})\bigg)\big(x_{t-1}-\mu_{t}(x_{t})\big)\right\} +
	O\bigg( \frac{\alpha_{t}}{2(1-\alpha_{t})}\|A\|\,\big\| x_{t-1}-\mu_{t}(x_{t})\big\|_{2}^{2} \bigg)\\
 & =\frac{\alpha_{t}}{2(1-\alpha_{t})}\big(x_{t-1}-\mu_{t}(x_{t})\big)^{\top}\bigg(I+\frac{1-\alpha_{t}}{1-\overline{\alpha}_{t}}J_{t}(x_{t})\bigg)\big(x_{t-1}-\mu_{t}(x_{t})\big)+O\bigg(\frac{d^{3}\log^{5}T}{T^{2}}\bigg).
\end{align*}
To see why the last line holds, note that (according to Lemma~\ref{lem:x0}
and the properties \eqref{eqn:properties-alpha-proof})
\begin{align*}
\big\| x_{t-1}-\mu_{t}(x_{t})\big\|_{2} & \leq\big\| x_{t-1}-\widehat{x}_{t}\big\|_{2}+\frac{1-\alpha_{t}}{\sqrt{\alpha_{t}}(1-\overline{\alpha}_{t})}\big\|\mathbb{E}\left[\big\| x_{t}-\sqrt{\overline{\alpha}_{t}}X_{0}\big\|_{2}\mymid X_{t}=x_{t}\right]\big\|_{2}\\
 & \lesssim\sqrt{d(1-\alpha_{t})\log T}+\sqrt{\frac{d\log T}{1-\overline{\alpha}_{t}}}\left(1-\alpha_{t}\right)\asymp\sqrt{d(1-\alpha_{t})\log T},
\end{align*}
and hence
\[
\frac{\alpha_{t}}{1-\alpha_{t}}\|A\|\,\big\| x_{t-1}-\mu_{t}(x_{t})\big\|_{2}^{2}\lesssim\frac{d^{3}\log^{5}T}{T^{2}}.
\]
Combining the above bound with \eqref{eq:defn-gt-f3-zeta-56}, we
arrive at
\begin{subequations}
	\label{eq:final-density-ratio-expression}
\begin{equation}
p_{X_{t-1}\mymid X_{t}}(x_{t-1}\mymid x_{t})=f_{9}(x_{t})\exp\Big(-f_{10}(x_{t},x_{t-1})+\zeta_{t,5}(x_{t},x_{t-1})\Big)
\end{equation}
for some function $f_{9}(\cdot)$, where 
\begin{align}
f_{10}(x_{t},x_{t-1}) & =\frac{\alpha_{t}}{2(1-\alpha_{t})}\bigg\|\bigg(I-\frac{1-\alpha_{t}}{2(1-\overline{\alpha}_{t})}J_{t}(x_{t})\bigg)^{-1}\big(x_{t-1}-\mu_{t}(x_{t})\big)\bigg\|_{2}^{2},\\
\big|\zeta_{t,5}(x_{t},x_{t-1})\big| & \lesssim\frac{d^{3}\log^{4.5}T}{T^{3/2}}.
\end{align}
\end{subequations} 

To finish up, repeat Step 3 in the proof of Lemma~\ref{lem:sde} to yield
\[
f_{7}(x_{t})=\left(1+O\left(\frac{d^{3}\log^{4.5}T}{T^{3/2}}\right)\right)\frac{1}{\big(2\pi\frac{1-\alpha_{t}}{\alpha_{t}}\big)^{d/2}\big|\det\big(I-\frac{1-\alpha_{t}}{2(1-\overline{\alpha}_{t})}J_{t}(x_{t})\big)\big|}
\]
as claimed. This combined with \eqref{eq:final-density-ratio-expression} concludes the proof.

%
%

\subsubsection{Proof of auxiliary claims in Lemma~\ref{lem:sde-R}}
\label{sec:proof-lem:sde-R-claims}

\paragraph{Proof of relation~\eqref{eq:Jacobi-b}.}
For any $(x_t,x_{t-1})\in \mathcal{E}$, one necessarily has
\begin{align}
	\|x_{t}(\gamma) - \widehat{x}_t\|_2 \leq \|x_{t-1} - \widehat{x}_t\|_2 \leq c_3 \sqrt{d(1-\alpha_t)\log T} .
\end{align}
Given $x_t$, we define the set
\[
	\mathcal{E}_1 \coloneqq \big\{ x: \|\widehat{x}_t - \sqrt{\overline{\alpha}_{t-1}}x\|_2 \leq c_4 \sqrt{d(1-\overline{\alpha}_{t-1})\log T} \big\}. 
\]
Then for any $x_0\in \mathcal{E}_1$,  one has
\begin{align*}
\big\| x_{t}(\gamma)-\sqrt{\overline{\alpha}_{t-1}}x_{0}\big\|_{2} & \leq\max\left\{ \|\widehat{x}_{t}-\sqrt{\overline{\alpha}_{t-1}}x_{0}\|_{2},\|\widehat{x}_{t}-x_{t-1}\|_{2}\right\} \leq\max\left\{ c_{4}\sqrt{d(1-\overline{\alpha}_{t-1})\log T},c_{3}\sqrt{d(1-\alpha_{t})\log T}\right\} \\
 & =c_{4}\sqrt{d(1-\overline{\alpha}_{t-1})\log T},
\end{align*}
where the last inequality comes from \eqref{eqn:properties-alpha-proof-1}. 
This in turn reveals that
\begin{align*}
 & \left|\frac{p_{X_{t-1}\mymid X_{0}}\big(x_{t}(\gamma)\mymid x_{0}\big)}{p_{X_{t-1}\mymid X_{0}}(\widehat{x}_{t}\mymid x_{0})}-1\right|=\left|\exp\bigg(\frac{\big\|\widehat{x}_{t}-\sqrt{\overline{\alpha}_{t-1}}x_{0}\big\|_{2}^{2}}{2(1-\overline{\alpha}_{t-1})}-\frac{\big\| x_{t}(\gamma)-\sqrt{\overline{\alpha}_{t-1}}x_{0}\big\|_{2}^{2}}{2(1-\overline{\alpha}_{t-1})}\bigg)-1\right|\\
 & \quad\quad\leq \left|\exp\bigg(\frac{\big\|\widehat{x}_{t}-x_{t}(\gamma)\big\|_{2}\big\{\big\|\widehat{x}_{t}-\sqrt{\overline{\alpha}_{t-1}}x_{0}\big\|_{2}+\big\| x_{t}(\gamma)-\sqrt{\overline{\alpha}_{t-1}}x_{0}\big\|_{2}\big\}}{2(1-\overline{\alpha}_{t-1})} \bigg)-1\right|\\
 & \quad\quad\lesssim\frac{\big\|\widehat{x}_{t}-x_{t}(\gamma)\big\|_{2}\big\{\big\|\widehat{x}_{t}-\sqrt{\overline{\alpha}_{t-1}}x_{0}\big\|_{2}+\big\| x_{t}(\gamma)-\sqrt{\overline{\alpha}_{t-1}}x_{0}\big\|_{2}\big\}}{2(1-\overline{\alpha}_{t-1})}\\
 & \qquad\lesssim d\sqrt{\frac{1-\alpha_{t}}{1-\overline{\alpha}_{t-1}}}\log T\lesssim d\sqrt{\frac{\log^{3}T}{T}}=o(1),
\end{align*}
where the second line follows from the elementary relation
\[
\Big|\|a\|_{2}^{2}-\|b\|_{2}^{2}\Big|  =\Big|\|a\|_{2}-\|b\|_{2}\Big|\cdot\left(\|a\|_{2}+\|b\|_{2}\right)\leq\|a-b\|_{2}\left(\|a\|_{2}+\|b\|_{2}\right), 
\]
and the last line relies on \eqref{eqn:properties-alpha-proof} and our assumption on $T$. 
Moreover, repeating the same argument as in \eqref{eqn:rachmaninoff} and \eqref{eqn:prokofiev}, we arrive at
\begin{align}
\label{eqn:cond-xt-1}
\frac{p_{X_{t-1}}\big(x_{t}(\gamma)\big)}{p_{X_{t-1}}(\widehat{x}_t)} 
&= 1 + O\bigg(d\sqrt{\frac{1-\alpha_t}{1-\overline{\alpha}_{t-1}}}\log T\bigg). 
\end{align}
Putting the above results together leads to 
\begin{align*}
\frac{p_{X_{0}\mymid X_{t-1}}\big(x_{0}\mymid x_{t}(\gamma)\big)}{p_{X_{0}\mymid X_{t-1}}(x_{0}\mymid\widehat{x}_{t})} & =\frac{p_{X_{t-1}\mymid X_{0}}\big(x_{t}(\gamma)\mymid x_{0}\big)/p_{X_{t-1}}\big(x_{t}(\gamma)\big)}{p_{X_{t-1}\mymid X_{0}}(\widehat{x}_{t}\mymid x_{0})/p_{X_{t-1}}(\widehat{x}_{t})}=1+O\bigg(d\sqrt{\frac{1-\alpha_{t}}{1-\overline{\alpha}_{t-1}}}\log T\bigg).
\end{align*}

Equipped with the above relation, we can demonstrate that 
\begin{align}
\notag & \Big\|\int p_{X_{0}\mymid X_{t-1}}\big(x_{0}\mymid x_{t}(\gamma)\big)\big(x_{t}(\gamma)-\sqrt{\overline{\alpha}_{t-1}}x_{0}\big)\diff x_{0}-\int p_{X_{0}\mymid X_{t-1}}(x_{0}\mymid\widehat{x}_{t})\big(\widehat{x}_{t}-\sqrt{\overline{\alpha}_{t-1}}x_{0}\big)\diff x_{0}\Big\|_{2}\\
\notag & \qquad\leq O\Big(d\sqrt{\frac{1-\alpha_{t}}{1-\overline{\alpha}_{t-1}}}\log T\Big)\cdot\bigg( \int p_{X_{0}\mymid X_{t-1}}\big(x_{0}\mymid x_{t}(\gamma)\big)\big\|x_{t}(\gamma)-\sqrt{\overline{\alpha}_{t-1}}x_{0}\big\|_2\diff x_{0}\bigg) \\
	& \qquad\qquad+\Big\|\int p_{X_{0}\mymid X_{t-1}}\big(x_{0}\mymid \widehat{x}_{t} \big)\big(x_{t}(\gamma)-\widehat{x}_{t}\big)\diff x_{0}\Big\|_{2}\\
 & \qquad\lesssim\sqrt{d^{3}(1-\alpha_{t})\log^{3}T},
 \label{eqn:cool-cat}
\end{align}
where the last step invokes the property~\eqref{eq:E-xt-X0}.
Following similar arguments (which we omit for brevity), we can also derive 
\begin{align}
 & \Big\|\int p_{X_{0}\mymid X_{t-1}}\big(x_{0}\mymid x_{t}(\gamma)\big)\big(x_{t}(\gamma)-\sqrt{\overline{\alpha}_{t-1}}x_{0}\big)\big(x_{t}(\gamma)-\sqrt{\overline{\alpha}_{t-1}}x_{0}\big)^{\top}\diff x_{0}\notag\\
 & \qquad-\int p_{X_{0}\mymid X_{t-1}}(x_{0}\mymid\widehat{x}_{t})\big(\widehat{x}_{t}-\sqrt{\overline{\alpha}_{t-1}}x_{0}\big)\big(\widehat{x}_{t}-\sqrt{\overline{\alpha}_{t-1}}x_{0}\big)^{\top}\diff x_{0}\Big\| \notag\\
 & \quad\lesssim d^{2}\sqrt{(1-\alpha_{t})(1-\overline{\alpha}_{t-1})}\log^{2}T,\label{eqn:butterfly-123}
\end{align}
where we have made use of the property~\eqref{eq:E2-xt-X0}.  
Taking the above two above perturbation bounds together with the expression~\eqref{eqn:derivative-2} and making use of \eqref{eq:E-xt-X0} immediately lead to the advertised result: 
\begin{align}
	\big\| J_{t-1}\big(x_{t}(\gamma)\big)-J_{t-1}(\widehat{x}_{t})\big\| \lesssim d^{2}\sqrt{\frac{1-\alpha_{t}}{1-\overline{\alpha}_{t-1}}}\log^{2}T.
\label{eqn:butterfly} 
\end{align}

\paragraph{Proof of relation~\eqref{eq:approx-t-b}.} 
To establish this relation, we first apply the triangle inequality: 
\begin{align*}
\bigg\|\frac{J_{t-1}(\widehat{x}_{t})}{1-\overline{\alpha}_{t-1}}-\frac{J_{t}(x_{t})}{1-\overline{\alpha}_{t}}\bigg\| 
& \leq\bigg\|\frac{J_{t-1}(\widehat{x}_{t})-J_{t}(x_{t})}{1-\overline{\alpha}_{t-1}}\bigg\|
+ \bigg\|\bigg(\frac{1}{1-\overline{\alpha}_{t-1}}-\frac{1}{1-\overline{\alpha}_{t}}\bigg)J_{t}(x_{t})\bigg\|.
\end{align*}
Let us first consider the second term 
\begin{align}
\label{eqn:part-doupi}
	\bigg\|\bigg(\frac{1}{1-\overline{\alpha}_{t-1}}-\frac{1}{1-\overline{\alpha}_{t}}\bigg)J_{t}(x_{t})\bigg\|
	&= 
	\frac{\overline{\alpha}_{t-1}(1-\alpha_t)}{(1-\overline{\alpha}_{t-1})(1-\overline{\alpha}_{t})}
	\|J_{t}(x_{t})\|
	\leq \frac{\overline{\alpha}_{t-1}(1-\alpha_t)d\log T}{(1-\overline{\alpha}_{t-1})(1-\overline{\alpha}_{t})} 
	&\lesssim \frac{(1-\alpha_t)d\log T}{(\alpha_t-\overline{\alpha}_{t})^2}.
\end{align}
where the last inequality uses \eqref{eq:Jacobi-a} and the properties \eqref{eqn:properties-alpha-proof}.

Next, we move on to bound the difference $J_{t-1}(\widehat{x}_{t})-J_{t}(x_{t})$.
%
By virtue of the relation~\eqref{eqn:lotus}, one can deduce that
\begin{align*}
 & \left\Vert \int p_{X_{0}\mymid X_{t-1}}\big(x_{0}\mymid\widehat{x}_{t}\big)\big(\widehat{x}_{t}-\sqrt{\overline{\alpha}_{t-1}}x_{0}\big)\mathrm{d}x_{0}-\frac{1}{\sqrt{\alpha_{t}}}\int p_{X_{0}\mymid X_{t}}\big(x_{0}\mymid{x}_{t}\big)\big({x}_{t}-\sqrt{\overline{\alpha}_{t}}x_{0}\big)\mathrm{d}x_{0}\right\Vert _{2}\\
 & \quad=\frac{1}{\sqrt{\alpha_{t}}}\left\Vert \int p_{X_{0}\mymid X_{t-1}}\big(x_{0}\mymid\widehat{x}_{t}\big)\big(x_{t}-\sqrt{\overline{\alpha}_{t}}x_{0}\big)\mathrm{d}x_{0}-\int p_{X_{0}\mymid X_{t}}\big(x_{0}\mymid{x}_{t}\big)\big({x}_{t}-\sqrt{\overline{\alpha}_{t}}x_{0}\big)\mathrm{d}x_{0}\right\Vert \\
 & \quad\leq\frac{1}{\sqrt{\alpha_{t}}}O\bigg(\frac{d(1-\alpha_{t})\log T}{1-\overline{\alpha}_{t-1}}\bigg)\int p_{X_{0}\mymid X_{t}}\big(x_{0}\mymid{x}_{t}\big)\big\|{x}_{t}-\sqrt{\overline{\alpha}_{t}}x_{0}\big\|_{2}\mathrm{d}x_{0}.
\end{align*}
In addition, recognizing that $\frac{1}{\sqrt{\alpha_{t}}}=1+\frac{1-\alpha_{t}}{\sqrt{\alpha_{t}}(1+\sqrt{\alpha_{t}})}=1+O(1-\alpha_{t})$,
we can further invoke the triangle inequality and \eqref{eqn:properties-alpha-proof-00} to obtain
\begin{align*}
 & \left\Vert \int p_{X_{0}\mymid X_{t-1}}\big(x_{0}\mymid\widehat{x}_{t}\big)\big(\widehat{x}_{t}-\sqrt{\overline{\alpha}_{t-1}}x_{0}\big)\mathrm{d}x_{0}-\int p_{X_{0}\mymid X_{t}}\big(x_{0}\mymid{x}_{t}\big)\big({x}_{t}-\sqrt{\overline{\alpha}_{t}}x_{0}\big)\mathrm{d}x_{0}\right\Vert _{2}\\
 & \quad\lesssim \frac{d(1-\alpha_{t})\log T}{1-\overline{\alpha}_{t-1}}\int p_{X_{0}\mymid X_{t}}\big(x_{0}\mymid{x}_{t}\big)\big\|{x}_{t}-\sqrt{\overline{\alpha}_{t}}x_{0}\big\|_{2}\mathrm{d}x_{0}.
\end{align*}
Repeating the same argument also reveals that
\begin{align*}
 & \left\Vert \int p_{X_{0}\mymid X_{t-1}}\big(x_{0}\mymid\widehat{x}_{t}\big)\big(\widehat{x}_{t}-\sqrt{\overline{\alpha}_{t-1}}x_{0}\big)\big(\widehat{x}_{t}-\sqrt{\overline{\alpha}_{t-1}}x_{0}\big)^{\top}\mathrm{d}x_{0}-\int p_{X_{0}\mymid X_{t}}\big(x_{0}\mymid x_{t}\big)\big(x_{t}-\sqrt{\overline{\alpha}_{t}}x_{0}\big)\big(x_{t}-\sqrt{\overline{\alpha}_{t}}x_{0}\big)^{\top}\mathrm{d}x_{0}\right\Vert \\
 & \lesssim\frac{d(1-\alpha_{t})\log T}{1-\overline{\alpha}_{t-1}}\left\Vert \int p_{X_{0}\mymid X_{t}}\big(x_{0}\mymid x_{t}\big)\big(x_{t}-\sqrt{\overline{\alpha}_{t}}x_{0}\big)\big(x_{t}-\sqrt{\overline{\alpha}_{t}}x_{0}\big)^{\top}\mathrm{d}x_{0}\right\Vert .
\end{align*}
In view of  the expression~\eqref{eqn:derivative-2} for $J_t$, combining the preceding two bounds with a little algebra yields
\begin{align*}
 & \frac{1}{1-\overline{\alpha}_{t-1}}\big\| J_{t-1}(\widehat{x}_{t})-J_{t}(x_{t})\big\|\\
 & \lesssim\frac{1}{(1-\overline{\alpha}_{t-1})(1-\overline{\alpha}_{t})}\frac{d(1-\alpha_{t})\log T}{1-\overline{\alpha}_{t-1}}\\
 & \quad\cdot\Bigg\{\bigg\|\int p_{X_{0}\mymid X_{t}}\big(x_{0}\mymid x_{t}\big)\big(x_{t}-\sqrt{\overline{\alpha}_{t}}x_{0}\big)\big(x_{t}-\sqrt{\overline{\alpha}_{t}}x_{0}\big)^{\top}\mathrm{d}x_{0}\bigg\|+\left(\int p_{X_{0}\mymid X_{t}}\big(x_{0}\mymid{x}_{t}\big)\big\|{x}_{t}-\sqrt{\overline{\alpha}_{t}}x_{0}\big\|_{2}\mathrm{d}x_{0}\right)^{2}\Bigg\}\\
 & \lesssim\frac{d^{2}(1-\alpha_{t})\log^{2}T}{(\alpha_{t}-\overline{\alpha}_{t})^{2}},
\end{align*}
where the last line follows from Lemma~\ref{lem:x0} and the properties \eqref{eqn:properties-alpha-proof}.

Putting the above bounds together immediately establishes relation~\eqref{eq:approx-t-b}.

\subsection{Proof of Lemma~\ref{lem:sde-R-full}}
\label{sec:proof-lem:sde-R-full}


According to the expression \eqref{eq:p-Y-conditional-thm4}, one has
\[
Y_{t-1}\mid Y_{t}=x_{t}\ \sim\ \mathcal{N}\Bigg(\mu_{t}(x_{t}),\,\underset{\eqqcolon\,\Sigma(\widehat{x}_{t})}{\underbrace{\frac{1-\alpha_{t}}{\alpha_{t}}\bigg(I-\frac{1-\alpha_{t}}{2(1-\overline{\alpha}_{t})}J_{t}(x_{t})\bigg)^{2}}}\,\Bigg).
\]
In order to quantify the density, we first bound the Jacobian matrix $J_{t}(x)$ defined in \eqref{eq:Jacobian-Thm4}. 
On the one hand, the expression \eqref{eq:Jt-x-expression-ij-23} tells us that $J_{t}({x}) \preceq I_d$ for any $x$,  
given that the term within the curly bracket in \eqref{eq:Jt-x-expression-ij-23} is a negative covariance matrix. 
On the other hand, $J_{t}(x)$ can be lower bounded by
\begin{align*}
J_{t}(x) & \succeq-\frac{1}{1-\overline{\alpha}_{t}}\mathbb{E}\Big[\big(X_{t}-\sqrt{\overline{\alpha}_{t}}X_{0}\big)\big(X_{t}-\sqrt{\overline{\alpha}_{t}}X_{0}\big)^{\top}\mid X_{t}=x\Big]\notag\\
 & \succeq-\frac{\mathbb{E}\Big[\big\| X_{t}-\sqrt{\overline{\alpha}_{t}}X_{0}\big\|_{2}^{2}\mid X_{t}=x\Big]}{1-\overline{\alpha}_{t}}I_{d}\succeq-\frac{2\|x\|_{2}^{2}+2T^{2c_{R}}}{1-\overline{\alpha}_{t}}I_{d}\\
 & \succeq-T^{c_{0}+1}\big(\|x\|_{2}^{2}+T^{2c_{R}}\big)I_{d},
\end{align*}
where the second line applies the assumption that $\|X_{0}\|_{2}\leq T^{c_{R}}$,
and the last line invokes the choice \eqref{eqn:alpha-t}. 
As a consequence, we have
\begin{subequations}
\label{eq:Sigma-UB-LB}
\begin{align}
\Sigma(\widehat{x}_{t}) & \succeq\frac{1-\alpha_{t}}{\alpha_{t}}\bigg(1-\frac{1-\alpha_{t}}{2(1-\overline{\alpha}_{t})}\bigg)^{2}I_{d}=\frac{1-\alpha_{t}}{4\alpha_{t}}\bigg(\frac{1-\overline{\alpha}_{t}+\alpha_{t}-\overline{\alpha}_{t}}{1-\overline{\alpha}_{t}}\bigg)^{2}I_{d}\succeq\frac{1-\alpha_{t}}{4\alpha_{t}}I_{d}\succeq\frac{1-\alpha_{t}}{4}I_{d};\\
\Sigma(\widehat{x}_{t}) & \preceq\frac{1-\alpha_{t}}{\alpha_{t}}T^{2c_{0}+2}\big(2\|\widehat{x}_{t}\|_{2}^{4}+2T^{4c_{R}}\big)I_{d}\preceq4T^{2c_{0}+2}\big(\|\widehat{x}_{t}\|_{2}^{4}+T^{4c_{R}}\big)I_{d}.
\end{align}
\end{subequations}
%

%
%

%
%

With the above relations in mind, we are ready to bound the density function $p_{Y_{t-1} \mymid Y_t}(x_{t-1}\mymid x_t)$ for any $x_t,x_{t-1}\in \mathbb{R}^d$. 
It is seen from \eqref{eq:p-Y-conditional-thm4} that
\begin{align*}
\log\frac{1}{p_{Y_{t-1}\mid Y_{t}}(x_{t-1}\mymid x_{t})} & =\frac{\big(x_{t-1}-\mu_{t}(x_{t})\big){}^{\top}\big(\Sigma(\widehat{x}_{t})\big)^{-1}\big(x_{t-1}-\mu_{t}(x_{t})\big)}{2}+\frac{1}{2}\log\mathsf{det}\big(\Sigma(\widehat{x}_{t})\big)+\frac{d}{2}\log(2\pi)\\
 & \leq\frac{2\big\| x_{t-1}-\mu_{t}(x_{t})\big\|_{2}^{2}}{1-\alpha_{t}}+\frac{d}{2}\log\Big(8\pi T^{2c_{0}+2}\big(\|\widehat{x}_{t}\|_{2}^{4}+T^{4c_{R}}\big)\Big)\\
 & \leq2T^{c_{0}+1}\left\{ 2\big\| x_{t-1}-\widehat{x}_{t}\big\|_{2}^{2}+\|x_{t}\|_{2}^{2}+T^{2c_{R}}\right\} +\frac{d}{2}\log\Big(8\pi T^{2c_{0}+2}\big(\|\widehat{x}_{t}\|_{2}^{4}+T^{4c_{R}}\big)\Big)\\
 & \leq T^{c_{0}+2c_{R}+2}\left\{ \big\| x_{t-1}-\widehat{x}_{t}\big\|_{2}^{2}+\|x_{t}\|_{2}^{2}+1\right\} ,
\end{align*}
where the second inequality results from \eqref{eq:Sigma-UB-LB}, 
and the third inequality makes use of \eqref{eq:dist-xt-mu-xt-UB-crude} and \eqref{eqn:alpha-t}. 
Given that $\log\frac{p_{X_{t-1}\mid X_{t}}(x_{t-1}\mymid x_{t})}{p_{Y_{t-1}\mid Y_{t}}(x_{t-1}\mymid x_{t})}\leq \log\frac{1}{p_{Y_{t-1}\mid Y_{t}}(x_{t-1}\mymid x_{t})}$, 
we have concluded the proof.



\bibliographystyle{apalike}
\bibliography{reference-diffusion}


\end{document}